\documentclass[twoside,11pt]{article}

%

\usepackage{jmlr2e}
\usepackage{mathrsfs,amssymb,amsmath,booktabs,array,xcolor,url,cite,color,soul,multirow,multicol,natbib}
\usepackage{mathbbold,bm,bbm}
\usepackage{slashbox,verbatim}
\usepackage{algpseudocode,algorithm}





\ShortHeadings{Transfer learning in information criteria-based feature selection}{Shaohan Chen, Nikolaos V. Sahinidis and Chuanhou Gao }
\firstpageno{1}

\begin{document}

\title{Transfer Learning in Information Criteria-based Feature Selection}

\author{\name Shaohan Chen \email shaohan\_chen@zju.edu.cn \\
       \addr School of Mathematical Sciences\\
       Zhejiang University\\
       Hangzhou 310027, China
       \AND
       \name Nikolaos V. Sahinidis \email nikos@gatech.edu \\
        \addr H. Milton Stewart School of Industrial \& Systems Engineering and \\
        School of Chemical \& Biomolecular Engineering \\
        Georgia Institute of Technology \\
        Atlanta, GA 30332, USA
       \AND
       \name Chuanhou Gao \email gaochou@zju.edu.cn \\
       \addr School of Mathematical Sciences\\
       Zhejiang University\\
       Hangzhou 310027, China}

\editor{}

\maketitle

\begin{abstract}
This paper investigates the effectiveness of transfer learning based on information criteria. We propose a procedure that combines transfer learning with Mallows' Cp (TLCp) and prove that it outperforms the conventional Mallows' Cp criterion in terms of accuracy and stability.
Our theoretical results indicate that, for any sample size in the target domain, the proposed TLCp estimator performs better than the Cp estimator by the mean squared error (MSE) metric {in the case of orthogonal predictors}, provided that i) the dissimilarity between the tasks from source domain and target domain is small, and ii) the procedure parameters (complexity penalties) are tuned according to certain explicit rules.
Moreover, we show that our transfer learning framework can be extended to other feature selection criteria, such as the Bayesian information criterion. By analyzing the solution of the orthogonalized Cp, we identify an estimator that asymptotically approximates the solution of the Cp criterion in the case of non-orthogonal predictors. {Similar results are obtained for the non-orthogonal TLCp. Finally, simulation studies and applications with real data demonstrate the usefulness of the TLCp scheme.}
\end{abstract}

\begin{keywords}
  Transfer learning, Feature selection, Mallows' Cp
\end{keywords}

\section{Introduction}

Machine learning technologies have been remarkably successful in many contemporary industrial applications. However, {supervised machine learning algorithms, such as support vector machines, neural networks, and decision trees}, are fundamentally limited because they demand large amounts of training samples to guarantee good performance \citep{bartlett2002rademacher}. It is either expensive or impossible to collect a huge amount of data in many industrial applications \citep{hill1977introduction,buzzi2010interpolation}. Therefore, it is important to investigate learning methods that perform well in small samples---or even when the dimension of the learning system is much larger than the number of training samples.

Over the past five decades, statisticians provided the following feature selection criteria, which continue to influence modern day machine learning algorithms: Akaike's information criterion \citep{akaike1974new}, Mallows' Cp \citep{mallows1973some}, the Bayesian information criterion \citep{schwarz1978estimating}, the Hannan-Quinn information criterion \citep{hannan1979determination}, and the risk inflation criterion \citep{foster1994risk}. These criteria balance model accuracy and complexity, and are used to produce sparse algebraic models. {Each of these feature selection criteria} can be utilized via a mixed-integer programming (MIP) formulation. \citet{cozad2014learning} recently developed the ALAMO approach, which implements these MIP formulations. ALAMO expands on feature selection criteria by considering a large set of explicit transformations from the original input variables in the models \citep{cozad2014learning,cozad2015combined,wilson2017alamo}. At the core of ALAMO is a nonlinear integer programming-based best subset selection technique, which relies on the global mixed-integer nonlinear programming solver BARON \citep{tawarmalani2005polyhedral}. 
{In addition to information criteria, score-based criteria (Fisher score, mutual information, maximum relevance minimum redundancy (mRMR) \citep{peng2005feature}, etc.), cross-validation and statistical tests are all widely employed as feature selection criteria in practice \citep{borboudakis2019forward}. Moreover, some machine learning-based criteria were proposed to enrich feature selection methods: an SVM-based method \citep{guyon2002gene} and a neural network-based methods \citep{steppe1997feature,setiono1997neural}.}
For an introduction to feature selection and review of methods in the context of supervised learning and unsupervised learning, we refer the reader to \citet{friedman2001elements}, \citet{guyon2003introduction} and \citet{dy2004feature}. 


{In addition to feature selection, another method to address the problem of insufficient data is transfer learning, which utilizes knowledge from similar tasks to overcome the shortage of data in a target domain. In this work, we embed transfer learning techniques into the widely used information criteria for feature selection and investigate the utility of the resulting approach.}

{Transfer learning (referred by some as multi-task learning\footnote{{We use the term transfer learning to refer to techniques that pay attention to the learning performance on the target task alone, while we reserve the term multi-task learning when one wishes to learn both the source and target tasks as well as possible. We refer interested readers to \citet{pan2010survey} for a related discussion.}}) aims to improve learning performance of the target task by extracting (common) knowledge from the related source tasks.} {According to the type of knowledge that can be transferred, we can categorize transfer learning into four cases: instance-based \citep{dai2007boosting}, feature-representation-based \citep{argyriou2007multi},  parameter-based \citep{evgeniou2004regularized} and relational domain-based \citep{mihalkova2007mapping}. For a review of transfer learning and how it works we refer the reader to \citet{pan2010survey}.} 
A large body of researchers have recently explored the benefits of transfer learning techniques both from an experimental and theoretical perspective. \citet{barreto2017successor} showed that combining transfer learning with reinforcement learning frameworks can significantly enhance  performance in navigation tasks. Transfer learning has also been successful for detection problems when integrated with deep convolutional networks \citep{hoo2016deep,wang2019ridesharing}.
Many researchers {proved} that tighter generalization upper bounds can be achieved when transfer learning techniques are utilized \citep{baxter2000model,ando2005framework,maurer2006bounds,ben2003exploiting}. \citet{maurer2013sparse,pontil2013excess} investigated the power of transfer learning techniques to manage the excess risk upper bounds. \citet{kuzborskij2013stability} showed how transfer learning can help accelerate the convergence of the Leave-One-Out error to the generalization error. 

{The advantages of applying transfer learning techniques have led to studies in the context of feature selection. Combining transfer learning with LASSO can be beneficial to feature selection by sharing the same sparsity pattern across tasks \citep{obozinski2006multi, yuan2006model, argyriou2007multi, lounici2009taking, liu2009multi, zhang2010probabilistic, wang2016multi}. \citet{lozano2012multi} presented a flexible LASSO-based feature selection framework combined with transfer learning, which can identify common and task-specific patterns across similar tasks. \citet{jebara2004multi} showed that incorporating transfer learning with SVM can be advantageous to identify relevant features. \citet{helleputte2009feature} demonstrated that the common knowledge extracted by transfer learning is useful to guide feature selection in the target domain. \citet{sugiyama2014multi} demonstrated that transfer learning can be used to discover causal features among similar networks. }

Merely providing tighter upper error bounds is not enough to guarantee that a model selection technique coupled with transfer learning will perform better in real industrial applications with limited data, {because these error bounds only make sense in large data size cases.} {It is still important to investigate how transfer learning affects feature selection and identify conditions under which transfer learning is superior to independent learning in the case of limited data.}
Our main goal is to investigate the effectiveness of transfer learning {for feature selection}. {We expose  parameter tuning rules and conditions under which transfer learning is guaranteed to outperform independent learning in the sense of both accuracy (i.e., leads to smaller mean squared error (MSE)) and stability (i.e.,  comes with higher probability to identify relevant features) under limited sample size.}



{We choose to combine the transfer learning technique of \citet{evgeniou2004regularized} with the popular information criterion Mallows' Cp, as a representative way to show that transfer learning can facilitate feature selection.} 
The combined technique is referred to as TLCp and aims to provide a simple and accurate parameter estimation method in the small sample regime. We prove that, for any fixed sample size in the target domain, if the tasks in the target domain and source domain are similar enough and the tuning parameters are chosen to satisfy some explicit rules, then the orthogonal TLCp estimator is closer than Cp to the true regression coefficients in terms of the MSE measure. {Moreover, based on the orthogonality assumption, we show that the TLCp estimator identifies important features with higher {probability} than the Cp estimator.}

The main contributions of this paper are as follows. (1) For any sample size in the target domain, we derive an explicit {parameter} tuning rule so that the proposed TLCp procedure can outperform the independent learning (or original Mallows' Cp criterion) in terms of accuracy and stability under the orthogonality assumption. {(2) Our simulation studies and experiments on three real \textcolor{black}{datasets} demonstrate the usefulness of the proposed TLCp framework in practical applications.} (3) We show that our analysis framework, which explores the efficiency of transfer learning, can be extended to other feature selection criteria, such as the Bayesian information criterion. (4) We present a method for producing an estimator that can asymptotically approximate the solution of Mallows' Cp in the non-orthogonal case. {Similarly, we identify an estimator to asymptotically approximate the non-orthogonal TLCp estimator. }



The remainder of this paper is organized as follows. Section $2$ introduces preliminaries of this paper, including the basic concepts of {the information criteria} and the transfer learning method. Section $3$ theoretically analyzes the process of the orthogonal Cp for its ability to identify relevant features. Section $4$ describes the basic framework of the TLCp method, and analyzes its ability to identify important features under the orthogonality assumption.
{Section $5$ discusses extensions of the main ideas of this paper, including the computation of an estimator that can asymptotically approximate the solution of Mallows' Cp in the non-orthogonal case. {Similarly, we identify an estimator to asymptotically approximate the non-orthogonal TLCp estimator.}
In the same section, we provide guidelines for practitioners on how to use the TLCp method. Section $6$ describes the simulations conducted to illustrate some of our results. Section $7$ verifies the effectiveness of the TLCp method by three real data experiments. Section $8$ summarizes the main conclusions of this paper. To improve the readability of this paper, we provide all proofs and {a summary of notations} in the appendix. }


\section{Background}
This section describes the paper's notation and assumptions. It also introduces the concepts behind Mallows' Cp and the transfer learning technique employed in the proposed TLCp model.
\subsection{ Preliminaries}

Assume that the data set in the target domain consists of $n$ samples $(x_1^i, x_2^i, \cdots, x_k^i;y_i)$ for $i=1,\cdots, n$, each of which has $k$ features and satisfies the following true but unknown relationship:
\begin{eqnarray}
\boldsymbol { y } = \boldsymbol { X } \boldsymbol { \beta } + \boldsymbol { \varepsilon }
\end{eqnarray}
where $\boldsymbol { y } : = \left( y _ { 1 }, y _ { 2 }, \cdots, y _ { n } \right) ^ { \top }$ are the responses, $\boldsymbol { \beta} : = \left( \beta _ { 1 }, \beta _ { 2 }, \cdots, \beta _ { k } \right) ^ { \top }$ are the regression coefficients, $\boldsymbol { X } : = (X_1, X_2, \cdots, X_n) ^ { \top }=(W_1, W_2,\cdots, W_k)$ is the design matrix, and $\boldsymbol{\varepsilon} : = \left( \varepsilon _ { 1 }, \varepsilon _ { 2 }, \cdots, \varepsilon _ { n } \right) ^ { \top }$ are the prediction residuals each of which is  Gaussian noise. Without loss of generality, we suppose $\varepsilon_i \sim \mathcal { N } \left( 0 , \sigma_1 ^ { 2 } \right)$ for $i=1,\cdots, n$. {Unless otherwise stated, we assume that the design matrix
$\boldsymbol { X }$ satisfies $\boldsymbol{X}^{\top}\boldsymbol{X}=nI$, where $I$ is the identity matrix, and we refer to the regression problem under this condition as the orthogonal problem.}

\subsection{Mallows' Cp}


Mallows' Cp is a model fitness metric that has been proposed for identifying a best subset of the regressors. The feature selection procedure based on this metric is defined as follows.

\begin{eqnarray}\label{Cp}
C _ { p } = \min_{\boldsymbol { a }}\frac { ( \boldsymbol { y } - \boldsymbol { X }  \boldsymbol { a }  ) ^ { \top } ( \boldsymbol { y } - \boldsymbol { X } \boldsymbol { a }  ) } { \hat { \sigma }_1 ^ { 2 } } + 2 p - n
\end{eqnarray}
where $\hat { \sigma}_1^{2}$ is an estimator of the true residual variance, $\sigma_1^2$. For simplicity, we assume $\hat { \sigma }_1^{2}\approx\sigma_1^2$. The non-negative integer $p$ indicates the number of nonzero regressors in the regression model and represents the model complexity. This principle helps prevent over-fitting and achieve higher generalization performance compared to traditional regression methods \citep{friedman2001elements,miyashiro2015subset}, such as the ordinary least squares estimation. {Using Cp can improve the interpretability of the resulting model and reduce the cost of measurements to obtain a good predictive model \citep{guyon2003introduction,borboudakis2019forward}.}

{The Mallows' Cp criterion balances goodness-of-fit (i.e., the maximized log-likelihood) and complexity (i.e., the number of regressors) of the model. Other commonly used information criteria are: Akaike's information criterion (AIC) \citep{akaike1974new}, the Bayesian information criterion (BIC) \citep{schwarz1978estimating}, the Hannan-Quinn information criterion (HIC) \citep{hannan1979determination}, and the risk inflation criterion (RIC) \citep{foster1994risk}. All these criteria have the same goodness-of-fit form but employ different model complexity penalties. Information criteria are closely related to cross-validation and statistical tests~\citep{dziak2020sensitivity}. Information criteria can be optimized directly to obtain a best subset of the features. Alternatively, these criteria can be used within other feature selection algorithms to compare different models \citep{borboudakis2019forward}.  }
	
Since Mallows' Cp can be viewed as a representative of the above mathematically similar feature selection criteria, we will use it to investigate the effectiveness of transfer learning. 
As indicated in \citet{mallows1973some},  the ``minimum Cp'' rule for selecting the best subset of the features for least-squares fitting should not be applied universally. We will identify conditions under which Mallows' Cp fails to identify important features and why the transfer learning techniques will {perform better under the orthogonality assumption}.

\subsection{ Transfer Learning}

Transfer learning aims to improve the learning of predictive functions in a target domain using the knowledge in a source domain \citep{pan2010survey} by transferring the knowledge of four categories: instances, features, parameters and relationships. {In this paper, we focus on the parameter-based transfer learning technique presented by \citet{evgeniou2004regularized}, which shares common knowledge extracted from the source tasks through parameters to be learned so as to improve the performance of Mallows' Cp criterion. Section $4$ will show how this parameter-based transfer learning scheme can work well with information criteria.}

{We consider the following problem setting.}
 There is one learning task from the source domain with data  $\{(\tilde{x}_1^i, \tilde{x}_2^i, \cdots, \tilde{x}_k^i;\tilde{y}_i)\}_{i=1}^{m}$, and another learning task from the target domain with data  $\{(x_1^i, x_2^i, \cdots, x_k^i;y_i)\}_{i=1}^{n}$.
We enhance learning performance in the target domain by sharing common information with the learning task in the source domain.

Concretely, the transfer learning scheme built by \citet{evgeniou2004regularized} has the form:
\begin{eqnarray}\label{transfer learning}
\min_{\boldsymbol{v}_1,\boldsymbol{v}_2,\boldsymbol{w}_0}\sum _ { i = 1 } ^ { n }\lambda_1(y_i-\boldsymbol{w}_1^{\top}X_i)^2 + \sum _ { i = 1 } ^ { m } \lambda_2(\tilde{y_{i}}-\boldsymbol{w}_2^{\top}\tilde{X_{i}})^2 + \frac { \lambda _ { 3 } } { 2 } \sum _ { t = 1 } ^ { 2 } \| \boldsymbol { v }_t \| ^ { 2 }+\gamma\|\boldsymbol{w}_0\|^{2},
\end{eqnarray}
where $\boldsymbol{w}_1=\boldsymbol{w}_0+\boldsymbol{v}_1$ and $\boldsymbol{w}_2=\boldsymbol{w}_0+\boldsymbol{v}_2$ are the regression coefficients with respect to the tasks in the target domain and the source domain, respectively. $\boldsymbol{w}_0$ is a common parameter while $\boldsymbol{v}_1 , \boldsymbol{v}_2$ are specific parameters for the source task and target task.  The tuning parameters $\lambda_1, \lambda_2$ define the weights of the loss functions for the two domains, with $\lambda_1>\lambda_2$ when focusing on the performance in the target domain. {$\lambda_3$ and $\gamma$ are two positive regularization parameters reflecting the importance of the individual and common parts of the models of the two tasks. When $\lambda_3$ approaches $\infty$, which implies that $\boldsymbol{v}_1=\boldsymbol{v}_2=0$, then~(\ref{transfer learning}) treats the target and source tasks identically. When $\gamma$ approaches $\infty$, which means $\boldsymbol{w}_0=0$, then (\ref{transfer learning}) reduces to learning the target and source tasks independently. In general, this transfer learning framework provides an elegant way to share knowledge among tasks through parameters. In Section $4$, we combine this transfer learning technique (\ref{transfer learning}) with the information criteria to improve feature selection. }

{\subsection{Benefits of Combining Transfer Learning with Information Criteria}}

{Intuitively, we can understand the advantage of applying transfer learning to feature selection as utilizing the common sparsity structure extracted from the related tasks to guide feature selection in the target domain.}

{In this work, the combination of the parameter-based transfer learning method with information criteria presents two advantages.} 
{First, the mathematical structure of information criteria allows us to carry a sparsity pattern across related tasks, thus facilitating feature selection in the target domain. Second, the parameter-based transfer learning framework provides an avenue to extract knowledge from similar tasks to improve the target task's learning. Therefore, we can expect that these two techniques will strengthen each other when we combine them.   }

{As shown later (in Section $4$), one of the contributions of this work is to provide explicit rules to tune the hyper-parameters of the combined model. Based on the optimal tuning of the hyper-parameters, we can guarantee that the resulting model is superior to the independent model under some mild conditions. Although we focus on integrating the parameter-based transfer learning technique presented by \citet{evgeniou2004regularized} into Mallows' Cp, we can potentially extend our analysis framework to other feature selection criteria and other transfer learning methods. }

The following section explains the reason behind embedding the transfer learning technique to Mallows' Cp criterion and analyzing the conditions under which Mallows' Cp will miss the right model.

\section{Analysis of Mallows' Cp with Orthogonal Design}

In this section, we exploit the ability of the orthogonal Mallows' Cp to identify important regressors.
{Later, we will investigate the case when Mallows' Cp is not recommended.}

Without loss of generality, we turn to investigate the properties of the modified Mallows' Cp as follows.

\begin{equation}\label{the modified Cp}
\min_{\boldsymbol{a}}~ ( \boldsymbol { y } - \boldsymbol { X }  \boldsymbol { a }  ) ^ { \top } ( \boldsymbol { y } - \boldsymbol { X } \boldsymbol { a }  )+\lambda \|\boldsymbol{a}\|_{0}
\end{equation}
 where $\|\cdot\|_0$ denotes the $\ell _ { 0 }$ norm that refers to the number of nonzero components, and $\lambda>0$ is a parameter used to balance the trade-off between model accuracy and complexity. In particular, when $\lambda=2\hat{\sigma}_1^2$, this formula will reduce to the original Mallows' Cp.
The resulting estimator of model~(\ref{the modified Cp}) has a closed-form solution in the orthogonal case.

\begin{proposition}\label{theorem1}
The solution of the orthogonal Cp criterion (\ref{the modified Cp}) has the following form:
\begin{eqnarray} \label{eq5}
\hat{a}_i=\left\{
\begin{matrix}
	\beta_i+\frac{W_i^{\top}\boldsymbol { \varepsilon }}{n},
		& \text{if~} n\left(\beta_i+\frac{W_i^{\top}\boldsymbol { \varepsilon }}{n}\right)^2>\lambda\\
		0,
		&  \text{otherwise}
	\end{matrix}
	\right.
\end{eqnarray}
for $i=1,\cdots, k$.
\end{proposition}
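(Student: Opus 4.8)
The plan is to use the orthogonality condition $\boldsymbol{X}^{\top}\boldsymbol{X}=nI$ to split the objective of (\ref{the modified Cp}) into $k$ scalar subproblems, one for each component of $\boldsymbol{a}$, and then minimize each of them in closed form. First I would expand the residual sum of squares as $(\boldsymbol{y}-\boldsymbol{X}\boldsymbol{a})^{\top}(\boldsymbol{y}-\boldsymbol{X}\boldsymbol{a})=\boldsymbol{y}^{\top}\boldsymbol{y}-2\boldsymbol{a}^{\top}\boldsymbol{X}^{\top}\boldsymbol{y}+\boldsymbol{a}^{\top}\boldsymbol{X}^{\top}\boldsymbol{X}\boldsymbol{a}$ and substitute $\boldsymbol{X}^{\top}\boldsymbol{X}=nI$ together with $\boldsymbol{a}^{\top}\boldsymbol{X}^{\top}\boldsymbol{y}=\sum_{i=1}^{k}a_{i}W_{i}^{\top}\boldsymbol{y}$ and $\|\boldsymbol{a}\|_{0}=\sum_{i=1}^{k}\mathbbm{1}[a_{i}\neq 0]$. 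This rewrites the objective as $\boldsymbol{y}^{\top}\boldsymbol{y}+\sum_{i=1}^{k}g_{i}(a_{i})$, where $g_{i}(t):=nt^{2}-2tW_{i}^{\top}\boldsymbol{y}+\lambda\,\mathbbm{1}[t\neq 0]$. Since the constant $\boldsymbol{y}^{\top}\boldsymbol{y}$ is fixed and the terms $g_{i}(a_{i})$ involve disjoint variables, minimizing over all $\boldsymbol{a}$ is equivalent to minimizing each $g_{i}$ over the reals independently.

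Next I would solve $\min_{t}g_{i}(t)$ by splitting into the cases $t=0$ and $t\neq 0$. On the first branch $g_{i}(0)=0$; on the second, $g_{i}$ is a strictly convex quadratic plus the constant $\lambda$, so it has infimum $\lambda-(W_{i}^{\top}\boldsymbol{y})^{2}/n$, attained at $t_{i}^{\star}=W_{i}^{\top}\boldsymbol{y}/n$ whenever $W_{i}^{\top}\boldsymbol{y}\neq 0$. Comparing the two branches shows that the optimal choice is $a_{i}=t_{i}^{\star}$ when $(W_{i}^{\top}\boldsymbol{y})^{2}/n>\lambda$ (a condition that already forces $W_{i}^{\top}\boldsymbol{y}\neq 0$, since $\lambda>0$) and $a_{i}=0$ otherwise, with the boundary case $(W_{i}^{\top}\boldsymbol{y})^{2}=n\lambda$ a tie that is broken in favor of $0$, consistent with (\ref{eq5}). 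Finally, I would substitute $\boldsymbol{y}=\boldsymbol{X}\boldsymbol{\beta}+\boldsymbol{\varepsilon}$ and use the orthogonality relation $W_{i}^{\top}W_{j}=n\delta_{ij}$ to get $W_{i}^{\top}\boldsymbol{y}=n\beta_{i}+W_{i}^{\top}\boldsymbol{\varepsilon}$; hence $t_{i}^{\star}=\beta_{i}+W_{i}^{\top}\boldsymbol{\varepsilon}/n$ and $(W_{i}^{\top}\boldsymbol{y})^{2}/n=n\left(\beta_{i}+W_{i}^{\top}\boldsymbol{\varepsilon}/n\right)^{2}$, which turns the threshold and the value into exactly the form stated in (\ref{eq5}).

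I do not expect a real obstacle here; the argument is mostly a matter of organizing the algebra. The only two points needing a word of care are the interchange of minimization and summation — valid here solely because the separable blocks are single, disjoint coordinates, so no cross terms survive thanks to $\boldsymbol{X}^{\top}\boldsymbol{X}=nI$ — and the tie-breaking convention at $(W_{i}^{\top}\boldsymbol{y})^{2}=n\lambda$, where both candidate values are equally optimal and (\ref{eq5}) returns the zero one. Everything else is a direct consequence of the orthogonality of the design.
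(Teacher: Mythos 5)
Your proposal is correct and follows essentially the same route as the paper's proof: expand the quadratic using $\boldsymbol{X}^{\top}\boldsymbol{X}=nI$, decompose into $k$ independent one-dimensional problems, and compare the objective values on the branches $a_i=0$ and $a_i\neq 0$. The only cosmetic differences are that you substitute $\boldsymbol{y}=\boldsymbol{X}\boldsymbol{\beta}+\boldsymbol{\varepsilon}$ at the end rather than at the start, and you make the tie-breaking at $(W_i^{\top}\boldsymbol{y})^2=n\lambda$ explicit, which the paper leaves implicit.
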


\begin{proof}
The detailed proof of Proposition \ref{theorem1} can be found in Appendix \ref{appendixC}.
\end{proof}	

Proposition \ref{theorem1} clarifies the discrimination rule of the orthogonal Cp (\ref{the modified Cp}) in order to identify relevant features, which explains how the performance of the orthogonal Cp criterion is affected by the distribution of true regression coefficients. {Remark \ref{remark2} explains Proposition \ref{theorem1} from the viewpoint of the statistical hypothesis test.} 

{\begin{remark}\label{remark2}
 For each regression coefficient estimate, we construct the $z$-statistic, $z_i=\frac{r_is_{\boldsymbol{y}}}{\sigma_1}-\frac{\sqrt{n}\beta_i}{\sigma_1}$, where $r_i$ is the sample Pearson's correlation coefficient between the $i$-th feature and the response $\boldsymbol{y}$, $i=1,\cdots,k$. Under the null hypothesis that $\beta_i=0$, or equivalently the population Pearson's correlation coefficient equals zero, the $z$-statistic follows the standard normal distribution. Then, Proposition \ref{theorem1} implies that using the orthogonal Cp criterion is equivalent to performing the statistical $z$-test for each feature with the significance level $\alpha_1(\lambda)=2\phi(-\frac{\sqrt{\lambda}}{\sigma_1})$, where $\phi(u)$= $\int_{-\infty}^{u}\frac{1}{\sqrt{2\pi}}\exp{\left\{-\frac{x^2}{2}\right\}}dx$. Appendix A contains more details.
\end{remark}}

\begin{theorem}\label{theorem 2}
	
The probability $Pr^{Cp}\{i\}$ that the orthogonal Cp selects the $i$-th feature in the regression model is
\begin{eqnarray}
Pr^{Cp}\{i\}&=&\frac{\sqrt{n}}{\sqrt{2\pi}\sigma_1}\int_{nx^2>\lambda}\exp\left\{-\frac{1}{2}\frac{\left(x-\beta_i\right)^2}{\frac{\sigma_1^2}{n}}\right\}dx\notag\\&=&1-\int_{\frac{-\sqrt{n}\beta_i-\sqrt{\lambda}}{\sigma_1}}^{\frac{-\sqrt{n}\beta_i+\sqrt{\lambda}}{\sigma_1}}\frac{1}{\sqrt{2\pi}}\exp\left\{-\frac{x^2}{2}\right\}dx\notag
\end{eqnarray}
where $i=1,\cdots,k$.
\end{theorem}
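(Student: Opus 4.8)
The plan is to reduce the event "the orthogonal Cp selects the $i$-th feature" to an explicit event about a single Gaussian random variable, and then evaluate the resulting probability by a change of variables. First I would invoke Proposition \ref{theorem1}: by the piecewise form \eqref{eq5}, the estimate $\hat a_i$ is nonzero (equivalently, feature $i$ enters the model) precisely when $n\left(\beta_i+\frac{W_i^{\top}\boldsymbol{\varepsilon}}{n}\right)^2>\lambda$. So it suffices to compute the probability of this event, which depends only on the scalar random variable $Z_i:=\beta_i+\frac{W_i^{\top}\boldsymbol{\varepsilon}}{n}$.

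Next I would identify the distribution of $Z_i$. Since $\boldsymbol{\varepsilon}\sim\mathcal{N}(0,\sigma_1^2 I)$ and $W_i$ is a fixed column of $\boldsymbol{X}$ with $W_i^{\top}W_i=n$ (a consequence of the orthogonality assumption $\boldsymbol{X}^{\top}\boldsymbol{X}=nI$), the linear functional $W_i^{\top}\boldsymbol{\varepsilon}$ is $\mathcal{N}(0,n\sigma_1^2)$, hence $\frac{W_i^{\top}\boldsymbol{\varepsilon}}{n}\sim\mathcal{N}\!\left(0,\frac{\sigma_1^2}{n}\right)$ and $Z_i\sim\mathcal{N}\!\left(\beta_i,\frac{\sigma_1^2}{n}\right)$. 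Writing out the Gaussian density of $Z_i$ and integrating over the region $\{x:nx^2>\lambda\}$ immediately yields the first displayed expression for $Pr^{Cp}\{i\}$.

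For the second form I would pass to the complement: $Pr^{Cp}\{i\}=1-\Pr\!\left(-\sqrt{\lambda/n}\le Z_i\le\sqrt{\lambda/n}\right)$, and then standardize via the substitution $u=\frac{\sqrt{n}(x-\beta_i)}{\sigma_1}$, which maps the endpoints $\pm\sqrt{\lambda/n}$ to $\frac{-\sqrt{n}\beta_i\pm\sqrt{\lambda}}{\sigma_1}$ and turns the density into the standard normal density; this reproduces the stated integral against $\frac{1}{\sqrt{2\pi}}e^{-x^2/2}$.

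There is essentially no serious obstacle here; the only point requiring a little care is the justification that $W_i^{\top}\boldsymbol{\varepsilon}$ has variance exactly $n\sigma_1^2$, which is where the normalization $\boldsymbol{X}^{\top}\boldsymbol{X}=nI$ is used, and the observation that the selection event in Proposition \ref{theorem1} is stated in terms of the \emph{same} quantity $\beta_i+\frac{W_i^{\top}\boldsymbol{\varepsilon}}{n}$ that we are integrating, so no further measurability or conditioning argument is needed. The bulk of the argument is the routine change of variables, which I would not spell out in full detail.
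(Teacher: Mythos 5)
Your proposal is correct and follows essentially the same route as the paper: reduce to the event $n\left(\beta_i+\tfrac{W_i^{\top}\boldsymbol{\varepsilon}}{n}\right)^2>\lambda$ via Proposition \ref{theorem1}, use orthogonality to get $Z_i\sim\mathcal{N}(\beta_i,\sigma_1^2/n)$, and standardize to obtain the second form (the paper introduces $\theta=W_i^{\top}\boldsymbol{\varepsilon}/\sqrt{\sigma_1^2 n}$ and solves the quadratic inequality in $\theta$, which is algebraically the same as your direct change of variables on the complementary interval).
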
	

\begin{proof}
	The detailed proof of Theorem \ref{theorem 2} can be found in Appendix \ref{appendixC}.
\end{proof}	

According to Theorem \ref{theorem 2}, the probability of the orthogonal Cp procedure to select a feature is independent of whether or not the remaining features are chosen. {We will also indicate (in Remark \ref{remark4}) that Theorem \ref{theorem 2} corresponds to the power analysis for the $z$-statistic introduced in Remark \ref{remark2}. This fact inspires us to restudy the orthogonal Cp from the angle of statistical tests directly (see Appendix A).} Additionally, if there is a feature whose regression coefficient equals zero (\textcolor{black}{we refer to it as a ``superfluous feature''}), then by Theorem \ref{theorem 2}, we can estimate that the probability of the orthogonal Cp to select this superfluous feature is approximately $0.16$, if $\lambda=2\sigma_1^2$. For this reason, practitioners often assign $\lambda$ a large value in order to develop sparse models. However, if $\lambda$ is too large, the orthogonal Cp will remove important features. Therefore, determining model parameters is a challenging task in machine learning research. Later, we will show (in Proposition \ref{proposition5}) the advantage of using $\lambda=2\sigma_1^2$ in the original Mallows' Cp in terms of MSE performance. 

{\begin{remark}\label{remark4}
When the $i$-th true regression coefficient $\beta_i\neq0$, then the result in Theorem \ref{theorem 2} corresponds to the power of the hypothesis test (with the null hypothesis $ \beta_i=0$, and the alternative hypothesis $\beta_i\neq0$) concerning the z-statistic introduced in Remark \ref{remark2}, for $i=1,\cdots,k$. Therefore, based on Theorem \ref{theorem 2}, we can estimate the required sample size to achieve the desired power to detect some essential features, i.e., those with an effective size (absolute value of the corresponding regression coefficient) larger than a given threshold. Appendix A contains more details.
\end{remark}}

\begin{proposition}\label{proposition3}
Assume that for the $j$-th feature the coefficient $\beta_j$ satisfies the equality $\beta_j^2=\frac{2\sigma_1^2}{n}$ in the true regression model. If we set $\lambda=2\sigma_1^2$, then the probability of the orthogonal Cp to select the $j$-th feature is $1-[\phi(0)-\phi(-2\sqrt{2})]$, where $\phi(u)$= $\int_{-\infty}^{u}\frac{1}{\sqrt{2\pi}}\exp{\left\{-\frac{x^2}{2}\right\}}dx$.
\end{proposition}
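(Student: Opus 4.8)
The plan is to obtain this as an immediate corollary of Theorem \ref{theorem 2}, evaluating the Gaussian integral there at the particular values of $\lambda$ and $\beta_j$ prescribed in the hypothesis. First I would record the two elementary reductions: since $\sigma_1>0$, setting $\lambda=2\sigma_1^2$ gives $\sqrt{\lambda}/\sigma_1=\sqrt{2}$, and the assumption $\beta_j^2=2\sigma_1^2/n$ gives $\sqrt{n}\,|\beta_j|/\sigma_1=\sqrt{2}$. Plugging $i=j$ into the second display of Theorem \ref{theorem 2}, the limits of integration $\frac{-\sqrt{n}\beta_j\pm\sqrt{\lambda}}{\sigma_1}$ then depend only on the sign of $\beta_j$.

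Next I would dispatch the two sign cases. If $\beta_j>0$, the lower limit equals $\frac{-\sqrt{2}\sigma_1-\sqrt{2}\sigma_1}{\sigma_1}=-2\sqrt{2}$ and the upper limit equals $\frac{-\sqrt{2}\sigma_1+\sqrt{2}\sigma_1}{\sigma_1}=0$, so that $Pr^{Cp}\{j\}=1-\int_{-2\sqrt{2}}^{0}\frac{1}{\sqrt{2\pi}}\exp\{-x^2/2\}\,dx=1-[\phi(0)-\phi(-2\sqrt{2})]$, which is exactly the claimed expression. If $\beta_j<0$, the same substitution produces the limits $0$ and $2\sqrt{2}$; invoking the symmetry of the standard normal density, $\phi(2\sqrt{2})-\phi(0)=\phi(0)-\phi(-2\sqrt{2})$, shows the resulting probability coincides with the previous one. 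Hence the value is independent of the sign of $\beta_j$, and the proof is complete.

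There is no real obstacle here; the only thing to be careful about is the bookkeeping of which endpoint is the lower one (so the integral is not written with reversed limits) and the observation that $\sigma_1>0$ makes the square roots unambiguous. As a sanity check one may also verify the result through Remark \ref{remark4}: the quantity is the power of the $z$-test at effect size $\sqrt{n}\,|\beta_j|/\sigma_1=\sqrt{2}$ and significance level $\alpha_1(\lambda)=2\phi(-\sqrt{2})$, which again reduces to evaluating the standard normal c.d.f. at $0$ and $\pm 2\sqrt{2}$ and yields the same closed form.
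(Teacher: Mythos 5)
Your proposal is correct and follows essentially the same route as the paper: substitute $\lambda=2\sigma_1^2$ and $\beta_j=\pm\sqrt{2/n}\,\sigma_1$ into the second display of Theorem \ref{theorem 2}, obtain the integration limits $-2\sqrt{2}$ and $0$ (or $0$ and $2\sqrt{2}$), and invoke the symmetry of the standard normal density to cover both signs. Your explicit handling of the two sign cases and the cross-check via Remark \ref{remark4} are slightly more careful than the paper's brief symmetry remark, but the argument is the same.
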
	

\begin{proof}
	The detailed proof of Proposition \ref{proposition3} can be found in Appendix \ref{appendixC}.
\end{proof}	

Proposition \ref{proposition3} reveals that the orthogonal Cp criterion can fail to identify important features whose true regression coefficients are near the {\em critical points} $\pm\sqrt{2/n}\sigma_1$, with probability $0.5$. 
To further illustrate the importance of this problem, we will analyze two particular scenarios below.
First, 
{if} the target data size $n$ is very small, or the variance of noise $\sigma_1^2$ in the target domain is very large, this implies the importance of the feature whose coefficient takes the value of $\pm\sqrt{2/n}\sigma_1$. However, Proposition \ref{proposition3} indicates that the orthogonal Cp procedure will miss this feature with a probability of $0.5$.
Therefore, serious problems may arise in applications where the training data is often very limited and has large noise. Second, if the true regression model contains a large number of features with coefficients near $\pm\sqrt{2/n}\sigma_1$, there will inevitably be a large deviation between the orthogonal Cp estimator and the true estimator, since only half of these features will be chosen in this case. To demonstrate the importance of re-identifying these features, we conduct an experiment to compare the performances between the orthogonal Cp criterion and the orthogonal least squares method when there are  {\em{critical features}} (which are defined as features whose coefficients are at or near the critical points) in the true regression model (see Figure \ref{critical points}). Later, we will expound on this problem.



To demonstrate the advantages of employing the $L_0$-type penalty in the Cp criterion as opposed to the conventional least squares method, we calculate the MSE metric of the orthogonal Cp estimator below. Based on this metric, we will investigate some key factors behind the MSE performance of the orthogonal Cp estimator.

\begin{theorem}\label{theorem4}
The MSE measure of the estimator $\hat{\boldsymbol{a}}$ that minimizes the orthogonal Cp model in (\ref{the modified Cp}) can be calculated as follows.
\begin{eqnarray}
\text{MSE}(\hat{\boldsymbol{a}})&=&\sum_{i=1}^{k}\left[\frac{\sigma_1^2}{n}+\int_{\frac{-\sqrt{n}\beta_i-\sqrt{\lambda}}{\sigma_1}}^{\frac{-\sqrt{n}\beta_i+\sqrt{\lambda}}{\sigma_1}}\left(\beta_i^2-\frac{\sigma_1^2x^2}{n}\right)\frac{1}{\sqrt{2\pi}}\exp\left\{-\frac{x^2}{2}\right\}dx\right]\\&=&\sum_{i=1}^{k}\left[\frac{\sigma_1^2}{n}+\frac{1}{\sqrt{2\pi}\sigma_1}\int_{(y+\sqrt{n}\beta_i)^2<\lambda}\left(\beta_i^2-\frac{1}{n}y^2\right)\exp{\left\{-\frac{y^2}{2\sigma_1^2}\right\}}dy\right]\label{7}
\end{eqnarray}
\end{theorem}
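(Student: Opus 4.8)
The plan is to reduce the claim to a one–dimensional Gaussian integral, one component at a time. By definition $\text{MSE}(\hat{\boldsymbol{a}}) = \mathbb{E}_{\boldsymbol{\varepsilon}}\|\hat{\boldsymbol{a}} - \boldsymbol{\beta}\|^2 = \sum_{i=1}^k \mathbb{E}[(\hat{a}_i - \beta_i)^2]$, so it suffices to evaluate each $\mathbb{E}[(\hat{a}_i-\beta_i)^2]$ and sum. First I would record the distributional fact that drives everything: since $\boldsymbol{X}^\top\boldsymbol{X}=nI$, each column obeys $W_i^\top W_i = n$, hence $W_i^\top\boldsymbol{\varepsilon}\sim\mathcal{N}(0,n\sigma_1^2)$, and the quantity $\xi_i := \beta_i + W_i^\top\boldsymbol{\varepsilon}/n$ appearing in Proposition \ref{theorem1} satisfies $\xi_i \sim \mathcal{N}(\beta_i,\sigma_1^2/n)$. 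With this notation Proposition \ref{theorem1} reads $\hat{a}_i = \xi_i\,\mathbbm{1}\{n\xi_i^2 > \lambda\}$.

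Next I would split on whether the threshold is exceeded. On $\{n\xi_i^2>\lambda\}$ one has $(\hat a_i - \beta_i)^2 = (\xi_i-\beta_i)^2$, and on the complement $(\hat a_i - \beta_i)^2 = \beta_i^2$, so
\begin{equation*}
\mathbb{E}[(\hat a_i-\beta_i)^2] = \mathbb{E}[(\xi_i-\beta_i)^2] + \mathbb{E}\!\left[\big(\beta_i^2 - (\xi_i-\beta_i)^2\big)\,\mathbbm{1}\{n\xi_i^2\le\lambda\}\right].
\end{equation*}
The first expectation is simply $\operatorname{Var}(\xi_i) = \sigma_1^2/n$, and this contributes the leading term $\sigma_1^2/n$ in both displayed formulas of the theorem. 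What remains is to express the ``correction'' expectation as an explicit integral.

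For that I would change variables to $y := W_i^\top\boldsymbol{\varepsilon}/\sqrt{n}\sim\mathcal{N}(0,\sigma_1^2)$, so that $\xi_i - \beta_i = y/\sqrt{n}$ and the event $\{n\xi_i^2\le\lambda\}$ becomes $\{(y+\sqrt{n}\beta_i)^2 < \lambda\}$ (the boundary has probability zero and can be dropped). Writing the correction expectation as an integral against the $\mathcal{N}(0,\sigma_1^2)$ density and using $\beta_i^2-(\xi_i-\beta_i)^2 = \beta_i^2 - y^2/n$ produces exactly expression \eqref{7}. Finally, the substitution $x = y/\sigma_1$ converts the density into the standard normal one, maps the region $\{(y+\sqrt{n}\beta_i)^2<\lambda\}$ onto the interval $\big(\tfrac{-\sqrt{n}\beta_i-\sqrt{\lambda}}{\sigma_1},\,\tfrac{-\sqrt{n}\beta_i+\sqrt{\lambda}}{\sigma_1}\big)$, and turns $\beta_i^2-y^2/n$ into $\beta_i^2-\sigma_1^2 x^2/n$, which is the first stated expression. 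Summing over $i=1,\dots,k$ finishes the proof.

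I do not anticipate a serious obstacle: once Proposition \ref{theorem1} is invoked, the whole computation is a bounded Gaussian integral over an interval. The points that need care are (i) the distributional claim $W_i^\top\boldsymbol{\varepsilon}\sim\mathcal{N}(0,n\sigma_1^2)$, which is where orthogonality $\boldsymbol{X}^\top\boldsymbol{X}=nI$ enters and which also guarantees that $\xi_i$ and $y$ have the stated marginals; (ii) correctly propagating the thresholding region through the successive substitutions, noting that its boundary is null; and (iii) the bookkeeping of signs inside the quadratic so that the endpoints of the interval come out precisely as in the theorem statement.
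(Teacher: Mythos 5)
Your proposal is correct and follows essentially the same route as the paper: both split on the selection event $\{n\xi_i^2>\lambda\}$, extract the leading variance term $\sigma_1^2/n$, and express the correction as a Gaussian integral over the non-selection region before the change of variables $x=y/\sigma_1$. The only cosmetic difference is that you handle the split with indicator functions and an add-and-subtract step, whereas the paper routes through the law of total expectation and a conditional expectation; the resulting integrals are identical.
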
	

\begin{proof}
The detailed proof of Theorem \ref{theorem4} can be found in Appendix \ref{appendixC}.
\end{proof}

After closely examining the second equality (\ref{7}) in Theorem \ref{theorem4}, we learn that utilizing the feature selection technique (or applying the $L_0$ penalty term on the regression coefficients in the Cp criterion) has the potential to decrease the MSE metric compared to the least squares method, even when the true regression model is not sparse. Furthermore, the MSE metric of the orthogonal least squares (LS) estimator is $\frac{k\sigma_1^2}{n}$ (which amounts to the case when $\lambda=0$ in (\ref{7})) under our problem settings. Below, we show a theoretical advantage of setting $\lambda=2\sigma_1^2$ in the original Cp criterion.

\begin{proposition}\label{proposition5}
	Let $f(x):=\left(\beta_i^2-\frac{1}{n}x^2\right)\exp{\left\{-\frac{x^2}{2\sigma_1^2}\right\}}$, where $x\in(-\infty, +\infty)$. Then, the global minimum points of $f(x)$ are $\pm\sqrt{n\beta_i^2+2\sigma_1^2}$. If we set $\lambda=2\sigma_1^2$, then at least one of these two global minimizers belongs to the integral interval $\left(-\sqrt{n}\beta_i-\sqrt{\lambda}, -\sqrt{n}\beta_i+\sqrt{\lambda}\right)$ in (\ref{7}).
\end{proposition}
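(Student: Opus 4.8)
The plan is to establish the two assertions in turn: first pin down the global minimizers of $f$ by elementary univariate calculus, and then verify the claimed containment by a short inequality argument after normalizing the sign of $\beta_i$.

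First I would differentiate. Writing $f(x)=\left(\beta_i^2-\tfrac{x^2}{n}\right)\exp\!\left\{-\tfrac{x^2}{2\sigma_1^2}\right\}$ and applying the product rule,
\begin{equation}
f'(x)=-x\exp\!\left\{-\tfrac{x^2}{2\sigma_1^2}\right\}\left[\tfrac{2}{n}+\tfrac{1}{\sigma_1^2}\left(\beta_i^2-\tfrac{x^2}{n}\right)\right],
\end{equation}
so $f'(x)=0$ precisely when $x=0$ or $x^2=n\beta_i^2+2\sigma_1^2$; the critical points are thus $0$ and $\pm x^\ast$ with $x^\ast:=\sqrt{n\beta_i^2+2\sigma_1^2}$. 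Evaluating, $f(0)=\beta_i^2\ge 0$ and $f(x)\to 0$ as $|x|\to\infty$ (the Gaussian factor dominates the quadratic), whereas, using $\beta_i^2-\tfrac{x^{\ast 2}}{n}=-\tfrac{2\sigma_1^2}{n}$,
\begin{equation}
f(\pm x^\ast)=-\frac{2\sigma_1^2}{n}\exp\!\left\{-\frac{n\beta_i^2+2\sigma_1^2}{2\sigma_1^2}\right\}<0.
\end{equation}
Since $f$ is continuous on $(-\infty,+\infty)$, even, vanishes at $\pm\infty$, and is negative somewhere, its infimum is attained at one of its critical points; only $\pm x^\ast$ give a negative value, so these (and only these) are the global minimizers. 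I would make this rigorous by choosing $R$ large enough that $|f|<\tfrac{2\sigma_1^2}{n}e^{-x^{\ast 2}/2\sigma_1^2}$ outside $[-R,R]$, noting $f$ attains its minimum on the compact set $[-R,R]$, and comparing the values at $0$ and $\pm x^\ast$.

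Second, with $\lambda=2\sigma_1^2$ we have $\sqrt\lambda=\sqrt2\,\sigma_1$ and $x^\ast=\sqrt{n\beta_i^2+\lambda}$, and the target interval is $I:=\left(-\sqrt{n}\beta_i-\sqrt\lambda,\,-\sqrt{n}\beta_i+\sqrt\lambda\right)$. Replacing $\beta_i$ by $-\beta_i$ leaves the pair $\{\pm x^\ast\}$ unchanged and only reflects $I$ through the origin, so the property ``$\{\pm x^\ast\}\cap I\neq\emptyset$'' does not depend on the sign of $\beta_i$; hence I may assume $\beta_i\ge 0$ and show that the specific minimizer $-x^\ast$ lies in $I$, i.e.
\begin{equation}
\sqrt{n}\beta_i-\sqrt\lambda<\sqrt{n\beta_i^2+\lambda}<\sqrt{n}\beta_i+\sqrt\lambda .
\end{equation}
The right inequality, after squaring the non-negative sides, reduces to $0<2\sqrt{n\lambda}\,\beta_i$; the left inequality is trivial when $\sqrt n\beta_i\le\sqrt\lambda$ and otherwise reduces to $-2\sqrt{n\lambda}\,\beta_i<0$ after squaring. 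Both hold whenever $\beta_i>0$, which proves the containment.

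I expect the global-minimality argument in the first part to be the only delicate point: one must argue carefully that the infimum of $f$ over all of $(-\infty,+\infty)$ is actually attained and is not at $x=0$, not merely that $\pm x^\ast$ are stationary points. I would also flag the boundary case $\beta_i=0$, where $\pm x^\ast=\pm\sqrt\lambda$ are the endpoints of the open interval $I$, so the containment holds only in the closed sense; this is harmless for the use of the proposition in~(\ref{7}), since for $\beta_i=0$ the integrand $f$ is non-positive on all of $I$ and the corresponding integral is negative regardless.
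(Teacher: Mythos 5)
Your proof is correct and follows essentially the same route as the paper's: differentiate $f$, identify the critical points $0$ and $\pm\sqrt{n\beta_i^2+2\sigma_1^2}$, and then check that one of the latter falls in the integration interval when $\lambda=2\sigma_1^2$. You are in fact more careful than the paper in two spots it glosses over — the attainment argument for the global minimum (the paper only says "it is easy to check") and the explicit squaring argument for the containment, including the observation that when $\beta_i=0$ the minimizers sit exactly on the endpoints of the open interval, a degenerate case the paper does not mention but which, as you note, is harmless for the intended use in (\ref{7}).
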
	

\begin{proof}
	The detailed proof of Proposition \ref{proposition5} can be found in Appendix \ref{appendixC}.
\end{proof}

Based on Proposition \ref{proposition5}, we can expect to obtain a lower MSE of the orthogonal Cp estimator (\ref{7}) by setting $\lambda=2\sigma_1^2$ in (\ref{the modified Cp}), because of the inclusion of minimum points in the integral interval. 

In general, the Cp criterion outperforms the least squares method under the sparse model assumption (based on the MSE value). This trend is consistent with our results in Theorem \ref{theorem4}, where the integrand in (\ref{7}) is negative if the corresponding regression coefficient is zero. However, we are more interested in the performance of the orthogonal Cp criterion in the presence of critical features in the true regression model.


To understand the MSE behavior of the orthogonal Cp criterion with critical features in the true regression model, we conduct a numerical experiment to compare the performances between the orthogonal Cp criterion and the orthogonal least squares method in the presence of critical features.

We generate data from $\boldsymbol { y } = \boldsymbol { X } \boldsymbol { \beta } + \boldsymbol { \varepsilon }$ as described previously ($\boldsymbol{X}^{\top}\boldsymbol{X}=nI$), where each element of $\boldsymbol{\varepsilon} : = \left( \varepsilon _ { 1 }, \varepsilon _ { 2 }, \cdots, \varepsilon _ { n } \right) ^ { \top }$ is a standard Gaussian noise. Let $\boldsymbol { \beta }=[1,-0.01, 0.2, -0.21,$ $ 0.19, 0.02]^{\top}$ in which the third, fourth, fifth elements are at (or nearby) the critical points $\pm\sqrt{2\sigma_1^2/n}$ corresponding to the case when $n=50$ and $\sigma_1=1$. We simulated data with $n=(10,15,20,25,\cdots,90,95,100)$. For each sample size, we randomly simulated $5000$ \textcolor{black}{datasets}, and applied the Cp and standard least squares method. In addition, we set the tuning parameter in the Cp model as $\lambda=2$ (the logic behind this choice is found in Proposition \ref{proposition5}). We see an intersection point of the two resulting MSE curves (see (1) in Figure \ref{critical points}). Beyond this point, the performance of the orthogonal Cp criterion no longer surpasses that of the least squares method. Moreover, the maximum difference between the MSE values of the orthogonal Cp estimator and orthogonal least squares estimator occurs at $n=50$ (see picture (2) in Figure \ref{critical points}), which is consistent with our analysis and results.

\begin{figure}[htbp]
	\begin{minipage}[t] {0.5\linewidth}
		\centerline{\includegraphics[height=6.5cm,width=8.0cm]{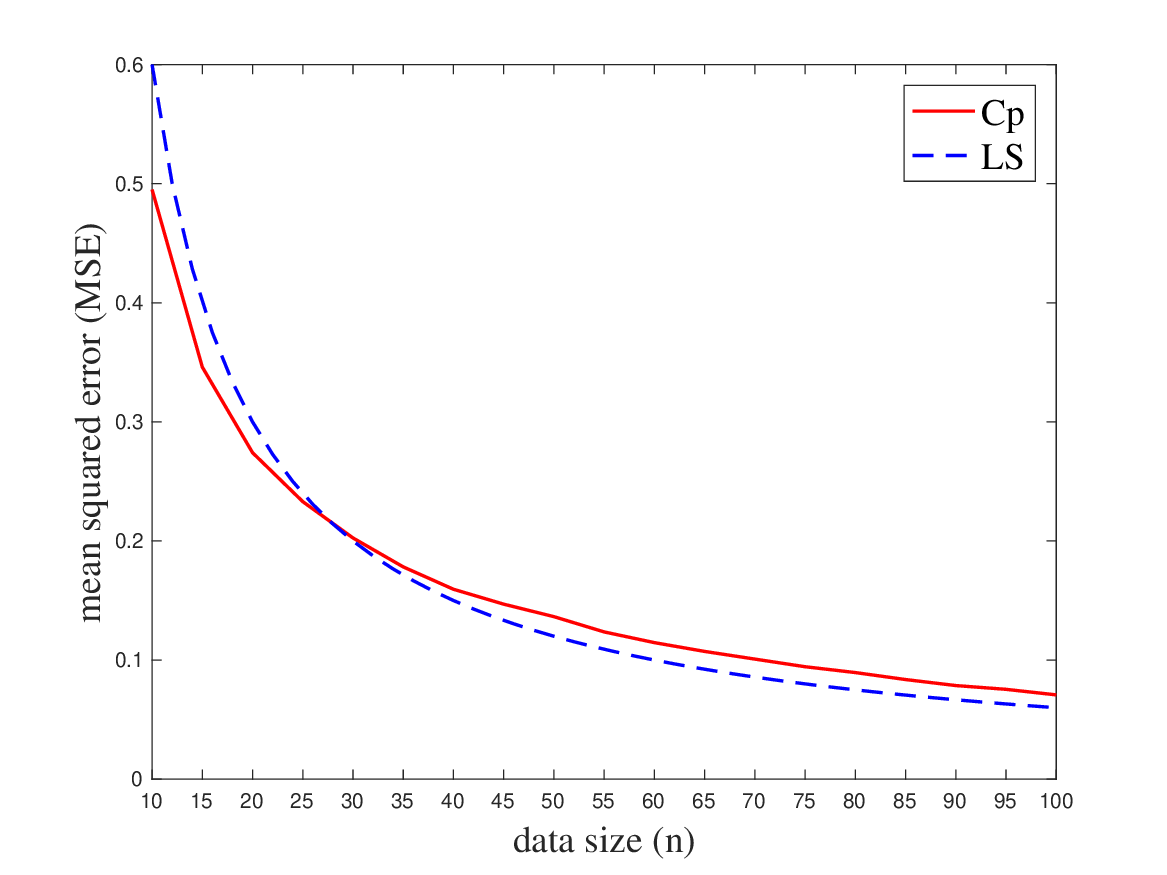}}
		\centerline{(1)}
	\end{minipage}
	\hfill
	\begin{minipage}[t] {0.5\linewidth}
		\centerline{\includegraphics[height=6.5cm,width=8.0cm]{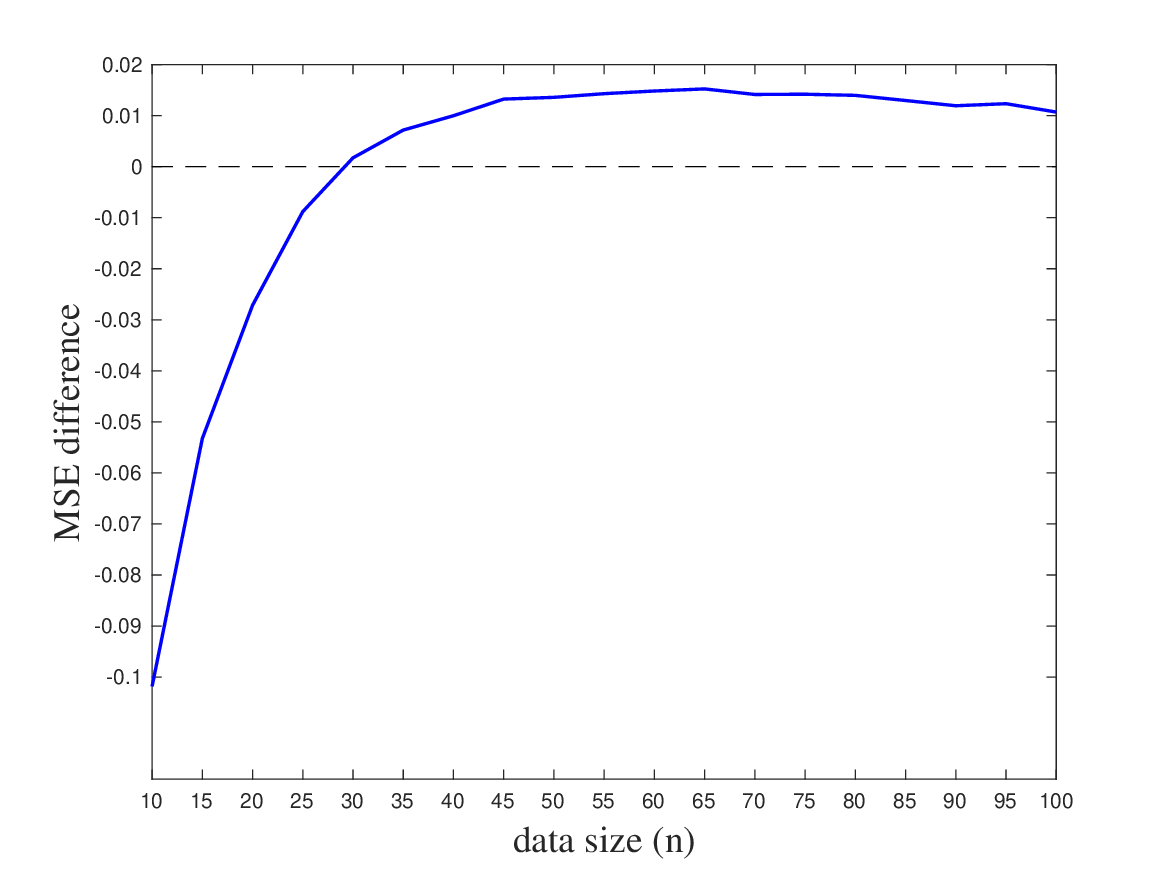}}
		\centerline{(2)}
	\end{minipage}
	\caption{\small The MSE performance comparison between the Cp criterion and the least squares method in the presence of critical points in the true regression coefficients. {Figure} (1) shows an intersection point. {Figure} (2) depicts the difference between the MSE values of the orthogonal Cp estimator and the orthogonal least squares estimator. }
	\label{critical points}
\end{figure}


As this section illustrates, 
failing to identify critical features can lead to poor MSE performance of the orthogonal Cp method.
However, our analysis is based on the assumption that the size of available training data in the target domain $n$ is small. In this case, ignoring the critical features is inappropriate, as these features may have a significant impact on the MSE value. Therefore, our aim is to ameliorate this problem by incorporating transfer learning into the Cp criterion (referred to as TLCp hereafter). Intuitively, we can expect the orthogonal TLCp estimator to get a lower MSE value if it helps re-identify the critical features. 

\section{TLCp Approach for Feature Selection}

In this section, we describe the developed TLCp model, which provides a remedy to the disadvantage of the Cp criterion. However, the proposed TLCp scheme is not simply a combination of two learning methods. Rather, we will show that the proposed orthogonal TLCp model has the potential to outperform the orthogonal Cp (\ref{the modified Cp}) in virtue of the embedded transfer learning technique. Specifically, our results prove the superiority of the orthogonal TLCp method in both stability (with respect to feature selection) and accuracy (in terms of MSE measure), when the tuning parameters are chosen based on explicit rules that we provide.

\subsection{Transfer Learning with Mallows' Cp}

Before introducing our proposed TLCp learning framework, we will first illustrate the corresponding problem setting. In addition to the training set $\{(x_1^i, x_2^i, \cdots, x_k^i;y_i)\}_{i=1}^{n}$ for the target regression task previously mentioned, there are several source domains in which the corresponding tasks are similar to the target. Our intuitive motivation is to borrow (common) knowledge from the source tasks for enhancing the prediction capacity of the target task. {Here, without loss of generality, we only consider one source task. The TLCp with more than two tasks (abbreviated as ``general TLCp'') will be discussed in Appendix B.}

Specifically, we define the source training set as $\{(\tilde{x}_1^i, \tilde{x}_2^i, \cdots, \tilde{x}_k^i;\tilde{y}_i)\}_{i=1}^{m}$, which are i.i.d. sampled from the true but unknown relation $\boldsymbol { \tilde{y} } = \boldsymbol { \tilde{X} } (\boldsymbol { \beta}+\boldsymbol {\delta}) + \boldsymbol { \eta }$, where $\boldsymbol { \tilde{y} } : = \left( \tilde{y} _ { 1 }, \tilde{y} _ { 2 }, \cdots, \tilde{y} _ { m } \right) ^ { \top }$, $\boldsymbol { \delta} : = \left( \delta _ { 1 }, \delta _ { 2 }, \cdots ,\delta _ { k } \right) ^ { \top }$ quantifies the dissimilarity between the target task and the source task, and $\boldsymbol { \tilde{X} } : = (\tilde{X}_1, \tilde{X}_2, \cdots, \tilde{X}_m) ^ { \top }=(\tilde{W}_1 \tilde{W}_2\cdots \tilde{W}_k)$ is the design matrix for the source task. We also denote the residual vector as $\eta : = \left( \eta _ { 1 }, \eta _ { 2 }, \cdots, \eta _ { m } \right) ^ { \top }$, where $\eta_i \sim \mathcal { N } \left( 0 , \sigma_2 ^ { 2 } \right)$ for $i=1,\cdots, m$.

Now, we can begin to build the TLCp model, which is obtained naturally by embedding the transfer learning technique (\ref{transfer learning}) into Mallows' Cp criterion (\ref{the modified Cp}), resulting in the following model:
\begin{equation}\label{TLCp}
\min_{\boldsymbol{v}_1,\boldsymbol{v}_2,\boldsymbol{w}_0}\sum _ { i = 1 } ^ { n }\lambda_1(y_i-\boldsymbol{w}_1^{\top}X_i)^2 + \sum _ { i = 1 } ^ { m } \lambda_2(\tilde{y_{i}}-\boldsymbol{w}_2^{\top}\tilde{X_{i}})^2 + \frac { 1 } { 2 } \sum _ { t = 1 } ^ { 2 }  \boldsymbol { v }_t^{\top}\boldsymbol{\lambda} _ { 3 }\boldsymbol{v}_t +\lambda_4\bar{p},
\end{equation}	
where $\boldsymbol{w}_1=\boldsymbol{w}_0+\boldsymbol{v}_1$, $\boldsymbol{w}_2=\boldsymbol{w}_0+\boldsymbol{v}_2$ are the regression coefficients of the learning tasks in the target domain and the source domain, respectively. $\boldsymbol{w}_0$ is a common parameter used to share information between two tasks, while $\boldsymbol{v}_1 , \boldsymbol{v}_2$ are individual parameters for the source task and target task, respectively. Moreover, the non-negative integer $\bar{p}$ in (\ref{TLCp}) indicates the number of regressors to be selected in the regression problem either in the target task or source task. We can also see how minimizing the objective function in (\ref{TLCp}) implicitly forces these two tasks to identify the same best subset jointly.
In view of this, $\bar{p}$ already quantifies the model complexity to be reduced, so we omit the regularization term $\|\boldsymbol{w}_0\|$ originating from (\ref{transfer learning}). {For each task, the designed $\boldsymbol{\lambda}_3:=\text{diag}(\lambda_3^1, \cdots, \lambda_3^k)$ is a parameter matrix. Each element of this matrix reflects the significance of the individual part of a regression coefficient for each feature. More specifically, an element of  $\boldsymbol{\lambda}_3$ indicates the 
degree of relatedness of the target and source tasks for the corresponding feature (this point can be checked in Corollary \ref{corollary 13}). In the extreme case where every attribute of the parameter matrix $\boldsymbol{\lambda}_3$ is $\infty$ and $\lambda_1=\lambda_{2}$, the proposed TLCp paradigm is equivalent to the ``aggregate Cp criterion.'' In that case, the Cp problem is trained on the whole dataset formed by combining data for all tasks. When every element of $\boldsymbol{\lambda}_3$ is $0$, then the corresponding TLCp scheme shares no parameters; it only shares the sparsity of tasks. }

The point of developing the TLCp procedure by integrating the transfer learning technique with the Cp criterion is to enhance the capacity of the original Cp to execute feature selection reliably even when the target data size $n$ is very small. Furthermore, we are interested in understanding the interaction between the transfer learning algorithm and the feature selection criteria. In general, a brute combination of two learning procedures is not guaranteed to have better performance than the individual models, unless the tuning parameters are well-chosen.
Therefore, we aim to derive a parameters tuning rule for the orthogonal TLCp procedure, so as to guarantee improved performance for any sample size $n$ in the target domain. 

\subsection{Estimator of Orthogonal TLCp Approach}


Similarly to the analysis of the orthogonal Cp, we can now derive in closed form the resulting estimator of the TLCp model (\ref{TLCp}) under the orthogonality assumption. This estimator will be referred to as the orthogonal TLCp estimator hereafter. As we focus on the learning task in the target domain, only the expression of the estimator for the target task will be shown in the following Proposition.

\begin{proposition}\label{TLCp solution}
If the conditions $\boldsymbol{X}^{\top}\boldsymbol{X}=nI$ and $\boldsymbol{\tilde{X}}^{\top}\boldsymbol{\tilde{X}}=mI$ hold, the estimated regression coefficients for the target learning task in the TLCp model are as follows:
\begin{eqnarray}\label{9}
\hat{w}_1^{i}=\left\{
\begin{matrix}
{ {\beta_{i}}+D_1^{i}{\delta}_{i}+(1-D_1^{i})\frac{1}{n}W_{i}^{\top} { \boldsymbol{\varepsilon} } }+D_1^{i}\frac{1}{m}\tilde{W}_{i}^{\top} \boldsymbol{ \eta }, & \text{if~} A_iH_i^2+B_iZ_i^2+C_iJ_i^2>\lambda_4\\
0,
&  \text{otherwise}
\end{matrix}
\right.
\end{eqnarray}
for $i=1, \cdots, k$, where $A_i=\frac{4\lambda_1\lambda_2^2m^2n}{4\lambda_1\lambda_2mn+m\lambda_2\lambda_3^i+n\lambda_1\lambda_3^i}$, $B_i=\frac{4\lambda_2\lambda_1^2mn^2}{4\lambda_1\lambda_2mn+m\lambda_2\lambda_3^i+n\lambda_1\lambda_3^i}$, $C_i=\frac{\lambda_3^{i}}{4\lambda_1\lambda_2mn+m\lambda_2\lambda_3^i+n\lambda_1\lambda_3^i}$ and $D_1^{i}=\frac{\lambda_2\lambda_3^i}{4\lambda_1\lambda_2n+\lambda_2\lambda_3^i+\frac{n}{m}\lambda_1\lambda_3^i}$ are determined by parameters $\lambda_1, \lambda_2, \lambda_3^i$. Above,
$H_i:=\beta_i+\delta_i+\frac{1}{m}\tilde{W}_{i}^{\top} \boldsymbol{ \eta }$, $Z_i:=\beta_i+\frac{1}{n}W_{i}^{\top} { \boldsymbol{\varepsilon} } $ and $J_i:=m\lambda_2H_i+n\lambda_1Z_i$ are random variables which arise due to the noises contained in responses $y_i$ and $\tilde{y}_i$ for the target and source tasks, $i=1, \cdots, k$.
\end{proposition}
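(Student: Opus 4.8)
The plan is to mirror the derivation behind Proposition~\ref{theorem1}: reduce the combinatorial problem~(\ref{TLCp}) to a separable family of low-dimensional convex subproblems, solve each in closed form, and then read off the rule that decides whether a coordinate is retained.

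First I would fix a candidate common support $S$ of size $\bar p$ and impose $w_1^i=w_2^i=0$ for $i\notin S$. For such an excluded feature the cheapest feasible choice is $w_0^i=v_1^i=v_2^i=0$, since the squared-error contributions are then already fixed and the ridge term $\frac{\lambda_3^i}{2}\big((v_1^i)^2+(v_2^i)^2\big)$ is minimized at the origin. Using $\boldsymbol{X}^\top\boldsymbol{X}=nI$ and $\boldsymbol{\tilde{X}}^\top\boldsymbol{\tilde{X}}=mI$ and completing the square, the two data-fit sums become $\lambda_1 n\sum_i (w_1^i-Z_i)^2$ and $\lambda_2 m\sum_i (w_2^i-H_i)^2$ up to an additive constant, where $Z_i$ and $H_i$ are precisely the per-task ordinary least squares coefficients $\frac{W_i^\top\boldsymbol{y}}{n}=\beta_i+\frac{W_i^\top\boldsymbol{\varepsilon}}{n}$ and $\frac{\tilde{W}_i^\top\boldsymbol{\tilde{y}}}{m}=\beta_i+\delta_i+\frac{\tilde{W}_i^\top\boldsymbol{\eta}}{m}$. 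Hence the objective decouples across coordinates: its value is a constant plus $\sum_i\min\{\,g_i^\star+\lambda_4,\ \lambda_1 n Z_i^2+\lambda_2 m H_i^2\,\}$, where $g_i^\star$ is the minimum over $(w_0^i,v_1^i,v_2^i)$ of $g_i:=\lambda_1 n(w_0^i+v_1^i-Z_i)^2+\lambda_2 m(w_0^i+v_2^i-H_i)^2+\frac{\lambda_3^i}{2}\big((v_1^i)^2+(v_2^i)^2\big)$, and the second term in the $\min$ is the cost of forcing $w_1^i=w_2^i=0$. Minimizing over $S$ then amounts to taking, coordinatewise, whichever of the two options is smaller.

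Next I would minimize the strictly convex quadratic $g_i$ by setting its gradient to zero. The stationarity conditions express $w_0^i+v_1^i-Z_i$ and $w_0^i+v_2^i-H_i$ as scalar multiples of $v_1^i$ and $v_2^i$ respectively; eliminating $v_1^i,v_2^i$ in favor of $w_0^i$ and solving the remaining scalar equation gives $w_0^i$, and back-substitution yields $\hat{w}_1^i=w_0^i+v_1^i=(1-D_1^i)Z_i+D_1^i H_i$. Expanding $Z_i$ and $H_i$ produces the claimed affine combination of $\beta_i$, $\delta_i$, $\frac{1}{n}W_i^\top\boldsymbol{\varepsilon}$ and $\frac{1}{m}\tilde{W}_i^\top\boldsymbol{\eta}$, and a short check identifies the coefficient with the stated $D_1^i$. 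For the threshold, I would substitute the optimal $(w_0^i,v_1^i,v_2^i)$ back into $g_i$; using the stationarity relations, $g_i^\star$ reduces to a single quadratic in $Z_i-H_i$, and after clearing the common denominator $4\lambda_1\lambda_2 mn+m\lambda_2\lambda_3^i+n\lambda_1\lambda_3^i$ the difference $\big(\lambda_1 n Z_i^2+\lambda_2 m H_i^2\big)-g_i^\star$ collapses to $A_i H_i^2+B_i Z_i^2+C_i J_i^2$ with $J_i=m\lambda_2 H_i+n\lambda_1 Z_i$ and $A_i,B_i,C_i$ as in the statement. Consequently feature $i$ is retained (and $\hat{w}_1^i$ takes the nonzero value above) exactly when $A_i H_i^2+B_i Z_i^2+C_i J_i^2>\lambda_4$, which is the announced dichotomy.

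The routine but most error-prone step is this last algebraic simplification, verifying that $\big(\lambda_1 n Z_i^2+\lambda_2 m H_i^2\big)-g_i^\star$ equals $A_i H_i^2+B_i Z_i^2+C_i J_i^2$: one must track several monomials in $\lambda_1,\lambda_2,\lambda_3^i,m,n$ and recognize the residual $Z_i H_i$ cross term as belonging to $C_i J_i^2$. A minor additional point is to confirm that the global minimizer genuinely shares a support, so that $\bar p$ is coordinatewise well defined; this holds because for a feature already charged $\lambda_4$ it is never worse to let $w_2^i$ (as well as $w_1^i$) be free.
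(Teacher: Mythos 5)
Your proposal is correct and follows essentially the same route as the paper's proof: exploit orthogonality to decouple the objective into $k$ independent scalar subproblems, solve the stationarity conditions in the selected case to get $\hat{w}_1^i=(1-D_1^i)Z_i+D_1^iH_i$, substitute back (the paper obtains $\tilde{D}^i(H_i-Z_i)^2$ in your normalization, i.e.\ $(\tilde D^i-\lambda_2 m)H_i^2+(\tilde D^i-\lambda_1 n)Z_i^2-2\tilde D^iZ_iH_i$ in its own), and compare with the cost of zeroing the coordinate to obtain the threshold $A_iH_i^2+B_iZ_i^2+C_iJ_i^2>\lambda_4$. Your completing-the-square bookkeeping and the explicit remark that the support indicator is well defined coordinatewise are presentational refinements of the paper's indicator-function argument, not a different method.
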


\begin{proof}
The detailed proof of Proposition \ref{TLCp solution} can be found in Appendix \ref{appendixC}.
\end{proof}

Proposition \ref{TLCp solution} demonstrates that whether a feature is selected by the orthogonal TLCp procedure depends not only on the data in the target domain (i.e., $Z_i$) but also on the knowledge extracted from the source data (i.e., $H_i$).

The proposed orthogonal TLCp model reduces to the orthogonal Cp in the special case of $\lambda_2=0$. Although the expression of solution (\ref{9}) for the orthogonal TLCp is far more complicated than that of the orthogonal Cp, the expressions of these two estimators share a ``similar'' structure. This observation leads us to explore the essential relationships between these two learning frameworks.

\subsection{Stability Analysis of Orthogonal TLCp in Feature Selection}

In this section, we show the proposed orthogonal TLCp method's {advantages} over the orthogonal Cp criterion in terms of feature selection, by analyzing the probability of the proposed orthogonal TLCp estimator to select every relevant feature in the learned regression model.
\begin{theorem}\label{TLCp probability}
The probability $Pr^{TLCp}\{i\}$ of the orthogonal TLCp  to select the $i$-th feature in the regression model can be calculated as follows.
\begin{align}
&Pr^{TLCp}\{i\}\notag\\
&=Pr\left\{\left(2-\frac{Q_i}{\sqrt{M_iN_i}}\right)\left[\frac{\sqrt{M_i}H_i+\sqrt{N_i}Z_i}{2}\right]^2+\left( 2+\frac{Q_i}{\sqrt{M_iN_i}}\right
)\left[\frac{\sqrt{N_i}Z_i-\sqrt{M_i}H_i}{2}\right]^2>\lambda_4\right\}\label{10}\\
&=\frac{\sqrt{mn}}{\pi\sigma_1\sigma_2\sqrt{M_iN_i}}\iint_{\left(2-\frac{Q_i}{\sqrt{M_iN_i}}\right)x^2+\left( 2+\frac{Q_i}{\sqrt{M_iN_i}}
\right)y^2>\lambda_4}\exp\bigg\{-\frac{1}{2}\bigg[\frac{n}{N_i\sigma_1^2}\left(x+y-\sqrt{N_i}\beta_i\right)^2\label{11}\\
&\qquad\qquad\qquad\qquad\qquad\qquad\qquad\qquad\qquad\qquad+\frac{m}{M_i\sigma_2^2}\left(x-y-\sqrt{M_i}(\beta_i+\delta_i)\right)^2\bigg]\bigg\}dxdy \notag
\end{align}
for $i=1, \cdots, k$, where we define $D_1^i=\frac{\lambda_2\lambda_3^i}{4\lambda_1\lambda_2n+\lambda_2\lambda_3^i+\frac{n}{m}\lambda_1\lambda_3^i}$, $D_2^i=\frac{\lambda_1\lambda_3^i}{4\lambda_1\lambda_2m+\lambda_1\lambda_3^i+\frac{m}{n}\lambda_2\lambda_3^i}$, $D_3^i=\frac{2\lambda_1\lambda_2}{4\lambda_1\lambda_2+\frac{1}{n}\lambda_2\lambda_3^i+\frac{1}{m}\lambda_1\lambda_3^i}$, $\tilde{D}^i=\lambda_1n(D_1^i)^2+\lambda_2m(D_2^i)^2+\lambda_3^i(D_3^i)^2$, $M_i=-\tilde{D}^i+\lambda_2m$, $N_i=-\tilde{D}^i+\lambda_1n$ and $Q_i=-2\tilde{D}^i$.  Also, $H_i=\beta_i+\delta_i+\frac{1}{m}\tilde{W}_{i}^{\top} \boldsymbol{ \eta }$, $Z_i=\beta_i+\frac{1}{n}W_{i}^{\top} { \boldsymbol{\varepsilon} } $ are two random variables which are the same as those given by Proposition \ref{TLCp solution}, where $i=1, \cdots, k$.
\end{theorem}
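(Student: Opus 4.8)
The plan is to turn the selection event into a two-dimensional Gaussian integral. By Proposition~\ref{TLCp solution}, the target coordinate $\hat w_1^i$ of the orthogonal TLCp estimator is nonzero exactly when $A_iH_i^2+B_iZ_i^2+C_iJ_i^2>\lambda_4$, so $Pr^{TLCp}\{i\}=Pr\{A_iH_i^2+B_iZ_i^2+C_iJ_i^2>\lambda_4\}$. Substituting $J_i=m\lambda_2H_i+n\lambda_1Z_i$, the left-hand side becomes the quadratic form $(A_i+C_im^2\lambda_2^2)H_i^2+(B_i+C_in^2\lambda_1^2)Z_i^2+2C_imn\lambda_1\lambda_2H_iZ_i$ in $(H_i,Z_i)$. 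The first step is the algebraic identity that this equals $M_iH_i^2+N_iZ_i^2-Q_iH_iZ_i$: writing $A_i,B_i,C_i$ and (after clearing the $n/m$, $1/n$, $1/m$ factors) $D_1^i,D_2^i,D_3^i$ over the common denominator $S:=4\lambda_1\lambda_2mn+m\lambda_2\lambda_3^i+n\lambda_1\lambda_3^i$ collapses $\tilde D^i$ to $\lambda_1\lambda_2mn\lambda_3^i/S$ and then gives $A_i+C_im^2\lambda_2^2=m^2\lambda_2^2(4\lambda_1n+\lambda_3^i)/S=\lambda_2m-\tilde D^i=M_i$, and by symmetry $B_i+C_in^2\lambda_1^2=N_i$ and $2C_imn\lambda_1\lambda_2=2\tilde D^i=-Q_i$.

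Next I would diagonalize this form to obtain~\eqref{10}. Setting $u=\sqrt{M_i}H_i$ and $v=\sqrt{N_i}Z_i$ (here $M_i,N_i>0$) yields $M_iH_i^2+N_iZ_i^2-Q_iH_iZ_i=u^2+v^2-\tfrac{Q_i}{\sqrt{M_iN_i}}uv$, and the rotation $x=(u+v)/2$, $y=(v-u)/2$ turns this into $\bigl(2-\tfrac{Q_i}{\sqrt{M_iN_i}}\bigr)x^2+\bigl(2+\tfrac{Q_i}{\sqrt{M_iN_i}}\bigr)y^2$. Both coefficients are nonnegative because $Q_i^2\le 4M_iN_i$, which under the substitutions above reduces to $(\lambda_3^i)^2\le(4\lambda_1n+\lambda_3^i)(4\lambda_2m+\lambda_3^i)$. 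Since $x=\tfrac{\sqrt{M_i}H_i+\sqrt{N_i}Z_i}{2}$ and $y=\tfrac{\sqrt{N_i}Z_i-\sqrt{M_i}H_i}{2}$, this is precisely the event displayed in~\eqref{10}.

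For~\eqref{11} I would evaluate $Pr^{TLCp}\{i\}$ as an integral against the joint law of $(H_i,Z_i)$. From $\boldsymbol X^\top\boldsymbol X=nI$ we have $\|W_i\|^2=n$ and $\boldsymbol\varepsilon\sim\mathcal N(0,\sigma_1^2I)$, so $\tfrac1nW_i^\top\boldsymbol\varepsilon\sim\mathcal N(0,\sigma_1^2/n)$ and hence $Z_i\sim\mathcal N(\beta_i,\sigma_1^2/n)$; likewise $\boldsymbol{\tilde X}^\top\boldsymbol{\tilde X}=mI$ gives $H_i\sim\mathcal N(\beta_i+\delta_i,\sigma_2^2/m)$, and $H_i,Z_i$ are independent because $\boldsymbol\varepsilon$ and $\boldsymbol\eta$ are. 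Thus the probability equals the integral of the product Gaussian density over the region appearing in~\eqref{10}. Applying the change of variables $(H_i,Z_i)\mapsto(x,y)$ above — with inverse $H_i=(x-y)/\sqrt{M_i}$, $Z_i=(x+y)/\sqrt{N_i}$ and Jacobian of absolute value $\sqrt{M_iN_i}/2$ — rewrites the two terms of the exponent as $\tfrac{n}{N_i\sigma_1^2}(x+y-\sqrt{N_i}\beta_i)^2$ and $\tfrac{m}{M_i\sigma_2^2}(x-y-\sqrt{M_i}(\beta_i+\delta_i))^2$, and converts the density prefactor $\tfrac{\sqrt{mn}}{2\pi\sigma_1\sigma_2}$ into $\tfrac{\sqrt{mn}}{\pi\sigma_1\sigma_2\sqrt{M_iN_i}}$, which is exactly~\eqref{11}.

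I expect the main obstacle to be purely the bookkeeping in the first step: showing that the three quadratic-form coefficients collapse to $M_i$, $N_i$ and $-Q_i$. The efficient route is to recognize the common denominator $S$ shared by $A_i,B_i,C_i$ and, after clearing fractions, by $D_1^i,D_2^i,D_3^i$, which makes $\tilde D^i=\lambda_1\lambda_2mn\lambda_3^i/S$ immediate and reduces the rest to numerator cancellations. Once that identity is in hand, the diagonalization, the identification of the two marginal normal laws, and the Jacobian computation are all routine.
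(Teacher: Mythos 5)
Your proposal is correct and follows essentially the same route as the paper's proof: the same reduction of the selection event to the quadratic form $M_iH_i^2+N_iZ_i^2-Q_iH_iZ_i>\lambda_4$, the same rescaling and $45^{\circ}$ rotation to $U_i=\frac{\sqrt{M_i}H_i+\sqrt{N_i}Z_i}{2}$, $V_i=\frac{\sqrt{N_i}Z_i-\sqrt{M_i}H_i}{2}$, and the same Gaussian marginals for $H_i$ and $Z_i$. The only (harmless) differences are that you re-derive the identity $A_iH_i^2+B_iZ_i^2+C_iJ_i^2=M_iH_i^2+N_iZ_i^2-Q_iH_iZ_i$ from scratch over the common denominator $S$ instead of citing the corresponding equality already established in the proof of Proposition~\ref{TLCp solution}, and you obtain the double-integral form by a change of variables on the independent product density rather than by writing out the covariance matrix of $(U_i,V_i)$ as the paper does.
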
	
\begin{proof}
	The detailed proof of Theorem \ref{TLCp probability} can be found in Appendix \ref{appendixC}.
\end{proof}

Theorem \ref{TLCp probability} reveals the factors that determine whether or not a variable is selected using the proposed TLCp method. We observe that the probability of the orthogonal TLCp to identify a feature (\ref{10}) reduces to the corresponding result for the orthogonal Cp when $\lambda_1=1$, $\lambda_2=0$ and $\lambda_4=\lambda$ ($\lambda$ is a tuning parameter of the original Cp criterion (\ref{the modified Cp})). Notice that the probability of the orthogonal Cp to select the $i$-th feature can be rewritten as $Pr\left\{nZ_i^2>\lambda\right\}$ for $i=1, \cdots, k$, so that the formula (\ref{10}) in Theorem \ref{TLCp probability} is an extension of the result in Theorem \ref{theorem 2}.

In the following section, we explore the benefits of incorporating transfer learning in Cp, by comparing the ability of the orthogonal TLCp estimator to identify relevant features with that of the orthogonal Cp estimator. Toward this goal, we derive an upper bound on the probability of the orthogonal TLCp estimator to select a feature (in Corollary \ref{TLCp scale-up}), so that it shares a ``similar'' structure with the orthogonal Cp estimator presented in Theorem \ref{theorem 2}.

\begin{corollary}\label{TLCp scale-up}
The probability $Pr^{TLCp}\{\bar{i}\}$ of the orthogonal TLCp procedure to miss the $i$-th feature in the learned regression model satisfies the following relationship
\begin{align}
&Pr^{TLCp}\{\bar{i}\}\notag\\
&\leq\frac{\sqrt{2}}{\sqrt{\pi}\sigma_1}\int_{4y^2<\frac{4\lambda_4\sigma_1^2G_i^2}{2-\frac{Q_i}{\sqrt{M_iN_i}}}}  ~~\exp\left\{-\frac{1}{2\sigma_1^2}\left[2y-G_i(\sqrt{M_i}+\sqrt{N_i})\sigma_1\beta_i-G_i\sqrt{M_i}\sigma_1\delta_i\right]^2\right\}dy,\label{12}
\end{align}
where $G_i=\sqrt{\frac{mn}{nM_i\sigma_2^2+mN_i\sigma_1^2}}$, and the equality holds if the dissimilarity between the tasks from target domain and source domain with respect to the $i$-th feature is zero, that is $\delta_i=0$ for $i=1, \cdots, k$. Above, $M_i,N_i$ are from Theorem \ref{TLCp probability} for $i=1, \cdots, k$.
\end{corollary}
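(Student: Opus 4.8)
My plan is to read the bound straight off the closed form in Theorem~\ref{TLCp probability}: pass to the complementary (``missing'') event, discard one nonnegative quadratic term to enlarge that event, and then collapse the surviving one-dimensional Gaussian probability to the stated shape by a linear change of variable. Set $u_i:=\tfrac{\sqrt{M_i}H_i+\sqrt{N_i}Z_i}{2}$, $v_i:=\tfrac{\sqrt{N_i}Z_i-\sqrt{M_i}H_i}{2}$, $c_1:=2-\tfrac{Q_i}{\sqrt{M_iN_i}}$, $c_2:=2+\tfrac{Q_i}{\sqrt{M_iN_i}}$, so that by (\ref{10}) the event that the orthogonal TLCp omits feature $i$ is $\{\,c_1u_i^2+c_2v_i^2\leq\lambda_4\,\}$. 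As soon as $c_2\geq0$ (justified below), this event sits inside $\{\,c_1u_i^2\leq\lambda_4\,\}=\{\,|u_i|\leq\sqrt{\lambda_4/c_1}\,\}$, so $Pr^{TLCp}\{\bar{i}\}\leq Pr\{|u_i|\leq\sqrt{\lambda_4/c_1}\}$, and everything left is to rewrite this single Gaussian tail.

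For that rewriting I would first recall the marginal laws already behind Theorem~\ref{TLCp probability}: from $\boldsymbol{X}^{\top}\boldsymbol{X}=nI$, $\tilde{\boldsymbol{X}}^{\top}\tilde{\boldsymbol{X}}=mI$ and the independence of the two Gaussian noises, $Z_i\sim\mathcal{N}(\beta_i,\sigma_1^2/n)$ and $H_i\sim\mathcal{N}(\beta_i+\delta_i,\sigma_2^2/m)$ are independent. Hence $u_i$ is Gaussian with mean $\mu_i=\tfrac12\bigl[(\sqrt{M_i}+\sqrt{N_i})\beta_i+\sqrt{M_i}\delta_i\bigr]$ and variance $\tfrac14\bigl(\tfrac{N_i\sigma_1^2}{n}+\tfrac{M_i\sigma_2^2}{m}\bigr)=\tfrac{1}{4G_i^{2}}$, the last equality being nothing but the definition of $G_i$. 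Writing $Pr\{|u_i|\leq\sqrt{\lambda_4/c_1}\}$ as an integral of the $\mathcal{N}(\mu_i,1/(4G_i^{2}))$ density over $[-\sqrt{\lambda_4/c_1},\sqrt{\lambda_4/c_1}]$ and substituting $y=\sigma_1G_iu_i$ turns the constant $2G_i/\sqrt{2\pi}$ into $\sqrt2/(\sqrt\pi\,\sigma_1)$, turns $-2G_i^2(u_i-\mu_i)^2$ into $-\tfrac{1}{2\sigma_1^2}\bigl(2y-2\sigma_1G_i\mu_i\bigr)^2$ with $2\sigma_1G_i\mu_i=G_i(\sqrt{M_i}+\sqrt{N_i})\sigma_1\beta_i+G_i\sqrt{M_i}\sigma_1\delta_i$, and turns the integration range into $4y^2<4\lambda_4\sigma_1^2G_i^2/c_1$ --- i.e.\ exactly (\ref{12}). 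The same steps can be run on the bivariate integral (\ref{11}) instead: enlarge the ellipse $\{c_1x^2+c_2y^2\leq\lambda_4\}$ to the strip $\{|x|\leq\sqrt{\lambda_4/c_1}\}$, do the inner Gaussian integral over $y$, and observe that completing the square collapses the two exponent quadratics into one Gaussian of precision $G_i^2$ in $x$, the constants matching because $mn/(nM_i\sigma_2^2+mN_i\sigma_1^2)=G_i^2$.

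Two steps need real care, and the second is the main obstacle. First, the enlargement demands $c_2=2+\tfrac{Q_i}{\sqrt{M_iN_i}}\geq0$, equivalently $\tilde{D}^i\leq\sqrt{M_iN_i}$ (with $M_i,N_i>0$ as elsewhere); I would argue this structurally rather than by brute expansion of $D_1^i,D_2^i,D_3^i,\tilde{D}^i$: by Proposition~\ref{TLCp solution} the selection statistic $A_iH_i^2+B_iZ_i^2+C_iJ_i^2$ is a nonnegative quadratic form in $(H_i,Z_i)$ (indeed $A_i,B_i>0$, $C_i\geq0$), Theorem~\ref{TLCp probability} rewrites this very form as $c_1u_i^2+c_2v_i^2$ with $u_i,v_i$ two linearly independent linear functionals (because $M_iN_i\neq0$), and a nonnegative binary quadratic form written in linearly independent linear forms must have nonnegative coefficients, forcing $c_1,c_2\geq0$. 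Second, the asserted equality when $\delta_i=0$: since the lone inequality in the whole argument is dropping $c_2v_i^2\geq0$, (\ref{12}) is an equality precisely when $c_2=0$ (for $c_2>0$ the excised region $\{c_1u_i^2\leq\lambda_4<c_1u_i^2+c_2v_i^2\}$ always has positive Gaussian mass). I would close the equality clause by verifying that $c_2=0$ is the configuration picked out by the optimal choice of $\lambda_3^i$ when the tasks agree on coordinate $i$ --- indeed $\tilde{D}^i/\sqrt{M_iN_i}\to1$, so $c_2\to0$, as $\lambda_3^i\to\infty$, which is the natural amount of sharing when $\delta_i=0$ --- and pinning down this reconciliation with the hypothesis as stated is where I expect the real work to be.
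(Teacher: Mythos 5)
Your proposal is correct and follows essentially the same route as the paper: the paper also enlarges the ellipse $\bigl\{\bigl(2-\tfrac{Q_i}{\sqrt{M_iN_i}}\bigr)x^2+\bigl(2+\tfrac{Q_i}{\sqrt{M_iN_i}}\bigr)y^2<\lambda_4\bigr\}$ to the strip $\bigl\{\bigl(2-\tfrac{Q_i}{\sqrt{M_iN_i}}\bigr)x^2<\lambda_4\bigr\}$, integrates out $y$ by completing the square in the bivariate density, and finishes with the substitution $y=\sigma_1G_ix$; your primary variant of reading the bound off the marginal law of $U_i$ is the same computation packaged more cleanly. The two points you single out for care are genuine and are left implicit in the paper's proof: the containment of regions does require $2+\tfrac{Q_i}{\sqrt{M_iN_i}}\geq 0$ (your positive-semidefiniteness argument handles this), and the equality clause holds not because of $\delta_i=0$ per se but because the tuning rule $\lambda_3^i=4\sigma_1^2\sigma_2^2/\delta_i^2\to\infty$ forces $2+\tfrac{Q_i}{\sqrt{M_iN_i}}=0$, exactly as the paper later uses it in Lemma~\ref{lemma17}.
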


\begin{proof}
	The detailed proof of Corollary \ref{TLCp scale-up} can be found in Appendix \ref{appendixC}.
\end{proof}

Corollary \ref{TLCp scale-up} establishes a bridge to find the correlations between the TLCp and Cp estimators under the orthogonality assumption. Remark \ref{remark 1} helps us understand this point.
\begin{remark}\label{remark 1}
To provide an intuitive guidance for deriving a parameters tuning rule of the orthogonal TLCp procedure, we rewrite the result of Theorem \ref{theorem 2} as follows,
\begin{equation*}
Pr^{Cp}\{\bar{i}\}=1-Pr^{Cp}\{i\}=\frac{\sqrt{n}}{\sqrt{2\pi}\sigma_1}\int_{nx^2<\lambda}\exp\left\{-\frac{1}{2}\frac{\left(x-\beta_i\right)^2}{\frac{\sigma_1^2}{n}}\right\}dx.
\end{equation*}
Furthermore, let $y=\frac{\sqrt{n}}{2}x$ and substitute it into the formula above to obtain
\begin{equation*}
Pr^{Cp}\{\bar{i}\}=\frac{\sqrt{2}}{\sqrt{\pi}\sigma_1}\int_{4y^2<\lambda}\exp\left\{-\frac{\left(2y-\sqrt{n}\beta_i\right)^2}{2\sigma_1^2}\right\}dy,
\end{equation*}
which is ``similar'' to (\ref{12}) in Corollary \ref{TLCp scale-up}.
\end{remark}

In light of these observations, we have now accumulated sufficient information to theoretically derive the conditions under which the resulting orthogonal TLCp procedure will outperform the orthogonal Cp by re-identifying some important features in the regression model that may be missed by the orthogonal Cp criterion.

\begin{theorem}\label{tune parameters}
Consider the $\ell$-th feature with true regression coefficient  $\beta_{\ell}\neq0$. The probability of the orthogonal TLCp to identify this feature will be strictly larger than that of the orthogonal Cp, if the tuning parameters of the orthogonal TLCp procedure $\lambda_1, \lambda_2, \lambda_3^{\ell}, \lambda_4$ are chosen to satisfy the conditions
\begin{align}
&(1)~ |\sqrt{n}\beta_{\ell}|<|G_{\ell}\sigma_1|\cdot\left|(\sqrt{M_{\ell}}+\sqrt{N_{\ell}})\beta_{\ell}+\sqrt{M_{\ell}}\delta_{\ell}\right|,\label{13}\\
&(2) ~\lambda_4=\min_{i\in\{1,\cdots,k\}}\left\{\frac{\lambda\left(2-\frac{Q_i}{\sqrt{M_iN_i}}\right)}{4\sigma_1^2(G_i)^2}\right\}
\label{14}
\end{align}
where $\lambda$ is the tuning parameter of the original Cp criterion (\ref{the modified Cp}), and $M_{\ell}, N_{\ell}, Q_{\ell}, G_{\ell}$ are functions of $\lambda_1, \lambda_2, \lambda_3^{\ell}$ as introduced in Theorem \ref{TLCp probability}.
\end{theorem}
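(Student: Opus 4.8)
The plan is to reduce the assertion to a one-dimensional comparison between two truncated Gaussian integrals --- one attached to the orthogonal Cp, the other to an upper bound for the orthogonal TLCp --- and then to invoke the elementary fact that the Gaussian mass captured by an interval symmetric about the origin strictly decreases as the mean is moved away from $0$.

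First I would observe that $Pr^{TLCp}\{\ell\}>Pr^{Cp}\{\ell\}$ is equivalent to $Pr^{TLCp}\{\bar{\ell}\}<Pr^{Cp}\{\bar{\ell}\}$. By Remark~\ref{remark 1},
\[
Pr^{Cp}\{\bar{\ell}\}=\frac{\sqrt{2}}{\sqrt{\pi}\,\sigma_1}\int_{4y^2<\lambda}\exp\left\{-\frac{(2y-\sqrt{n}\,\beta_{\ell})^2}{2\sigma_1^2}\right\}dy,
\]
and by Corollary~\ref{TLCp scale-up},
\[
Pr^{TLCp}\{\bar{\ell}\}\le\frac{\sqrt{2}}{\sqrt{\pi}\,\sigma_1}\int_{4y^2<\lambda'_{\ell}}\exp\left\{-\frac{(2y-\nu_{\ell})^2}{2\sigma_1^2}\right\}dy,
\]
where I abbreviate $\lambda'_{\ell}:=\frac{4\lambda_4\sigma_1^2 G_{\ell}^2}{\,2-Q_{\ell}/\sqrt{M_{\ell}N_{\ell}}\,}$ and $\nu_{\ell}:=G_{\ell}\sigma_1\big[(\sqrt{M_{\ell}}+\sqrt{N_{\ell}})\beta_{\ell}+\sqrt{M_{\ell}}\,\delta_{\ell}\big]$. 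The two right-hand sides share the same prefactor and the same Gaussian shape; they differ only in the truncation level ($\lambda$ versus $\lambda'_{\ell}$) and in the mean of the exponent ($\sqrt{n}\,\beta_{\ell}$ versus $\nu_{\ell}$). Hence it suffices to prove that the right-hand side of the second display is strictly smaller than that of the first.

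Next I would dispose of the truncation level. All factors in $\lambda'_{\ell}$ are positive: $\lambda_4>0$, $G_{\ell}^2>0$, and $2-Q_{\ell}/\sqrt{M_{\ell}N_{\ell}}=2+2\tilde{D}^{\ell}/\sqrt{M_{\ell}N_{\ell}}>0$ since $Q_{\ell}=-2\tilde{D}^{\ell}<0$ while $M_{\ell},N_{\ell}>0$ (positivity of $M_{\ell},N_{\ell}$ is inherited from Theorem~\ref{TLCp probability}, where they arise as the effective variances of a positive-definite quadratic form); thus $\{4y^2<\lambda'_{\ell}\}$ is a genuine interval symmetric about $0$. Condition~(2) reads $\lambda_4=\min_{i}\frac{\lambda(2-Q_i/\sqrt{M_iN_i})}{4\sigma_1^2 G_i^2}$, which in particular gives $\lambda_4\le\frac{\lambda(2-Q_{\ell}/\sqrt{M_{\ell}N_{\ell}})}{4\sigma_1^2 G_{\ell}^2}$, i.e.\ $\lambda'_{\ell}\le\lambda$. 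Since the Gaussian integrand is nonnegative, enlarging the domain from $\{4y^2<\lambda'_{\ell}\}$ to $\{4y^2<\lambda\}$ can only increase the integral, so it remains to compare the two integrals over the common domain $\{4y^2<\lambda\}$.

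Finally I would record a monotonicity lemma: for fixed $a>0$ the map $g(c):=\int_{-a}^{a}\exp\{-(2y-c)^2/2\sigma_1^2\}\,dy$ is even in $c$ and strictly decreasing in $|c|$. Indeed, after the substitution $u=2y$ and differentiation under the integral sign one gets $g'(c)=\frac{1}{2}\big(\exp\{-(2a+c)^2/2\sigma_1^2\}-\exp\{-(2a-c)^2/2\sigma_1^2\}\big)$, which is $<0$ for $c>0$ and $>0$ for $c<0$, while $g(-c)=g(c)$ follows from $u\mapsto -u$. Unwinding the definition of $\nu_{\ell}$, Condition~(1) is precisely the inequality $|\sqrt{n}\,\beta_{\ell}|<|\nu_{\ell}|$; hence, with $a=\sqrt{\lambda}/2>0$, the lemma yields $g(\nu_{\ell})<g(\sqrt{n}\,\beta_{\ell})$, that is
\[
\int_{4y^2<\lambda}\exp\left\{-\frac{(2y-\nu_{\ell})^2}{2\sigma_1^2}\right\}dy<\int_{4y^2<\lambda}\exp\left\{-\frac{(2y-\sqrt{n}\,\beta_{\ell})^2}{2\sigma_1^2}\right\}dy.
\]
Chaining this with the inequality of the previous paragraph and multiplying by the common prefactor gives $Pr^{TLCp}\{\bar{\ell}\}<Pr^{Cp}\{\bar{\ell}\}$, which is the claim. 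The argument is essentially bookkeeping once Remark~\ref{remark 1} and Corollary~\ref{TLCp scale-up} are in hand; the only points that need care are (a) the positivity facts that make $\lambda'_{\ell}$ well defined and convert Condition~(2) into $\lambda'_{\ell}\le\lambda$, and (b) securing the \emph{strict} monotonicity of $g$ together with its dependence on $c$ through $|c|$ only, so that Condition~(1) can be applied in the form $|\sqrt{n}\,\beta_{\ell}|<|\nu_{\ell}|$.
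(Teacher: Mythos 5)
Your proof is correct and follows essentially the same route as the paper's: both start from the upper bound of Corollary \ref{TLCp scale-up} and the rewriting in Remark \ref{remark 1}, use condition (2) to reduce the truncation level to at most $\lambda$, and use condition (1) to push the Gaussian mean strictly further from the symmetric integration interval. The only difference is that you make the final monotonicity step explicit with a short lemma (differentiation under the integral sign), where the paper simply appeals to the shape of the normal density.
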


\begin{proof}
	The detailed proof of Theorem \ref{tune parameters} can be found in Appendix \ref{appendixC}.
\end{proof}

Theorem \ref{tune parameters} guarantees a set of good parameters to make the orthogonal TLCp superior to the orthogonal Cp in terms of feature selection. Specifically, this result shows that the TLCp estimator will be more stable than the orthogonal Cp estimator when identifying all relevant features (whose regression coefficients are {non-zero}). There is a higher chance for the orthogonal TLCp to identify these features when the tuning parameters satisfy the inequalities (\ref{13}) and (\ref{14}). However, inequalities (\ref{13}) and (\ref{14}), provided by Theorem \ref{tune parameters}, cannot be used as the explicit rules for tuning the parameters of the TLCp model, since the true regression coefficients are unknown.


To demonstrate the crucial role that Theorem \ref{tune parameters} plays in this paper, Corollary \ref{corollary 13} will show that, based on the result of Theorem \ref{tune parameters}, we can derive an explicit rule to tune the parameters of the orthogonal TLCp model. This way, the resulting estimator will not only perform better in terms of identifying important features, but also have the potential to achieve a lower MSE metric. 

\subsection{MSE Metric of Orthogonal TLCp Estimator}

In this section, we compare the proposed orthogonal TLCp estimator against the orthogonal Cp estimator in terms of the MSE measure.
To do that, we will explore the correlations between the MSE metric of the estimator induced from the orthogonal TLCp and orthogonal Cp procedures. We will begin by calculating the MSE measure of the orthogonal TLCp estimator.
\begin{theorem}\label{MSE TLCp}
The MSE measure of the orthogonal TLCp estimator $\hat{\boldsymbol{w}}_1$ can be represented as follows.
\begin{align}
&\text{MSE}(\hat{\boldsymbol{w}}_1)\label{MSE expression}\\
&=\sum_{i=1}^{k}\bigg\{\mathbb{E}[(\bar{M}_i{U_i}+\bar{N}_i{V_i}+\beta_i)^2]+\notag\\
&\qquad \iint_{\left(2-\frac{Q_i}{\sqrt{M_iN_i}}\right)x^2+\left( 2+\frac{Q_i}{\sqrt{M_iN_i}}\right)y^2<\lambda_4} \left[\beta_i^2-(\bar{M}_ix+\bar{N}_iy+\beta_i)^2\right]p\left(U_i=x,V_i=y\right)dxdy \bigg\}\notag,
\end{align}
where
\begin{itemize}
	\item $\bar{M}_i=\frac{\sqrt{M_i}-\sqrt{N_i}}{\sqrt{M_iN_i}}D_1^i-\frac{1}{\sqrt{N_i}}$, $\bar{N}_i=\frac{\sqrt{M_i}+\sqrt{N_i}}{\sqrt{M_iN_i}}D_1^i-\frac{1}{\sqrt{N_i}}$ are determined by $M_i, N_i, D_1^i$ as introduced Theorem \ref{TLCp probability}, for $i=1,\cdots, k$.
	\item $U_i=\frac{\sqrt{M_i}H_i+\sqrt{N_i}Z_i}{2}$, $V_i=\frac{-\sqrt{M_i}H_i+\sqrt{N_i}Z_i}{2}$ are two random variables related to $H_i, Z_i$ in Theorem \ref{TLCp probability}, for $i=1,\cdots, k$.
	\item $p\left(U_i=x,V_i=y\right)=T_i\exp\bigg\{-\frac{1}{2}\bigg[\frac{n\left(x+y-\sqrt{N_i}\beta_i\right)^2}{N_i\sigma_1^2}+\frac{m\left(x-y-\sqrt{M_i}(\beta_i+\delta_i)\right)^2}{M_i\sigma_2^2}\bigg]\bigg\}$ is the joint density distribution for $U_i, V_i$, and $T_i=\frac{\sqrt{mn}}{\pi\sigma_1\sigma_2\sqrt{M_iN_i}}$, for $i=1, \cdots, k$.
\end{itemize}
\end{theorem}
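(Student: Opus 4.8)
The plan is to start from $\text{MSE}(\hat{\boldsymbol{w}}_1)=\mathbb{E}\big[\|\hat{\boldsymbol{w}}_1-\boldsymbol{\beta}\|^2\big]=\sum_{i=1}^{k}\mathbb{E}\big[(\hat{w}_1^i-\beta_i)^2\big]$ and to analyze each coordinate separately using the closed form of Proposition \ref{TLCp solution}. Fix $i$ and let $g_i$ denote the value $\hat{w}_1^i$ takes on the ``selected'' branch of (\ref{9}), so that $\hat{w}_1^i=g_i\,\mathbf{1}_{S_i}$ where $S_i=\{A_iH_i^2+B_iZ_i^2+C_iJ_i^2>\lambda_4\}$. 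Since $\hat{w}_1^i-\beta_i$ equals $g_i-\beta_i$ on $S_i$ and $-\beta_i$ on $S_i^c$, we have $(\hat{w}_1^i-\beta_i)^2=(g_i-\beta_i)^2\mathbf{1}_{S_i}+\beta_i^2\mathbf{1}_{S_i^c}$; adding and subtracting $\mathbb{E}[(g_i-\beta_i)^2\mathbf{1}_{S_i^c}]$ yields the decomposition $\mathbb{E}[(\hat{w}_1^i-\beta_i)^2]=\mathbb{E}[(g_i-\beta_i)^2]+\mathbb{E}\big[(\beta_i^2-(g_i-\beta_i)^2)\mathbf{1}_{S_i^c}\big]$, which already has the shape of the claimed formula. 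The remaining work is to identify the two terms with the two summands in (\ref{MSE expression}).

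The first step is to simplify $g_i$. Substituting $H_i-\beta_i=\delta_i+\frac{1}{m}\tilde{W}_i^{\top}\boldsymbol{\eta}$ and $Z_i-\beta_i=\frac{1}{n}W_i^{\top}\boldsymbol{\varepsilon}$ into the selected branch of (\ref{9}) and collecting terms gives the clean identity $g_i=D_1^iH_i+(1-D_1^i)Z_i$. Next I would pass to the rotated coordinates $U_i=\tfrac12(\sqrt{M_i}H_i+\sqrt{N_i}Z_i)$ and $V_i=\tfrac12(-\sqrt{M_i}H_i+\sqrt{N_i}Z_i)$ of Theorem \ref{TLCp probability}, whose inverse is $H_i=(U_i-V_i)/\sqrt{M_i}$, $Z_i=(U_i+V_i)/\sqrt{N_i}$. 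Substituting into $g_i=D_1^iH_i+(1-D_1^i)Z_i$ and using $1/\sqrt{N_i}=\sqrt{M_i}/\sqrt{M_iN_i}$ to put everything over the common denominator $\sqrt{M_iN_i}$, the coefficients of $U_i$ and $V_i$ collapse exactly to $-\bar{M}_i$ and $-\bar{N}_i$; since $g_i$ is linear in $H_i,Z_i$ with no constant term, this gives $g_i-\beta_i=-(\bar{M}_iU_i+\bar{N}_iV_i+\beta_i)$ and hence $(g_i-\beta_i)^2=(\bar{M}_iU_i+\bar{N}_iV_i+\beta_i)^2$, which is the first summand.

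For the second summand I would import two facts from the proof of Theorem \ref{TLCp probability}: that the selection quadratic satisfies $A_iH_i^2+B_iZ_i^2+C_iJ_i^2=\big(2-\frac{Q_i}{\sqrt{M_iN_i}}\big)U_i^2+\big(2+\frac{Q_i}{\sqrt{M_iN_i}}\big)V_i^2$, so that $S_i^c$ is exactly the elliptical region $\{(2-Q_i/\sqrt{M_iN_i})x^2+(2+Q_i/\sqrt{M_iN_i})y^2<\lambda_4\}$ in the $(U_i,V_i)$-plane; and that, because $\boldsymbol{\varepsilon},\boldsymbol{\eta}$ are independent Gaussians, $H_i$ and $Z_i$ are independent normals, so the linear change of variables produces the joint density $p(U_i=x,V_i=y)=T_i\exp\{\cdots\}$ with $T_i=\sqrt{mn}/(\pi\sigma_1\sigma_2\sqrt{M_iN_i})$ (the factor $2/\sqrt{M_iN_i}$ from the Jacobian being absorbed into $T_i$). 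Writing $\mathbb{E}[(\beta_i^2-(g_i-\beta_i)^2)\mathbf{1}_{S_i^c}]$ as the integral of $[\beta_i^2-(\bar{M}_ix+\bar{N}_iy+\beta_i)^2]\,p(U_i=x,V_i=y)$ over that region, and summing over $i$, finishes the proof.

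The main obstacle is bookkeeping rather than ideas: one must verify the two coefficient identities (that the $U_i$- and $V_i$-coefficients of $g_i$ are precisely $-\bar{M}_i$ and $-\bar{N}_i$ as defined) and make sure the rotation diagonalizing the selection region and the rotation used to express $g_i-\beta_i$ are the \emph{same} map, so that the integration region and the integrand are written in consistent coordinates. Because the diagonalization of the quadratic form, the form of $S_i^c$, and the joint density of $(U_i,V_i)$ were all already established for Theorem \ref{TLCp probability}, the only genuinely new computation is the simplification $g_i=D_1^iH_i+(1-D_1^i)Z_i$ and its rewriting in the $(U_i,V_i)$ coordinates.
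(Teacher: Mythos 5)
Your proposal is correct and follows essentially the same route as the paper's proof: the paper also decomposes the MSE coordinatewise via the selection indicator (phrased through the law of total expectation and a conditional expectation, which is the same add-and-subtract you perform), identifies $\beta_i-g_i$ with $\bar{M}_iU_i+\bar{N}_iV_i+\beta_i$, and reuses the diagonalized selection region and the joint density of $(U_i,V_i)$ from Theorem \ref{TLCp probability}. Your explicit verification that $g_i=D_1^iH_i+(1-D_1^i)Z_i$ and that its $(U_i,V_i)$-coefficients are exactly $-\bar{M}_i,-\bar{N}_i$ fills in a step the paper states without detail, and your Jacobian bookkeeping for $T_i$ is accurate.
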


\begin{proof}
	The detailed proof of Theorem \ref{MSE TLCp} can be found in Appendix \ref{appendixC}.
\end{proof}

The MSE metric of the orthogonal TLCp estimator $\hat{\boldsymbol{w}}_1$ shares a similar structure with that of the orthogonal Cp estimator $\hat{\boldsymbol{a}}$ being the summation of two terms. The first one represents the MSE measure of the estimator obtained by combining the orthogonal least squares method with transfer learning (LSTL), and the second one implies the potential reduction of the MSE value that benefited from the feature selection technique (or the inclusion of regularization term $\bar{p}$ in the LSTL). Formula (\ref{MSE expression}) reduces to the MSE metric of the orthogonal LSTL estimator if $\lambda_4=0$. In this case, only the first term exists.


\subsection{{Parameter} Tuning for Orthogonal TLCp}

In this section, we formally derive an explicit parameters rule of the orthogonal TLCp model, such that the resulting estimator will outperform the orthogonal Cp estimator in both accuracy and stability.

As illustrated above, the first summation term of formula (\ref{MSE expression}), $\sum_{i=1}^{k}\mathbb{E}[(\bar{M}_i{U_i}+\bar{N}_i{V_i}+\beta_i)^2]$, is the exact MSE value of the orthogonal LSTL estimator.
First, we will investigate the advantage of using the transfer learning technique in the context of the simple least squares method (in Proposition \ref{explicit rule}), by exploiting whether there exists an explicit rule to tune the parameters $\lambda_1, \lambda_2, \lambda_3^i$, such that the resulting value of the term $\sum_{i=1}^{k}\mathbb{E}[(\bar{M}_i{U_i}+\bar{N}_i{V_i}+\beta_i)^2]$ can be minimized and become less than $k\sigma_1^2/n$ (the MSE value of orthogonal LS estimator).

\begin{proposition}\label{explicit rule}
Let the parameters of the orthogonal TLCp procedure be set as $\lambda_1^*=\sigma_2^2, \lambda_2^*=\sigma_1^2, {\lambda_3^i}^*=\frac{4\sigma_1^2\sigma_2^2}{\delta_i^2}$, 
for $i=1,\cdots,k$. Then, we guarantee that $\sum_{i=1}^{k}\mathbb{E}[(\bar{M}_i^*{U_i^*}+\bar{N}_i^*{V_i^*}+\beta_i)^2]$ is minimized and less than $k\sigma_1^2/n$ (the MSE value of orthogonal LS estimator). Here $\bar{M}_i^*, \bar{N}_i^*, U_i^*, V_i^*$ represent the corresponding values of $\bar{M}_i, \bar{N}_i, U_i, V_i$ after substituting $\lambda_1^*,\lambda_2^*,{\lambda_3^i}^*$ for $i=1,\cdots,k$ in the expressions of Theorem \ref{MSE TLCp}.
\end{proposition}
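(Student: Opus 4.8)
The plan is to collapse the first summation term of~(\ref{MSE expression}), namely $\sum_{i=1}^{k}\mathbb{E}[(\bar{M}_iU_i+\bar{N}_iV_i+\beta_i)^2]$, into an explicit scalar quadratic in the single quantity $D_1^i$ for each $i$, and then to minimize it termwise. The first step is an algebraic reduction. Using the identity $\frac{\sqrt{M_i}\pm\sqrt{N_i}}{\sqrt{M_iN_i}}=\frac{1}{\sqrt{N_i}}\pm\frac{1}{\sqrt{M_i}}$, I would rewrite $\bar{M}_i=(\frac{1}{\sqrt{N_i}}-\frac{1}{\sqrt{M_i}})D_1^i-\frac{1}{\sqrt{N_i}}$ and $\bar{N}_i=(\frac{1}{\sqrt{N_i}}+\frac{1}{\sqrt{M_i}})D_1^i-\frac{1}{\sqrt{N_i}}$, then substitute $U_i=\frac{\sqrt{M_i}H_i+\sqrt{N_i}Z_i}{2}$ and $V_i=\frac{-\sqrt{M_i}H_i+\sqrt{N_i}Z_i}{2}$ and collect the coefficients of $H_i$ and $Z_i$; the terms carrying $\sqrt{M_i}/\sqrt{N_i}$ cancel, leaving $\bar{M}_iU_i+\bar{N}_iV_i=-D_1^iH_i+(D_1^i-1)Z_i$. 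Inserting $H_i=\beta_i+\delta_i+\frac{1}{m}\tilde{W}_i^{\top}\boldsymbol{\eta}$ and $Z_i=\beta_i+\frac{1}{n}W_i^{\top}\boldsymbol{\varepsilon}$ from Proposition~\ref{TLCp solution}, the $\beta_i$ terms telescope, so that
\[
\bar{M}_iU_i+\bar{N}_iV_i+\beta_i=-\Big(D_1^i\delta_i+\tfrac{D_1^i}{m}\tilde{W}_i^{\top}\boldsymbol{\eta}+\tfrac{1-D_1^i}{n}W_i^{\top}\boldsymbol{\varepsilon}\Big),
\]
which is, up to sign, exactly the estimation error of the orthogonal LSTL estimator of $\beta_i$ — matching the interpretation of the first term noted before the statement. (Alternatively one may skip this step and directly use that the first term of~(\ref{MSE expression}) is the MSE of the orthogonal LSTL estimator, whose $i$-th error is the displayed quantity by Proposition~\ref{TLCp solution}.)

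Next I would take expectations. Since $\tilde{W}_i^{\top}\boldsymbol{\eta}$ and $W_i^{\top}\boldsymbol{\varepsilon}$ are independent zero-mean Gaussians with variances $m\sigma_2^2$ and $n\sigma_1^2$ — using $\tilde{\boldsymbol{X}}^{\top}\tilde{\boldsymbol{X}}=mI$ and $\boldsymbol{X}^{\top}\boldsymbol{X}=nI$ — and $D_1^i\delta_i$ is deterministic, this yields
\[
\mathbb{E}\big[(\bar{M}_iU_i+\bar{N}_iV_i+\beta_i)^2\big]=(D_1^i)^2\Big(\delta_i^2+\tfrac{\sigma_2^2}{m}\Big)+(1-D_1^i)^2\tfrac{\sigma_1^2}{n}=:g_i(D_1^i),
\]
so the entire first summation term depends on $(\lambda_1,\lambda_2,\lambda_3^i)$ only through the numbers $D_1^1,\dots,D_1^k$.

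It then remains to minimize termwise and to verify the stated formulas. Writing $a_i:=\delta_i^2+\sigma_2^2/m>0$ and $b:=\sigma_1^2/n>0$, the quadratic $g_i(D)=a_iD^2+b(1-D)^2$ attains its unconstrained minimum at $D^{*,i}=b/(a_i+b)\in(0,1)$, with value $a_ib/(a_i+b)<b$. Substituting $\lambda_1^*=\sigma_2^2$, $\lambda_2^*=\sigma_1^2$, ${\lambda_3^i}^*=4\sigma_1^2\sigma_2^2/\delta_i^2$ into $D_1^i=\frac{\lambda_2\lambda_3^i}{4\lambda_1\lambda_2n+\lambda_2\lambda_3^i+\frac{n}{m}\lambda_1\lambda_3^i}$ and cancelling the common factor $4\sigma_1^2\sigma_2^2/\delta_i^2$ from numerator and denominator gives $D_1^i=\frac{\sigma_1^2}{n\delta_i^2+\sigma_1^2+n\sigma_2^2/m}=D^{*,i}$, i.e.\ exactly the unconstrained minimizer. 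Since each $g_i$ is minimized separately and the $\lambda_3^i$ can be tuned independently while $\lambda_1,\lambda_2$ are held at $\lambda_1^*,\lambda_2^*$, the sum $\sum_i g_i(D_1^i)$ attains its global minimum at these parameters, and $\sum_{i=1}^{k}g_i(D^{*,i})=\sum_{i=1}^{k}\frac{(\delta_i^2+\sigma_2^2/m)(\sigma_1^2/n)}{\delta_i^2+\sigma_2^2/m+\sigma_1^2/n}<k\sigma_1^2/n$.

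The main obstacle is the first step: collapsing the combination of $\bar{M}_i,\bar{N}_i,U_i,V_i$ — whose definitions involve the cumbersome quantities $M_i,N_i,Q_i,\tilde{D}^i,D_2^i,D_3^i$ — down to an expression involving only $D_1^i$. The decisive simplification is the identity $\frac{\sqrt{M_i}\pm\sqrt{N_i}}{\sqrt{M_iN_i}}=\frac{1}{\sqrt{N_i}}\pm\frac{1}{\sqrt{M_i}}$; after that all dependence on $M_i,N_i$ drops out, and the remaining minimization together with the verification of the parameter formulas is one-dimensional calculus.
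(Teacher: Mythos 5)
Your proof is correct and follows essentially the same route as the paper: both reduce the first summand of (\ref{MSE expression}) to the quadratic $(D_1^i)^2(\delta_i^2+\sigma_2^2/m)+(1-D_1^i)^2\sigma_1^2/n$ in the single quantity $D_1^i$, minimize termwise at ${D_1^i}^*=\frac{\sigma_1^2/n}{\delta_i^2+\sigma_1^2/n+\sigma_2^2/m}$, and check that the stated parameters realize this minimizer with value strictly below $\sigma_1^2/n$. The only cosmetic difference is that you simplify the random variable $\bar{M}_iU_i+\bar{N}_iV_i+\beta_i$ back to the LSTL estimation error before taking expectations, whereas the paper expands the expectation first via the joint law of $(U_i,V_i)$ and then collapses it using $\bar{M}_i\pm\bar{N}_i$.
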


\begin{proof}
	The detailed proof of Proposition \ref{explicit rule} can be found in Appendix \ref{appendixC}.
\end{proof}

Theorem \ref{tune parameters} guaranteed the existence of a set of good parameters of the orthogonal TLCp model, such that it can outperform the orthogonal Cp criterion in feature selection. Below, we explicitly provide a parameters tuning rule of the orthogonal TLCp procedure (based on the result of Proposition \ref{explicit rule}) to achieve this goal.

\begin{corollary}\label{corollary 13}
 For any relevant feature, say $t$-th, whose corresponding true regression coefficient $\beta_t\neq0$, set the parameters of orthogonal TLCp model as $\lambda_1^*=\sigma_2^2, \lambda_2^*=\sigma_1^2, {\lambda_3^t}^*=\frac{4\sigma_1^2\sigma_2^2}{\delta_t^2}$ and $\lambda_4^*=\min_{i\in\{1,\cdots,k\}}\left\{\lambda\left(2-\frac{Q_i^*}{\sqrt{M_i^*N_i^*}}\right)/ 4\sigma_1^2(G_i^*)^2\right\}$, where $M_t^*,N_t^*, Q_t^*,G_t^*$ are obtained by substituting $\lambda_1^*,\lambda_2^*, {\lambda_3^t}^*$ into the expressions of $M_t, N_t, Q_t, G_t$ that was previously introduced. Then, there is a constant $\kappa(\sigma_1,\sigma_2,m,n)>0$, such that the probability of the orthogonal TLCp to select the $t$-th feature will be strictly higher than that of the orthogonal Cp, provided that $|\delta_t|\leq\kappa$.
\end{corollary}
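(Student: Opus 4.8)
The plan is to show that, for the feature $t$ and the stated choice of $\lambda_1^*,\lambda_2^*,{\lambda_3^t}^*,\lambda_4^*$, the two hypotheses (\ref{13}) and (\ref{14}) of Theorem \ref{tune parameters} are met; the desired inequality $Pr^{TLCp}\{t\}>Pr^{Cp}\{t\}$ is then exactly the conclusion of that theorem. Hypothesis (\ref{14}) needs no work: $\lambda_4^*$ is defined to be $\min_{i}\{\lambda(2-Q_i^*/\sqrt{M_i^*N_i^*})/(4\sigma_1^2(G_i^*)^2)\}$, which is precisely the right-hand side of (\ref{14}) with $M_i,N_i,Q_i,G_i$ evaluated at $\lambda_1^*,\lambda_2^*,{\lambda_3^i}^*$. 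Hence the whole argument reduces to verifying the strict inequality (\ref{13}) with $\ell=t$.

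To verify (\ref{13}) I would track the quantities $M_t,N_t,Q_t,G_t$ as $\delta_t\to 0$, using that this drives ${\lambda_3^t}^*=4\sigma_1^2\sigma_2^2/\delta_t^2\to\infty$. From the formulas in Theorem \ref{TLCp probability}, as $\lambda_3^t\to\infty$ one gets $D_1^t\to m\lambda_2/(m\lambda_2+n\lambda_1)$, $D_2^t\to n\lambda_1/(m\lambda_2+n\lambda_1)$, while $\lambda_3^t(D_3^t)^2\to 0$; therefore $\tilde D^t\to mn\lambda_1\lambda_2/(m\lambda_2+n\lambda_1)$, and so $M_t=\lambda_2 m-\tilde D^t\to m^2\lambda_2^2/(m\lambda_2+n\lambda_1)$, $N_t=\lambda_1 n-\tilde D^t\to n^2\lambda_1^2/(m\lambda_2+n\lambda_1)$, and $Q_t/\sqrt{M_tN_t}=-2\tilde D^t/\sqrt{M_tN_t}\to -2$. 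Substituting $\lambda_1^*=\sigma_2^2$, $\lambda_2^*=\sigma_1^2$, a short computation then gives $\sqrt{M_t}+\sqrt{N_t}\to\sqrt{m\sigma_1^2+n\sigma_2^2}$, $nM_t\sigma_2^2+mN_t\sigma_1^2\to mn\sigma_1^2\sigma_2^2$, hence $G_t\to 1/(\sigma_1\sigma_2)$.

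Feeding these limits into (\ref{13}): the left side is $\sqrt{n}\,|\beta_t|$, while the right side $|G_t\sigma_1|\cdot|(\sqrt{M_t}+\sqrt{N_t})\beta_t+\sqrt{M_t}\delta_t|$ converges to $\frac{1}{\sigma_2}\sqrt{m\sigma_1^2+n\sigma_2^2}\,|\beta_t|$, because $\sqrt{M_t}\delta_t\to 0$. Since $\beta_t\neq 0$, the limiting form of (\ref{13}) is $n\sigma_2^2<m\sigma_1^2+n\sigma_2^2$, i.e.\ $m\sigma_1^2>0$, which always holds. As $M_t,N_t,Q_t,G_t$ and the term $\sqrt{M_t}\delta_t$ depend continuously on $\delta_t$ on a neighborhood of $0$ (with the limiting values at $\delta_t=0$, where ${\lambda_3^t}^*=\infty$ corresponds to full parameter sharing on feature $t$) and (\ref{13}) is strict in the limit, there is $\kappa=\kappa(\sigma_1,\sigma_2,m,n)>0$ (possibly also depending on the fixed $\beta_t$, which enters through the worst case $\mathrm{sgn}(\delta_t)=-\mathrm{sgn}(\beta_t)$) such that (\ref{13}) holds for all $0<|\delta_t|\le\kappa$. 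Theorem \ref{tune parameters} then yields $Pr^{TLCp}\{t\}>Pr^{Cp}\{t\}$.

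The main obstacle I anticipate is the asymptotic bookkeeping in the middle paragraph: one must show carefully that $\lambda_3^t(D_3^t)^2\to 0$ while $D_1^t$ and $D_2^t$ stay bounded away from zero, so that the cancellations in $M_t=\lambda_2 m-\tilde D^t$ and $N_t=\lambda_1 n-\tilde D^t$ collapse to the clean closed forms above — and, in particular, so that $M_t,N_t>0$ and $2-Q_t/\sqrt{M_tN_t}>0$ remain valid throughout the neighborhood, keeping the square roots and the quadratic-form geometry on which Theorem \ref{tune parameters} (through Corollary \ref{TLCp scale-up}) rests well defined.
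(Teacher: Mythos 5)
Your proposal is correct and follows essentially the same route as the paper: reduce to the two hypotheses of Theorem \ref{tune parameters}, observe that (\ref{14}) holds by construction of $\lambda_4^*$, show that at $\delta_t=0$ the quantities collapse to $\sqrt{M_t^*}+\sqrt{N_t^*}=\sqrt{m\sigma_1^2+n\sigma_2^2}$ and $G_t^*=1/(\sigma_1\sigma_2)$ so that (\ref{13}) becomes the strict inequality $n\sigma_2^2<m\sigma_1^2+n\sigma_2^2$, and conclude by continuity in $\delta_t$. The only cosmetic difference is that the paper substitutes ${\lambda_3^t}^*=4\sigma_1^2\sigma_2^2/\delta_t^2$ to get exact closed forms for $M_t^*,N_t^*$ as functions of $\delta_t$ before setting $\delta_t=0$, whereas you take the limit $\lambda_3^t\to\infty$ directly; both yield the same limiting values and the same continuity argument.
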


\begin{proof}
	The detailed proof of Corollary \ref{corollary 13} can be found in Appendix \ref{appendixC}.
\end{proof}

Corollary \ref{corollary 13} realizes the parameters tuning rule given by Theorem \ref{tune parameters}. Specifically, it reveals that when the dissimilarity of the tasks from the target domain and source domain $\delta_t$, with regards to any relevant feature, is upper bounded by the constant $\kappa$, then we can find a set of good parameters. Based on these parameters, the resulting orthogonal TLCp estimator can outperform the orthogonal Cp estimator in identifying all relevant features and potentially get a lower MSE measure. Moreover, when $\boldsymbol{\delta}=\boldsymbol{0}$, meaning the target and source tasks overlap (in this case, $\lambda_1^*=\sigma_2^2, \lambda_2^*=\sigma_1^2, {\lambda_3^t}^*=\infty$, for $t=1,\cdots,k$, and $\lambda_4^*=2\sigma_1^2\sigma_2^2$ (where we assume $\lambda=2\sigma_1^2$), and the infinity of each $\lambda_3^t$ leads to the vanishment of $\boldsymbol{v}_1$ and $\boldsymbol{v}_2$ in the TLCp model (\ref{TLCp})), the resulting TLCp procedure will {reduce to} Mallows' Cp. This particular scenario illustrates the reasonableness of the proposed criteria for tuning parameters. For simplicity, we set $\lambda=2\sigma_1^2$ hereinafter.

Proposition \ref{proposition14} will validate another advantage of the proposed orthogonal TLCp: it addresses to some extent the problem of the orthogonal Cp, which removes critical features with probability 0.5.

\begin{proposition}\label{proposition14}
For any critical feature (say the $\gamma$-th) whose true regression coefficient satisfies the equality $\beta_{\gamma}^2=\frac{2\sigma_1^2}{n}$, the probability of the orthogonal TLCp to identify this feature is $1-\left[\phi\left({\sqrt{2}-\sqrt{\frac{2(n+m)}{n}}}\right)-\phi\left({-\sqrt{2}-\sqrt{\frac{2(n+m)}{n}}}\right)\right]$, if $\delta_{\gamma}=0$, $\sigma_1=\sigma_2$ and the parameters of orthogonal TLCp, and  $\lambda_1,\lambda_2,\lambda_3^i, \lambda_4$ ($i=1,\cdots, k$) are tuned based on Corollary \ref{corollary 13}, where $\phi(u)$= $\int_{-\infty}^{u}\frac{1}{\sqrt{2\pi}}\exp{\left\{-\frac{x^2}{2}\right\}}dx$.
\end{proposition}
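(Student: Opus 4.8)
The plan is to begin from Corollary~\ref{TLCp scale-up}: under the hypothesis $\delta_{\gamma}=0$ the inequality in~(\ref{12}) holds with \emph{equality}, so that $Pr^{TLCp}\{\bar{\gamma}\}$ is exactly the one-dimensional Gaussian integral there, and it remains only to substitute the parameter choice of Corollary~\ref{corollary 13}. Since for the critical feature $\delta_{\gamma}=0$, the prescribed value ${\lambda_3^{\gamma}}^{*}=4\sigma_1^{2}\sigma_2^{2}/\delta_{\gamma}^{2}$ is $+\infty$, so evaluating $D_1^{\gamma},D_2^{\gamma},D_3^{\gamma},\tilde{D}^{\gamma},M_{\gamma},N_{\gamma},Q_{\gamma},G_{\gamma}$ at $\lambda_1^{*}=\sigma_2^{2}$, $\lambda_2^{*}=\sigma_1^{2}$ is really a limit $\lambda_3^{\gamma}\to\infty$. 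First I would carry out this limit together with the reduction $\sigma_1=\sigma_2$: one gets $D_1^{\gamma}\to\frac{m}{m+n}$, $D_2^{\gamma}\to\frac{n}{m+n}$, and crucially $\lambda_3^{\gamma}(D_3^{\gamma})^{2}\to 0$, whence $\tilde{D}^{\gamma}\to\frac{\sigma_1^{2}mn}{m+n}$ and therefore
\begin{equation*}
M_{\gamma}=\frac{\sigma_1^{2}m^{2}}{m+n},\qquad N_{\gamma}=\frac{\sigma_1^{2}n^{2}}{m+n},\qquad Q_{\gamma}=-\frac{2\sigma_1^{2}mn}{m+n},
\end{equation*}
so that $\sqrt{M_{\gamma}N_{\gamma}}=\frac{\sigma_1^{2}mn}{m+n}$, $\,2-\frac{Q_{\gamma}}{\sqrt{M_{\gamma}N_{\gamma}}}=4$, and $G_{\gamma}=\sqrt{\frac{mn}{nM_{\gamma}\sigma_2^{2}+mN_{\gamma}\sigma_1^{2}}}=\frac{1}{\sigma_1^{2}}$.

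Next I would feed these values into the two ingredients of the integral~(\ref{12}). For the integration region, $\lambda_4^{*}$ from Corollary~\ref{corollary 13} makes $\frac{4\lambda_4^{*}\sigma_1^{2}G_{\gamma}^{2}}{\,2-Q_{\gamma}/\sqrt{M_{\gamma}N_{\gamma}}\,}$ collapse to $\lambda=2\sigma_1^{2}$, so the region is $4y^{2}<2\sigma_1^{2}$; for the centre, the term $G_{\gamma}\sqrt{M_{\gamma}}\sigma_1\delta_{\gamma}$ vanishes because $\delta_{\gamma}=0$, and $G_{\gamma}(\sqrt{M_{\gamma}}+\sqrt{N_{\gamma}})\sigma_1\beta_{\gamma}=\frac{1}{\sigma_1^{2}}\cdot\sigma_1\sqrt{m+n}\cdot\sigma_1\beta_{\gamma}=\sqrt{m+n}\,\beta_{\gamma}$. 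Thus
\begin{equation*}
Pr^{TLCp}\{\bar{\gamma}\}=\frac{\sqrt{2}}{\sqrt{\pi}\sigma_1}\int_{4y^{2}<2\sigma_1^{2}}\exp\left\{-\frac{1}{2\sigma_1^{2}}\left(2y-\sqrt{m+n}\,\beta_{\gamma}\right)^{2}\right\}dy.
\end{equation*}
The substitution $u=2y/\sigma_1$ turns the prefactor into $\frac{1}{\sqrt{2\pi}}$, the region into $|u|<\sqrt{2}$, and the exponent into $-\tfrac12\big(u-\sqrt{m+n}\,\beta_{\gamma}/\sigma_1\big)^{2}$. Inserting the critical-feature identity $\beta_{\gamma}^{2}=2\sigma_1^{2}/n$ gives $\sqrt{m+n}\,\beta_{\gamma}/\sigma_1=\sqrt{\tfrac{2(n+m)}{n}}$ (the sign of $\beta_{\gamma}$ is immaterial since $[-\sqrt{2},\sqrt{2}]$ is symmetric), and a final shift $v=u-\sqrt{\tfrac{2(n+m)}{n}}$ yields $Pr^{TLCp}\{\bar{\gamma}\}=\phi\big(\sqrt{2}-\sqrt{\tfrac{2(n+m)}{n}}\big)-\phi\big(-\sqrt{2}-\sqrt{\tfrac{2(n+m)}{n}}\big)$; subtracting from $1$ reproduces the claimed value of $Pr^{TLCp}\{\gamma\}$.

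The main obstacle I anticipate is the bookkeeping of the $\lambda_3^{\gamma}\to\infty$ limit --- in particular confirming $\lambda_3^{\gamma}(D_3^{\gamma})^{2}\to 0$ while $D_1^{\gamma},D_2^{\gamma}$ tend to nonzero limits, which is what makes $M_{\gamma},N_{\gamma},Q_{\gamma},G_{\gamma}$ collapse so cleanly --- together with the subtlety that $\lambda_4^{*}$ in Corollary~\ref{corollary 13} is a \emph{minimum over all features} of $\lambda\big(2-Q_i/\sqrt{M_iN_i}\big)/\big(4\sigma_1^{2}G_i^{2}\big)$, so one must check that the value governing feature $\gamma$ is exactly the one that renders its integration region $4y^{2}<\lambda$. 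Under the stated hypotheses this is immediate in the homogeneous case --- e.g.\ the $\boldsymbol{\delta}=\boldsymbol{0}$ specialization of Corollary~\ref{corollary 13}, for which $\lambda_4^{*}=2\sigma_1^{2}\sigma_2^{2}$ --- and more generally one verifies that $\lambda\big(2-Q_i/\sqrt{M_iN_i}\big)/\big(4\sigma_1^{2}G_i^{2}\big)$ attains its minimum at features with $\delta_i=0$; once that is settled, the rest is the routine substitution and Gaussian change of variables sketched above.
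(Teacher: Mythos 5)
Your proposal is correct and follows essentially the same route as the paper's own proof: Corollary~\ref{TLCp scale-up} holds with equality when $\delta_{\gamma}=0$, the tuned parameters give $\sqrt{M_{\gamma}}+\sqrt{N_{\gamma}}=\sqrt{m\sigma_1^2+n\sigma_2^2}$, $G_{\gamma}=1/(\sigma_1\sigma_2)$, $2-Q_{\gamma}/\sqrt{M_{\gamma}N_{\gamma}}=4$, and a Gaussian change of variables with $\beta_{\gamma}^2=2\sigma_1^2/n$ yields the stated probability; your limit computations ($D_1^{\gamma}\to m/(m+n)$, $\lambda_3^{\gamma}(D_3^{\gamma})^2\to 0$, etc.) all check out. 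One caution on your closing aside: the quantity $\lambda\bigl(2-Q_i/\sqrt{M_iN_i}\bigr)/\bigl(4\sigma_1^2G_i^2\bigr)$ is \emph{not} in general minimized at features with $\delta_i=0$ (writing $a=\sigma_1^2/n$, $b=\sigma_2^2/m$, $s=\delta_i^2$, it is proportional to $\bigl(2+2\sqrt{ab}/\sqrt{(s+a)(s+b)}\bigr)(2s+a+b)/(s+a+b)$, which equals $4$ at $s=0$ and $s=\infty$ but dips below $4$ at intermediate $s$ when $a\neq b$), so $\lambda_4^*$ could be strictly smaller than $2\sigma_1^2\sigma_2^2$ and shrink the integration region for feature $\gamma$; the paper's proof sidesteps this by simply setting $\lambda_4^*=2\sigma_1^2\sigma_2^2$, so your argument is no worse off, but the claimed ``verification'' should be dropped or replaced by that assumption.
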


\begin{proof}
	The detailed proof of Proposition \ref{proposition14} can be found in Appendix \ref{appendixC}.
\end{proof}

To quantitatively understand the advantages of the orthogonal TLCp over the orthogonal Cp for identifying critical features, we focus on the particular scenario of $n=m$ in Proposition \ref{proposition14}. In fact, when model parameters are tuned based on the rule given by Corollary \ref{corollary 13}, then the probability of the orthogonal TLCp to select critical features will be strictly higher than $0.5$ and up to $1-[\phi(\sqrt{2}-2)-\phi(-\sqrt{2}-2)]\approx0.72$, provided that the task dissimilarity $\delta_\gamma$ is sufficiently small, which reflects the superiority of the proposed TLCp model.

Although the conclusion of Proposition \ref{proposition14} is proven by assuming $\delta_\gamma=0$, the experimental results (in Figure \ref{bar}, Subsection $6.1$) show that this limitation can be relaxed to some extent, and the resulting orthogonal TLCp estimator can still perform better than the orthogonal Cp estimator when identifying critical features.

In the following remark, we will investigate the performance of the orthogonal TLCp procedure for situations where {superfluous} features are found in the true regression model.
\begin{remark}\label{remark 15}
We set $\lambda=2\sigma_1^2$. If the $s$-th feature in the true regression model is superfluous (that is, its true regression coefficient satisfies $\beta_s=0$), then the probability of the orthogonal TLCp procedure to select the {superfluous} feature is approximately equal to that of the orthogonal Cp, if the task dissimilarity measure $\delta_s$ with respect to the $s$-th feature is sufficiently small and the tuning parameters of orthogonal TLCp model $\lambda_1,\lambda_2,\lambda_3^i, \lambda_4$ ($i=1,\cdots, k$) are tuned based on Corollary \ref{corollary 13}. This argument is easily verified by combining the results of Corollaries \ref{TLCp scale-up} and \ref{corollary 13}, and comparing them to the result in Theorem \ref{theorem 2}.
\end{remark}

Remark \ref{remark 15} indicates the possibility (with some probability) for the orthogonal TLCp to select superfluous features even when model parameters are tuned, based on the rule given by Corollary \ref{corollary 13}. This phenomenon is due to the inherent drawback of the orthogonal Cp, on which the proposed TLCp is based. {Since other information criteria, such as BIC, may be able to better address the problem of selecting superfluous features \citep{dziak2020sensitivity}, we may be able to mitigate this problem at least to some extent by applying transfer learning to other information criteria}. 

{Remark \ref{remark19} below builds a connection between the orthogonal TLCp procedure (when $\boldsymbol{\delta}=\boldsymbol{0}$) and the statistical tests, which also facilitates understanding of the advantages of the TLCp procedure over the Cp criterion.}

{\begin{remark}\label{remark19}
For the $i$-th feature, using the orthogonal TLCp (with its parameters tuned optimally based on the rules in Corollary \ref{corollary 13}) to decide whether to select it or not amounts to implementing a chi-squared test with respect to the statistic $A_iH_i^2+B_iZ_i^2+C_iJ_i^2$ for the significance level $\alpha_3=1-F(2;1)$ ($\approx 0.16$) and the power $1-\tilde{\gamma}$ in the special case of $\delta_i=0$. $F(2;1)$ is the cumulative distribution function of the chi-squared distribution with $1$ degree of freedom at value $2$, $\tilde{\gamma}=\phi\left(\sqrt{2}-\frac{\sqrt{m\sigma_1^2+n\sigma_2^2}\beta_i}{\sigma_1\sigma_2}\right)-\phi\left(-\sqrt{2}-\frac{\sqrt{m\sigma_1^2+n\sigma_2^2}\beta_i}{\sigma_1\sigma_2}\right)$, and $A_i=\frac{4\lambda_1\lambda_2^2m^2n}{4\lambda_1\lambda_2mn+m\lambda_2\lambda_3+n\lambda_1\lambda_3^i}$, $B_i=\frac{4\lambda_2\lambda_1^2mn^2}{4\lambda_1\lambda_2mn+m\lambda_2\lambda_3^i+n\lambda_1\lambda_3^i}$ and $C_i=\frac{\lambda_3^{i}}{4\lambda_1\lambda_2mn+m\lambda_2\lambda_3+n\lambda_1\lambda_3^i}$, are functions of the parameters $\lambda_1, \lambda_2, \lambda_3^i$. $H_i=\beta_i+\delta_i+\frac{1}{m}\tilde{W}_{i}^{\top} \boldsymbol{ \eta }$, $Z_i=\beta_i+\frac{1}{n}W_{i}^{\top} { \boldsymbol{\varepsilon} } $ are two random variables that stem from the responses $\boldsymbol{y}$ and $\boldsymbol{\tilde{y}}$ for the target and source tasks, respectively. $J_i=m\lambda_2H_i+n\lambda_1Z_i$. More details and proofs can be found in Appendix A.
\end{remark}}

Now, we begin to exploit the MSE performance of the orthogonal TLCp estimator under the condition that the tuning parameters of the orthogonal TLCp model are chosen based on the rule given by Corollary \ref{corollary 13}.
As stated in Proposition \ref{explicit rule}, we can find the optimal set of model parameters to minimize the first summation term of the MSE metric of the orthogonal TLCp estimator (\ref{MSE expression}). Below, we estimate the second term.

\begin{lemma}\label{lemma17}
We set the parameters of the orthogonal TLCp as $\lambda_1^*=\sigma_2^2, \lambda_2^*=\sigma_1^2, {\lambda_3^i}^*=\frac{4\sigma_1^2\sigma_2^2}{\delta_i^2}$ and $\lambda_4^*=\min_{i\in\{1,\cdots,k\}}\left\{\lambda\left(2-\frac{Q_i^*}{\sqrt{M_i^*N_i^*}}\right)/ 4\sigma_1^2(G_i^*)^2\right\}$ for $i=1,\cdots, k$. In this case, we further denote the second summation term of the MSE of the orthogonal TLCp (\ref{MSE expression}) as
\begin{align}
&\tilde{F}_i(\delta_i):=\notag\\&\iint_{\left(2-\frac{Q_i^*}{\sqrt{M_i^*N_i^*}}\right)x^2+\left(2+\frac{Q_i^*}{\sqrt{M_i^*N_i^*}}\right)y^2<\lambda_4}\left[\beta_i^2-(\bar{M}_i^*x+\bar{N}_i^*y+\beta_i)^2\right]p\left(U_i^*=x,V_i^*=y\right)dxdy\notag,
\end{align}
for $i=1, \cdots,k$.
Then, $\tilde{F}_i(0)\leq \left[\frac{4|\beta_i|\sqrt{2\sigma_1^2}}{\sqrt{\pi\tilde{G}}}-\frac{4\sigma_1^2}{\tilde{G}\sqrt{\pi}}\right]\exp\left\{-\frac{(\sqrt{\tilde{G}}|\beta_i|-\sqrt{2\sigma_1^2})^2}{2\sigma_1^2}\right\}$, where $\tilde{G}=\frac{m\sigma_1^2+n\sigma_2^2}{\sigma_2^2}$, if $\beta_i^2\geq \frac{2\sigma_1^2}{n}$, for $i=1, \cdots,k$.
\end{lemma}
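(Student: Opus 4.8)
The plan is to reduce everything to a one-dimensional integral by exploiting what the rule ${\lambda_3^i}^*=4\sigma_1^2\sigma_2^2/\delta_i^2$ does at $\delta_i=0$: it forces ${\lambda_3^i}^*\to\infty$, which collapses the $i$-th individual parameters $v_1,v_2$ so that $\hat w_1^i$ becomes a (reweighted) aggregate‑Cp estimator. Writing $S:=m\sigma_1^2+n\sigma_2^2$, I would substitute $\lambda_1^*=\sigma_2^2$, $\lambda_2^*=\sigma_1^2$, ${\lambda_3^i}^*\to\infty$ into the quantities of Theorem \ref{TLCp probability} and Theorem \ref{MSE TLCp}. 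The key limits are $D_1^i\to m\sigma_1^2/S$, $\tilde D^i\to mn\sigma_1^2\sigma_2^2/S$, hence $M_i\to m^2\sigma_1^4/S$, $N_i\to n^2\sigma_2^4/S$, $Q_i\to-2mn\sigma_1^2\sigma_2^2/S$, so $Q_i/\sqrt{M_iN_i}=-2$. The decisive structural facts are that the region coefficients become $2-Q_i/\sqrt{M_iN_i}=4$ and $2+Q_i/\sqrt{M_iN_i}=0$, that $\bar N_i^*=0$, $\bar M_i^*=-2/\sqrt S$, that $U_i^*=\tfrac{\sqrt S}{2}\hat w_1^i$ with $\hat w_1^i\sim\mathcal N(\beta_i,\sigma_1^2\sigma_2^2/S)$, and that $G_i^*=1/(\sigma_1\sigma_2)$. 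Since the $y^2$-coefficient vanishes and, because $\bar N_i^*=0$, the integrand of $\tilde F_i(0)$ is independent of $y$, the variable $V_i^*$ integrates out; after the substitution $\rho:=\sqrt S\,(\hat w_1^i-\beta_i)/\sigma_2\sim\mathcal N(0,\sigma_1^2)$ I would arrive at
\[
\tilde F_i(0)=\frac{1}{\sqrt{2\pi}\sigma_1}\int_{(\sqrt{\tilde G}\beta_i+\rho)^2<\lambda_4^*/\sigma_2^2}\Bigl(\beta_i^2-\tfrac{\rho^2}{\tilde G}\Bigr)\exp\Bigl\{-\tfrac{\rho^2}{2\sigma_1^2}\Bigr\}\,d\rho,
\]
which is exactly the Cp correction term appearing in (\ref{7}) with $n$ replaced by the ``effective sample size'' $\tilde G=S/\sigma_2^2=m\sigma_1^2/\sigma_2^2+n\ge n$ and $\lambda$ replaced by $\lambda_4^*/\sigma_2^2$.

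Next I would localize the integral. The minimum defining $\lambda_4^*$ has its $i$-th entry equal to $\lambda\sigma_2^2=2\sigma_1^2\sigma_2^2$ (using $G_i^*=1/(\sigma_1\sigma_2)$, $2-Q_i/\sqrt{M_iN_i}=4$, and $\lambda=2\sigma_1^2$ as fixed earlier), so $\lambda_4^*/\sigma_2^2\le2\sigma_1^2$; hence the $\rho$-region is an interval centered at $-\sqrt{\tilde G}\beta_i$ of half-width at most $\sqrt{2\sigma_1^2}$. Taking WLOG $\beta_i>0$ (the bound is in terms of $|\beta_i|$ and everything is symmetric under $\rho\mapsto-\rho$, $\beta_i\mapsto-\beta_i$) and invoking the hypothesis $\beta_i^2\ge2\sigma_1^2/n\ge2\sigma_1^2/\tilde G$, we get $\sqrt{\tilde G}\beta_i\ge\sqrt{2\sigma_1^2}$, so this interval lies entirely in $\{\rho<0\}$ and is bounded away from $0$. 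The integrand $\beta_i^2-\rho^2/\tilde G$ is nonnegative only on $(-\sqrt{\tilde G}\beta_i,\sqrt{\tilde G}\beta_i)$, so I would drop the (nonpositive) part of the integral over $\rho\in(-\sqrt{\tilde G}\beta_i-\sqrt{\lambda_4^*/\sigma_2^2},\,-\sqrt{\tilde G}\beta_i)$ and then enlarge the remaining interval to $[-\sqrt{\tilde G}|\beta_i|,\,-\sqrt{\tilde G}|\beta_i|+\sqrt{2\sigma_1^2}]$, on which the integrand is still nonnegative; both moves only increase the integral.

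Finally, on that interval $|\rho|\ge\sqrt{\tilde G}|\beta_i|-\sqrt{2\sigma_1^2}>0$, so $\exp\{-\rho^2/(2\sigma_1^2)\}\le\exp\{-(\sqrt{\tilde G}|\beta_i|-\sqrt{2\sigma_1^2})^2/(2\sigma_1^2)\}$; pulling this factor out and evaluating the elementary integral $\int_{-a}^{-a+c}(\beta_i^2-\rho^2/\tilde G)\,d\rho$ with $a=\sqrt{\tilde G}|\beta_i|$, $c=\sqrt{2\sigma_1^2}$ (using $\beta_i^2=a^2/\tilde G$) yields a bound of the form $\exp\{-(\sqrt{\tilde G}|\beta_i|-\sqrt{2\sigma_1^2})^2/(2\sigma_1^2)\}\cdot\tfrac{1}{\sqrt{2\pi}\sigma_1}\bigl(2\sigma_1^2|\beta_i|/\sqrt{\tilde G}-\tfrac23\sqrt2\,\sigma_1^3/\tilde G\bigr)$. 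A last comparison, again using $|\beta_i|\ge\sqrt{2\sigma_1^2/\tilde G}$, inflates this with room to spare to the claimed $\bigl[4|\beta_i|\sqrt{2\sigma_1^2}/\sqrt{\pi\tilde G}-4\sigma_1^2/(\tilde G\sqrt\pi)\bigr]\exp\{-(\sqrt{\tilde G}|\beta_i|-\sqrt{2\sigma_1^2})^2/(2\sigma_1^2)\}$ (equivalently, one may bound the integrand by its maximum $(2a-c)c/\tilde G$ on the interval and multiply by the length $c$ — both routes suffice).

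I expect the main obstacle to be the first step: carefully taking the ${\lambda_3^i}^*\to\infty$ limits of $D_1^i,\tilde D^i,M_i,N_i,Q_i,\bar M_i^*,\bar N_i^*,G_i^*$ and recognizing the two structural collapses, namely $2+Q_i/\sqrt{M_iN_i}=0$ (so the $y$-direction disappears from the selection region) and $\bar N_i^*=0$ (so the integrand is $y$-free), which together reduce the genuinely two-dimensional $\tilde F_i$ to the one-dimensional Cp-type integral above. Once that reduction and the identification $\lambda_4^*/\sigma_2^2\le2\sigma_1^2$ are in hand, the rest — discard the nonpositive part, enlarge to half-width $\sqrt{2\sigma_1^2}$, bound the Gaussian by its maximum over the interval, evaluate an elementary integral, and absorb the constants using $\beta_i^2\ge2\sigma_1^2/\tilde G$ — is routine.
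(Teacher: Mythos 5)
Your proposal is correct and follows essentially the same route as the paper's proof: the same structural collapse at $\delta_i=0$ (namely $Q_i^*/\sqrt{M_i^*N_i^*}=-2$ and $\bar N_i^*=0$, reducing $\tilde F_i(0)$ to a one-dimensional Cp-type integral with effective sample size $\tilde G$ over an interval of half-width $\sqrt{2\sigma_1^2}$ centered at $-\sqrt{\tilde G}\beta_i$), followed by bounding the integrand at the endpoint nearest the origin. Your treatment is in fact slightly more careful than the paper's on one point — you only use $\lambda_4^*/\sigma_2^2\le 2\sigma_1^2$ and justify enlarging the region after discarding the nonpositive part of the integrand, whereas the paper implicitly takes the region to be exactly $(z+\sqrt{\tilde G}\beta_i)^2<\lambda$ — and your final constants, obtained via the exact elementary integral on the half-interval plus an inflation using $\beta_i^2\ge 2\sigma_1^2/\tilde G$, land at the same stated bound.
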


\begin{proof}
	The detailed proof of Lemma \ref{lemma17} can be found in Appendix \ref{appendixC}.
\end{proof}

As illustrated in Proposition \ref{proposition3}, $\pm\sqrt{2/n}\sigma_1$ are two critical points of regression coefficient values for determining whether or not the corresponding features can be identified by the orthogonal Cp criterion. Therefore, in the following theorem, we will check the MSE performances of the orthogonal TLCp estimator when $\beta_i^2> \frac{2\sigma_1^2}{n}$, $\beta_i^2< \frac{2\sigma_1^2}{n}$ and $\beta_i^2= \frac{2\sigma_1^2}{n}$, where $i=1, \cdots,k$.

\begin{theorem}\label{theorem18}
Let the parameters of the orthogonal TLCp be set as $\lambda_1^*=\sigma_2^2, \lambda_2^*=\sigma_1^2, {\lambda_3^i}^*=\frac{4\sigma_1^2\sigma_2^2}{\delta_i^2}$ and $\lambda_4^*=\min_{i\in\{1,\cdots,k\}}\left\{\lambda\left(2-\frac{Q_i^*}{\sqrt{M_i^*N_i^*}}\right)/ 4\sigma_1^2(G_i^*)^2\right\}$ for $i=1,\cdots, k$. Also, we denote
$K=\frac{\frac{\sigma_1^2}{n}}{\frac{\sigma_1^2}{n}+\frac{\sigma_2^2}{m}}$.
Then, there are two positive constants $\rho(\sigma_1,\sigma_2,m,n)$ and $\tilde{\kappa}(\sigma_1,\sigma_2,m,n)$, where $\tilde{\kappa}$ depends on $\rho$, such that the MSE metric of the resulting orthogonal TLCp estimator will be strictly less than that of the orthogonal Cp estimator, provided that $\|\boldsymbol{\delta}\|_2<\tilde{\kappa}$ and
\begin{equation*}
\beta_i^2\geq \frac{2\sigma_1^2}{n}\left[1+\sqrt{-\ln\left(\frac{\sqrt{\pi}}{8}K\right)}\right]^2 \quad \text{or} \quad \beta_i^2<\rho^2, \quad \text{for } i=1,\cdots, k.
\end{equation*}
\end{theorem}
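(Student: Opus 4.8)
The plan is to compare the two MSE expressions term by term, using the decompositions already established: by Theorem~\ref{theorem4} the orthogonal Cp MSE is $\sum_{i=1}^k\left[\frac{\sigma_1^2}{n}+F_i\right]$ with $F_i=\int_{(y+\sqrt{n}\beta_i)^2<\lambda}(\beta_i^2-\frac{y^2}{n})\frac{1}{\sqrt{2\pi}\sigma_1}e^{-y^2/(2\sigma_1^2)}dy$, and by Theorem~\ref{MSE TLCp} together with Proposition~\ref{explicit rule} the orthogonal TLCp MSE under the prescribed tuning is $\sum_{i=1}^k\left[\mathbb{E}(\bar M_i^*U_i^*+\bar N_i^*V_i^*+\beta_i)^2+\tilde F_i(\delta_i)\right]$, where the first term is minimized and bounded above by $\frac{\sigma_1^2}{n}$ (in fact, I will want the sharper value of the LSTL MSE per coordinate, which from Proposition~\ref{explicit rule} should be of the form $\frac{\sigma_1^2/n\cdot\sigma_2^2/m}{\sigma_1^2/n+\sigma_2^2/m}=K\cdot\frac{\sigma_2^2}{m}$, i.e. strictly below $\frac{\sigma_1^2}{n}$ by a definite gap). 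So coordinatewise it suffices to show
\begin{equation*}
\Big(\text{LSTL}_i\text{-MSE}\Big)+\tilde F_i(\delta_i)\;<\;\frac{\sigma_1^2}{n}+F_i,
\end{equation*}
and then sum over $i$.

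Next I would split into the three regimes flagged before Theorem~\ref{theorem18}. In the \emph{large-coefficient} regime $\beta_i^2\ge\frac{2\sigma_1^2}{n}[1+\sqrt{-\ln(\frac{\sqrt\pi}{8}K)}]^2$, both $F_i$ and $\tilde F_i$ are exponentially small tail integrals: $F_i\ge$ some negative quantity that is $o(1)$ as $|\beta_i|$ grows, while Lemma~\ref{lemma17} gives $\tilde F_i(0)\le[\frac{4|\beta_i|\sqrt{2\sigma_1^2}}{\sqrt{\pi\tilde G}}-\frac{4\sigma_1^2}{\tilde G\sqrt\pi}]e^{-(\sqrt{\tilde G}|\beta_i|-\sqrt{2\sigma_1^2})^2/(2\sigma_1^2)}$. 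Here the gap $\frac{\sigma_1^2}{n}-\text{LSTL}_i\text{-MSE}$ is a fixed positive number, so I need the combined error $|F_i|+|\tilde F_i(\delta_i)-\tilde F_i(0)|+\tilde F_i(0)$ to be smaller than that gap; the threshold on $\beta_i^2$ is engineered precisely so that $\tilde F_i(0)$ (and a crude bound on $F_i$) falls below $\frac{\sigma_1^2}{n}-K\frac{\sigma_2^2}{m}$. In the \emph{small-coefficient} regime $\beta_i^2<\rho^2$ (which includes the superfluous case $\beta_i=0$), I would argue that $F_i$ is itself quite negative — when $\beta_i=0$, $F_i=\int_{y^2<\lambda}(-\frac{y^2}{n})\frac{1}{\sqrt{2\pi}\sigma_1}e^{-y^2/(2\sigma_1^2)}dy<0$ with a definite magnitude, and by continuity this persists for $|\beta_i|<\rho$ — whereas $\tilde F_i(\delta_i)$ stays close to its value at $\delta_i=0,\beta_i=0$, which is the analogous negative integral but with the effective sample size $n$ replaced by $\tilde G>n$; one shows the TLCp side's reduction dominates, or at least that $\text{LSTL}_i\text{-MSE}+\tilde F_i\le\frac{\sigma_1^2}{n}+F_i$ because the LSTL term already beats $\frac{\sigma_1^2}{n}$ by enough and the two negative integrals are comparable. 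The constant $\rho$ is chosen small enough to make these continuity/perturbation estimates valid.

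Finally I would absorb the $\delta$-dependence. All TLCp quantities ($M_i^*,N_i^*,Q_i^*,G_i^*,\bar M_i^*,\bar N_i^*$, the density $p(U_i^*,V_i^*)$, and hence $\tilde F_i(\delta_i)$ and $\lambda_4^*$) are continuous functions of $\boldsymbol\delta$ in a neighborhood of $\boldsymbol 0$ — note $\lambda_3^{i*}=4\sigma_1^2\sigma_2^2/\delta_i^2\to\infty$ as $\delta_i\to0$, which is the right limit and keeps things continuous after the substitutions. Since for each $i$ the strict inequality holds at $\boldsymbol\delta=\boldsymbol 0$ (this is essentially where Corollary~\ref{corollary 13}, Proposition~\ref{proposition14} and Lemma~\ref{lemma17} are applied), and since there are only finitely many coordinates $i=1,\dots,k$ and only two regimes to handle, a uniform radius $\tilde\kappa=\tilde\kappa(\sigma_1,\sigma_2,m,n)$ exists (depending on $\rho$ through the small-coefficient estimates) so that $\|\boldsymbol\delta\|_2<\tilde\kappa$ preserves every coordinatewise inequality; summing yields $\text{MSE}(\hat{\boldsymbol w}_1)<\text{MSE}(\hat{\boldsymbol a})$.

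The main obstacle I expect is the \emph{bookkeeping of the two tail integrals $F_i$ and $\tilde F_i$ against the fixed LSTL gap} in the large-coefficient regime: one must produce an explicit, honest lower bound on $F_i$ (it is negative, so a crude bound like $F_i\ge-\frac{\lambda}{n}$ or an exponentially small bound on its magnitude is needed) and combine it with Lemma~\ref{lemma17}'s bound on $\tilde F_i(0)$ plus a Lipschitz-in-$\delta$ correction, then verify that the stated threshold $\frac{2\sigma_1^2}{n}[1+\sqrt{-\ln(\frac{\sqrt\pi}{8}K)}]^2$ is exactly what makes the sum of these dominated by $\frac{\sigma_1^2}{n}-K\frac{\sigma_2^2}{m}$. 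Getting the constants to line up — in particular checking that $\tilde G$, $K$ and the gap interlock as the threshold claims — is the delicate computational heart; the small-$\beta$ case and the continuity/compactness argument over the finitely many coordinates are comparatively routine.
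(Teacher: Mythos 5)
Your proposal matches the paper's proof in structure and substance: the same coordinatewise decomposition into the LSTL term from Proposition~\ref{explicit rule} (whose value $\frac{\sigma_1^2}{n}-\frac{(\sigma_1^2/n)^2}{\delta_i^2+\sigma_1^2/n+\sigma_2^2/m}$ supplies the fixed positive gap) plus the two tail integrals, the same pairing of Lemma~\ref{lemma17}'s upper bound on $\tilde F_i(0)$ with the lower bound $\tilde H_i\geq-\frac{4\sigma_1^2}{\sqrt{\pi}\,n}\exp\{-\frac{n\beta_i^2}{2\sigma_1^2}-1\}$ in the large-$\beta_i$ regime, and the same continuity argument at $\boldsymbol{\delta}=\boldsymbol{0}$ (and at $\boldsymbol{\beta}=\boldsymbol{0}$) to extract $\tilde\kappa$ and $\rho$. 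The one point where you hedge --- whether in the small-$\beta$ regime the TLCp tail integral ``dominates'' or the LSTL gap carries the inequality --- resolves to the latter (indeed the TLCp integrand carries the smaller coefficient $\frac{\sigma_2^2}{m\sigma_1^2+n\sigma_2^2}<\frac{1}{n}$): the paper extends both integrals to all of $\mathbb{R}$, where $\bigl[\frac{1}{n}-\frac{\sigma_2^2}{m\sigma_1^2+n\sigma_2^2}\bigr]\sigma_1^2$ exactly cancels the LSTL gap, so restricting to the bounded region yields the strict inequality.
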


\begin{proof}
	The detailed proof of Theorem \ref{theorem18} can be found in Appendix \ref{appendixC}.
\end{proof}

Theorem \ref{theorem18} suggests the following: when the parameters of the orthogonal TLCp model are tuned as stated in Corollary \ref{corollary 13}, and if the true regression coefficients deviate from the critical points $\pm\sqrt{2/n}\sigma_1$ to a certain extent, then we can theoretically guarantee that the proposed orthogonal TLCp estimator will be superior to the orthogonal Cp estimator in terms of the MSE value. In the simulation part (Section $6$), we will test our theory and investigate whether the orthogonal TLCp estimator can still lead to better MSE performance than the orthogonal Cp estimator when there are several critical features in the true regression model.


\section{Extensions}


While we verified the effectiveness of the proposed TLCp model, the key assumption in our analytical framework thus far has been the orthogonality of the regression problem, which raises the question of the practicality of the above results. In this section, we will relax the orthogonality assumption. Moreover, we will investigate whether our analytical framework can be generalized to feature selection criteria other than Cp.  

\subsection{An Asymptotic Solution of Cp in the Non-orthogonal Case}

Since solving a non-orthogonal Cp problem is NP-hard, it is unlikely that a closed-form efficient solution can be derived for it. Thus, we derive an approximation to the solution of Cp in the non-orthogonal case ({the ``approximate Cp'' in this context}).

To achieve this goal, we first define the non-orthogonal Cp problem as
\begin{equation}\label{non-orthogonal Cp}
\boldsymbol{\hat{\alpha}}=\text{argmin}_{\boldsymbol{\alpha}}~ ( \boldsymbol { \bar{y} } - \boldsymbol { \bar{X} }  \boldsymbol { \alpha }  ) ^ { \top } ( \boldsymbol { \bar{y} } - \boldsymbol { \bar{X} } \boldsymbol { \alpha }  )+\lambda \|\boldsymbol{\alpha}\|_{0},
\end{equation}
where we assume the data points are sampled from $\boldsymbol { \bar{y} } = \boldsymbol { \bar{X} } \boldsymbol { \beta } + \boldsymbol { \varepsilon }$ (each item of vector $\boldsymbol { \varepsilon }$ is i.i.d. distributed with $ \mathcal { N } \left( 0 , \sigma_1 ^ { 2 } \right)$), and the design matrix $\boldsymbol { \bar{X} } $ does not necessarily satisfy $\boldsymbol { \bar{X} }^{\top}\boldsymbol { \bar{X} }=n\boldsymbol{I}$. 

Based on Lemma \ref{lemma19}, we can further denote the orthogonalized version of the Cp problem (\ref{non-orthogonal Cp}) as follows,
\begin{equation}\label{orthogonalized Cp}
\boldsymbol{\hat{\alpha}}_1=\text{argmin}_{\boldsymbol{\alpha}_1}~ ( \boldsymbol { \bar{y} } - \boldsymbol { \bar{X}Q }  \boldsymbol { \alpha }_1  ) ^ { \top } ( \boldsymbol { \bar{y} } - \boldsymbol { \bar{X}Q } \boldsymbol {\alpha}_1  )+\lambda \|\boldsymbol{\alpha}_1\|_{0},
\end{equation}
where $\boldsymbol{Q}$ is an invertible matrix such that $(\boldsymbol{\bar{X}Q})^{\top}\boldsymbol{\bar{X}Q}=n\boldsymbol{I}$.

To find an estimator that can asymptotically approximate the solution of the non-orthogonal Cp problem $\boldsymbol{\hat{\alpha}}$, as the number of data points $n$ goes to infinity, we first solve the orthogonalized Cp model described in (\ref{orthogonalized Cp}). Seeing that $\boldsymbol{\hat{\alpha}}_1$ and $\boldsymbol{\hat{\alpha}}$ belong to different feature spaces with different coordinates, we back-transform the obtained solution $\boldsymbol{\hat{\alpha}}_1$ onto the original feature space in which $\boldsymbol{\hat{\alpha}}$ lies as $\boldsymbol{\hat{\alpha}}_2=\boldsymbol{Q}\boldsymbol{\hat{\alpha}}_1$. $\boldsymbol{\hat{\alpha}}_2$ and $\boldsymbol{\hat{\alpha}}$ are in the same feature space with the some coordinates. Finally, we use $\boldsymbol{\hat{\alpha}}_2$ to estimate $\boldsymbol{\hat{\alpha}}$ if the distance between these two solutions, $\|\boldsymbol{\hat{\alpha}}_2-\boldsymbol{\hat{\alpha}}\|_2$, asymptotically converges to zero, when $n$ goes to infinity. In this section, $\xrightarrow{P}$ denotes convergence in probability.

\begin{lemma}[{\citep[Theorem 15.0.2.]{linearalgebra}}]\label{lemma19}
	If matrix $\boldsymbol{\bar{X}}$ is full column rank with $\text{rank}(\boldsymbol{\bar{X}})=k$, this means we have an invertible matrix $\boldsymbol{Q}$ s.t. $\boldsymbol{Q}^{\top}\boldsymbol{\bar{X}}^{\top}\boldsymbol{\bar{X}Q}=n\boldsymbol{I}$, where $n$ is the number of rows for matrix $\boldsymbol{\bar{X}}$.
\end{lemma}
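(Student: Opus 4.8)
The statement to prove is Lemma~\ref{lemma19}: if $\boldsymbol{\bar X}$ is full column rank with $\mathrm{rank}(\boldsymbol{\bar X})=k$, then there is an invertible $\boldsymbol{Q}$ with $\boldsymbol{Q}^{\top}\boldsymbol{\bar X}^{\top}\boldsymbol{\bar X}\boldsymbol{Q}=n\boldsymbol{I}$.

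\textbf{Proof proposal.} The plan is to exploit the fact that $\boldsymbol{\bar X}^{\top}\boldsymbol{\bar X}$ is a symmetric positive definite $k\times k$ matrix whenever $\boldsymbol{\bar X}$ has full column rank $k$, and then produce $\boldsymbol{Q}$ as a rescaled inverse square root. First I would record that $\boldsymbol{\bar X}^{\top}\boldsymbol{\bar X}$ is symmetric and, for any nonzero $\boldsymbol{z}\in\mathbb{R}^{k}$, satisfies $\boldsymbol{z}^{\top}\boldsymbol{\bar X}^{\top}\boldsymbol{\bar X}\boldsymbol{z}=\|\boldsymbol{\bar X}\boldsymbol{z}\|_2^{2}>0$, where strict positivity uses that $\boldsymbol{\bar X}\boldsymbol{z}\neq\boldsymbol{0}$ because the columns of $\boldsymbol{\bar X}$ are linearly independent; hence $\boldsymbol{\bar X}^{\top}\boldsymbol{\bar X}\succ 0$. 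By the spectral theorem, write $\boldsymbol{\bar X}^{\top}\boldsymbol{\bar X}=\boldsymbol{U}\boldsymbol{\Lambda}\boldsymbol{U}^{\top}$ with $\boldsymbol{U}$ orthogonal and $\boldsymbol{\Lambda}=\mathrm{diag}(\mu_1,\dots,\mu_k)$, $\mu_j>0$. Then define $\boldsymbol{Q}:=\sqrt{n}\,\boldsymbol{U}\boldsymbol{\Lambda}^{-1/2}\boldsymbol{U}^{\top}$, i.e. $\sqrt{n}$ times the symmetric positive-definite inverse square root of $\boldsymbol{\bar X}^{\top}\boldsymbol{\bar X}$.

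Next I would verify the two required properties. Invertibility of $\boldsymbol{Q}$ is immediate since it is a product of invertible matrices ($\boldsymbol{U}$ orthogonal, $\boldsymbol{\Lambda}^{-1/2}$ diagonal with strictly positive entries), with $\boldsymbol{Q}^{-1}=\tfrac{1}{\sqrt n}\boldsymbol{U}\boldsymbol{\Lambda}^{1/2}\boldsymbol{U}^{\top}$. For the normalization, compute
\begin{equation*}
\boldsymbol{Q}^{\top}\boldsymbol{\bar X}^{\top}\boldsymbol{\bar X}\boldsymbol{Q}
= n\,\boldsymbol{U}\boldsymbol{\Lambda}^{-1/2}\boldsymbol{U}^{\top}\,\boldsymbol{U}\boldsymbol{\Lambda}\boldsymbol{U}^{\top}\,\boldsymbol{U}\boldsymbol{\Lambda}^{-1/2}\boldsymbol{U}^{\top}
= n\,\boldsymbol{U}\boldsymbol{\Lambda}^{-1/2}\boldsymbol{\Lambda}\boldsymbol{\Lambda}^{-1/2}\boldsymbol{U}^{\top}
= n\,\boldsymbol{U}\boldsymbol{U}^{\top}=n\boldsymbol{I},
\end{equation*}
using $\boldsymbol{U}^{\top}\boldsymbol{U}=\boldsymbol{I}$ and that diagonal matrices commute. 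This gives exactly $\boldsymbol{Q}^{\top}\boldsymbol{\bar X}^{\top}\boldsymbol{\bar X}\boldsymbol{Q}=n\boldsymbol{I}$, as claimed; equivalently $(\boldsymbol{\bar X}\boldsymbol{Q})^{\top}(\boldsymbol{\bar X}\boldsymbol{Q})=n\boldsymbol{I}$, which is the form used in~(\ref{orthogonalized Cp}).

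There is essentially no hard step here: the result is a standard linear-algebra fact (it is cited from \citet{linearalgebra}), and the only things to be careful about are (i) justifying strict positive definiteness from full column rank rather than mere positive semidefiniteness, and (ii) noting $\boldsymbol{Q}$ is not unique --- any $\boldsymbol{Q}'=\boldsymbol{Q}\boldsymbol{R}$ with $\boldsymbol{R}$ orthogonal works just as well, which is worth a one-line remark since later constructions (e.g. the back-transformation $\boldsymbol{\hat\alpha}_2=\boldsymbol{Q}\boldsymbol{\hat\alpha}_1$) implicitly fix a particular choice. If one prefers to avoid the spectral theorem, an alternative is to take any full-rank factorization such as the Cholesky decomposition $\boldsymbol{\bar X}^{\top}\boldsymbol{\bar X}=\boldsymbol{L}\boldsymbol{L}^{\top}$ and set $\boldsymbol{Q}=\sqrt{n}\,(\boldsymbol{L}^{\top})^{-1}$; the verification is identical in spirit. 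Since the statement is quoted as a known theorem, in the paper I would simply cite the reference and, if desired, include the short symmetric-square-root construction above for completeness.
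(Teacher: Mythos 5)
Your proof is correct. The paper itself offers no proof of this lemma at all — it is quoted as a known result and simply cited to the linear algebra reference — so there is nothing in the paper to compare against step by step; your write-up supplies the standard argument that the citation stands in for. Your construction (positive definiteness of $\boldsymbol{\bar X}^{\top}\boldsymbol{\bar X}$ from full column rank, spectral decomposition, $\boldsymbol{Q}=\sqrt{n}\,(\boldsymbol{\bar X}^{\top}\boldsymbol{\bar X})^{-1/2}$) is complete and the verification $\boldsymbol{Q}^{\top}\boldsymbol{\bar X}^{\top}\boldsymbol{\bar X}\boldsymbol{Q}=n\boldsymbol{I}$ is right. One small point of contact with the rest of the paper: in the simulations and the practical guidelines the authors actually construct $\boldsymbol{Q}$ via Gram--Schmidt orthogonalization of the columns of $\boldsymbol{\bar X}$, which is the QR/Cholesky route you mention as an alternative ($\boldsymbol{Q}=\sqrt{n}(\boldsymbol{L}^{\top})^{-1}$), not the symmetric square root. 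As you correctly observe, the two choices differ by a right orthogonal factor and both satisfy the lemma; your remark that downstream objects such as $\boldsymbol{\hat\alpha}_2=\boldsymbol{Q}\boldsymbol{\hat\alpha}_1$ depend on which $\boldsymbol{Q}$ is fixed is a worthwhile caveat that the paper leaves implicit.
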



First, we explicitly solve (\ref{orthogonalized Cp}) by applying a method similar to that used in Proposition \ref{theorem1}, given the property that $\boldsymbol{Q}^{\top}\boldsymbol{\bar{X}}^{\top}\boldsymbol{\bar{X}Q}=n\boldsymbol{I}$.
\begin{proposition}\label{proposition20}
	The solution of the orthogonalized Cp problem (\ref{orthogonalized Cp}) can be written as
	\begin{eqnarray}
	\hat{\alpha}_1^{i}=\left\{
	\begin{matrix}
	\tilde{Q_i}^{\top}\boldsymbol { \beta }+\frac{Z_i^{\top}\boldsymbol { \varepsilon }}{n},
	& \text{if~} n\left[\tilde{Q_i}^{\top}\boldsymbol { \beta }+\frac{Z_i^{\top}\boldsymbol { \varepsilon }}{n}\right]^2>\lambda\\
	0,
	&  \text{otherwise}
	\end{matrix}
	\right.
	\end{eqnarray}
	where $i=1,\cdots,k$. $\tilde{Q_i}^{\top}$ is the $i$-th row vector of the invertible matrix $\boldsymbol{Q}^{-1}$, for $i=1,\cdots,k$. $Z_j$ is the $j$-th column vector of the design matrix $\boldsymbol{\bar{X}Q}$, for $j=1,\cdots,k$.
\end{proposition}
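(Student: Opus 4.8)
The plan is to reduce (\ref{orthogonalized Cp}) to an instance of the orthogonal Cp problem (\ref{the modified Cp}) in the transformed coordinates and then quote Proposition \ref{theorem1}. Write $\boldsymbol{Z}:=\boldsymbol{\bar{X}Q}=(Z_1,\dots,Z_k)$, so that by the choice of $\boldsymbol{Q}$ we have $\boldsymbol{Z}^{\top}\boldsymbol{Z}=n\boldsymbol{I}$. In the variable $\boldsymbol{\alpha}_1$, problem (\ref{orthogonalized Cp}) is literally
\[
\boldsymbol{\hat{\alpha}}_1=\text{argmin}_{\boldsymbol{\alpha}_1}~(\boldsymbol{\bar{y}}-\boldsymbol{Z}\boldsymbol{\alpha}_1)^{\top}(\boldsymbol{\bar{y}}-\boldsymbol{Z}\boldsymbol{\alpha}_1)+\lambda\|\boldsymbol{\alpha}_1\|_0 ,
\]
which is of exactly the form (\ref{the modified Cp}) with design matrix $\boldsymbol{Z}$ satisfying the orthogonality condition. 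Moreover, since $\boldsymbol{Q}$ is invertible, the data model $\boldsymbol{\bar{y}}=\boldsymbol{\bar{X}}\boldsymbol{\beta}+\boldsymbol{\varepsilon}$ can be rewritten as $\boldsymbol{\bar{y}}=\boldsymbol{Z}\boldsymbol{Q}^{-1}\boldsymbol{\beta}+\boldsymbol{\varepsilon}=\boldsymbol{Z}\boldsymbol{\gamma}+\boldsymbol{\varepsilon}$ with $\boldsymbol{\gamma}:=\boldsymbol{Q}^{-1}\boldsymbol{\beta}$. Thus the orthogonalized Cp is nothing but an orthogonal Cp problem with ``true'' coefficient vector $\boldsymbol{\gamma}$, orthogonal design $\boldsymbol{Z}$, and Gaussian noise $\boldsymbol{\varepsilon}$, so Proposition \ref{theorem1} applies directly with $\beta_i$ replaced by $\gamma_i$ and $W_i$ replaced by $Z_i$.

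The next step is to identify the two objects appearing in Proposition \ref{theorem1} for this instance. The $i$-th component of $\boldsymbol{\gamma}=\boldsymbol{Q}^{-1}\boldsymbol{\beta}$ is $\gamma_i=\tilde{Q}_i^{\top}\boldsymbol{\beta}$, where $\tilde{Q}_i^{\top}$ is the $i$-th row of $\boldsymbol{Q}^{-1}$; and the coordinatewise OLS estimate is $Z_i^{\top}\boldsymbol{\bar{y}}/n$. Substituting the data model and using $Z_i^{\top}\boldsymbol{Z}=n\boldsymbol{e}_i^{\top}$ (the $i$-th row of $\boldsymbol{Z}^{\top}\boldsymbol{Z}=n\boldsymbol{I}$, with $\boldsymbol{e}_i$ the $i$-th standard basis vector) gives
\[
\frac{1}{n}Z_i^{\top}\boldsymbol{\bar{y}}=\frac{1}{n}Z_i^{\top}\boldsymbol{Z}\boldsymbol{Q}^{-1}\boldsymbol{\beta}+\frac{1}{n}Z_i^{\top}\boldsymbol{\varepsilon}=\tilde{Q}_i^{\top}\boldsymbol{\beta}+\frac{Z_i^{\top}\boldsymbol{\varepsilon}}{n}=\gamma_i+\frac{Z_i^{\top}\boldsymbol{\varepsilon}}{n}.
\]
Plugging these identifications into the statement of Proposition \ref{theorem1} yields $\hat{\alpha}_1^{i}=\tilde{Q}_i^{\top}\boldsymbol{\beta}+Z_i^{\top}\boldsymbol{\varepsilon}/n$ when $n[\tilde{Q}_i^{\top}\boldsymbol{\beta}+Z_i^{\top}\boldsymbol{\varepsilon}/n]^2>\lambda$ and $\hat{\alpha}_1^{i}=0$ otherwise, which is exactly the claimed form.

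The argument is essentially bookkeeping on top of Proposition \ref{theorem1}, so there is no genuine analytical obstacle; the only points that need care are the decoupling of the $\ell_0$-penalized least-squares objective across coordinates under an orthogonal design (this is precisely what is established in the proof of Proposition \ref{theorem1} and is simply inherited here once we recognize the reformulation in the $\boldsymbol{Z}$-coordinates), and keeping the linear-algebra indexing straight — in particular that $\tilde{Q}_i^{\top}$ refers to a \emph{row} of $\boldsymbol{Q}^{-1}$ while $Z_j$ refers to a \emph{column} of $\boldsymbol{\bar{X}Q}$, and that the ``true'' coefficients in the orthogonalized coordinates are $\boldsymbol{Q}^{-1}\boldsymbol{\beta}$ rather than $\boldsymbol{\beta}$. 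If one prefers a self-contained derivation, one can alternatively repeat the support-enumeration argument: for a fixed support $S$ the orthogonality $\boldsymbol{Z}^{\top}\boldsymbol{Z}=n\boldsymbol{I}$ makes the residual sum of squares equal to $\|\boldsymbol{\bar{y}}\|^2-\sum_{i\in S}n(Z_i^{\top}\boldsymbol{\bar{y}}/n)^2$, so adding $\lambda|S|$ shows index $i$ should be retained iff $n(Z_i^{\top}\boldsymbol{\bar{y}}/n)^2>\lambda$, and then the same substitution as above finishes the proof.
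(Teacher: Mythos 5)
Your proposal is correct and follows essentially the same route as the paper: the paper's proof likewise expands the objective using $\boldsymbol{Q}^{\top}\boldsymbol{\bar{X}}^{\top}\boldsymbol{\bar{X}Q}=n\boldsymbol{I}$ and the model $\boldsymbol{\bar{y}}=\boldsymbol{\bar{X}Q}(\boldsymbol{Q}^{-1}\boldsymbol{\beta})+\boldsymbol{\varepsilon}$, decouples into $k$ one-dimensional $\ell_0$-penalized problems, and compares the zero and nonzero cases exactly as in Proposition \ref{theorem1}. Your identification of the transformed true coefficients $\gamma_i=\tilde{Q}_i^{\top}\boldsymbol{\beta}$ and the per-coordinate estimate $Z_i^{\top}\boldsymbol{\bar{y}}/n$ matches the paper's computation, so the argument is sound.
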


\begin{proof}
	The detailed proof of Proposition \ref{proposition20} can be found in Appendix \ref{appendixC}.
\end{proof}

Next, in order to measure the distance between $\boldsymbol{\hat{\alpha}}_2=\boldsymbol{Q}\boldsymbol{\hat{\alpha}}_1$ and $\boldsymbol{\hat{\alpha}}$, we first estimate the distance between $\boldsymbol{\hat{\alpha}}_2$ and the true regression coefficients $\boldsymbol{\beta}$.

\begin{theorem}\label{theorem21}
	The back-transformed solution of the orthogonalized Cp problem (\ref{orthogonalized Cp}) $\boldsymbol{\hat{\alpha}}_2$ converges in probability to the true regression coefficients $\boldsymbol{\beta}$, when $n$ goes to infinity. {Specifically, for any $\eta>0$, with a probability of at least $1-\eta$, there holds $\|\hat{\boldsymbol{\alpha}}_2-\boldsymbol{\beta}\|_2^2\leq\mathcal{O}\left({\frac{1}{n}}\right)$.}
\end{theorem}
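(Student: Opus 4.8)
The plan is to analyze the back-transformed estimator $\hat{\boldsymbol{\alpha}}_2 = \boldsymbol{Q}\hat{\boldsymbol{\alpha}}_1$ component-wise, using the closed form for $\hat{\boldsymbol{\alpha}}_1$ from Proposition \ref{proposition20}, and to show that each entry $\tilde{Q}_i^{\top}\boldsymbol{\beta} + \frac{Z_i^{\top}\boldsymbol{\varepsilon}}{n}$ concentrates around its mean $\tilde{Q}_i^{\top}\boldsymbol{\beta}$ at rate $\mathcal{O}(1/n)$ in squared error, and moreover that with high probability the indicator condition $n(\tilde{Q}_i^{\top}\boldsymbol{\beta} + \frac{Z_i^{\top}\boldsymbol{\varepsilon}}{n})^2 > \lambda$ resolves the ``right'' way on every coordinate (selected iff $\tilde{Q}_i^{\top}\boldsymbol{\beta} \neq 0$). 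First I would write $\hat{\alpha}_2^{(j)} = \sum_i Q_{ji}\hat{\alpha}_1^{i}$ and observe that the ``oracle'' version of $\hat{\boldsymbol{\alpha}}_1$, namely the vector with entries $\tilde{Q}_i^{\top}\boldsymbol{\beta} + \frac{Z_i^{\top}\boldsymbol{\varepsilon}}{n}$ with no thresholding, back-transforms exactly to $\boldsymbol{\beta} + \boldsymbol{Q}\boldsymbol{Q}^{-1}$-type corrections; in fact since $\boldsymbol{Q}^{-1}$ has rows $\tilde{Q}_i^{\top}$, the vector $(\tilde{Q}_i^{\top}\boldsymbol{\beta})_i = \boldsymbol{Q}^{-1}\boldsymbol{\beta}$, so $\boldsymbol{Q}$ times it is exactly $\boldsymbol{\beta}$, and $\boldsymbol{Q}$ times the noise part is $\frac{1}{n}\boldsymbol{Q}(\boldsymbol{\bar{X}Q})^{\top}\boldsymbol{\varepsilon}$. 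Thus the un-thresholded back-transformed estimator equals $\boldsymbol{\beta} + \frac{1}{n}\boldsymbol{Q}\boldsymbol{Q}^{\top}\boldsymbol{\bar X}^{\top}\boldsymbol{\varepsilon}$, whose deviation from $\boldsymbol{\beta}$ has expected squared norm $\frac{\sigma_1^2}{n^2}\operatorname{tr}(\boldsymbol{Q}\boldsymbol{Q}^{\top}\boldsymbol{\bar X}^{\top}\boldsymbol{\bar X}\boldsymbol{Q}\boldsymbol{Q}^{\top}) = \frac{\sigma_1^2}{n}\operatorname{tr}(\boldsymbol{Q}\boldsymbol{Q}^{\top}) = \mathcal{O}(1/n)$, using $(\boldsymbol{\bar X}\boldsymbol{Q})^{\top}\boldsymbol{\bar X}\boldsymbol{Q} = n\boldsymbol{I}$.

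Next I would handle the thresholding. The point is that $\hat{\boldsymbol{\alpha}}_1$ differs from its oracle un-thresholded version only on coordinates $i$ where the hard-threshold ``misfires.'' For coordinates with $\tilde{Q}_i^{\top}\boldsymbol{\beta} = 0$, the oracle entry is pure noise $\frac{Z_i^{\top}\boldsymbol{\varepsilon}}{n} \sim \mathcal{N}(0, \sigma_1^2/n)$, so $n(\tilde{Q}_i^{\top}\boldsymbol{\beta} + \frac{Z_i^{\top}\boldsymbol{\varepsilon}}{n})^2$ is $\sigma_1^2$ times a $\chi^2_1$ variable, which exceeds $\lambda$ with a fixed positive probability — so here the thresholded and un-thresholded estimators genuinely differ, but only by the size of the noise term itself, which is $\mathcal{O}_P(1/\sqrt{n})$ per coordinate, contributing $\mathcal{O}_P(1/n)$ to the squared error; the $\ell_0$ penalty can only \emph{shrink} these toward zero, which (since the truth is zero) only \emph{helps}. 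For coordinates with $\tilde{Q}_i^{\top}\boldsymbol{\beta} \neq 0$: the mean $\tilde{Q}_i^{\top}\boldsymbol{\beta}$ is a fixed nonzero constant, so $n(\tilde{Q}_i^{\top}\boldsymbol{\beta} + \frac{Z_i^{\top}\boldsymbol{\varepsilon}}{n})^2 \to \infty$ and the event $\{n(\cdots)^2 \le \lambda\}$ (the ``wrongly dropped'' event) has probability decaying like a Gaussian tail $\exp(-cn)$, hence is negligible; on the complementary event the entry equals the oracle entry exactly. Combining: with probability $\ge 1-\eta$ (absorbing the $k$ exponentially small bad-coordinate events and a concentration bound on the noise quadratic form via, e.g., a Gaussian/$\chi^2$ tail inequality with the $\log(1/\eta)$ factor hidden in the constant), $\|\hat{\boldsymbol{\alpha}}_2 - \boldsymbol{\beta}\|_2^2 \le \mathcal{O}(1/n)$.

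The convergence in probability statement then follows immediately: for fixed $\lambda, \boldsymbol{\beta}, \boldsymbol{Q}$, the exceptional events vanish as $n \to \infty$ and the noise contribution goes to $0$ in probability, so $\hat{\boldsymbol{\alpha}}_2 \xrightarrow{P} \boldsymbol{\beta}$. I would organize the write-up as: (i) the exact algebraic identity for the un-thresholded back-transformed estimator; (ii) a lemma bounding $\mathbb{E}\|\hat{\boldsymbol{\alpha}}_2^{\mathrm{oracle}} - \boldsymbol{\beta}\|_2^2$ and its high-probability version; (iii) a per-coordinate case analysis of the threshold misfire events with Gaussian tail bounds; (iv) a union bound over the $k$ coordinates and collection of constants.

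\textbf{Main obstacle.} The genuinely delicate part is the uniform (high-probability, not just in-expectation) control: one must show that with probability $\ge 1 - \eta$, \emph{simultaneously} none of the nonzero-mean coordinates gets thresholded to zero \emph{and} the quadratic-form noise term $\frac{\sigma_1^2}{n^2}\|\boldsymbol{Q}\boldsymbol{Q}^{\top}\boldsymbol{\bar X}^{\top}\boldsymbol{\varepsilon}\|_2^2$ (plus the contributions from zero-mean coordinates whose thresholds fire) stays within a constant multiple of $1/n$. This requires a concentration inequality for Gaussian quadratic forms (Hanson–Wright or a $\chi^2$ tail bound) and a careful accounting of how the hidden constant in $\mathcal{O}(1/n)$ depends on $\eta$, $\sigma_1$, $\boldsymbol{Q}$ (equivalently on the conditioning of $\boldsymbol{\bar X}^{\top}\boldsymbol{\bar X}$), $\lambda$, $k$, and $\min\{|\tilde{Q}_i^{\top}\boldsymbol{\beta}| : \tilde{Q}_i^{\top}\boldsymbol{\beta} \neq 0\}$ — the last of these being what makes the ``wrongly dropped'' probability exponentially small. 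Everything else is routine linear algebra and Gaussian tail estimates.
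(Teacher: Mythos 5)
Your proposal follows essentially the same route as the paper's proof: both decompose $\hat{\boldsymbol{\alpha}}_2$ into a back-transformed signal part (controlled by whether the threshold indicators resolve correctly, using $\sum_i Q_i\tilde{Q}_i^{\top}=\boldsymbol{I}$) and a back-transformed noise part, show that nonzero-mean coordinates are wrongly dropped only with vanishing probability, bound the noise contribution by $\mathcal{O}(1/n)$ in squared norm via $(\bar{\boldsymbol{X}}\boldsymbol{Q})^{\top}\bar{\boldsymbol{X}}\boldsymbol{Q}=n\boldsymbol{I}$, and finish with a union bound. The only substantive difference is that you invoke Gaussian/$\chi^2$ tail bounds where the paper is content with Chebyshev's inequality, which actually yields a sharper (exponentially small) estimate for the wrongly-dropped events; your parenthetical that thresholding the zero-mean coordinates ``only helps'' is not literally true after mixing by $\boldsymbol{Q}$, but your fallback bound by the noise magnitude covers this.
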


\begin{proof}
	The detailed proof of Theorem \ref{theorem21} can be found in Appendix \ref{appendixC}.
\end{proof}

Our ultimate goal is to compare estimates $\boldsymbol{\hat{\alpha}}_2$ and $\boldsymbol{\hat{\alpha}}$. Therefore, in Corollary \ref{corollary22}, we build the relationship between $\boldsymbol{\hat{\alpha}}$ and $\boldsymbol{\beta}$ by utilizing a result from \citet{shao1997asymptotic} (see Theorem 1 therein and notice that the dimension of the true regressors is fixed in our setting). Here, we define symbols that will be used in the theorem below. For the non-orthogonal Cp problem (\ref{non-orthogonal Cp}), $\textcolor{black}{\mathcal{J}}$ is a subset of $\{1,\cdots,k\}$, and $\boldsymbol{\beta}(\textcolor{black}{\mathcal{J}})$ or ($\bar{\boldsymbol{X}}(\textcolor{black}{\mathcal{J}})$) contains the components of $\boldsymbol{\beta}$ (or columns of $\bar{\boldsymbol{X}}$) that are indexed by the integers in $\textcolor{black}{\mathcal{J}}$. {We use $\mathcal{A}$ to denote all nonempty subsets of $\{1,\cdots,k\}$}, $\textcolor{black}{\hat{\mathcal{J}}}$ is the subscripts for nonzero elements of the non-orthogonal Cp estimator ${\hat{\boldsymbol{\alpha}}}$, $\textcolor{black}{\mathcal{J}}^*$ is the subscripts for nonzero elements of the true regression coefficient $\boldsymbol{\beta}$, which is fixed with the increase of $n$. Let  $\mathcal{A}^{c}=\{\textcolor{black}{\mathcal{J}}\in \mathcal{A}| \textcolor{black}{\mathcal{J}}^*\subset \textcolor{black}{\mathcal{J}}\}$, and we assume $\mathcal{A}^{c}$ is nonempty in this context. 
We can have the following Theorem by \citep{shao1997asymptotic} and \citep{nishii1984asymptotic}.

\begin{theorem}\label{theorem22}
	For the non-orthogonal Cp problem (\ref{non-orthogonal Cp}), suppose that the matrix $ \bar{\boldsymbol{X}}^{\top}\bar{\boldsymbol{X}}$ is positive definite, and $\lim_{n\rightarrow \infty}\frac{\bar{\boldsymbol{X}}^{\top}\bar{\boldsymbol{X}}}{n}$ exists and is positive definite. Then, $P_r\{\textcolor{black}{\hat{\mathcal{J}}} \in \mathcal{A}^{c}\}\xrightarrow{P}1 (n\rightarrow\infty)$.
\end{theorem}

\begin{proof}
	The detailed proof of Theorem \ref{theorem22} can be found in Appendix \ref{appendixC}.
\end{proof}

Theorem \ref{theorem22} (together with the discussions in \citep{shao1997asymptotic}) implies that the non-orthogonal Cp criterion may tend to select a correct model with superfluous features if the cardinality of $\mathcal{A}^{c}$ is larger than $1$. However, based on this result, we can prove that the non-orthogonal Cp estimator asymptotically approaches the true regression coefficients when $n\rightarrow\infty$.

\begin{corollary}\label{corollary22}
	For the non-orthogonal Cp problem (\ref{non-orthogonal Cp}), suppose that the matrix $ \bar{\boldsymbol{X}}^{\top}\bar{\boldsymbol{X}}$ is positive definite, and $\lim_{n\rightarrow \infty}\frac{\bar{\boldsymbol{X}}^{\top}\bar{\boldsymbol{X}}}{n}$ exists and is positive definite. Then,
	$\hat{\boldsymbol{\alpha}}\xrightarrow{P}\boldsymbol{\beta}(n\rightarrow\infty)$.
\end{corollary}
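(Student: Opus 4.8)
The plan is to combine Theorem~\ref{theorem21}, which gives $\hat{\boldsymbol{\alpha}}_2 \xrightarrow{P} \boldsymbol{\beta}$, with an argument showing $\hat{\boldsymbol{\alpha}} \xrightarrow{P} \boldsymbol{\beta}$ directly; since the corollary only asks for the latter, the real content is to extract consistency of the non-orthogonal Cp estimator from the model-selection consistency in Theorem~\ref{theorem22}. First I would decompose the event $\{\|\hat{\boldsymbol{\alpha}} - \boldsymbol{\beta}\| > \epsilon\}$ by conditioning on which support $\hat{\tau}$ is selected. By Theorem~\ref{theorem22}, $P_r\{\hat{\tau} \in \mathcal{A}^c\} \to 1$, so up to a vanishing probability we may assume $\tau^* \subseteq \hat{\tau}$, i.e.\ the selected model contains all the truly nonzero coefficients (possibly plus some redundant ones). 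On that event, $\hat{\boldsymbol{\alpha}}$ restricted to $\hat{\tau}$ is just the ordinary least squares estimator $(\bar{\boldsymbol{X}}(\hat{\tau})^{\top}\bar{\boldsymbol{X}}(\hat{\tau}))^{-1}\bar{\boldsymbol{X}}(\hat{\tau})^{\top}\bar{\boldsymbol{y}}$, and the coordinates outside $\hat{\tau}$ are zero, which is consistent with $\boldsymbol{\beta}$ there since $\tau^* \subseteq \hat{\tau}$.

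Next I would show that, uniformly over the finitely many candidate supports $\tau \in \mathcal{A}^c$, the corresponding least squares estimator is consistent for $\boldsymbol{\beta}(\tau)$ (where $\boldsymbol{\beta}$ is padded with zeros on $\tau \setminus \tau^*$). Writing $\bar{\boldsymbol{y}} = \bar{\boldsymbol{X}}\boldsymbol{\beta} + \boldsymbol{\varepsilon} = \bar{\boldsymbol{X}}(\tau)\boldsymbol{\beta}(\tau) + \boldsymbol{\varepsilon}$ (valid precisely because $\tau^* \subseteq \tau$), the OLS estimator on $\tau$ equals $\boldsymbol{\beta}(\tau) + \left(\tfrac{1}{n}\bar{\boldsymbol{X}}(\tau)^{\top}\bar{\boldsymbol{X}}(\tau)\right)^{-1} \tfrac{1}{n}\bar{\boldsymbol{X}}(\tau)^{\top}\boldsymbol{\varepsilon}$. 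The hypothesis that $\lim_{n\to\infty}\tfrac{1}{n}\bar{\boldsymbol{X}}^{\top}\bar{\boldsymbol{X}}$ exists and is positive definite implies the same for every principal submatrix indexed by $\tau$, so the inverse is bounded; meanwhile $\tfrac{1}{n}\bar{\boldsymbol{X}}(\tau)^{\top}\boldsymbol{\varepsilon}$ has mean zero and covariance $\tfrac{\sigma_1^2}{n^2}\bar{\boldsymbol{X}}(\tau)^{\top}\bar{\boldsymbol{X}}(\tau) = O(1/n)$, hence goes to zero in probability (e.g.\ by Chebyshev). So each such estimator converges to $\boldsymbol{\beta}(\tau)$, which, after zero-padding back to dimension $k$, is exactly $\boldsymbol{\beta}$.

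Finally I would assemble the pieces: for any $\epsilon, \eta > 0$, choose $n$ large enough that $P_r\{\hat{\tau}\notin\mathcal{A}^c\} < \eta/2$; then
\begin{equation*}
P_r\{\|\hat{\boldsymbol{\alpha}} - \boldsymbol{\beta}\| > \epsilon\} \le P_r\{\hat{\tau}\notin\mathcal{A}^c\} + \sum_{\tau \in \mathcal{A}^c} P_r\{\|\hat{\boldsymbol{\alpha}} - \boldsymbol{\beta}\| > \epsilon,\ \hat{\tau} = \tau\},
\end{equation*}
and each term in the (finite) sum is bounded by the probability that the OLS estimator on $\tau$ deviates from $\boldsymbol{\beta}(\tau)$ by more than $\epsilon$, which is $< \eta/(2|\mathcal{A}^c|)$ for $n$ large. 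Hence $\hat{\boldsymbol{\alpha}} \xrightarrow{P} \boldsymbol{\beta}$. The main obstacle I anticipate is handling the interaction between the data-dependent random support $\hat{\tau}$ and the estimator conditioned on it: one must be careful that conditioning on $\{\hat{\tau}=\tau\}$ does not distort the distribution of $\bar{\boldsymbol{X}}(\tau)^{\top}\boldsymbol{\varepsilon}$ in a way that breaks the Chebyshev bound. The clean way around this is to avoid conditioning altogether and instead bound $P_r\{\|\hat{\boldsymbol{\alpha}}-\boldsymbol{\beta}\|>\epsilon,\ \hat{\tau}=\tau\} \le P_r\{\|\hat{\boldsymbol{\beta}}^{OLS}(\tau) - \boldsymbol{\beta}(\tau)\| > \epsilon\}$ using only the event inclusion and the fact that on $\{\hat\tau=\tau\}$ the Cp solution coincides with the unconstrained OLS fit on $\tau$ — this uses the dominating-term characterization of $\hat{\boldsymbol{\alpha}}$ and the monotonicity $P_r(A\cap B)\le P_r(A)$, sidestepping any change-of-measure issue entirely. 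Alternatively, since Theorem~\ref{theorem21} already delivers a consistent estimator $\hat{\boldsymbol{\alpha}}_2$ and \citet{shao1997asymptotic} gives the needed asymptotics for $\hat{\boldsymbol{\alpha}}$, one may simply cite those results; but the self-contained OLS argument above is short enough to include.
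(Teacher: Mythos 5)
Your proposal is correct and follows essentially the same route as the paper's proof: invoke Theorem \ref{theorem22} to reduce to the event $\hat{\tau}\in\mathcal{A}^c$, observe that on that event $\hat{\boldsymbol{\alpha}}$ is the least squares fit on a superset of $\tau^*$ (so its zero coordinates agree with $\boldsymbol{\beta}$), and kill the noise term $[\bar{\boldsymbol{X}}(\tau)^{\top}\bar{\boldsymbol{X}}(\tau)]^{-1}\bar{\boldsymbol{X}}(\tau)^{\top}\boldsymbol{\varepsilon}$ by a Chebyshev bound using the positive definiteness of $\lim_{n\to\infty}\bar{\boldsymbol{X}}^{\top}\bar{\boldsymbol{X}}/n$. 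The one place you are more careful than the paper is the handling of the data-dependent support: the paper computes the expectation of $\|[\bar{\boldsymbol{X}}(\hat{\tau})^{\top}\bar{\boldsymbol{X}}(\hat{\tau})]^{-1}\bar{\boldsymbol{X}}(\hat{\tau})^{\top}\boldsymbol{\varepsilon}\|_2^2$ with the random $\hat{\tau}$ sitting inside as if it were fixed, whereas your union bound over the finitely many $\tau\in\mathcal{A}^c$ combined with $P_r(A\cap B)\le P_r(A)$ makes that step airtight without any change-of-measure concern.
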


\begin{proof}
	The detailed proof of Corollary \ref{corollary22} can be found in Appendix \ref{appendixC}.
\end{proof}




By combining Theorem \ref{theorem21} and Corollary \ref{corollary22}, we directly obtain the desired result.

\begin{theorem}\label{theorem24}
	For the non-orthogonal Cp problem (\ref{non-orthogonal Cp}), suppose that the matrix $ \bar{\boldsymbol{X}}^{\top}\bar{\boldsymbol{X}}$ is positive definite, and $\lim_{n\rightarrow \infty}\frac{\bar{\boldsymbol{X}}^{\top}\bar{\boldsymbol{X}}}{n}$ exists and is positive definite. Then, $\boldsymbol{\hat{\alpha}}_2\xrightarrow{P}\boldsymbol{\hat{\alpha}}(n\rightarrow\infty)$.		
\end{theorem}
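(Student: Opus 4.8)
The plan is to obtain Theorem \ref{theorem24} as an immediate consequence of the triangle inequality together with the two convergence results already established. First I would note that $\hat{\boldsymbol{\alpha}}_2=\boldsymbol{Q}\hat{\boldsymbol{\alpha}}_1$ and $\hat{\boldsymbol{\alpha}}$ are expressed in the same coordinate system (the original feature space), so that the Euclidean distance $\|\hat{\boldsymbol{\alpha}}_2-\hat{\boldsymbol{\alpha}}\|_2$ is meaningful without any further back-transformation. Then, for every realization of the noise,
\[
\|\hat{\boldsymbol{\alpha}}_2-\hat{\boldsymbol{\alpha}}\|_2\le\|\hat{\boldsymbol{\alpha}}_2-\boldsymbol{\beta}\|_2+\|\boldsymbol{\beta}-\hat{\boldsymbol{\alpha}}\|_2 .
\]
By Theorem \ref{theorem21}, the first term on the right tends to $0$ in probability as $n\to\infty$ (indeed that theorem gives the stronger quantitative statement $\|\hat{\boldsymbol{\alpha}}_2-\boldsymbol{\beta}\|_2^2\le\mathcal{O}(1/n)$ with probability at least $1-\eta$). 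By Corollary \ref{corollary22}, whose hypotheses on $\bar{\boldsymbol{X}}^{\top}\bar{\boldsymbol{X}}$ are exactly those assumed here, the second term also tends to $0$ in probability.

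Second, I would invoke the standard fact that convergence in probability is preserved under addition: for any $\epsilon>0$,
\[
\Pr\{\|\hat{\boldsymbol{\alpha}}_2-\hat{\boldsymbol{\alpha}}\|_2>\epsilon\}\le\Pr\{\|\hat{\boldsymbol{\alpha}}_2-\boldsymbol{\beta}\|_2>\epsilon/2\}+\Pr\{\|\boldsymbol{\beta}-\hat{\boldsymbol{\alpha}}\|_2>\epsilon/2\},
\]
and both terms on the right-hand side vanish as $n\to\infty$ by the two results just cited. Since $\epsilon>0$ was arbitrary, this is precisely the assertion $\hat{\boldsymbol{\alpha}}_2\xrightarrow{P}\hat{\boldsymbol{\alpha}}$.

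Because both ingredients are already in hand, there is essentially no genuine obstacle in this last step; the only point requiring a line of care is verifying that Theorem \ref{theorem21} and Corollary \ref{corollary22} are stated under mutually compatible hypotheses so that they may be applied simultaneously. This holds: the positive-definiteness assumption on $\bar{\boldsymbol{X}}^{\top}\bar{\boldsymbol{X}}$ imposed here forces $\bar{\boldsymbol{X}}$ to have full column rank, which is exactly what Lemma \ref{lemma19} needs in order for the orthogonalizing matrix $\boldsymbol{Q}$ (and hence $\hat{\boldsymbol{\alpha}}_1$ and $\hat{\boldsymbol{\alpha}}_2$) to be well defined, and the remaining assumption that $\lim_{n\to\infty}\bar{\boldsymbol{X}}^{\top}\bar{\boldsymbol{X}}/n$ exists and is positive definite is precisely the hypothesis of the Corollary. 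Thus the combination is legitimate and the theorem follows.
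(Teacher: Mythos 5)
Your proposal is correct and is exactly the argument the paper intends: the paper states that Theorem \ref{theorem24} follows "by combining Theorem \ref{theorem21} and Corollary \ref{corollary22}," and your triangle-inequality plus union-bound elaboration is precisely the routine step being left implicit there. Your additional check that the hypotheses of the two ingredients are mutually compatible (positive definiteness of $\bar{\boldsymbol{X}}^{\top}\bar{\boldsymbol{X}}$ giving full column rank for Lemma \ref{lemma19}) is a sensible bit of care that the paper omits.
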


When a problem of the Cp-type criteria is applied to large \textcolor{black}{datasets}, computational requirements increase considerably. Theorem \ref{theorem24} indicates that we can treat $\boldsymbol{\hat{\alpha}}_2$ as an ``estimator'' of $\boldsymbol{\hat{\alpha}}$ under appropriate conditions, meaning that we can study the asymptotic behavior of $\boldsymbol{\hat{\alpha}}$ by analyzing the properties of $\boldsymbol{\hat{\alpha}}_2$. This is significant, since the explicit expression of $\boldsymbol{\hat{\alpha}}_2$ is available, which allows us to exploit and understand the process of feature selection using the Cp criterion. 

{\subsection{Asymptotic Analysis of TLCp in the Non-orthogonal Case}}

{We can naturally extend to the TLCp case our method to find an estimator  approximating the solution of the non-orthogonal Cp problem. We will refer to this extension as ``the approximate TLCp.'' We will show below the detailed procedure to find the approximate TLCp estimator and investigate its asymptotic properties.}

{We denote the proposed TLCp problem (\ref{TLCp}) after orthogonalization as minimizing the following objective function with respect to $\boldsymbol{w}_0, \boldsymbol{v}_1,\boldsymbol{v}_2$,
	\begin{equation}\label{orthogonalized TLCp}
	( \boldsymbol {y} _1 - \boldsymbol { {X}}_1\boldsymbol{Q}_1  \boldsymbol { w }_1  ) ^ { \top } ( \boldsymbol{y}_1  - \boldsymbol { {X}}_1\boldsymbol{Q}_1  \boldsymbol { w }_1  )+ ( \boldsymbol { {y} }_2 - \boldsymbol { {X}}_2\boldsymbol{Q}_2  \boldsymbol { w }_2  ) ^ { \top } ( \boldsymbol{y}_2 - \boldsymbol { {X}}_2\boldsymbol{Q}_2  \boldsymbol { w }_2 ) + \frac { 1 } { 2 } \sum _ { t = 1 } ^ { 2 }  \boldsymbol { v }_t^{\top}\boldsymbol{\lambda} _ { 3 }\boldsymbol{v}_t +\lambda_4\bar{p},
	\end{equation}
	where $\boldsymbol{Q}_1$ and $\boldsymbol{Q}_2$ are two invertible matrices such that $(\boldsymbol{X}_1\boldsymbol{Q}_1)^{\top}\boldsymbol{X}_1\boldsymbol{Q}_1=n\boldsymbol{I}$, $(\boldsymbol{X}_2\boldsymbol{Q}_2)^{\top}\boldsymbol{X}_2\boldsymbol{Q}_2=m\boldsymbol{I}$. $\boldsymbol{w}_1=\boldsymbol{w}_0+\boldsymbol{v}_1$, $\boldsymbol{w}_2=\boldsymbol{w}_0+\boldsymbol{v}_2$ are the regression coefficients of the tasks in the target and source domains, respectively. Here, we also assume the 
target domain samples are i.i.d. sampled from the relation $\boldsymbol {y} _1=\boldsymbol { {X}}_1\boldsymbol{\beta}+\boldsymbol{\varepsilon}$, where $\varepsilon_i \sim \mathcal { N } \left( 0 , \sigma_1 ^ { 2 } \right)$ for $i=1,\cdots, n$. Also, the source domain data are i.i.d. sampled from the relation: $\boldsymbol {y} _2=\boldsymbol { {X}}_2(\boldsymbol{\beta}+\boldsymbol{\delta})+\boldsymbol{\eta}$, where $\eta_i \sim \mathcal { N } \left( 0 , \sigma_2 ^ { 2 } \right)$ for $i=1,\cdots, m$. Other parameters in this model can refer to the corresponding illustrations in Subsection $4.1$.}

{To identify an estimator that can approximate the solution of the non-orthogonal  TLCp problem (\ref{TLCp}), we first solve the orthogonalized TLCp model (\ref{orthogonalized TLCp}). 
}

{\begin{proposition}\label{orthogonalized TLCp solution}
		The solution of the orthogonalized TLCp problem (\ref{orthogonalized TLCp}) can be written as
		\begin{align}
		&\bar{w}_1^{i}= \notag\\
		&\left\{
		\begin{array}{cc}
		(\tilde{{Q}}_1^{i})^{\top}\boldsymbol { \beta }+\frac{(Z_1^i)^{\top}\boldsymbol { \varepsilon }}{n}+D_1^i\left[(\tilde{{Q}}_2^{i})^{\top}(\boldsymbol{\delta}+\boldsymbol{\beta})-(\tilde{{Q}}_1^{i})^{\top}\boldsymbol{\beta}+\frac{(Z_2^i)^{\top}\boldsymbol{\eta}}{m}-\frac{(Z_1^i)^{\top}\boldsymbol { \varepsilon }}{n}\right]
		& \text{if~}
		\tilde{F}(\tilde{H}_i,\tilde{R}_i,\tilde{J}_i)>\lambda_4\notag\\
		0
		&  \text{otherwise}
		\end{array}
		\right.
		\end{align}
		where  $\tilde{F}(\tilde{H}_i,\tilde{R}_i,\tilde{J}_i)=A_i\tilde{H}_i^2+B_i\tilde{R}_i^2+C_i\tilde{J}_i^2$. Further,  $\tilde{H}_i=(\boldsymbol{\delta}+\boldsymbol{\beta})\tilde{{Q}}_2^{i}+\frac{\boldsymbol{\eta}^{\top}Z_2^i}{m}$,  $\tilde{R}_i=\boldsymbol{\beta}^{\top}\tilde{{Q}}_1^{i}+\frac{\boldsymbol { \varepsilon }^{\top}Z_i^i}{n}$, $\tilde{J}_i=m\lambda_2\tilde{H}_i+n\lambda_1\tilde{R}_i$, $A_i$, $B_i$, $C_i$ and $D_1^i$ are defined as previously, for $i=1,\cdots,k$.
		In the solution formula, $(\tilde{{Q}}_1^i)^{\top}$ is the $i$-th row vector of the invertible matrix $\boldsymbol{Q}_1^{-1}$, and $(\tilde{{Q}}_2^i)^{\top}$ is the $i$-th row vector of the invertible matrix $\boldsymbol{Q}_2^{-1}$ for $i=1,\cdots,k$. Also, $Z_1^{i}$ is the $i$-th column vector of the design matrix $\boldsymbol{X}_1\boldsymbol{Q}_1$, and $Z_2^{i}$ is the $i$-th column vector of the design matrix $\boldsymbol{X}_2\boldsymbol{Q}_2$, for $i=1,\cdots,k$. 
\end{proposition}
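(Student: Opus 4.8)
The plan is to reduce the orthogonalized TLCp problem~(\ref{orthogonalized TLCp}) to $k$ independent one‑dimensional subproblems, solve each of them exactly as in the proof of Proposition~\ref{TLCp solution}, and then transport the result back through $\boldsymbol{Q}_1,\boldsymbol{Q}_2$. First I would expand the two fitting terms (weighted by $\lambda_1$ and $\lambda_2$, consistently with~(\ref{TLCp})). Writing $\boldsymbol{X}_1\boldsymbol{Q}_1=(Z_1^1,\dots,Z_1^k)$, whose columns are mutually orthogonal with $\|Z_1^i\|_2^2=n$, and using $\boldsymbol{y}_1=\boldsymbol{X}_1\boldsymbol{\beta}+\boldsymbol{\varepsilon}=(\boldsymbol{X}_1\boldsymbol{Q}_1)(\boldsymbol{Q}_1^{-1}\boldsymbol{\beta})+\boldsymbol{\varepsilon}$, completing the square gives
\[
\lambda_1(\boldsymbol{y}_1-\boldsymbol{X}_1\boldsymbol{Q}_1\boldsymbol{w}_1)^{\top}(\boldsymbol{y}_1-\boldsymbol{X}_1\boldsymbol{Q}_1\boldsymbol{w}_1)=\lambda_1 n\sum_{i=1}^{k}\bigl(w_1^i-\tilde{R}_i\bigr)^2+c_1,
\]
where $\tilde{R}_i=(\tilde{Q}_1^i)^{\top}\boldsymbol{\beta}+\tfrac1n(Z_1^i)^{\top}\boldsymbol{\varepsilon}$ is the $i$-th transformed least‑squares estimate for the target task and $c_1$ is independent of $\boldsymbol{w}_1$ (the cross term cancels by orthogonality of the columns of $\boldsymbol{X}_1\boldsymbol{Q}_1$). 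The same step for the source task, with $\boldsymbol{y}_2=(\boldsymbol{X}_2\boldsymbol{Q}_2)(\boldsymbol{Q}_2^{-1}(\boldsymbol{\beta}+\boldsymbol{\delta}))+\boldsymbol{\eta}$ and $(\boldsymbol{X}_2\boldsymbol{Q}_2)^{\top}\boldsymbol{X}_2\boldsymbol{Q}_2=m\boldsymbol{I}$, produces $\lambda_2 m\sum_i(w_2^i-\tilde{H}_i)^2+c_2$ with $\tilde{H}_i=(\tilde{Q}_2^i)^{\top}(\boldsymbol{\beta}+\boldsymbol{\delta})+\tfrac1m(Z_2^i)^{\top}\boldsymbol{\eta}$. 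Substituting $\boldsymbol{w}_1=\boldsymbol{w}_0+\boldsymbol{v}_1$, $\boldsymbol{w}_2=\boldsymbol{w}_0+\boldsymbol{v}_2$ and discarding the constants, the objective of~(\ref{orthogonalized TLCp}) becomes $\sum_{i=1}^{k}g_i(w_0^i,v_1^i,v_2^i)+\lambda_4\bar{p}$, where
\[
g_i(w_0^i,v_1^i,v_2^i)=\lambda_1 n\bigl(w_0^i+v_1^i-\tilde{R}_i\bigr)^2+\lambda_2 m\bigl(w_0^i+v_2^i-\tilde{H}_i\bigr)^2+\tfrac{\lambda_3^i}{2}\bigl((v_1^i)^2+(v_2^i)^2\bigr),
\]
so the problem decouples completely over $i=1,\dots,k$.

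Next I would treat each coordinate separately. As in Proposition~\ref{TLCp solution}, an unselected coordinate forces $w_1^i=w_2^i=0$ (for which $w_0^i=v_1^i=v_2^i=0$ is optimal) and contributes $\lambda_1 n\tilde{R}_i^2+\lambda_2 m\tilde{H}_i^2$, whereas a selected coordinate contributes $\lambda_4$ plus the unconstrained minimum $g_i^{\ast}=\min_{(w_0^i,v_1^i,v_2^i)\in\mathbb{R}^3}g_i$. For $\lambda_3^i>0$, $g_i$ is strictly convex, so setting its three partial derivatives to zero yields a $3\times3$ linear system whose solution is precisely the one obtained in the proof of Proposition~\ref{TLCp solution} under the substitution $(Z_i,H_i)\mapsto(\tilde{R}_i,\tilde{H}_i)$; in particular the minimizing target coefficient is $\bar{w}_1^i=w_0^i+v_1^i=\tilde{R}_i+D_1^i(\tilde{H}_i-\tilde{R}_i)$, and expanding $\tilde{R}_i,\tilde{H}_i$ recovers the formula claimed in the statement. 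The coordinate is kept exactly when keeping it strictly lowers the objective, i.e.\ when $\lambda_1 n\tilde{R}_i^2+\lambda_2 m\tilde{H}_i^2-g_i^{\ast}>\lambda_4$, and the same algebra as in Proposition~\ref{TLCp solution} shows the left-hand side equals $A_i\tilde{H}_i^2+B_i\tilde{R}_i^2+C_i\tilde{J}_i^2$ with $\tilde{J}_i=m\lambda_2\tilde{H}_i+n\lambda_1\tilde{R}_i$. Since the coordinates do not interact, collecting these per-coordinate rules yields the stated closed form.

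The bulk of the computation — the $3\times3$ minimization and the collapse of the selection gain to $A_i\tilde{H}_i^2+B_i\tilde{R}_i^2+C_i\tilde{J}_i^2$ — is identical to what was already carried out for Proposition~\ref{TLCp solution}, so I would not repeat it but argue carefully that the reduction in the first paragraph puts the orthogonalized objective into exactly the same separable shape, with $\tilde{R}_i,\tilde{H}_i$ taking over the roles of the orthogonal quantities $Z_i,H_i$ and $c_1,c_2$ absorbing the pure-noise remainders. This reduction is the only step that needs genuine care. One loose end is the boundary behaviour of $\boldsymbol{\lambda}_3$: when some $\lambda_3^i=0$ the quadratic $g_i$ has minimum value $0$ and the selection rule degenerates to $\lambda_1 n\tilde{R}_i^2+\lambda_2 m\tilde{H}_i^2>\lambda_4$, while $\lambda_3^i\to\infty$ forces $v_1^i=v_2^i=0$ and recovers the aggregate-Cp coordinate rule; both follow from the corresponding limits of $A_i,B_i,C_i,D_1^i$, which I would record explicitly so that the formula holds over the whole admissible range of $\boldsymbol{\lambda}_3$.
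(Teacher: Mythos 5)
Your proposal is correct and follows essentially the same route as the paper's proof: reduce to $k$ independent three-variable quadratic subproblems via the orthogonality $(\boldsymbol{X}_1\boldsymbol{Q}_1)^{\top}\boldsymbol{X}_1\boldsymbol{Q}_1=n\boldsymbol{I}$, $(\boldsymbol{X}_2\boldsymbol{Q}_2)^{\top}\boldsymbol{X}_2\boldsymbol{Q}_2=m\boldsymbol{I}$, solve the stationarity system exactly as in Proposition~\ref{TLCp solution} with $(Z_i,H_i)$ replaced by $(\tilde{R}_i,\tilde{H}_i)$, and compare selected versus unselected objective values; your completed-square form merely shifts the per-coordinate constant so that the selection threshold $\lambda_1 n\tilde{R}_i^2+\lambda_2 m\tilde{H}_i^2-g_i^{\ast}>\lambda_4$ is algebraically identical to the paper's $A_i\tilde{H}_i^2+B_i\tilde{R}_i^2+C_i\tilde{J}_i^2>\lambda_4$. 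Your choice to retain the weights $\lambda_1,\lambda_2$ matches what the paper's own proof actually does (even though they are omitted from the displayed objective in~(\ref{orthogonalized TLCp})), so no gap remains.
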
}

{\begin{proof}
		The detailed proof of Proposition \ref{orthogonalized TLCp solution} can be found in Appendix \ref{appendixC}.
\end{proof}}

{Following the same scheme we applied to the Cp case, we back-transform the solution of the orthogonalized TLCp problem (\ref{orthogonalized TLCp}), which is denoted as $\hat{\boldsymbol{w}}_1=\boldsymbol{Q}_1\bar{\boldsymbol{w}}_1$. This is the approximation of the solution of the non-orthogonal TLCp problem (\ref{TLCp}).}

{\begin{theorem}\label{theorem 28}
		The approximate TLCp estimator $\boldsymbol{\hat{w}}_1$ converges in probability to the true regression coefficients $\boldsymbol{\beta}$, when $n$ goes to infinity. Specifically, for any $\tilde{\eta}>0$, with probability at least $1-\tilde{\eta}$, there holds $\|\hat{\boldsymbol{w}}_1-\boldsymbol{\beta}\|_2^2\leq\mathcal{O}\left({\frac{1}{n}}\right)$.
\end{theorem}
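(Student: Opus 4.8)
The plan is to mimic the argument behind Theorem~\ref{theorem21}, now using the closed form of the orthogonalized TLCp estimator from Proposition~\ref{orthogonalized TLCp solution} together with the back-transformation $\tilde{\boldsymbol{w}}_1=\boldsymbol{Q}_1\bar{\boldsymbol{w}}_1$. Write $\boldsymbol{\theta}^{(n)}:=\boldsymbol{Q}_1^{-1}\boldsymbol{\beta}$ for the ``true'' coefficient vector expressed in the orthogonalized target coordinates, so that its $i$-th entry is $(\tilde{Q}_1^i)^{\top}\boldsymbol{\beta}$ and $\tilde{\boldsymbol{w}}_1-\boldsymbol{\beta}=\boldsymbol{Q}_1(\bar{\boldsymbol{w}}_1-\boldsymbol{\theta}^{(n)})$; hence it suffices to bound $\|\boldsymbol{Q}_1\|_{\mathrm{op}}^2\,\|\bar{\boldsymbol{w}}_1-\boldsymbol{\theta}^{(n)}\|_2^2$. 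First I would control the conditioning: from $(\boldsymbol{X}_1\boldsymbol{Q}_1)^{\top}\boldsymbol{X}_1\boldsymbol{Q}_1=n\boldsymbol{I}$ one gets $(\boldsymbol{Q}_1^{-1})^{\top}\boldsymbol{Q}_1^{-1}=\tfrac1n\boldsymbol{X}_1^{\top}\boldsymbol{X}_1$, so under the analogue of the assumption in Theorem~\ref{theorem21} (that $\tfrac1n\boldsymbol{X}_1^{\top}\boldsymbol{X}_1$ converges to a positive-definite limit, and $\boldsymbol{X}_2$ has full column rank so that $\boldsymbol{Q}_2$ exists by Lemma~\ref{lemma19}), both $\|\boldsymbol{Q}_1\|_{\mathrm{op}}$ and $\|\boldsymbol{Q}_1^{-1}\|_{\mathrm{op}}$ are bounded uniformly in $n$, and $\|\boldsymbol{\theta}^{(n)}\|_2^2=\tfrac1n\|\boldsymbol{X}_1\boldsymbol{\beta}\|_2^2$ stays bounded.

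Next I would analyze $\bar{\boldsymbol{w}}_1-\boldsymbol{\theta}^{(n)}$ coordinatewise via Proposition~\ref{orthogonalized TLCp solution}, needing two ingredients. First, \emph{asymptotic orders of the hyper-parameter-dependent quantities}: with the tuning of Corollary~\ref{corollary 13} (or any fixed choice of $\lambda_1,\lambda_2,\lambda_3^i,\lambda_4$ not growing with $n$), and recalling that $m$, $\boldsymbol{\delta}$, $\boldsymbol{X}_2$ are fixed, direct inspection of the formulas in Proposition~\ref{TLCp solution} and Theorem~\ref{TLCp probability} gives $D_1^i,C_i=\mathcal{O}(1/n)$, $A_i\to$ a positive constant, $B_i=\Theta(n)$ with $B_i/n$ bounded away from $0$, $\tilde{D}^i,M_i,Q_i=\mathcal{O}(1)$, $N_i=\Theta(n)$, $G_i=\mathcal{O}(1)$, and $\lambda_4=\mathcal{O}(1)$ (the $\min_i$ in Corollary~\ref{corollary 13} and the $Q_i/\sqrt{M_iN_i}=\mathcal{O}(1/\sqrt n)$ term both stay $\mathcal{O}(1)$). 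Second, \emph{noise concentration}: each column $Z_1^i$ of $\boldsymbol{X}_1\boldsymbol{Q}_1$ has squared norm $n$, so $\tfrac1n(Z_1^i)^{\top}\boldsymbol{\varepsilon}\sim N(0,\sigma_1^2/n)$, while $\tfrac1m(Z_2^i)^{\top}\boldsymbol{\eta}$ has a fixed Gaussian law; given $\tilde\eta>0$, on an event of probability at least $1-\tilde\eta$ these are simultaneously bounded by $c_{\tilde\eta}/\sqrt n$ and by $c'_{\tilde\eta}$ respectively, over all $i=1,\dots,k$.

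On that event I would argue coordinatewise. If the $i$-th feature is selected, Proposition~\ref{orthogonalized TLCp solution} gives $\bar{w}_1^i-(\tilde{Q}_1^i)^{\top}\boldsymbol{\beta}=\tfrac1n(Z_1^i)^{\top}\boldsymbol{\varepsilon}+D_1^i[\cdots]$, where the bracket has magnitude $\mathcal{O}(1)$ and $D_1^i=\mathcal{O}(1/n)$, so this coordinate of $\bar{\boldsymbol{w}}_1-\boldsymbol{\theta}^{(n)}$ is $\mathcal{O}(1/\sqrt n)$, contributing $\mathcal{O}(1/n)$ to the squared norm. If it is not selected, the coordinate contributes $((\tilde{Q}_1^i)^{\top}\boldsymbol{\beta})^2$, and I must show this is $\mathcal{O}(1/n)$ on the event: here the selection statistic satisfies $A_i\tilde H_i^2+B_i\tilde R_i^2+C_i\tilde J_i^2\ge B_i\tilde R_i^2$ with $\tilde R_i=(\tilde{Q}_1^i)^{\top}\boldsymbol{\beta}+\tfrac1n(Z_1^i)^{\top}\boldsymbol{\varepsilon}$, so if $((\tilde{Q}_1^i)^{\top}\boldsymbol{\beta})^2>C/n$ for a sufficiently large constant $C=C(\tilde\eta)$, the noise bound forces $|\tilde R_i|\ge\tfrac12|(\tilde{Q}_1^i)^{\top}\boldsymbol{\beta}|$, whence $B_i\tilde R_i^2\ge(B_i/n)\cdot C/4$, which for large $n$ exceeds the bounded threshold $\lambda_4$, contradicting non-selection; thus every non-selected coordinate has $((\tilde{Q}_1^i)^{\top}\boldsymbol{\beta})^2\le C/n$. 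Summing over the fixed number $k$ of coordinates yields $\|\bar{\boldsymbol{w}}_1-\boldsymbol{\theta}^{(n)}\|_2^2=\mathcal{O}(1/n)$ on the event, and multiplying by the bounded $\|\boldsymbol{Q}_1\|_{\mathrm{op}}^2$ gives $\|\tilde{\boldsymbol{w}}_1-\boldsymbol{\beta}\|_2^2=\mathcal{O}(1/n)$ with probability at least $1-\tilde\eta$ (for $n$ large given $\tilde\eta$), which in particular implies convergence in probability.

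I expect the main obstacle to be the bookkeeping in the first ingredient: verifying uniformly that $B_i=\Theta(n)$ with leading constant bounded away from zero while $\lambda_4$, $G_i$, $M_i$, $Q_i$ remain $\mathcal{O}(1)$, since it is precisely this competition between the growing $B_i\tilde R_i^2$ and the bounded threshold $\lambda_4$ that forces the relevant orthogonalized features to be re-selected with high probability, and hence forces the bias from the source task (which enters only through $D_1^i[\cdots]=\mathcal{O}(1/n)$, regardless of the size of $\boldsymbol{\delta}$) to wash out at rate $1/n$. A secondary, more routine, point is the non-uniqueness of $\boldsymbol{Q}_1$ in Lemma~\ref{lemma19}: one fixes a measurable choice and notes that all the estimates above depend on $\boldsymbol{Q}_1$ only through $\tfrac1n\boldsymbol{X}_1^{\top}\boldsymbol{X}_1$ and the (fixed) column norms of $\boldsymbol{X}_1\boldsymbol{Q}_1$, so they are insensitive to that choice.
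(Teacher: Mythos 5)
Your proposal is correct and follows the same overall architecture as the paper's proof: use the closed form of the orthogonalized TLCp estimator from Proposition~\ref{orthogonalized TLCp solution}, back-transform by $\boldsymbol{Q}_1$, observe that the source-task contribution enters only through $D_1^i=\mathcal{O}(1/n)$ and hence washes out, control $\tfrac1n(Z_1^i)^{\top}\boldsymbol{\varepsilon}$ by Gaussian/Chebyshev concentration, and handle the selection indicators. The one substantive difference is in how the non-selected coordinates are treated. The paper bounds the \emph{probability} that a feature with $(\tilde{Q}_1^i)^{\top}\boldsymbol{\beta}\neq0$ fails the selection test, via $P_r\{B_i\tilde R_i^2\le\lambda_4\}$ and Chebyshev, which produces a bound involving $W=\min_i\{|(\tilde{Q}_1^i)^{\top}\boldsymbol{\beta}|\neq0\}$ and implicitly requires $W$ to stay bounded away from zero as $n$ (and hence $\boldsymbol{Q}_1$) varies. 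You instead argue a deterministic dichotomy on the high-probability noise event: either $((\tilde{Q}_1^i)^{\top}\boldsymbol{\beta})^2\le C/n$, in which case missing the feature costs only $\mathcal{O}(1/n)$, or the statistic $B_i\tilde R_i^2$ with $B_i=\Theta(n)$ necessarily exceeds the bounded threshold $\lambda_4$ and the feature is selected. This sidesteps the implicit uniformity assumption on $W$ and is the tighter way to run the argument; otherwise the two proofs deliver the same rate by the same mechanism. Your explicit bookkeeping of the orders of $A_i,B_i,C_i,D_1^i,M_i,N_i,Q_i,G_i,\lambda_4$ and of $\|\boldsymbol{Q}_1\|_{\mathrm{op}}$ via $(\boldsymbol{Q}_1^{-1})^{\top}\boldsymbol{Q}_1^{-1}=\tfrac1n\boldsymbol{X}_1^{\top}\boldsymbol{X}_1$ is also carried out in more detail than in the paper, which defers most of it to ``the same procedure as in the proof of Theorem~\ref{theorem21}.''
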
}

{\begin{proof}
		The detailed proof of Theorem \ref{theorem 28} can be found in Appendix \ref{appendixC}.	
\end{proof}}

{Theorem \ref{theorem 28} demonstrates that the proposed approximate TLCp procedure still preserves as good asymptotic properties as that of the Cp case. For the sake of completeness, we will illustrate the asymptotic results of the non-orthogonal TLCp estimator in the following remark.}

{\begin{remark}
		Following a similar procedure as in the proof of Corollary \ref{corollary22} (and Theorem \ref{theorem24}), we can further obtain that the solution $\tilde{\boldsymbol{w}}_1^*$ of the non-orthogonal TLCp problem (\ref{TLCp}) (for the target task) converges in probability to the true regression coefficients $\boldsymbol{\beta}$ (thus $\boldsymbol{\hat{w}}_1\xrightarrow{P}\tilde{\boldsymbol{w}}_1^*$), as $n$ goes to infinity. For any fixed $\textcolor{black}{\mathcal{J}}\in\mathcal{A}$, the solution of the non-orthogonal TLCp problem (\ref{TLCp}) has the form $\hat{\boldsymbol{\beta}}(\mathcal{J})+\boldsymbol{C}^{-1}_1(\mathcal{J})\boldsymbol{\lambda}_3(\mathcal{J})[2\boldsymbol{C}_2(\mathcal{J})+(\boldsymbol{C}_2(\mathcal{J})\boldsymbol{C}^{-1}_1(\mathcal{J})+\boldsymbol{I}(\mathcal{J})\boldsymbol{\lambda}_3(\mathcal{J})]^{-1}(b_2(\mathcal{J})-\boldsymbol{C}_2(\mathcal{J})\hat{\boldsymbol{\beta}}(\mathcal{J}))$, where $\boldsymbol{C}_1(\mathcal{J})=2\lambda_1\boldsymbol{X}(\mathcal{J})^{\top}\boldsymbol{X}(\mathcal{J})$, $\boldsymbol{C}_2(\mathcal{J})=2\lambda_2\tilde{\boldsymbol{X}}(\mathcal{J})^{\top}\tilde{\boldsymbol{X}}(\mathcal{J})$, $b_2(\mathcal{J})=2\lambda_2\tilde{\boldsymbol{X}}(\mathcal{J})^{\top}\tilde{\boldsymbol{y}}(\mathcal{J})$ and $\hat{\boldsymbol{\beta}}(\mathcal{J})$ is the least squares estimation of $\boldsymbol{\beta}$ under the index set $\mathcal{J}$. Also, the residual sum of squares for the target task dominates the objective function of (\ref{TLCp}). 
\end{remark}}

\subsection{Feature Selection Using Approximate Cp and TLCp Methods}

{The primary goal of using the approximate Cp and TLCp methods to select features is to retain relevant features and discard superfluous or redundant ones. We achieve this by deriving a cutoff value for each feature using the approximate Cp and TLCp methods. Coefficients with Cp/TLCp estimators below the cutoff will be discarded. In Subsection 6.3, we present several simulation studies to illustrate the effectiveness of this method. }

{For a sufficiently large number of data points $n$, the approximate Cp estimator for the $j$-th feature satisfies $\boldsymbol{\hat{\alpha}}_2^{j}\approx \beta_j+\sum_{i=1}^{k}Q_{ji}\frac{Z_i^{\top}\boldsymbol { \varepsilon }}{n}$, where $Z_i$ is the $i$-th column of $\boldsymbol{\bar{X}Q}$ and $Q_{j \cdot}$ is the $j$-th row of the transformation matrix $\boldsymbol{Q}$, for $j=1,\cdots,k$. 
By derivations similar to the proof of Theorem \ref{theorem21}, we have $\boldsymbol{\hat{\alpha}}_2^{j}\sim\mathcal{N}\left(\beta_j,\frac{\sigma_1^2}{n}\sum_{i=1}^{k}Q^2_{ji}\right), j=1,\cdots,k$, when $n$ is large enough. If the $j$-th feature is \textcolor{black}{superfluous} ($\beta_j=0$),  then $\boldsymbol{\hat{\alpha}}_2^{j}\sim\mathcal{N}\left(0,\frac{\sigma_1^2}{n}\sum_{i=1}^{k}Q^2_{ji}\right)$.
Thus, a natural way to determine the cutoff for this feature is to calculate the corresponding $(1-\tau/2)$-percentile ($u_{\tau/2}$) of the standard normal distribution, which satisfies $P_r\left\{\left|\boldsymbol{\hat{\alpha}}_2^{j}\right|/ \sqrt{\frac{\sigma_1^2}{n}\sum_{i=1}^{k}Q^2_{ji}}>u_{\tau/2}\right\}=\tau$. According to this formula, if we want the probability of a type I error (rejecting the hypothesis when it is true) to be less than $\tau$, $\left|\boldsymbol{\hat{\alpha}}_2^{j}\right|>u_{\tau/2}\sqrt{\frac{\sigma_1^2}{n}\sum_{i=1}^{k}Q^2_{ji}}$ is sufficient. Therefore, we can set the cutoff for the $j$-th feature equal to $U_j:=u_{\tau/2}\sqrt{\frac{\sigma_1^2}{n}\sum_{i=1}^{k}Q^2_{ji}}$, for $j=1,\cdots,k$. \textcolor{black}{To balance the type I error and type II error, we use Mallows' Cp to determine $u_{\tau/2}$ for the threshold $U_j$ on each feature. Theorem \ref{theorem30} guarantees that Mallows' Cp can indeed lead to proper cutoffs.}}

{\begin{definition}
We define the approximate Cp cutoff estimator $\boldsymbol{\tilde{\alpha}}_2$ as $\boldsymbol{\tilde{\alpha}}_2^{j} = \boldsymbol{\hat{\alpha}}_2^{j}$  when $\boldsymbol{\hat{\alpha}}_2^{j} \geq U_j$ and 0 otherwise, $(j=1,\cdots,k)$.
\end{definition}}


 \begin{theorem}\label{theorem30}
Assume that $\lim_{n\rightarrow \infty}\frac{\bar{\boldsymbol{X}}^{\top}\bar{\boldsymbol{X}}}{n}$ exists. Then, the approximate Cp estimator $\boldsymbol{\hat{\alpha}}_2$ asymptotically achieves the lowest value of Mallows' Cp-statistic $\frac{( \boldsymbol { \bar{y} } - \boldsymbol { \bar{X} }  \boldsymbol { \alpha }  ) ^ { \top } ( \boldsymbol { \bar{y} } - \boldsymbol { \bar{X} } \boldsymbol { \alpha }  )}{n}+\frac{2\sigma_1^2}{n}p$ in the sense of probability, when the number of data points $n$ goes to infinity. In other words, The approximate Cp cutoff estimator $\boldsymbol{\tilde{\alpha}}_2$ can also asymptotically achieve the lowest value of Mallows' Cp-statistic in the sense of probability, when the number of data points $n$ goes to infinity, if and only if the discarded attributes of $\boldsymbol{\hat{\alpha}}_2$ correspond to superfluous features.
 \end{theorem}
 	
\begin{proof}
The detailed proof of Theorem \ref{theorem30} can be found in Appendix \ref{appendixC}.
\end{proof} 

{Theorem \ref{theorem30} implies that using Mallows' Cp to determine the cutoffs on feature coefficients estimated by the approximate Cp method balances the type I and II errors, when the number of data points $n$ is large enough.}

{Algorithm 1 summarizes the procedure of using the approximate Cp method to select features. We can intuitively understand the candidate ($1-\tau/2$)-percentiles in Algorithm 1 as follows. Let $u_{\min}:=\min_{j=1,\cdots,k}\left\{u_j\right\}$,  $u_{\max}:=\max_{j=1,\cdots,k}\left\{u_j\right\}$. Note that the approximate Cp estimator indicates the degrees of importance level for each features. We sort all the candidate ($1-\tau/2$)-percentiles in descending order listed as $v_1, v_2, \cdots, v_{k+1}$, where $v_1=u_{\max}+1$, $v_{k+1}=u_{min}$. 
Then, when $u_{\tau/2}\leq u_{\min}$, the algorithm with thresholds $U_j(u_{\tau/2}) (j=1,\cdots,k)$ selects all the features, and when $u_{\tau/2}>u_{\max}$, the algorithm with thresholds $U_j(u_{\tau/2}) (j=1,\cdots,k)$ discards all the features. For cases $u_{\min}<u_{\tau/2}\leq u_{\max}$, the algorithm with thresholds $U_j(u_{\tau/2}=v_{\ell}) (j=1,\cdots,k)$ selects the first important $\ell-1$ features, for $\ell=2,\cdots,k$.} 

\begin{algorithm}
\caption{Using the approximate Cp method to select features}
\begin{algorithmic}
 \State \hspace{-3ex} \textbf{Input:} The approximate Cp estimator $\boldsymbol{\hat{\alpha}}_2$.
 \State \hspace{-3ex} \textbf{Output:}  The threshold on each feature coefficient $U_j(u_{\tau/2}):=u_{\tau/2}\sqrt{\frac{\hat{\sigma}_1^2}{n}\sum_{i=1}^{k}Q^2_{ji}}$ ($j=1,\cdots,k$) and the corresponding approximate Cp cutoff estimator $\boldsymbol{\tilde{\alpha}}_2$.
 		\begin{description}
	    \item[\hspace{-0.5em}] \hspace{-1em} Initialize $\boldsymbol{\tilde{\alpha}}_2(0)=\boldsymbol{0}_{k\times 1}$.
	    \item[\hspace{-0.5em}1:]\hspace{-0.5em} Calculate $k+1$ candidate ($1-\tau/2$)-percentiles: $u_\ell=\left|\boldsymbol{\hat{\alpha}}_2^{\ell}\right|/\sqrt{\frac{\hat{\sigma}_1^2}{n}\sum_{i=1}^{k}Q^2_{\ell i}}$, for $\ell=1,\cdots,k$. Let $u_{k+1}=\text{max}_{\ell=1,\cdots,k}\left\{u_{\ell}\right\}+1$.
	    \item[\hspace{-0.5em}2:]\hspace{-0.5em} Pick the best ($1-\tau/2$)-percentile $u_{\tau/2}$ by Mallows' Cp;
	    \item[\hspace{1em}]\hspace{-1.0em}{\bfseries for} $p=1$ to $k+1:1$
	    \item[\hspace{2em}]\hspace{-1.0em}{\bfseries for} $q=1$ to $k:1$
	    \item[\hspace{3em}]\hspace{-1.0em}{\bfseries if} $\boldsymbol{\hat{\alpha}}_2^{q}\geq U_q(u_p)$ ~$ \verb|\\|$ \textcolor{black}{$U_q(u_p):=u_p\sqrt{\frac{\hat{\sigma}_1^2}{n}\sum_{i=1}^{k}Q^2_{ji}}$ is a candidate threshold;}
	    \item[\hspace{3em}]\hspace{-1.0em} $\boldsymbol{\tilde{\alpha}}_2^{q}(p)=\boldsymbol{\hat{\alpha}}_2^{q}$;
	    \item[\hspace{3em}]\hspace{-1.0em}{\bfseries else} 
	    \item[\hspace{3em}]\hspace{-1.0em} $\boldsymbol{\tilde{\alpha}}_2^{q}(p)=0$;
	    \item[\hspace{3em}]\hspace{-1.0em}{\bfseries end if}
	    \item[\hspace{2em}]\hspace{-1.0em}{\bfseries end for}
	     \item[\hspace{2em}]\hspace{-1.0em}{\bfseries if} $C_p(\boldsymbol{\tilde{\alpha}}_2(p))<C_p(\boldsymbol{\tilde{\alpha}}_2(p-1))$, where $C_p(\cdot)$ denotes the Mallows' Cp statistic.
	      \item[\hspace{3em}]\hspace{-1.0em} $\boldsymbol{\tilde{\alpha}}_2=\boldsymbol{\tilde{\alpha}}_2(p)$;
	      \item[\hspace{3em}]\hspace{-1.0em} $u_{\tau/2}=u_p$.
	       \item[\hspace{2em}]\hspace{-1.0em}{\bfseries end if}
	       \item[\hspace{1em}]\hspace{-1.0em}{\bfseries end for}
	       \end{description}
\end{algorithmic}
\end{algorithm}

{Next, we present the procedure to determine the cutoff on each feature coefficient estimated by the approximate TLCp method. When the number of target samples $n$ is large enough, the approximate TLCp estimator for the $j$-th feature coefficient satisfies $\boldsymbol{\hat{w}}_1^{j}\approx \beta_j+\sum_{i=1}^{k}Q_{1}^{ji}\frac{(Z_1^i)^{\top}\boldsymbol { \varepsilon }}{n} (j=1,\cdots,k)$, where $Z_1^i$ is the $i$-th column of $\boldsymbol{X}_1\boldsymbol{Q}_1$ and $Q_{1}^{j\cdot}$ is the $j$-th row of the transformation matrix $\boldsymbol{Q}_1$ (see the proof of Theorem \ref{theorem 28}). Similar to the case of the approximate Cp method, we can set the cutoff for the $j$-th feature estimated by the approximate TLCp method as $\tilde{U}_j:=\tilde{u}_{\tau/2}\sqrt{\frac{\sigma_1^2}{n}\sum_{i=1}^{k}(Q_1^{ji})^2}$, where $\tilde{u}_{\tau/2}$ is ($1-\tau/2$)-percentile of the standard normal distribution, for $j=1,\cdots,k$. }

{
\begin{definition}
We define the approximate TLCp cutoff estimator $\boldsymbol{\tilde{w}}_1$ as $\boldsymbol{\tilde{w}}_1^j=\boldsymbol{\hat{w}}_1^j$ when $\boldsymbol{\hat{w}}_1^j\geq \tilde{U}_j$ and $0$ otherwise, $(j=1,\cdots,k)$.		
\end{definition}	
}	

{Following the same idea used in Algorithm 1,  we determine the proper value of $\tilde{u}_{\tau/2}$ for the threshold on each feature $\tilde{U}_j$ by the proposed TLCp criterion (\ref{TLCp}). The next theorem states that using the TLCp criterion leads to proper thresholds on the feature coefficients estimated by the approximate TLCp method.}

{
\begin{theorem}\label{theorem31}	
Assume that the number of source samples $m$ satisfies $\lim_{n\rightarrow \infty}m/n=C$, where $C>0$ is a constant and $n$ is the number of target samples. Further, assume $\lim_{n\rightarrow \infty}\frac{{\boldsymbol{X}_1}^{\top}{\boldsymbol{X}_1}}{n+m}$ and $\lim_{n\rightarrow \infty}\frac{{\boldsymbol{X}_2}^{\top}{\boldsymbol{X}}_2}{n+m}$ exist. Then, the approximate TLCp estimators $\boldsymbol{\hat{w}}_1$ and $\boldsymbol{\hat{w}}_2$ with respect to the target and source tasks asymptotically achieve the lowest value of the TLCp-statistic $\frac{1}{n+m}\sum _ { t = 1 } ^ { 2 } \left[\lambda_t( \boldsymbol { y }_t - \boldsymbol { X }_t \boldsymbol { w}_t ) ^ { \top } ( \boldsymbol { y }_t - \boldsymbol { X }_t \boldsymbol { w}_t  )+\frac { 1 } { 2 } \boldsymbol { v }_t^{\top}\boldsymbol{\lambda} _ { 3 }\boldsymbol{v}_t +\frac { 1 } { 2 }\lambda_4\bar{p}\right]$ in the sense of probability, when $n$ goes to infinity. In other words, the approximate TLCp cutoff estimators $\boldsymbol{\tilde{w}}_1$ and $\boldsymbol{\tilde{w}}_2$ for the target and source tasks can also asymptotically achieve the lowest value of the TLCp-statistic in the sense of probability, when $n$ goes to infinity, if and only if the discarded attributes of $\boldsymbol{\hat{w}}_1$ and $\boldsymbol{\hat{w}}_2$ correspond to superfluous features.
\end{theorem}}

\begin{proof}
	The detailed proof of Theorem \ref{theorem31} can be found in Appendix \ref{appendixC}.
\end{proof} 

{Algorithm 2 summarizes the procedure of using the approximate TLCp method to select features for the target task. This algorithm is a natural extension of Algorithm 1 under the framework of TLCp. Note that the feature selection for the target task in Algorithm 2 is related to the source task. Intuitively, we can expect a reliable feature selection if the relative dissimilarity between the target and source task is small.}

\begin{algorithm}
\caption{Using the approximate TLCp method to select features for the target task}
\begin{algorithmic}
\State \hspace{-3ex} \textbf{Input:} The approximate TLCp estimators $\boldsymbol{\hat{w}}_1$ and $\boldsymbol{\hat{w}}_2$.
\State \hspace{-3ex} \textbf{Output:}  The threshold on each feature coefficient $\tilde{U}_j(\textcolor{black}{\tilde{u}_{\tau/2}}):=\tilde{u}_{\tau/2}\sqrt{\frac{\hat{\sigma}_1^2}{n}\sum_{i=1}^{k}(Q_1^{ji})^2}$ ($j=1,\cdots,k$) and the corresponding approximate TLCp cutoff estimator $\boldsymbol{\tilde{w}}_1$ for the target task.
\begin{description}
			\item[\hspace{-0.5em}] \hspace{-1em} Initialize $\boldsymbol{\tilde{w}}_1(0)=\boldsymbol{0}_{k\times 1}$, $\boldsymbol{\tilde{w}}_2(0)=\boldsymbol{0}_{k\times 1}$.
			\item[\hspace{-0.5em}1:]\hspace{-0.5em} Calculate $k+1$ candidate ($1-\tau/2$)-percentiles: $\tilde{u}_\ell=\left|\boldsymbol{\hat{w}}_1^{\ell}\right|/\sqrt{\frac{\hat{\sigma}_1^2}{n}\sum_{i=1}^{k}(Q_1^{\ell i})^2}$, for $\ell=1,\cdots,k$. Let $\tilde{u}_{k+1}=\text{max}_{\ell=1,\cdots,k}\left\{\tilde{u}_{\ell}\right\}+1$.
			\item[\hspace{-0.5em}2:]\hspace{-0.5em} Pick the best ($1-\tau/2$)-percentile $\tilde{u}_{\tau/2}$ by the TLCp criterion;
			\item[\hspace{1em}]\hspace{-1.0em}{\bfseries for} $p=1$ to $k+1:1$
			\item[\hspace{2em}]\hspace{-1.0em}{\bfseries for} $q=1$ to $k:1$
			\item[\hspace{3em}]\hspace{-1.0em}{\bfseries if} $\boldsymbol{\hat{w}}_1^{q}\geq \tilde{U}_q(\tilde{u}_p)$~~$ \verb|\\|$ \textcolor{black}{$\tilde{U}_q(\tilde{u}_p):=\tilde{u}_p\sqrt{\frac{\hat{\sigma}_1^2}{n}\sum_{i=1}^{k}(Q_1^{ji})^2}$ is a candidate threshold;}
			\item[\hspace{3em}]\hspace{-1.0em} $\boldsymbol{\tilde{w}}_1^{q}(p)=\boldsymbol{\hat{w}}_1^{q}$, and
			$\boldsymbol{\tilde{w}}_2^{q}(p)=\boldsymbol{\hat{w}}_2^{q}$;
			\item[\hspace{3em}]\hspace{-1.0em}{\bfseries else} 
			\item[\hspace{3em}]\hspace{-1.0em} $\boldsymbol{\tilde{w}}_1^{q}(p)=0$, and $\boldsymbol{\tilde{w}}_2^{q}(p)=0$;
			\item[\hspace{3em}]\hspace{-1.0em}{\bfseries end if}
			\item[\hspace{2em}]\hspace{-1.0em}{\bfseries end for}
			\item[\hspace{2em}]\hspace{-1.0em}{\bfseries if} $TLC_p(\boldsymbol{\tilde{w}}_1(p),\boldsymbol{\tilde{w}}_2(p))<TLC_p(\boldsymbol{\tilde{w}}_1(p-1),\boldsymbol{\tilde{w}}_2(p-1))$, where $TLC_p(\cdot,\cdot)$ denotes the TLCp statistic with $\boldsymbol{\lambda_3}=\boldsymbol{0}$ (In this case, the TLCp criterion only shares the sparsity of tasks).
			\item[\hspace{3em}]\hspace{-1.0em} $\boldsymbol{\tilde{w}}_1=\boldsymbol{\tilde{w}}_1(p)$;
			\item[\hspace{3em}]\hspace{-1.0em} $\tilde{u}_{\tau/2}=\tilde{u}_p$.
			\item[\hspace{2em}]\hspace{-1.0em}{\bfseries end if}
			\item[\hspace{1em}]\hspace{-1.0em}{\bfseries end for}
		\end{description}
\end{algorithmic}
\end{algorithm}

{\subsection{Practical Considerations for Using the TLCp Methods}}

{In this subsection, we summarize the workflow in using the proposed TLCp methods including the original TLCp method and the approximate TLCp cutoff procedure. }

\begin{minipage}{\textwidth}
	\begin{small}
		\textup{
			\rule{\columnwidth}{1pt} \textbf{Guidelines for applying the TLCp methods to feature selection}\\
			\rule[1ex]{\columnwidth}{1pt}
			\vspace{-1ex} \textbf{Input:} target training dataset: $\{(x_1^i, x_2^i, \cdots, x_k^i;y_i)\}_{i=1}^{n}$, source dataset: $\{(\tilde{x}_1^i, \tilde{x}_2^i, \cdots, \tilde{x}_k^i;\tilde{y}_i)\}_{i=1}^{m}$. \\
			\textbf{Output:} estimated regression coefficients of target task and selected relevant features.
			\begin{description}
				\item[\hspace{0em}1:]\hspace{0em} Data standardization\footnote{Variable standardization is a necessary preprocessing step in feature selection, aiming to make the threshold independent of the scale of variables. However, we do not standardize binary variables (coded as $0/1$) to preserve their binary meaning.} (using $z$-scores);
				\item[\hspace{-0em}2:]\hspace{-0em} Calculate the relative dissimilarity\footnote{We define the relative dissimilarity of tasks as the scaled dissimilarity of tasks, that is,  $\|\boldsymbol{\hat{\mu}}_t-\boldsymbol{\hat{\mu}}_s\|_2/\|\boldsymbol{\hat{\mu}}_t\|_2$, where $\boldsymbol{\hat{\mu}}_t$ and $\boldsymbol{\hat{\mu}}_s$ are the least squares estimates of the regression coefficient vector for the target and source tasks, respectively.} of the given tasks and apply TLCp methods when the relative dissimilarity is less than $3$\footnote{A dissimilarity $>3$ indicates a significant deviation between two tasks. In this case, we do not transfer knowledge from the source task; instead, we use the original Cp method.}. 
				\item[\hspace{-0em}3:]\hspace{-0em}~The detailed procedures of using TLCp methods.
				\item[\hspace{0.5em}3.1:]\hspace{0em}{\bfseries if} $n \geq k$, and the design matrices for the target and source tasks are non-singular, one can use either of the following two methods.
				\item[\hspace{0.5em}]\hspace{0.5em} (1) The original TLCp method;
				\item[\hspace{1em}]\hspace{1em}$\circ$~Tune the parameters of the original TLCp  procedure as the rules stated in Theorem $20$ and solve it.
				\item[\hspace{0.5em}]\hspace{0.5em} (2) The approximate TLCp cutoff method;
				\item[\hspace{1em}]\hspace{1em}$\circ$~ Apply the Gram-Schmidt process\footnote{We use the modified version of the Gram-Schmidt process where features that are almost linearly correlated to previous ones are deleted. When this process is applied to almost linearly dependent vectors and the $i$-th vector is a linear combination of the previous $i-1$ vectors, the process outputs a nearly zero vector in the $i$-th step. We simply discard these vectors because they are \textcolor{black}{redundant}} for the purpose of orthogonalizing the regression problems of the target and source tasks.
				\item[\hspace{1em}]\hspace{1em}$\circ$~ Solve the orthogonalized TLCp problem analytically, then back-transform the obtained solution as the approximate TLCp estimator.
			{\item[\hspace{1em}]\hspace{1em}$\circ$~ Conduct feature selection based on the approximate TLCp estimator by Algorithm 2.}
				\item[\hspace{0.5em}3.2:]\hspace{0em}{\bfseries if}~$n \geq k$, and the design matrices for the target and source tasks are singular.
				\item[\hspace{1em}]\hspace{1em}$\circ$ Delete the redundant features so that the remaining features are linearly independent, then execute $3.1$.
				\item[\hspace{0.5em}3.3:]\hspace{0em}{\bfseries if ~$n<k$}, and assuming the rank of the design matrix is $r$.
				\item[\hspace{1em}]\hspace{1em}$\circ$ Use a projection operator $\tilde{\boldsymbol{Q}} \in \mathbb{R}^{k\times r}$ by using the eigenvalue decomposition method\footnote{We discard eigenvectors whose corresponding eigenvalues are nearly zero.} such that $\tilde{\boldsymbol{Q}}^{\top}\boldsymbol{X}^{\top}\boldsymbol{X}\tilde{\boldsymbol{Q}}=rI$, and then apply $3.1(2)$. 
				\item[\hspace{-0em}4:]\hspace{-0em} Output the estimated regression coefficients of the target task and the selected features.
			\end{description}
			\rule[0ex]{\columnwidth}{1pt}\hspace{2mm}
		}
	\end{small}
\end{minipage}

{Next, we make a few recommendations for using the proposed TLCp method. 
\begin{itemize}
    \item As shown in the experimental section, based on the tuned parameters, the original TLCp method is effective when the relative task dissimilarity is small (i.e., less than $3$). The approximate cutoff TLCp procedure can perform as well as or better than the original TLCp method. The approximate TLCp cutoff procedure is preferable when users are more concerned about calculation time. 
    \item For more information about using the TLCp methods with more than two tasks, refer to Appendix B. 
    \item In case the number of features $k$ is significantly larger than the sample size $n$ (the sparsity assumption of the true regression model is required \citep{candes2007dantzig}), we tested the approximate TLCp cutoff procedure by simulations with $k=60$, 90, 300, 3000, 30000 and $n=30$ ($m=30$) (in Subsection $6.4$), also demonstrating the advantage of using our method. Due to the NP-hardness of the original TLCp problem, exact algorithms may be very time-consuming when the number of features is large. However, {we can try to use a solver such as ALAMO \citep{cozad2014learning} to solve the original TLCp problem.} 
\end{itemize}}

\subsection{Extension  to Other Feature Selection Criteria}

Under the orthogonality assumption, we can directly generalize the analysis of the Cp problem to other feature selection criteria, such as the Bayesian information criterion (BIC) in the following equation,
\begin{equation}
\text{BIC} = \min_{\boldsymbol { \bar{a} }}\frac { ( \boldsymbol { \bar{y} } - \boldsymbol { \bar{X} }  \boldsymbol { \bar{a} }  ) ^ { \top } ( \boldsymbol { \bar{y} } - \boldsymbol { \bar{X} } \boldsymbol { \bar{a} }  ) } { \hat { \sigma }_1 ^ { 2 } } + p\log{n}
\end{equation}
Specifically, we can directly obtain similar results with respect to the BIC criterion, as illustrated in Proposition \ref{theorem1}, Theorem \ref{theorem 2}, Proposition \ref{proposition3}, and Theorem \ref{theorem4}.

{\begin{remark}\label{remark30}
		We can also analyze BIC from the viewpoint of statistical tests. For instance, under the orthogonality assumption, we can conclude that using BIC amounts to performing a chi-squared test with regard to the statistic $\left(\beta_i\sqrt{n}+\frac{\sum_{j=1}^{n}\varepsilon_jW_i^{j}}{\sqrt{n}}\right)^2$ for each feature, with the significant level $\alpha_4=1-F(\log(n);1)$, where $F(\log(n);1)$ is the cumulative distribution function of the chi-squared distribution with $1$ degree of freedom at value $\text{log}(n)$. Therefore, BIC is more conservative than Cp in selecting features. More details can be found in Appendix A.
\end{remark}}

In the absence of orthogonality, our framework can also be applied to BIC. The proposed TLCp model serves as a a guide on how to proceed. By substituting $\lambda={\hat{\sigma}_1}^2\log{n}$ in Proposition \ref{TLCp solution}, Theorem \ref{TLCp probability}, Corollary \ref{TLCp scale-up}, Theorem \ref{tune parameters}, Theorem \ref{MSE TLCp}, and Corollary \ref{corollary 13}, we can directly acquire the corresponding results for BIC by using transfer learning. Our results in Proposition \ref{proposition14}, Lemma \ref{lemma17}, and Theorem \ref{theorem18} are obtained under the condition that $\lambda=2\sigma_1^2$. We can expect to obtain similar results in the context of BIC by using a similar technical analysis. We will leave this to future work.

\section{Simulation Studies}

{In this section, we conduct simulations to evaluate the performance of the proposed methods under different problem settings. We first use a toy example to support our theoretical results in Corollary \ref{corollary 13} and Theorem \ref{theorem18}. Then, we investigate the effect of sample size and relative task dissimilarity on the performance of the TLCp method in the orthogonal case. Then, we investigate the performance of the approximate Cp and TLCp cutoff methods with feature correlations in the non-orthogonal case. Finally, we compare the performance of these two methods under high-dimensional settings. }
{The source code for reproducing the experimental results is available at \href{https://github.com/Shaohan-Chen/Transfer-learning-in-Mallows-Cp}{``https://github.com/Shaohan-Chen/Transfer-learning-in-Mallows-Cp"}. All experiments in this paper were conducted on a computer with a $6$-core, $2.60$-GHz CPU and $16$-GB memory.}

\subsection{Toy Example}


We assume that the target training data are i.i.d. sampled from $\boldsymbol { y } = \boldsymbol { X } \boldsymbol { \beta } + \boldsymbol { \varepsilon }$, where $\boldsymbol { \beta }=[1,0.01,0.005,0.3,0.32, 0.08]^{\top}$, the fourth and fifth elements of which are (or are near) the critical points $\pm\sqrt{2\sigma^2/n}$ when $n=20$ and $\sigma=1$.  $\boldsymbol{\varepsilon} : = \left( \varepsilon _ { 1 }, \varepsilon _ { 2 }, \cdots, \varepsilon _ { n } \right) ^ { \top }$ are the standard Gaussian noises. We generate data from the source domain as $\boldsymbol { \tilde{y} } = \boldsymbol { X } (\boldsymbol { \beta }+\boldsymbol{\delta}) + \boldsymbol { \varepsilon }$. Here, $\boldsymbol { X }$ is first obtained by producing a random matrix $\boldsymbol{Z}$, where each item follows a standard normal distribution. Then, we
find an invertible matrix $\tilde{\boldsymbol{Q}}$ such that $\boldsymbol { X }=\boldsymbol{Z}\tilde{\boldsymbol{Q}}$ satisfies $\boldsymbol{X}^{\top}\boldsymbol{X}=n\boldsymbol{I}$ (see Lemma \ref{lemma19} in Section $5.1$). 
We simulate data with the sample size $n=20$ in the target domain, and $m=20$ in the source domain. We also define the similarity measure between the tasks from target and source domains as $1/\|\boldsymbol{\delta}\|_2$ with $\|\boldsymbol{\delta}\|_2\in[0,5]$ (for our experiment, we uniformly picked $29$ points from $\|\boldsymbol{\delta}\|_2\in[0,5]$). For each $1/\|\boldsymbol{\delta}\|_2$, we randomly simulated $5000$ \textcolor{black}{datasets} and applied the Cp and TLCp criteria. We chose the tuning parameter of the Cp model (\ref{the modified Cp}) as $\lambda=2$, and set the parameters of the TLCp model (\ref{TLCp}) $\lambda_1, \lambda_2, \boldsymbol{\lambda_3}, \lambda_4$ according to the tuning rules stated in Corollary \ref{corollary 13} or Theorem \ref{theorem18}, as $\lambda_1=1, \lambda_2=1, \lambda_3^{i}=4/\delta_i^2(i=1,\cdots,k), \lambda_4\approx2$.

{The probabilities of the orthogonal TLCp method and the Cp criterion to select a feature under several specific task similarities are presented in Figure \ref{bar}. Figures 2(a), 2(b), and 2(c) show that the greater the task similarity is, so are the probabilities of the TLCp to select critical features. The probability of TLCp to select features whose coefficients are small (i.e., the second, third, and sixth ones) is similar to that of Cp. However, the probability of TLCp to identify the critical features is remarkably larger than that of Cp when the task similarity is large (i.e., larger than $2$).
As depicted in figure 2(d), TLCp may choose incorrect models with a high probability if the task similarity is small. These experimental results are consistent with our theoretical results in Corollary \ref{corollary 13}. The observations imply that we can generalize the restriction of the task dissimilarity in Corollary \ref{corollary 13} to a wide range. Table \ref{Table 1} shows the (average) estimated regression coefficients for the TLCp and Cp methods under several task similarities. We see that the TLCp method ranks the features reliably when the task similarity is relatively small.}

{In Figure \ref{figure3}, we compare the MSE performance of the orthogonal TLCp estimator to that of the orthogonal Cp estimator and detect changes in variance with the task similarity. {In Figure \ref{figure3}(a)}, based on the hyperparameters' tuning rule in Theorem \ref{theorem18}, the MSE value of TLCp dramatically decreases with the increase of the task similarity. However, suppose the hyperparameters of TLCp are randomly set. In that case, the MSE performance of TLCp is significantly worse than the well-tuned case and performs slightly better than Cp as task similarity grows. These numerical results support the theoretical result in Theorem \ref{theorem18} when there exist critical features in the model.}

\begin{figure}
\begin{minipage}[t] {0.5\linewidth}
\centerline{\includegraphics[height=6.5cm,width=7.5cm]{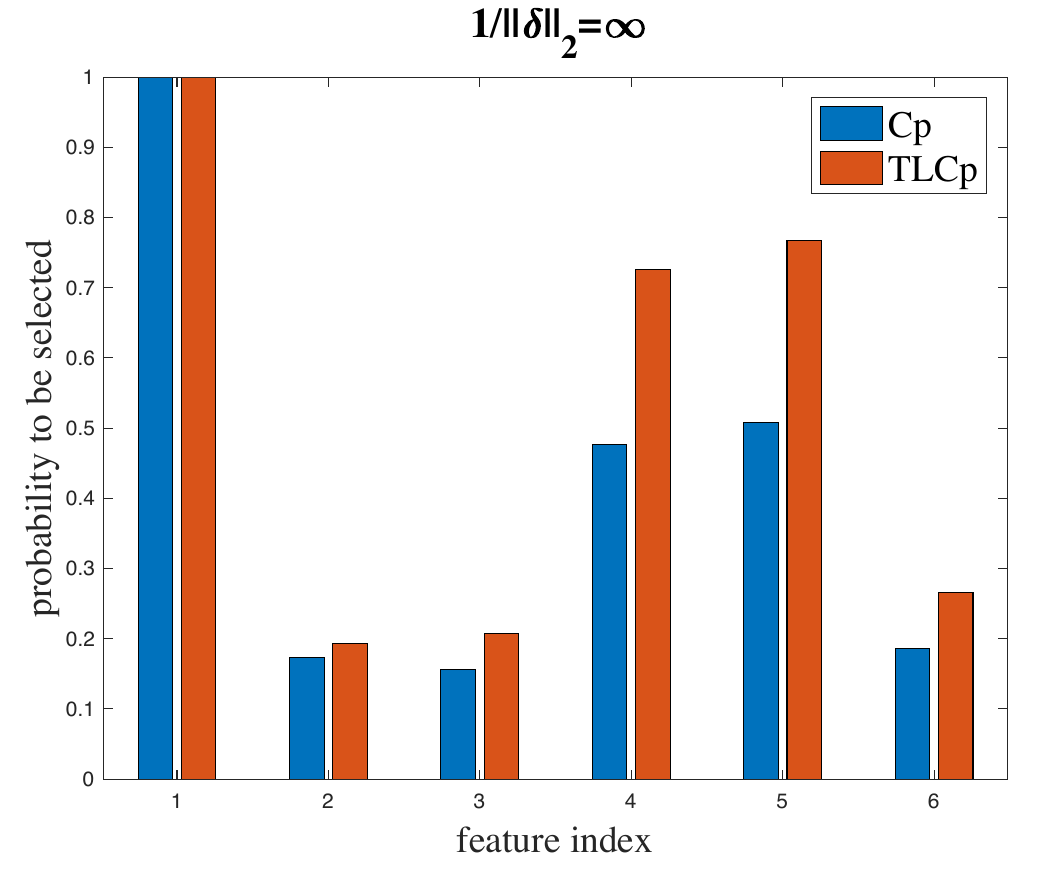}}
\centerline{(a)}
\centerline{  }
\end{minipage}
\hfill
\begin{minipage}[t] {0.5\linewidth}
\centerline{\includegraphics[height=6.5cm,width=7.5cm]{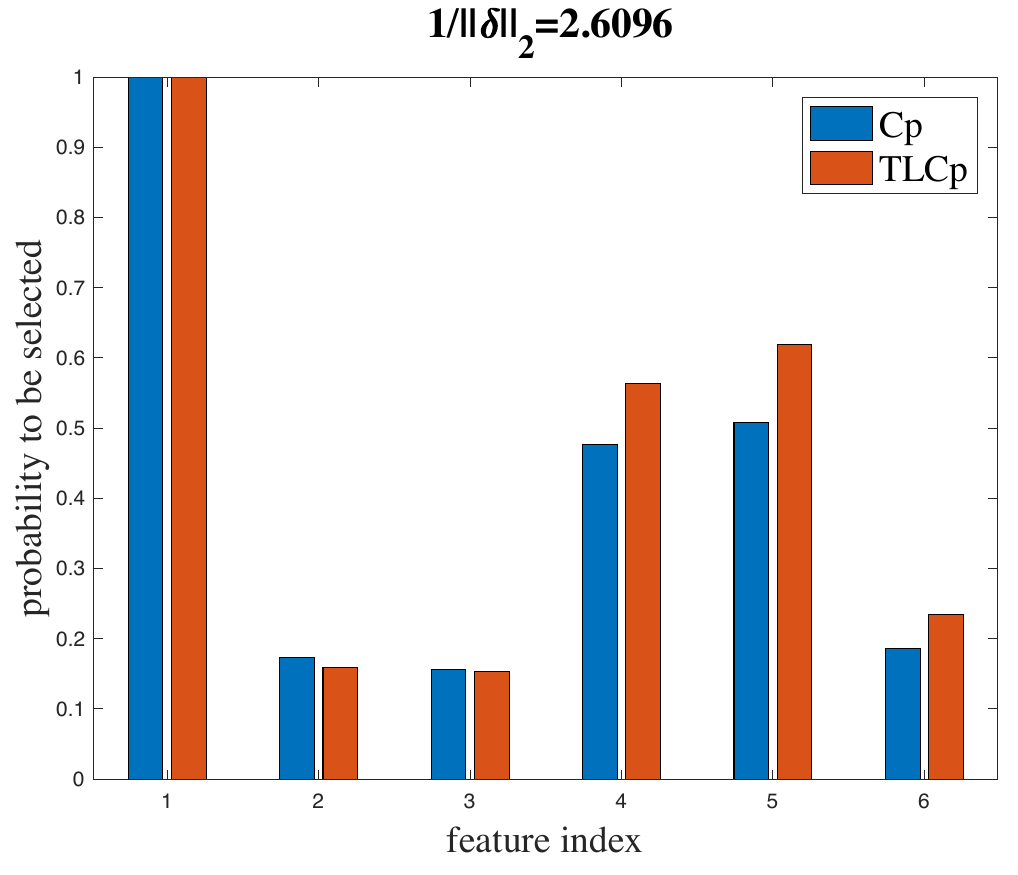}}
\centerline{(b)}
\end{minipage}
\vfill
\begin{minipage}[t] {0.5\linewidth}
\centerline{\includegraphics[height=6.5cm,width=7.5cm]{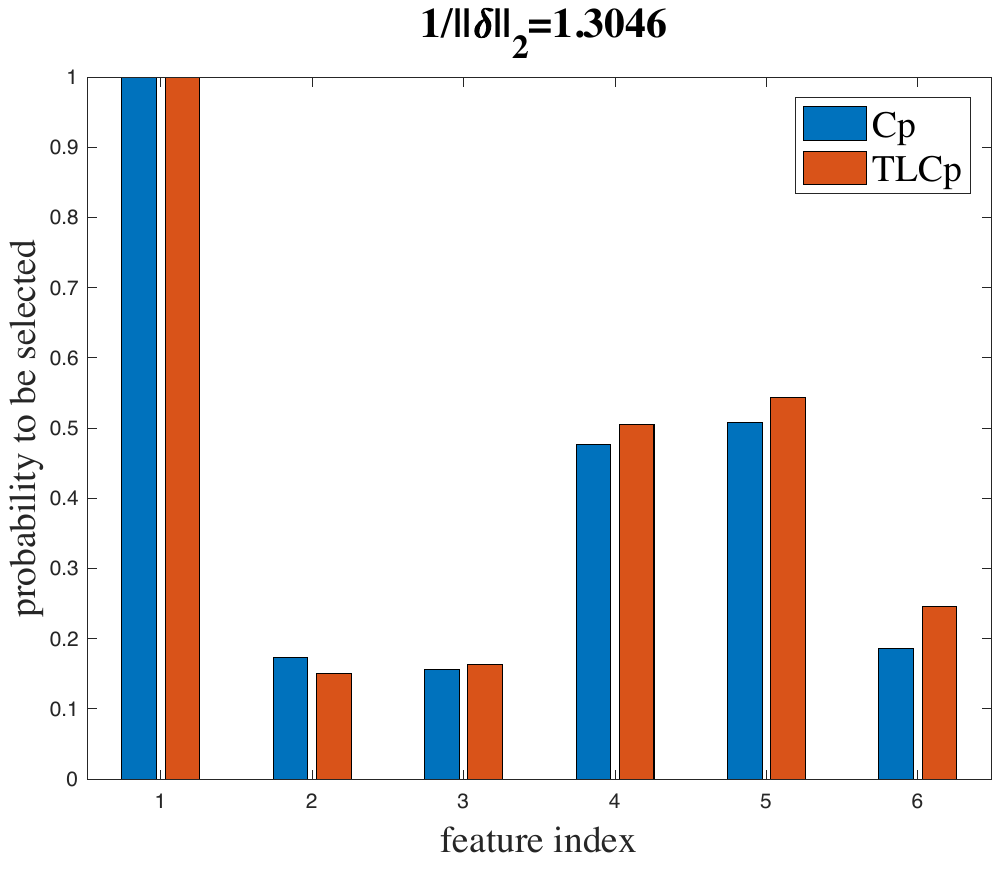}}
\centerline{(c)}
\end{minipage}
\hfill
\begin{minipage}[t] {0.5\linewidth}
\centerline{\includegraphics[height=6.5cm,width=7.5cm]{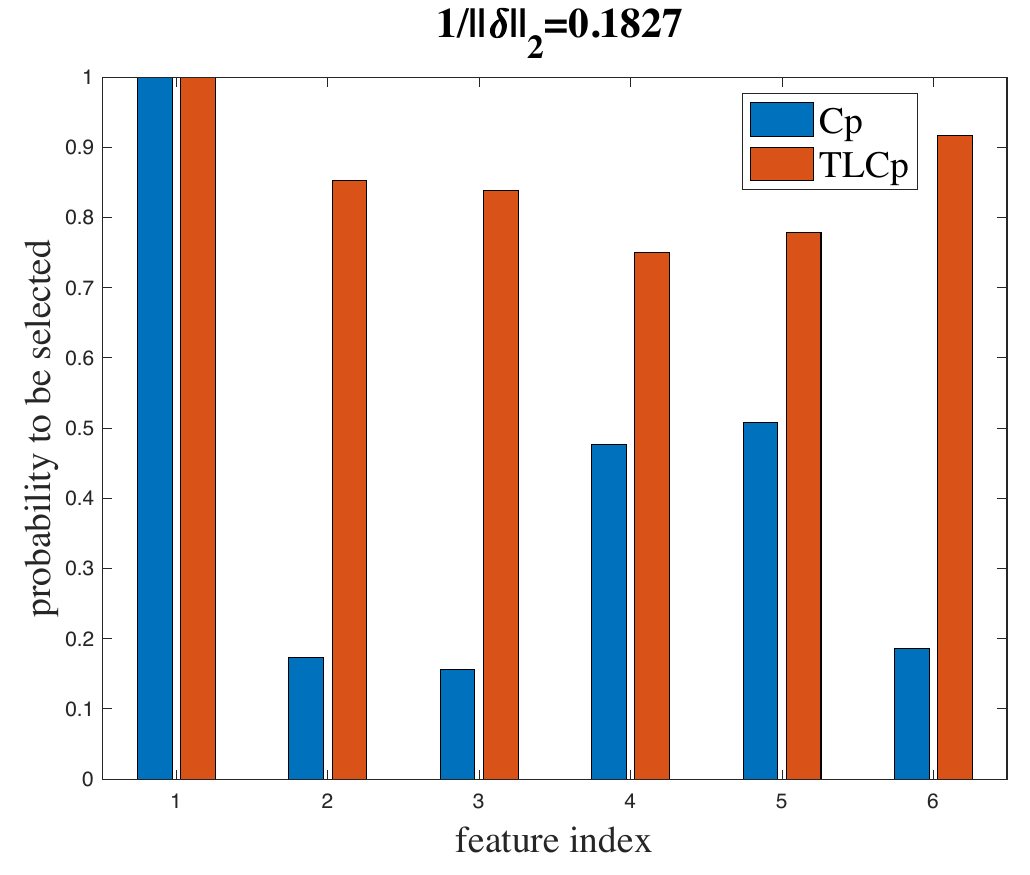}}
\centerline{(d)}
\end{minipage}
\caption{\small The probabilities of the orthogonal TLCp and Cp estimators to identify a feature under a specific similarity value $1/\|\boldsymbol{\delta}\|_2$, when model parameters are well-tuned. The red bar represents the probability of the orthogonal TLCp to select a feature. The blue bar corresponds to the orthogonal Cp case. In (a), (b) and (c), the larger the similarity measure between tasks from the target and source domains, the larger the probabilities of the orthogonal TLCp estimator to identify features with index $4$ and $5$. However, when tasks from the target and source domains differ greatly (see (d)), the orthogonal TLCp estimator may result in undesirable feature selection results.}
\label{bar}
\end{figure}

\begin{figure}[htbp]
	\begin{minipage}[t] {0.5\linewidth}
		\centerline{\includegraphics[height=6.5cm,width=7.5cm]{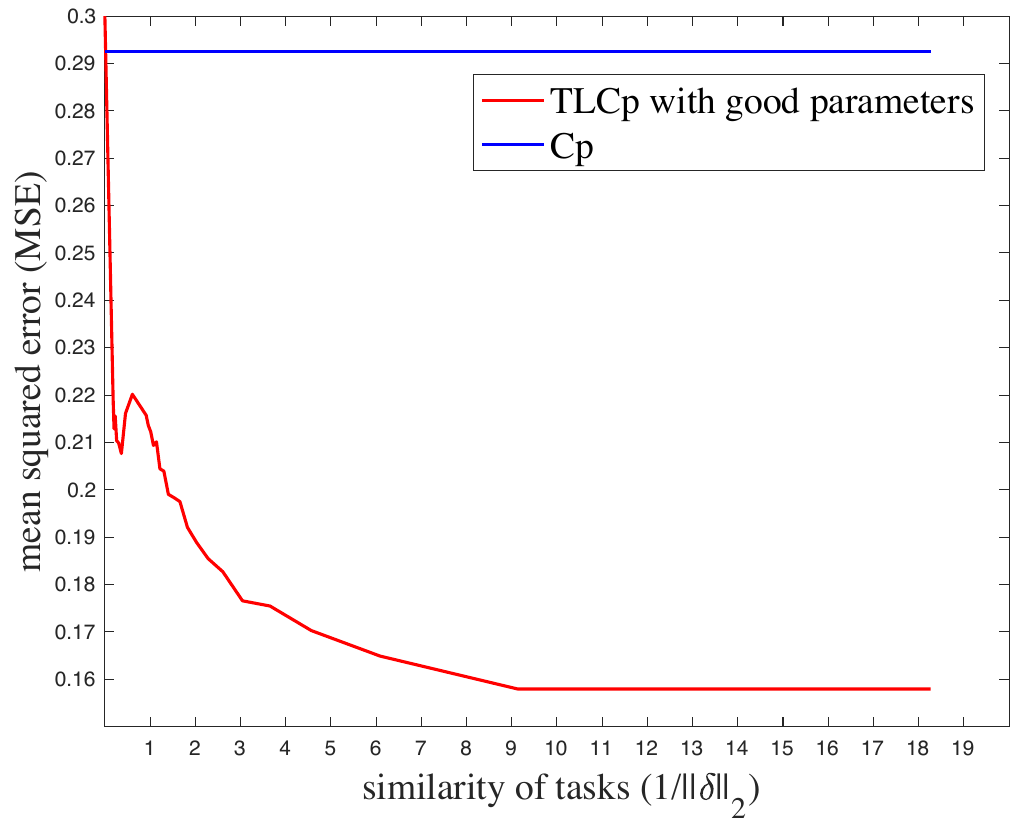}}
		\centerline{(a)}
	\end{minipage}
	\hfill
	\begin{minipage}[t] {0.5\linewidth}
		\centerline{\includegraphics[height=6.5cm,width=7.5cm]{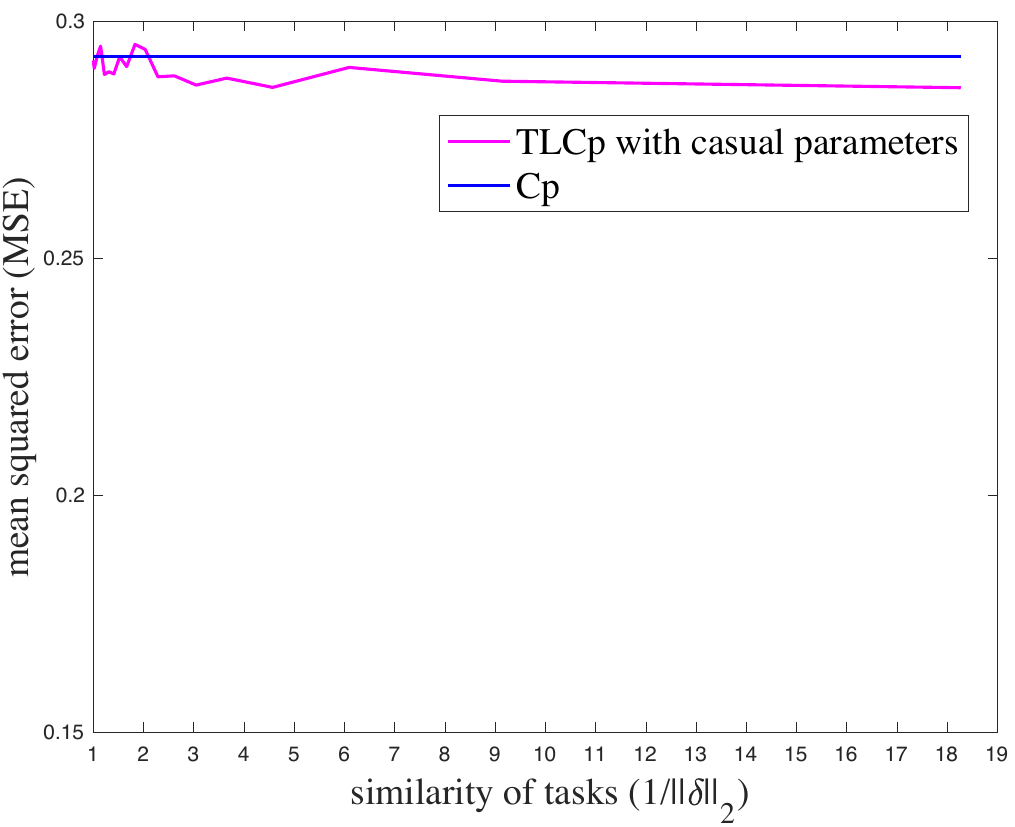}}
		\centerline{(b)}
	\end{minipage}
\caption{\small MSE performances of the orthogonal TLCp {(the non-horizontal line)} and orthogonal Cp {(the horizontal line)} estimators. The picture on the left depicts the MSE performances of these two estimators when the tuning parameters of TLCp are selected according to Theorem \ref{theorem18}: {$\lambda_1=1, \lambda_2=1, \lambda_3^{i}=4/\delta_i^2(i=1,\cdots,k), \lambda_4=2$.} {The picture on the right shows the MSE performance of TLCp with its hyperparameters arbitrarily set to: $\lambda_1=2, \lambda_2=1, \boldsymbol{\lambda}_3=[1,2,3,4,1,1]^\top, \lambda_4=2$. }}
\label{figure3}
\end{figure}

\begin{table}[htbp]
	\centering
	\begin{tabular}{|c|cccccc|}
		\hline
  	    &  $1$ & $2$  & $3$ & $4$ & $5$  & $6$ \\ \hline
	$\boldsymbol{\beta}$ (true model) & $1.0000$  & $0.0100$  & $0.0050$  & $0.3000$ & $0.3200$  & $0.0800$ \\ \hline
	$\text{Cp}$ & $1.0013$  & $0.0090$  & $0.0074$  & $0.2333$ & $0.2492$  & $0.0448$ \\ \hline
	$\text{TLCp}(1/\| \boldsymbol{\delta}\|_2=\infty)$	& $0.9982$  & $0.0095$  & $0.0048$  & $0.2682$ & $0.2909$  & $0.0522$ \\ \hline
	$\text{TLCp}(1/\| \boldsymbol{\delta}\|_2=2.6096)$    & $1.0761$ & $0.0055$  & $0.0048$ & $0.2075$ & $0.2235$ & $0.0552$ \\ \hline
	$\text{TLCp}(1/\| \boldsymbol{\delta}\|_2=1.3046)$	& $1.0595$ & $0.0046$ & $0.0027$ & $0.1731$ & $0.2024$ & $0.0690$  \\ \hline
	$\text{TLCp}(1/\| \boldsymbol{\delta}\|_2=0.1827)$	& $1.0126$ & $-0.0099$ & $0.0097$ & $0.2672$ & $0.2897$ & $0.1484$ \\ \hline
	\end{tabular}
    \caption{Estimated regression coefficients for the orthogonal Cp and TLCp methods.}
    \label{Table 1}
\end{table}

{\subsection{Extension of Toy Example}}

{Following the same simulation design as in the toy example, we assume the true regression coefficients for the target task are $\boldsymbol { \beta }_1=[0.24,0.01,0.005,0.3,0.32,0.08,0,0.26,0.25,0]^{\top}$. The fourth and fifth elements of $\boldsymbol { \beta }_1$ are the critical points when $n=20$. Here, we fix the number of source samples as $m=20$, and $\|\boldsymbol{\delta}\|_2/\|\boldsymbol{\beta}_1\|_2$ is defined as the relative dissimilarity between the target and source tasks. In order to illustrate how the combinations of the relative task dissimilarity and the target sample size affect the performance of the TLCp method, we consider the MSE performance by varying $n$ in $[20,180]$ and uniformly selecting $11$ values from $ [0,4]$ as the relative task dissimilarities. The relative dissimilarity of two tasks equals zero, indicating that the training datasets for these two tasks are sampled from the same distribution. For each target sample size $n$ and the relative task dissimilarity $\|\boldsymbol{\delta}\|_2/\|\boldsymbol{\beta}_1\|_2$, we randomly simulated $20000$ \textcolor{black}{datasets} and applied the TLCp approach. We tune the hyperparameters of the TLCp model using the rule stated in Theorem $20$.}

\begin{figure}[htbp]
	\begin{minipage}{\textwidth}
	\centering{\includegraphics[width=1\textwidth]{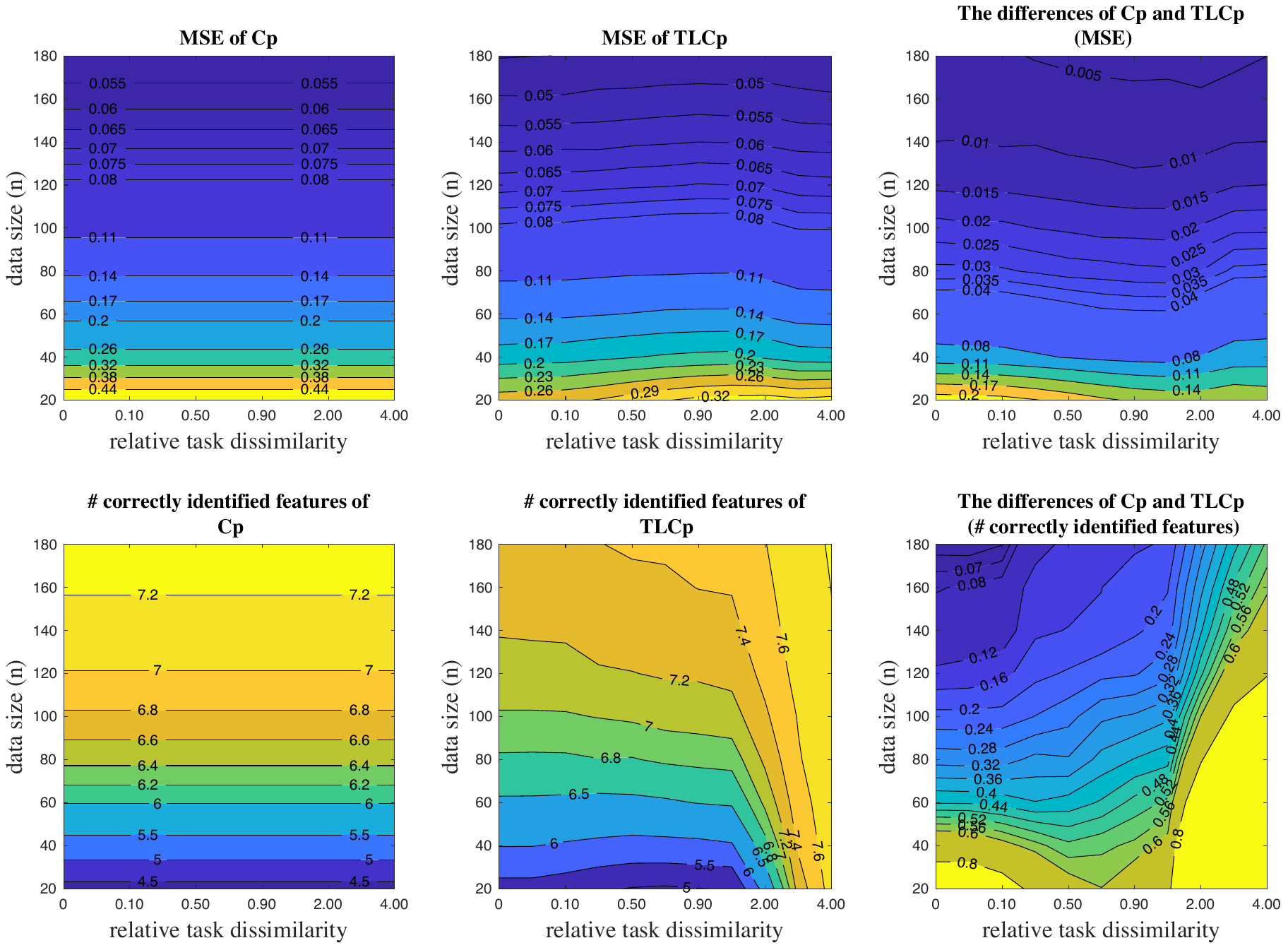}} \\
		\caption{\small With critical features in the true model, the performance (in terms of MSE and number of correctly identified features) of the orthogonal Cp and TLCp methods as the number of target data $n$ and relative task dissimilarity $\|\boldsymbol{\delta}\|_2/\|\boldsymbol{\beta}_1\|_2$ vary.} \label{simulation_expansion1}
\end{minipage}	
\end{figure}

The contour plots in Figure \ref{simulation_expansion1} show the performance (in terms of MSE and the number of correctly identified features) of Cp and TLCp methods when there are critical features in the true regression model. The first row of plots shows the MSE performance of the considered models. As expected, the proposed TLCp method outperforms the Cp criterion at every sample size, and their performance gap shrinks as the sample size grows. In particular, the MSE of the Cp estimator decreases as the number of samples grows, and it remains unchangeable as the relative dissimilarity of tasks varies. As the relative task dissimilarity increases, the MSE of the TLCp estimator increases initially and decreases when the relative dissimilarity of tasks is large enough. This occurs because the TLCp extracts less useful information from the source task as the growth of the relative task dissimilarity. The TLCp stops transferring knowledge from the source task if the dissimilarity grows significantly. 

{To further display the benefit of applying the TLCp method, we plot the MSE differences of Cp and TLCp estimators (see the subfigure in the top right). We see that the TLCp significantly outperforms the Cp when both the sample size and the relative dissimilarity of tasks are small. Specifically, TLCp works better than Cp $32\% \sim 45\%$ in terms of MSE when the sample size is $20$, and the relative task dissimilarity is less than $4.00$. We denote the ``effective sample size'' as the number of samples required for Cp and TLCp to perform the same (in the sense of MSE). As the relative dissimilarity of tasks grows, the ``effective sample size'' 
shows a trend from decline to rise (e.g., see the contour line at the level $0.005$ in the top right panel of Figure \ref{simulation_expansion1}). The ``effective sample size'' in this example is approximately $180$ when the relative task dissimilarity is small (i.e., $0.10$) and $165$ when the relative task dissimilarity is relatively large (i.e., larger than $3.00$).}

{The second row of plots in Figure \ref{simulation_expansion1} displays the number of correctly identified features (counted by both the correctly selected relevant features and correctly ignored superfluous features) of the Cp and TLCp methods. The number of correctly identified features in this figure is shown as a function of the target sample size and relative task dissimilarity. We see that the number of correctly identified features of Cp increases as the sample size grows, and it is invariant to the relative dissimilarities. However, as the relative dissimilarity of tasks increases, the number of correctly identified features of the TLCp is ``down and up'' when sample size is relatively small. To further illustrate the advantages of using the TLCp method, the subfigure in the bottom right depicts the differences between Cp and TLCp based on the number of correctly identified features. We see the distinct benefits of TLCp over Cp when sample size and relative dissimilarity of tasks are comparatively small, or when the relative dissimilarity of tasks is relatively large (all the critical features are successfully identified in this case). We can similarly estimate the ``effective sample size'' in terms of the number of correctly selected features (e.g., based on the contour line at the level $0.40$ in the bottom right panel) as $60$ when the relative dissimilarity of tasks is $0.10$ and $180$ when the relative task dissimilarity grows to $3.00$. }

{We also demonstrate the efficacy of the orthogonal TLCp method (with its parameters well-tuned) when the true model is generated randomly. More details can be found in Appendix D.}

\subsection{Efficiency of the Feature Selection Strategies based on the Approximate Cp and TLCp Methods}

{This subsection contains three simulation studies to demonstrate the efficiency of applying the approximate Cp and TLCp cutoff methods (Algorithms 1 and 2) to select features.  The simulation results support our theoretical results in Corollary \ref{corollary22}, Theorem \ref{theorem24} and Theorem \ref{theorem 28}.  Furthermore,  we show that our methods can accurately identify all relevant features in the presence of feature correlations. Finally, we evaluate the performance of the TLCp method against two baseline models.}

{First, we present the two simulations (one with and one without \textcolor{black}{superfluous} features) without feature correlations to verify the effectiveness of Algorithm 1.  In the first study, the training data are i.i.d sampled from $\boldsymbol { y } = \boldsymbol { X } \boldsymbol { \beta } + \boldsymbol { \varepsilon }$, where $\boldsymbol { \beta }=[1 ,0.01,0.005,0.3,0.32,0.08]^{\top}$ and $\boldsymbol{\varepsilon} : = \left( \varepsilon _ { 1 }, \varepsilon _ { 2 }, \cdots, \varepsilon _ { n } \right) ^ { \top }$ are the standard Gaussian noises. We generate the design matrix $\boldsymbol{X}$ with its elements following the standard normal distribution. }

{We simulate $2000$ samples of sizes $n=(20, 60, 100, \cdots, 400)$ from the above model. We first apply the approximate Cp method stated in Subsection $5.1$ to each sample to produce the approximate Cp estimator $\boldsymbol{\hat{\alpha}}_2$. 
	Then, we produce the approximate Cp cutoff estimator $\boldsymbol{\tilde{\alpha}}_2$ by using Algorithm 1 on each sample. We use complete enumeration to obtain \textcolor{black}{the} solution of the original Cp problem $\hat{\boldsymbol{\alpha}}$. That is, for each feature, we obtain $2000$ regression coefficients estimated by each method. Figure \ref{convergence_rate_plot1} (left) depicts the MSE comparison among these estimators when there exist no \textcolor{black}{superfluous} features in the model.}
{ As the data size grows, the logarithm of MSE of these estimators decays at a rate approximately $\mathcal{O}\left({\frac{1}{{n}}}\right)$. We also see that the MSE performance of these Cp-based estimators almost overlap. These observations support the results of Theorem \ref{theorem21}, Corollary \ref{corollary22} and Theorem \ref{theorem24}. Figure \ref{convergence_rate_plot1} (right) shows similar simulation results when there are \textcolor{black}{superfluous} features in the model with $\boldsymbol { \beta }=[1 ,0.01,0.005,0.3,0.32,0,0,0]^{\top}$.  What is slightly different here is that the approximate Cp cutoff estimator (with its MSE value $0.52$) performs better than the other two methods (with MSE value $0.57$ for the approximate Cp estimator and $0.55$ for the original Cp estimator) when the data size is small ($n=20$). \textcolor{black}{Table \ref{Table add} summarizes the relative frequency of each feature by each method when there exist superfluous features. We see that our method is better than the original Cp in discarding superfluous features. Here, the computed average $(1-\tau/2)$-percentile $(u_{\tau/2})$ used to determine the cutoff for each feature in Algorithm 1 is approximately $1.3706$ (that is, $\tau\approx 0.17$).} These observations illustrate the \textcolor{black}{superiority} of the cutoff strategy for the approximate Cp method in Algorithm 1.} 

\begin{figure}[htbp]
	\begin{minipage}[t] {0.5\linewidth}
		\centerline{\includegraphics[height=6.5cm,width=7.5cm]{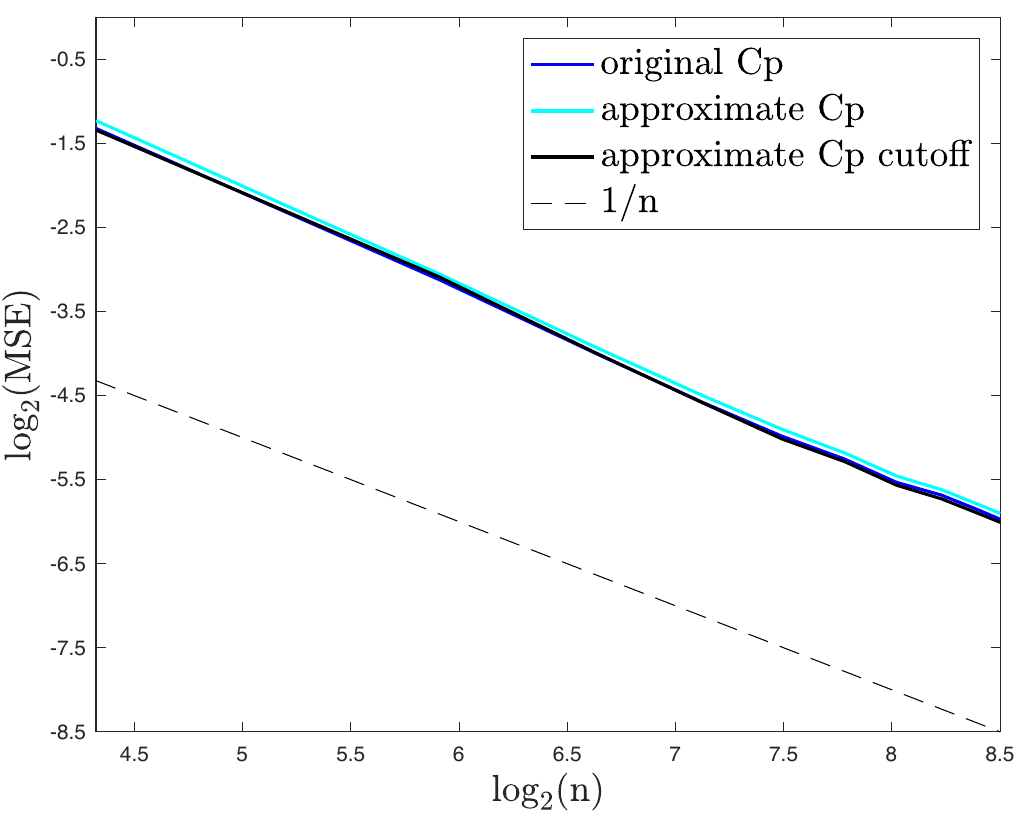}}
	\end{minipage}
	\hfill
	\begin{minipage}[t] {0.5\linewidth}
		\centerline{\includegraphics[height=6.5cm,width=7.5cm]{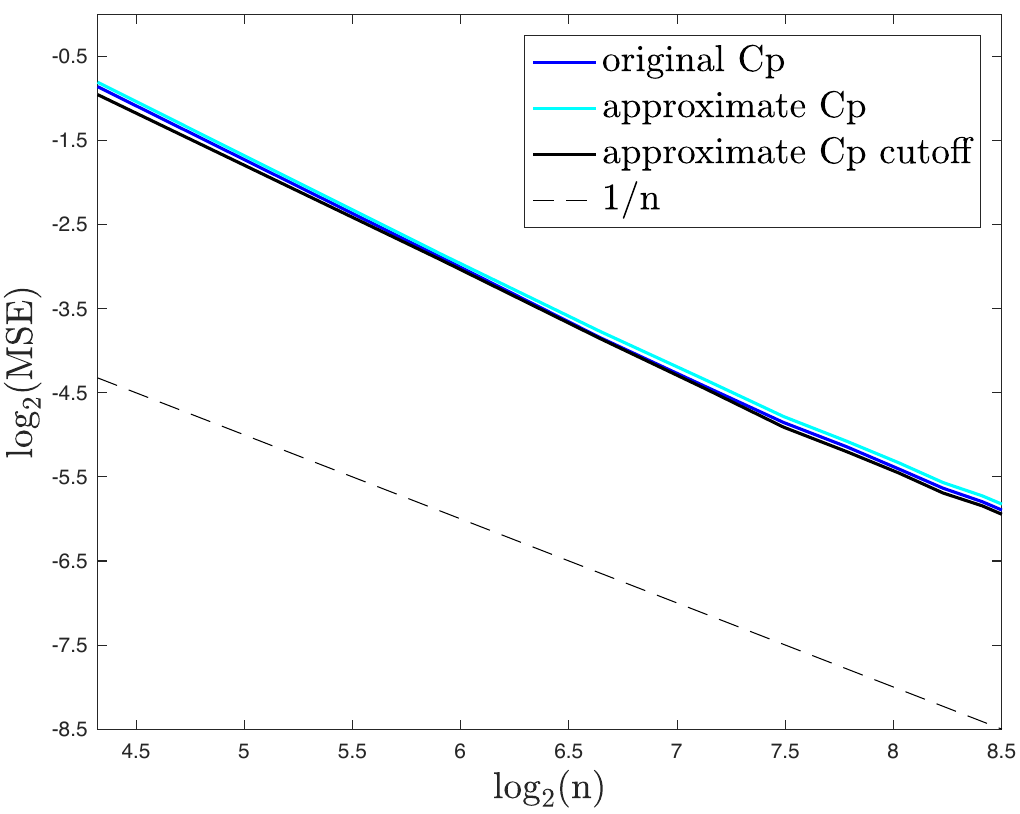}}
	\end{minipage}
	\caption{\small MSE performance comparison of Cp-based methods in the absence of feature correlations. The figure shows how the logarithm of MSE changes with an increasing of $\log_2(n)$ for different methods, without (left) or with (right) {superfluous} features in the true model. The dashed line which decays at $1/n$ is the baseline to compare the convergence rate of these methods. }
	\label{convergence_rate_plot1}
\end{figure}

\begin{table}[]
	\centering
	\begin{tabular}{|c|c|c|c|c|c|c|c|c|}
		\hline
		& 1 & 2 & 3 & 4 & 5 & 6 & 7 & 8 \\ \hline
		\text{original Cp}	&  $0.98$  & $0.19$  & $0.18$   & $0.42$ & $0.46$ & $0.20$ & $0.20$ & $0.20$\\ \hline
		\text{approximate Cp cutoff} & $0.98$	&  $0.20$  & $0.20$   & $0.40$  &$0.43$ & $0.17$ & $0.17$ & $0.15$  \\ \hline
	\end{tabular}
	\caption{The relative frequencies at which Cp-based methods select features in the absence of feature correlations with $n=20$. The last three features are superfluous.}
	\label{Table add}
\end{table}

{Next, we present results from two experiments to compare the MSE performance of Cp-based and TLCp-based methods in the presence of feature correlations (\textcolor{black}{i.e., when there exist redundant features in the model}). In the first experiment, we assume the true regression coefficient vector is $\boldsymbol { \beta }=[0.15, 0.15,0.15,0.3,0.5,0]^{\top}$. Note that the first three correspond to relevant features, the fourth corresponds to critical feature, the fifth corresponds to significantly relevant feature, and the last corresponds to a {superfluous} feature when $n=20$. To generate the feature correlations, we replicate the first column of $\boldsymbol{X}$ three times as the first three columns of the newly created design matrix $\tilde{\boldsymbol{X}}$, and the remaining columns of $\tilde{\boldsymbol{X}}$ are independently generated from the standard normal distribution. To avoid singular data, we add a very small Gaussian noise (with standard Gaussian noise divided by $1000$) to the first three columns of $\tilde{\boldsymbol{X}}$. 
To apply the TLCp-based methods (which includes the approximate TLCp method of Subsection $5.2$, the approximate TLCp cutoff method in Algorithm 2 and the original TLCp method (\ref{TLCp})), we additionally generate source data as $\boldsymbol { \tilde{y} } =\tilde{\boldsymbol { X }} (\boldsymbol { \beta }+\boldsymbol{\delta}) + \boldsymbol { \varepsilon }$. Here, we set the task dissimilarity between the target and source tasks as $\boldsymbol{\delta}=\boldsymbol{0}$, and we set the number of source data $m$ equal to the number of target data. 
We plot the MSE performance of each method under different data sizes in Figure \ref{convergence_rate_plot2}, and we record the relative frequency of each feature by each method in the case of $n=m=20$ in Table \ref{Table1}. From these experimental results, we have the following observations. 1) In Figure \ref{convergence_rate_plot2} (left), our approximate Cp cutoff method performs nearly as well as the original Cp criterion in terms of MSE. In the presence of feature correlations, the logarithm of our Cp-based methods decays approximately at the rate $\mathcal{O}\left({\frac{1}{{n}}}\right)$, which supports the results of Theorem \ref{theorem21}, Corollary \ref{corollary22} and Theorem \ref{theorem24}. 2) In Figure \ref{convergence_rate_plot2} (right), the proposed TLCp-based methods show clear improvement on the Cp-based methods in the sense of MSE, i.e., the TLCp-based methods all have much smaller intercepts than the Cp-based methods. 3) From Table \ref{Table1}, the approximate Cp cutoff method performs slightly better than the original Cp criterion both in identifying relevant features and deleting superfluous ones.  4) From Table \ref{Table1}, our TLCp-based methods identify all relevant features significantly frequently. However, the approximate TLCp cutoff method selects the superfluous feature very frequently. \textcolor{black}{Note that the proposed approximate Cp and TLCp cutoff methods only select one out of the three correlated features due to the modified Gram-Schmidt process.} In the second experiment, we suppose that the true regression coefficients are uniformly drawn from (-1,1) and then held fixed ($\boldsymbol{\beta}=[-0.26, -0.26,  -0.26, -0.91, 0.73, 0.05]^{\top}$).
We follow the same experimental setting as in the first experiment. We also make the first three features identical to each other. The corresponding MSE performance of different methods and the relative frequency of selecting each feature when $n=m=20$ are presented in Figure \ref{convergence_rate_plot3} and Table \ref{Table2}, respectively. }
{Collectively, these experimental results demonstrate the efficiency of the proposed approximate Cp and TLCp methods, and  the resulting feature selection strategies in Algorithms 1 and 2.}

\begin{figure}[htbp]
	\begin{minipage}[t] {0.5\linewidth}
		\centerline{\includegraphics[height=6.5cm,width=7.5cm]{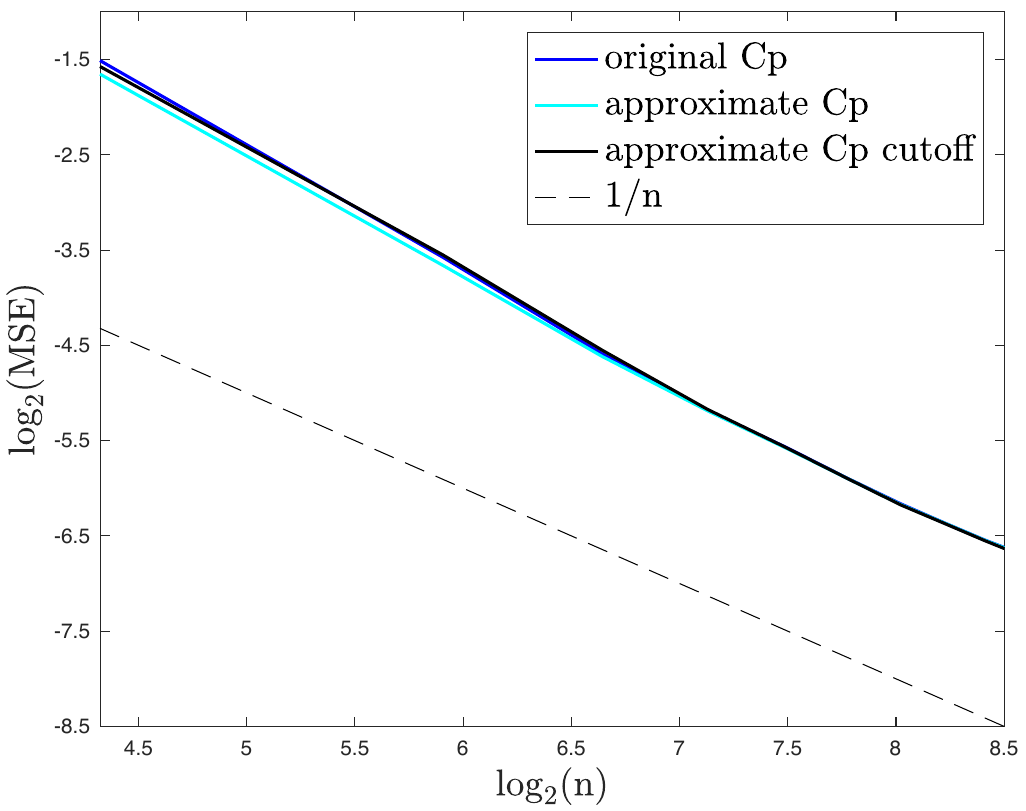}}
		\centerline{(1)}
	\end{minipage}
	\hfill
	\begin{minipage}[t] {0.5\linewidth}
		\centerline{\includegraphics[height=6.5cm,width=7.5cm]{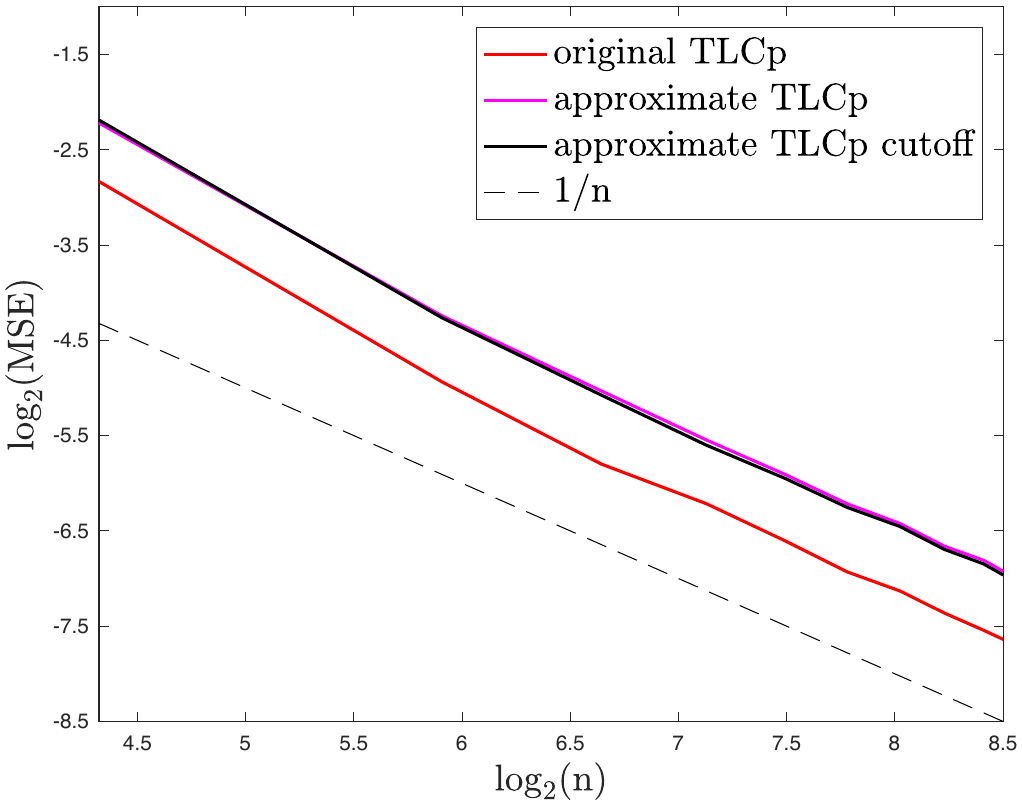}}
		\centerline{(2)}
	\end{minipage}
	\caption{\small MSE performance comparison of Cp-based and TLCp-based methods in the presence of feature correlations when coefficients are fixed. The figure on the left (right) shows how the logarithm of MSE changes with an increasing of $\log_2(n)$ for Cp-based (TLCp-based) methods. The dashed line which decays at $1/n$ is the baseline to compare the convergence rate of these methods. }
	\label{convergence_rate_plot2}
\end{figure}

\begin{table}[htbp]
	\centering
	\begin{tabular}{|c|c|c|c|c|}
		\hline
		& 1 or 2 or 3 & 4 & 5 & 6 \\ \hline
		\text{original Cp}	&  $0.72$  & $0.44$  & $0.71$   & $0.20$  \\ \hline
		\text{approximate Cp cutoff}	&  $0.69$  & $0.47$   & $0.77$  &$0.17$   \\ \hline
		\text{original TLCp}&  $0.89$ & $0.59$ & $0.88$  & $0.17$   \\ \hline
		\text{approximate TLCp cutoff}&   $0.91$  & $0.73$  &  $0.93$ & $0.39$   \\ \hline
	\end{tabular}
	\caption{\textcolor{black}{The relative frequencies} at which different methods select different features in the presence of feature correlations for $n=m=20$ with fixed coefficients. By construction of the data, the first three features are almost identical. Thus, at least one of them should be selected by a successful approach.}
	\label{Table1}
\end{table}

\begin{figure}[htbp]
	\begin{minipage}[t] {0.5\linewidth}
		\centerline{\includegraphics[height=6.5cm,width=7.5cm]{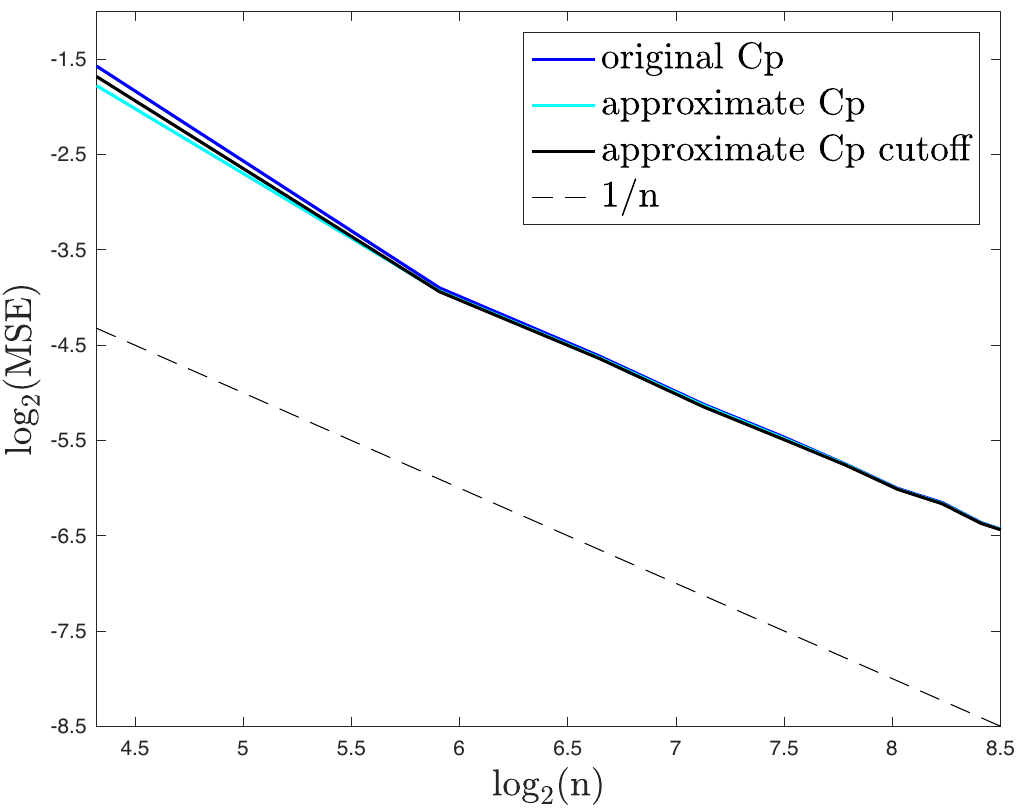}}
		\centerline{(1)}
	\end{minipage}
	\hfill
	\begin{minipage}[t] {0.5\linewidth}
		\centerline{\includegraphics[height=6.5cm,width=7.5cm]{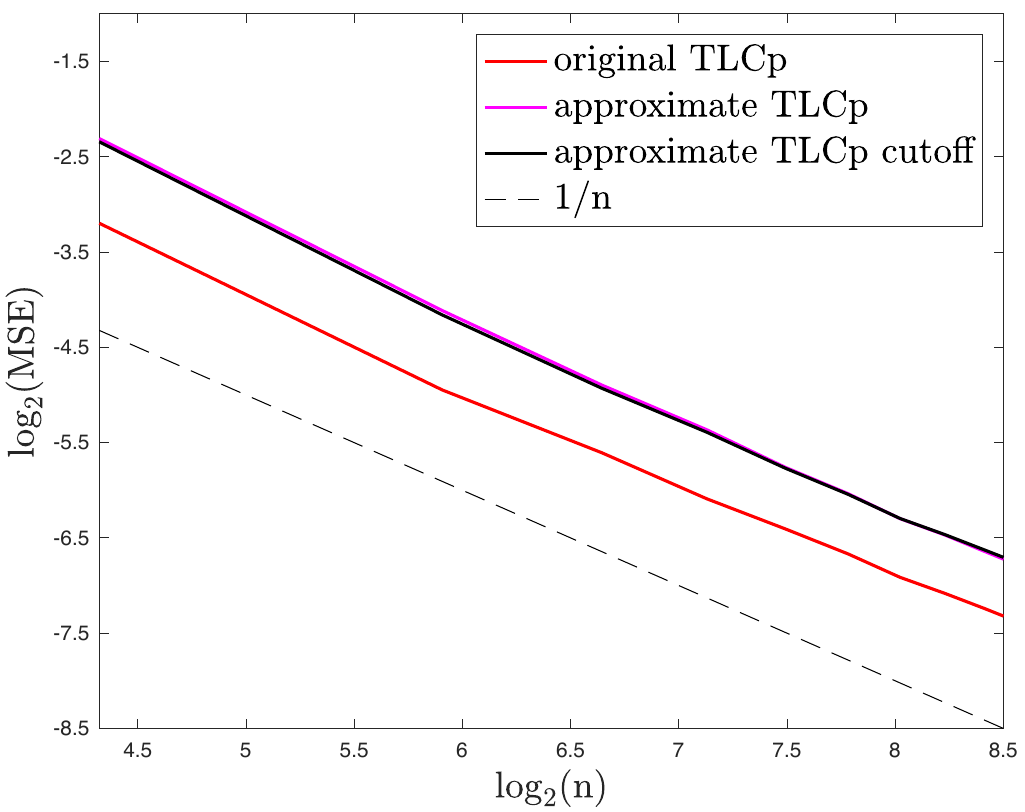}}
		\centerline{(2)}
	\end{minipage}
	\caption{\small MSE performance comparison of Cp-based and TLCp-based methods in the presence of feature correlations when coefficients are generated randomly. The figure on the left (right) shows how the logarithm of MSE changes with an increasing $\log_2(n)$ for Cp-based (TLCp-based) methods. The dashed line which decays at $1/n$ is the baseline to compare the convergence rate of these methods. }
	\label{convergence_rate_plot3}
\end{figure}

\begin{table}[htbp]
	\centering
	\begin{tabular}{|c|c|c|c|c|}
		\hline
		& 1 or 2 or 3 & 4 & 5 & 6 \\ \hline
		\text{original Cp}	&  $0.94$  & $0.96$  & $0.91$   & $0.20$  \\ \hline
		\text{approximate Cp cutoff}	&  $0.94$  & $0.97$   & $0.92$  &$0.18$   \\ \hline
		\text{original TLCp}&  $1.00$ & $1.00$ & $0.99$  & $0.17$   \\ \hline
		\text{approximate TLCp cutoff}&   $1.00$  & $1.00$  &  $0.99$ & $0.35$   \\ \hline
	\end{tabular}
	\caption{\textcolor{black}{The relative frequencies} at which different methods select different features in the presence of feature correlations for $n=m=20$ with random generated coefficients. By construction of the problems, the first three features are almost identical. Thus, at least one of them should be selected by a successful approach.}
	\label{Table2}
\end{table}

{Finally, following the same experimental setup as in the second simulation study where we assume $\boldsymbol { \beta }=[0.15, 0.15,0.15,0.3,0.5,0]^{\top}$ and $n=m=20$, we compare the proposed TLCp procedures with two baseline methods including the original Cp method and running the original Cp method on the aggregate dataset formed by combining data for both the target and source tasks (referred to as ``aggregate original Cp").}
	

{Table \ref{table4} reports the MSE performance of different methods and the CPU times of each model per run (problem instance) when the task dissimilarity is zero. We observe that the aggregate original Cp method shows a remarkable improvement over original Cp. This occurs because the aggregate methods have twice number of data to begin with when the task dissimilarity is zero. The fact that the proposed original TLCp method performs similarly to the aggregate original Cp method is due to the equivalence between the original TLCp and the aggregate original Cp methods when task dissimilarity is zero. We also find that the approximate TLCp cutoff method significantly outperforms the original Cp method. The original TLCp method performs better than the approximate TLCp cutoff method in this case.  This occurs because the orthogonalization step of the approximate TLCp cutoff method may affect the similarity level of these two tasks. }

{As shown in Table \ref{table4}, the CPU time requirements of the approximate TLCp cutoff method are significantly lower than the other methods since its hyperparameters are specifically determined by Theorem \ref{theorem18} which affords a closed-form solution.}

\begin{table}[htbp]
	\centering
	\begin{tabular}{|c|c|c|}
		\hline
		& \text{MSE}  & \text{CPU s}  \\ \hline
	\text{original Cp}	& $0.35$ & $1.77\times10^{-2}$ \\ \hline
		\text{aggregate original Cp} & $0.15$& $3.07\times10^{-2}$\\ \hline
	\text{original TLCp}	& $0.15$ & $3.92\times10^{-2}$  \\ \hline
	\text{approximate TLCp cutoff}	& $0.25$ & $0.50\times10^{-2}$  \\ \hline
	\end{tabular}
\caption{MSE performance comparison and CPU time per run of Cp-based and TLCp-based methods in the presence of feature correlations with $n=m=20$.}\label{table4}
\end{table}

{\subsection{Using TLCp in the High-dimensional Case}}

{In this subsection, we test the performance of the proposed TLCp methods when the number of features $k$ is much larger than $n$. We use $k=60, 90, 300, 3000, 30000$ and fix $n=30$ ($m=30$) in this simulation. Here, we randomly select $4$ attributes of $\boldsymbol { \beta }_3$ to be i.i.d sampled from the standard normal distribution (then held fixed) and the rest are set to zero. 
In order to investigate how task similarity affects the performance of the proposed TLCp methods, we consider the performance by selecting $6$ different task similarity values. Since $\boldsymbol{X}^{\top}\boldsymbol{X}$ is singular when $n<k$, we will first select a projection operator $\tilde{\boldsymbol{Q}} \in \mathbb{R}^{k\times n}$ by using the eigenvalue decomposition method such that $\tilde{\boldsymbol{Q}}^{\top}\boldsymbol{X}^{\top}\boldsymbol{X}\tilde{\boldsymbol{Q}}=nI$. Then, a closed-form solution $\tilde{\boldsymbol{\alpha}}$ for the orthogonalized Cp problem is obtained. We back-transform this solution and obtain $\tilde{\boldsymbol{\alpha}}_2=\tilde{\boldsymbol{Q}}\tilde{\boldsymbol{\alpha}}$, the approximate Cp estimator. Finally, we execute feature selection by Algorithm 1 to obtain the approximate Cp cutoff estimator. Following a similar procedure and using Algorithm 2, we obtain the corresponding approximate TLCp cutoff estimator in the high-dimensional setting. For each task similarity value $1/\|\boldsymbol{\delta}\|_2$, we randomly simulated $5000$ \textcolor{black}{datasets} and applied the high-dimensional version of least squares \citep{wang2016no}, approximate Cp, approximate TLCp and approximate TLCp cutoff methods. The hyperparameters of the proposed TLCp procedures are tuned based on the rule introduced in Theorem $20$. }

{We report the performance comparison of the methods above in Table \ref{Table high-dimensional-simulation}. We first see that the MSE value of the approximate TLCp estimator decreases with increasing the task similarity. The approximate TLCp method outperforms both the least squares and the approximate Cp methods for each high-dimensional case. Second, we find that the approximate Cp cutoff method clearly improves the non-cutoff counterpart when the number of features is less than $3000$. Similar results are obtained for the TLCp method. These results demonstrate the efficiency of using Algorithms 1 and 2 to select features. Finally, we observe that the performance gap between these models gets smaller with increasing number of features. The models are almost identical when the number of features is larger than $3000$.}



\begin{table}[htbp]
	\centering
	\begin{tabular}{|c|c|cccccc|}
		\hline
		& & \multicolumn{6}{c|}{\textcolor{black}{Task Similarity Values}}  \\    Settings & Methods         & $\infty$ & $17.86$ & $8.93$ & $4.47$ & $1.98$ & $0.60$ \\
		\hline
		\multirow{5}{*}{\begin{tabular}[c]{@{}c@{}}$(n,k)=$\\(30,60)\end{tabular}}   & $\text{LS}$   &  $1.69$     &  $1.69$      &   $1.69$   &  $1.69$    &  $1.69$    &   $1.69$   \\
		\cline{2-8}
		& $\text{Cp}$   &  $1.48$     &   $1.48$    &   $1.48$   &   $1.48$   &   $1.48$   &  $1.48$    \\
			\cline{2-8}
		& $\text{Cp cutoff}$ & $0.87$      &   $0.87$    &  $0.87$    &  $0.87$    & $0.87$     & $0.87$     \\
		\cline{2-8}
		& $\text{TLCp}$   &  $1.08$     &   $1.08$    &   $1.08$   &   $1.09$   &   $1.11$   &  $1.24$    \\
		\cline{2-8}
		& $\text{TLCp cutoff}$ & $0.74$      &   $0.74$    &  $0.73$    &  $0.74$    & $0.75$     & $1.01$     \\
		\hline
		\multirow{5}{*}{\begin{tabular}[c]{@{}c@{}}$(n,k)=$\\(30,90)\end{tabular}}   & $\text{LS}$    &  $1.32$     &   $1.30$    & $1.30$      &  $1.30$    &  $1.30$    & $1.30$     \\
			\cline{2-8}
		& $\text{Cp}$   &   $1.25$    &  $1.25$     & $1.25$     & $1.25$     &  $1.25$    &   $1.25$   \\
			\cline{2-8}
		& $\text{Cp cutoff}$ &   $0.90$    &  $0.90$     &   $0.90$   &  $0.90$    &  $0.90$    &  $0.90$    \\
			\cline{2-8}
		& $\text{TLCp}$   &   $1.06$    &  $1.06$     & $1.06$     & $1.06$     &  $1.06$    &   $1.10$   \\
		\cline{2-8}
		& $\text{TLCp cutoff}$ &   $0.83$    &  $0.83$     & $0.83$     & $0.83$     &  $0.83$    &   $0.94$    \\
		\hline
		\multirow{5}{*}{\begin{tabular}[c]{@{}c@{}}$(n,k)=$\\(30,300)\end{tabular}}  & $\text{LS}$    &   $1.27$    &  $1.27$     &   $1.27$   &   $1.27$   &  $1.27$    & $1.27$     \\
			\cline{2-8}
		& $\text{Cp}$   &   $1.26$    &  $1.26$     &   $1.26$   &   $1.26$   & $1.26$     &  $1.26$    \\
			\cline{2-8}
		& $\text{Cp cutoff}$ &  $1.16$     &    $1.16$   &  $1.16$    &   $1.16$   &  $1.16$    &   $1.16$   \\
			\cline{2-8}
		& $\text{TLCp}$   &   $1.21$    &  $1.21$     & $1.21$     & $1.21$     &  $1.22$    &   $1.23$   \\
		\cline{2-8}
		& $\text{TLCp cutoff}$ &   $1.13$    &  $1.13$     &   $1.12$   &  $1.12$    &  $1.11$    &  $1.14$    \\
		\hline
		\multirow{5}{*}{\begin{tabular}[c]{@{}c@{}}$(n,k)=$\\(30,3000)\end{tabular}} & $\text{LS}$    &   $1.25$    &  $1.25$     &  $1.25$    &   $1.25$   &  $1.25$    &   $1.25$   \\
			\cline{2-8}
		& $\text{Cp}$   &  $1.25$     &   $1.25$    &  $1.25$    &  $1.25$    & $1.25$     &  $1.25$    \\
			\cline{2-8}
		& $\text{Cp cutoff}$ & $1.25$      &   $1.25$    &  $1.25$    &  $1.25$    &  $1.25$    &   $1.25$  \\
			\cline{2-8}
		& $\text{TLCp}$   &   $1.25$    &  $1.25$     & $1.25$     & $1.25$     &  $1.25$    &   $1.25$   \\
		\cline{2-8}
		& $\text{TLCp cutoff}$ &   $1.25$    &  $1.25$     &   $1.25$   &  $1.25$    &  $1.25$    &  $1.24$    \\
		\hline
		\multirow{5}{*}{\begin{tabular}[c]{@{}c@{}}$(n,k)=$\\(30,30000)\end{tabular}} & $\text{LS}$    &   $1.25$    &  $1.25$     &  $1.25$    &   $1.25$   &  $1.25$   &   $1.25$   \\
		\cline{2-8}
		& $\text{Cp}$   &  $1.25$     &   $1.25$    &  $1.25$    &  $1.25$    & $1.25$     &  $1.25$    \\
		\cline{2-8}
		& $\text{Cp cutoff}$ & $1.25$      &   $1.25$    &  $1.25$    &  $1.25$    &  $1.25$    &   $1.25$  \\
			\cline{2-8}
		& $\text{TLCp}$   &   $1.25$    &  $1.25$     & $1.25$     & $1.25$     &  $1.25$    &   $1.25$   \\
		\cline{2-8}
		& $\text{TLCp cutoff}$ &   $1.25$    &  $1.25$     &   $1.25$   &  $1.25$    &  $1.25$    &  $1.25$    \\
		\hline
	\end{tabular}
 \caption{MSE performance of least squares (LS), approximate Cp, approximate Cp cutoff,  approximate TLCp and approximate TLCp cutoff estimators under different task similarity values when the number of features $k$ is (significantly) larger than the sample size $n$. The standard deviations of the (mean) MSE for each model is not shown in this table because they are all very small (i.e., less than $0.01$).}
\label{Table high-dimensional-simulation}
\end{table}

{\section{Real Data Applications}}

{We test the original and approximate TLCp cutoff methods in the non-orthogonal case on three real datasets to demonstrate their potential applications in practice. Our methods will be compared with benchmarks including LASSO \citep{tibshirani1996regression}, stepwise feature selection method (stepwise FS) \citep{draper1998applied}, univariate feature selection method (univariate FS) \citep{guyon2003introduction}, and running the aforementioned methods on the aggregate dataset formed by combining data for both the target and source tasks (referred to as ``aggregate LASSO", ``aggregate stepwise FS" and `` aggregate univariate FS" hereinafter). Additionally, we will compare the proposed TLCp methods with two state-of-art multi-task learning methods (referred to as ``the least $\ell_{2,1}$-norm'' \citep{lounici2009taking}  and ``multi-level LASSO'' \citep{lozano2012multi}). We use the software package from \citet{zhou2012mutal} and \citet{matwork2017statistics} 
	to solve these two multi-task methods.}

{We implement the aforementioned benchmarks based on the statistics and machine learning toolbox \citep{matwork2017statistics}. For univariate FS, we perform a $t$-test to decide whether the linear relationship (i.e., the Pearson correlation coefficient) between a feature and the response is significant or not. $F$-test is used in the stepwise FS (forward-backward selection) to determine whether a model with more parameters gives a significantly better least-square fit to the data. We use a predetermined significance level of $0.05$.}
	
{In all experiments below, we tune the regularization parameter of the least $\ell_{2,1}$-norm method by selecting among the values $\{10^{-6}, 10^{-5}, \cdots, 10^{-1}, 10^{0}, 10^1, 10^2,10^3 \}$ with $5$-fold cross-validation according to \citet{zhou2012mutal} and \citet{argyriou2008convex}. There are two regularization parameters in the multi-level LASSO; we fix the one 
controlling the global sparsity as $1$ and tune the other one by selecting among the values $\{10^{-6}$, $10^{-5}, \cdots$, $10^{-1}$, $10^{0}$, $10^1$, $10^2$, $10^3 \}$ with $5$-fold cross-validation based on \citet{lozano2012multi}.  Following the same hyperparameter tuning protocol as above, we tune the regularization parameters of LASSO and its aggregate method by choosing from the values $\{10^{-6}$, $10^{-5}, \cdots$, $10^{-1}$, $10^{0}$, $10^1$, $10^2$, $10^3 \}$ with $5$-fold cross-validation. We tune the hyperparameters of the proposed TLCp methods with two tasks based on Theorem \ref{theorem18}, as $\lambda_1^*=\hat{\sigma}_2^2, \lambda_2^*=\hat{\sigma}_1^2, {\lambda_3^t}^*=\frac{4\hat{\sigma}_1^2\hat{\sigma}_2^2}{\hat{\delta}_t^2} (t=1,\cdots, k)$ and $\lambda_4^*=\text{min}_{i\in\{1,\cdots,k\}}\left\{\lambda\left(2-\frac{\hat{Q}_i^*}{\sqrt{\hat{M}_i^*\hat{N}_i^*}}\right)/ 4\hat{\sigma}_1^2(\hat{G}_i^*)^2\right\}$, where $\hat{\sigma}_j^2$ is the estimated residual variance ($\hat{\sigma}_j^2=(\boldsymbol{Y}_j-\hat{\boldsymbol{Y}}_j)^{\top}(\boldsymbol{Y}_j-\hat{\boldsymbol{Y}}_j)/(m_j-k)$) and $\hat{\delta}_t (t=1,\cdots,k)$ is the estimated task dissimilarity, both of which are computed based on the training dataset, where $\hat{\boldsymbol{Y}}_j$ is the least squares estimation of $\boldsymbol{Y}_j$  using the training dataset and $m_j-k$ provide the degrees of freedom of the corresponding residuals, for $j=1,2$. Finally, $\hat{M}_t^*=\hat{\sigma}_1^2m\frac{\hat{\delta}_t^2+\frac{\hat{\sigma}_1^2}{n}}{\hat{\delta}_t^2+\frac{\hat{\sigma}_1^2}{n}+\frac{\hat{\sigma}_2^2}{m}},\hat{N}_t^*=\hat{\sigma}_2^2n\frac{\delta_t^2+\frac{\hat{\sigma}_2^2}{m}}{\hat{\delta}_t^2+\frac{\hat{\sigma}_1^2}{n}+\frac{\hat{\sigma}_2^2}{m}}, \hat{Q}_t^*=\frac{-2\hat{\sigma}_1^2\hat{\sigma}_2^2}{\hat{\delta}_t^2+\frac{\hat{\sigma}_1^2}{n}+\frac{\hat{\sigma}_2^2}{m}},\hat{G}_t^*=\sqrt{\frac{mn}{nM_t\hat{\sigma}_2^2+mN_t\hat{\sigma}_1^2}}$, for $t=1,\cdots,k$. For the TLCp with three tasks, we set $\lambda_1=\hat{\sigma}_2^2\hat{\sigma}_3^2$, $\lambda_2=\hat{\sigma}_1^2\hat{\sigma}_3^2$, $\lambda_3=\hat{\sigma}_1^2\hat{\sigma}_2^2$, $\boldsymbol{\gamma}^{i}=12\hat{\sigma}_1^2\hat{\sigma}_2^2\hat{\sigma}_3^2/(\hat{\delta}_1^i+\hat{\delta}_2^i)^2(i=1,\cdots,k)$ and $\lambda_4=\text{min}_{i\in\{1,\cdots,k\}}\left\{\frac{\lambda\left(2-\frac{\tilde{Q}_i}{\sqrt{\tilde{M}_i\tilde{N}_i\tilde{W}_i}}\right)}{4\sigma_1^2(\tilde{G}_i)^2}\right\}$, where $\lambda=2\hat{\sigma}_1^2$, $\tilde{Q}_i=\frac{-2\hat{\sigma}_1^2\hat{\sigma}_2^2\hat{\sigma}_3^2}{(\hat{\delta}_1^i+\hat{\delta}_2^i)^2+\frac{\hat{\sigma}_1^2}{n}+\frac{\hat{\sigma}_2^2}{m_2}+\frac{\hat{\sigma}_3^2}{m_3}}$, $\tilde{M}_i=\frac{\hat{\sigma}_1^2m_2m_3[(\hat{\delta}_1^i+\hat{\delta}_2^i)^2+\frac{\hat{\sigma}_1^2}{n}]}{(\delta_1^i+\delta_2^i)^2+\frac{\hat{\sigma}_1^2}{n}+\frac{\hat{\sigma}_2^2}{m_2}+\frac{\hat{\sigma}_3^2}{m_3}}$, $\tilde{N}_i=\frac{\hat{\sigma}_2^2nm_3[(\hat{\delta}_1^i+\hat{\delta}_2^i)^2+\frac{\hat{\sigma}_2^2}{m_2}]}{(\hat{\delta}_1^i+\hat{\delta}_2^i)^2+\frac{\hat{\sigma}_1^2}{n}+\frac{\hat{\sigma}_2^2}{m_2}+\frac{\hat{\sigma}_3^2}{m_3}}$, $\tilde{W}_i=\frac{\hat{\sigma}_3^2nm_2[(\hat{\delta}_1^i+\hat{\delta}_2^i)^2+\frac{\hat{\sigma}_3^2}{m_3}]}{(\delta_1^i+\delta_2^i)^2+\frac{\hat{\sigma}_1^2}{n}+\frac{\hat{\sigma}_2^2}{m_2}+\frac{\hat{\sigma}_3^2}{m_3}}$, $\tilde{G}_i=\sqrt{\frac{nm_2m_3}{n\tilde{M}_i\hat{\sigma}_2^2\hat{\sigma}_3^2+m_2\tilde{N}_i\hat{\sigma}_1^2\hat{\sigma}_3^2+m_3\tilde{W}_i\hat{\sigma}_1^2\hat{\sigma}_2^2}}$ for $i=1,\cdots,k$, which is a natural extension of the hyperparameter tuning rule for the two-task case. We use enumeration to solve the original TLCp problem with the hyperparameters tuned based on Theorem \ref{theorem18}.}

\subsection{Experiments on Blast Furnace Dataset}

We first verify the effectiveness of the proposed TLCp procedures on a real blast furnace problem. {The experimental datasets are collected from two typical Chinese blast furnaces with an inner volume of about $2500 ~\text{m}^3$ and $750 ~\text{m}^3$, labeled as blast furnaces A and B, respectively \citep{gao2013rule,chen2019linear}. }There are only $395$ valid samples (after omitting some missing values) for furnace A and $800$ valid samples for furnace B. Our target task is to predict the hot metal silicon content for furnace A with the help of one source task from furnace B. Table \ref{Tab1} presents features that are relevant for predicting the hot metal silicon content for these two furnaces. Four lagged terms are also treated as inputs, due to the ($2-8\text{h}$) time delay for furnace outputs to respond to inputs. 

\begin{table}[htbp]
		\caption{ Input variables for blast furnaces} \label{Tab1}\centering
	\begin{tabular}{@{}p{56mm}@{\hspace{2mm}}p{20mm}@{\hspace{2mm}}p{30mm}}
		\toprule
		Variable name [Unit] & Symbol & Input variable \\
		\midrule[0.06em]
		Blast temperature  [$^\circ \mathrm{C}$] & $x^{(1)}$ & ${q^{-1}}^\S$,$q^{-2}$,$q^{-3}$,$q^{-4}$\\
		Blast volume       [m$^3$/min]   & $x^{(2)}$ & $q^{-1}$,$q^{-2}$,$q^{-3}$,$q^{-4}$\\
		Feed speed         [mm/h]   & $x^{(3)}$ & {$q^{-1}$},$q^{-2}$,$q^{-3}$,$q^{-4}$\\
		Gas permeability   [m$^3$/min$\cdot$kPa]  & $x^{(4)}$ & $q^{-1}$,{$q^{-2}$},$q^{-3}$,$q^{-4}$\\
		Pulverized coal injection  [ton]  & $x^{(5)}$ & {$q^{-1}$},{$q^{-2}$,$q^{-3}$},{$q^{-4}$}\\
		Silicon content    [wt\%]  & $z$ & {$q^{-1}$}\\
		\bottomrule\multicolumn{3}{l}{\hspace{-2mm}\scriptsize{$^\S$
				$q^{-1},\cdots,q^{-4}$ represent delay operators, such as
				$q^{-1}x(t)=x(t-1)$. }}
	\end{tabular}
\end{table}


For each target data size ($n=210,250,290$), we randomly split the target dataset (furnace A) $300$ times with $n$ samples as the training set and the remaining $100$ samples as the test set. For each partition, we normalize the dimensions of training samples to have zero mean and unit variance, while the test samples are normalized accordingly. For the proposed TLCp approach, all ($800$) samples of furnace B are treated as the source training \textcolor{black}{dataset}. We use the percentage unexplained variance \citep{bakker2003task} to measure model performance, {denoted as 
the mean squared prediction error on the test set as a percentage of the total data variance for a specific task}. Thus, the percentage unexplained variance can be viewed as a scaled version of the mean squared prediction error. The lower the value of the percentage unexplained variance the better the model performance. One of the advantages of using the measure of percentage unexplained variance is that it is independent of the output scale.
{We calculate the relative dissimilarity value between these two blast furnace tasks as $0.89$ based on the full datasets. This value is totally independent of the training of our TLCp models. All the hyperparameters of the proposed TLCp models are estimated on the training dataset only for each partition of the data. }

\begin{figure}[htbp]
	\begin{minipage}[t] {\textwidth}
		\centerline{\includegraphics[width=1\textwidth]{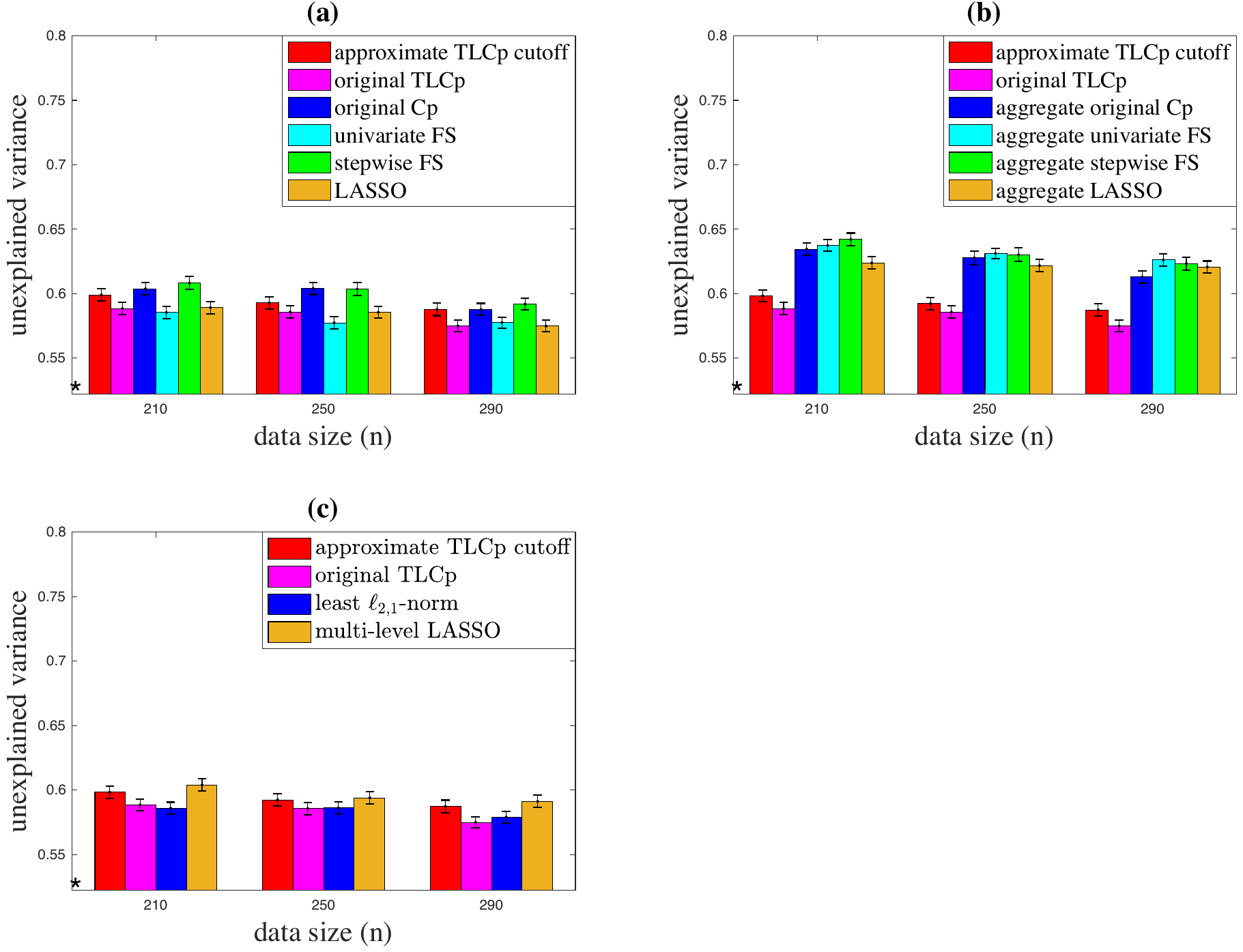}}
	\end{minipage}
	\caption{ \small The unexplained variance performance comparison of the proposed TLCp methods and other benchmarks when the relative task dissimilarity value is $0.89$ for blast furnace data. An aggregate method means running the corresponding non-aggregate method on the aggregate dataset formed by combining data for the target and source tasks. For each model, we plot the error bar to describe the standard deviation of the (mean) unexplained variance. $*$ indicates the ideal unexplained variance for the target task, which is computed by substituting the mean squared prediction error as an unbiased estimation of the residual variance $\sigma_1^2$ using the entire target data set.} \label{fig4}
\end{figure}

\begin{table}[htbp]
		\centering
	\begin{tabular}{llll}
		\hline
		\multicolumn{1}{|l|}{} & \multicolumn{1}{l|}{original TLCp} & \multicolumn{1}{c|}{\begin{tabular}[c]{@{}c@{}}approximate TLCp\\ cutoff\end{tabular}} & \multicolumn{1}{l|}{\begin{tabular}[c]{@{}c@{}}CPU time\\ (s)\end{tabular}} \\ \hline
		\multicolumn{1}{|c|}{original Cp} & \multicolumn{1}{c|}{$\textbf{0.02}$} & \multicolumn{1}{c|}{$0.50$}                                                                  & \multicolumn{1}{c|}{1077.73} \\ \hline
		\multicolumn{1}{|c|}{stepwise FS} & \multicolumn{1}{c|}{$\textbf{0.00}$ } & \multicolumn{1}{c|}{$0.26$}                                                                  & \multicolumn{1}{c|}{$0.11$} \\ \hline
		\multicolumn{1}{|c|}{univariate FS} & \multicolumn{1}{c|}{$0.33$} & \multicolumn{1}{c|}{$0.94$}                                                                  & \multicolumn{1}{c|}{$0.11$} \\ \hline
		\multicolumn{1}{|c|}{LASSO} & \multicolumn{1}{c|}{$0.50$} & \multicolumn{1}{c|}{$0.97$}                                                                  & \multicolumn{1}{c|}{$0.36$} \\ \hline
		\multicolumn{1}{|c|}{aggregate original Cp} & \multicolumn{1}{c|}{$\textbf{0.00}$} & \multicolumn{1}{c|}{$\textbf{0.00}$}                                                                  & \multicolumn{1}{c|}{1245.38} \\ \hline
		\multicolumn{1}{|c|}{aggregate stepwise FS} & \multicolumn{1}{c|}{$\textbf{0.00}$} & \multicolumn{1}{c|}{$\textbf{0.00}$}                                                                  & \multicolumn{1}{c|}{$0.22$} \\ \hline
		\multicolumn{1}{|c|}{aggregate univariate FS} & \multicolumn{1}{c|}{$\textbf{0.00}$} & \multicolumn{1}{c|}{$\textbf{0.00}$}                                                                  & \multicolumn{1}{c|}{$0.21$} \\ \hline
		\multicolumn{1}{|c|}{aggregate LASSO} & \multicolumn{1}{c|}{$\textbf{0.00}$} & \multicolumn{1}{c|}{$\textbf{0.00}$}                                                                  & \multicolumn{1}{c|}{$0.32$} \\ \hline
		\multicolumn{1}{|c|}{least $\ell_{2,1}$-norm} & \multicolumn{1}{c|}{$0.25$} & \multicolumn{1}{c|}{$0.90$}                                                                  & \multicolumn{1}{c|}{$0.35$} \\ \hline
	\multicolumn{1}{|c|}{multi-level LASSO} & \multicolumn{1}{c|}{$\textbf{0.00}$} & \multicolumn{1}{c|}{$\text{0.31}$}                                                                  & \multicolumn{1}{c|}{$11.00$} \\ \hline
	\multicolumn{1}{|c|}{original TLCp} & \multicolumn{1}{c|}{$--$} & \multicolumn{1}{c|}{$0.98$}                                                                  & \multicolumn{1}{c|}{$2550.83$} \\ \hline
	\multicolumn{1}{|c|}{approximate TLCp cutoff} & \multicolumn{1}{c|}{$\textbf{0.02}$} & \multicolumn{1}{c|}{$--$}                                                                  & \multicolumn{1}{c|}{$0.24$} \\ \hline                 
	\end{tabular}
 \caption{The table shows the $p$-value of the pairwise $t$-test (in the first two columns) and the CPU time requirements of different methods per run (in the last column) on blast furnace data when $n=290$. Boldface means the proposed TLCp methods statistically outperform the compared methods ($p$-value $<0.05$). }\label{table7}
\end{table}

Figure \ref{fig4} presents the performance comparisons of the two proposed TLCp procedures and other baseline methods. As Figure \ref{fig4}(a) shows, the proposed TLCp schemes outperform the original Cp method for each sample size. In terms of the average excess unexplained variance across three sample sizes, the original TLCp outperforms the original Cp by $20.27\%$ and the approximate TLCp cutoff method by $6.51\%$. {The excess unexplained variance is defined as the unexplained variance difference between the TLCp and the original Cp methods as a percentage of the unexplained variance difference between the original Cp method and the ideal unexplained variance.} We also find that the proposed original TLCp method is competitive with the stepwise FS, LASSO and univariate FS. 

{To show the capacity of the proposed TLCp schemes to leverage related tasks, in Figure \ref{fig4}(b), we compare the performance of the TLCp procedures and the aggregate benchmarks. We see that our TLCp methods consistently produce significant improvements over the aggregate benchmarks. This performance demonstrates the ability of the proposed TLCp mechanisms to efficiently capture useful information shared among the target and source tasks. Furthermore,  our TLCp procedures outperform the multi-level LASSO, and perform similarly to the least $\ell_{2,1}$-norm method in the blast furnace problem (see Figure \ref{fig4}(c)). }

{To conduct a rigorous comparison, we perform the paired Student's $t$-test to test the null hypothesis that the population mean of the proposed TLCp method's unexplained variance is strictly greater than that of the compared method. A $p$-value of $0.05$ is considered statistically significant. The results are listed in Table \ref{table7}, where the proposed TLCp methods that are significantly better than the compared methods are shown in bold for each comparison. The corresponding CPU time requirements per run are also listed in Table \ref{table7}. We see that the computational requirements of the approximate TLCp cutoff method are remarkably lower than the original Cp, the aggregate original Cp, the original TLCp methods, the multi-level LASSO, and comparable to other methods. This occurs because the approximate TLCp cutoff method does not need cross-validation to tune the hyperparameters. Instead, its hyperparameters are predetermined by Theorem \ref{theorem18}. Also, in comparison to the original TLCp method, the approximate TLCp cutoff method comes with a closed-form solution and avoids the numerical solution of the problem. These two factors make the approximate TLCp cutoff method more computationally efficient than most of the compared methods.}


\subsection{Experiments on School Dataset}

{In this subsection, we evaluate the performance of the proposed TLCp method on school data used by \citet{bakker2003task}, \citet{argyriou2008convex} and \citet{zhou2012mutal}. The data consists of examination scores of $15362$ students from $139$ secondary schools in London during $1985$, $1986$, and $1987$. {Following the data pre-processing method used in \citet{bakker2003task} and \citet{argyriou2008convex}, we transformed the categorical attributes of this school data to binary variables, with a total of $27$ features}. 
Without loss of generality, our target task is to predict students' exam scores from the first school (which contains $200$ samples). The relative dissimilarities between the selected target task and all the candidate $138$ source tasks are within the interval $[0.41,2.85]$.  
The design matrices for these tasks are all singular (implying redundancy of the given features). Thus, we will delete some redundant features for each task beforehand to make the corresponding design matrix full-rank. }

{In this context, experiments will be conducted on three different sample sizes. For each target sample size ($n= 130, 150, 170$), we divide the target data set into $10000$ random splits with $n$ samples as the training data and the remaining $30$ samples as the test data. For each partition, we standardize the continuous variables to have zero mean and unit variance, and not standardize the binary variables but code them as $0/1$ to retain the interpretation of the variables. {We choose three source tasks ($18$-th, $37$-th and $20$-th) with their relative task dissimilarities $0.58$, $1.51$, and $2.85$ as the representative cases to demonstrate the effectiveness of our methods when compared with other methods (see Figures \ref{school1}, \ref{school2} and \ref{school3}). Furthermore, Figure \ref{school4} presents the performance of our TLCp model for the target task with respect to increasing relative task dissimilarities.}}

{Figure \ref{school1} presents performance comparison of different methods when the relative task dissimilarity is $0.58$.  Panel (a) shows that, when the relative task dissimilarity value is small, both the proposed original TLCp and approximate TLCp cutoff methods achieve significant improvement over the benchmarks for all three sample sizes. {In terms of the average excess unexplained variance across three data sizes}, the original TLCp model improves the original Cp by $55.81\%$ and the approximate TLCp cutoff method by $59.72\%$. Panel (b) compares the performance of TLCp methods with the aggregate results obtained by training benchmarks on the aggregate dataset formed by combining data for both the target and source tasks. We see that, when the relative task dissimilarity is small, the aggregate benchmarks remarkably outperform the individual counterparts. This observation illustrates the appropriateness of the given task dissimilarity measure. We also observe that the aggregate original Cp method outperforms the original TLCp method in this case. This occurs because the  applied parameter tuning rules for the original TLCp method are ``sub-optimal" in the non-orthogonal case. Panel (c) indicates that our TLCp methods are significantly better than the least $\ell_{2,1}$-norm method and the multi-level LASSO method when the relative task dissimilarity is small. Panel (d) shows that our TLCp methods with three tasks (where the $27$-th task with the relative task dissimilarity $0.71$ is treated as the second source task) perform similarly to the case of two tasks. This occurs because TLCp may not extract further information from the third task.}

{The experimental results of Figure \ref{school1} are verified by the $p$-value of the pairwise Student's $t$-test shown in Table \ref{table8}. The CPU time requirements per run of each method are also listed in Table \ref{table8}. We find that the proposed approximate TLCp cutoff method achieves the least computational requirement among all the compared algorithms.}

{We see the similar performance trends of these models when the relative task dissimilarity grows to $1.51$ (see Figure \ref{school2}) and $2.85$ (see Figure \ref{school3}). In terms of the average excess unexplained variance across the three data sizes, the original TLCp model improves the original Cp by $20.45\%$ (when the relative task dissimilarity is $1.51$) and $11.34\%$ (when the dissimilarity is $2.85$); the approximate TLCp cutoff model improves the original Cp by $40.83\%$ (when the dissimilarity is $1.51$) and $15.45\%$ (when the dissimilarity is $2.85$). The improvement of our TLCp methods over the original Cp method reduces as the relative task dissimilarity increases, because TLCp tends to extract less information from the source task if the relative task dissimilarity is large. We also find that the performance gap between the aggregate methods and the individual counterparts shrinks as the the relative task dissimilarity increases. This fact again demonstrates the rationality of the proposed relative task dissimilarity measure. When the relative task dissimilarity grows to $1.51$, our TLCp schemes perform similarly to the least $\ell_{2,1}$-norm method and the multi-level LASSO (see panel (c) in Figure \ref{school2}). Our method shows clear improvement over the least $\ell_{2,1}$-norm method and the multi-level LASSO when the relative task dissimilarity approaches $2.85$ (see panel (c) in Figure \ref{school3}). Finally, our experimental results for the TLCp methods with three tasks illustrate the remarkable advantages of integrating increasingly related tasks into the proposed TLCp schemes.}

{We also performed the paired Student's $t$-test to verify the experimental results of Figures \ref{school2} and \ref{school3} in Tables \ref{table9} and \ref{table10}. See Appendix E for details.}

{Finally, Figure \ref{school4} depicts the performance comparison of the approximate TLCp cutoff method and the original Cp method across a range of the relative task dissimilarities on the school dataset. For each relative task dissimilarity, we take $10000$ partitions of the target dataset with size $n=170$ to compute the average unexplained variance of the model. Here, we choose $47$ source tasks whose data size is no less than $130$ with the relative task dissimilarities varying from $0.41$ to $2.85$ to show the effectiveness of our TLCp cutoff method. We first observe a tendency for the proposed approximate TLCp cutoff method to perform better when the relative task dissimilarity is smaller. Note that the relative task dissimilarities on this dataset are relatively small (i.e., less than $3.00$). Thus, as the relative task dissimilarity increases, the performance curve of the TLCp increases slowly. Second, the approximate TLCp cutoff method generally outperforms the original Cp method except for very few cases where the proposed method performs slightly worse than the original Cp criterion. This occurs because there is high variance across the tasks \citep{argyriou2008convex} even though the relative task dissimilarity is small. }



\begin{figure}[htbp]
	\begin{minipage}[t] {\textwidth}
		\centerline{\includegraphics[width=1\textwidth]{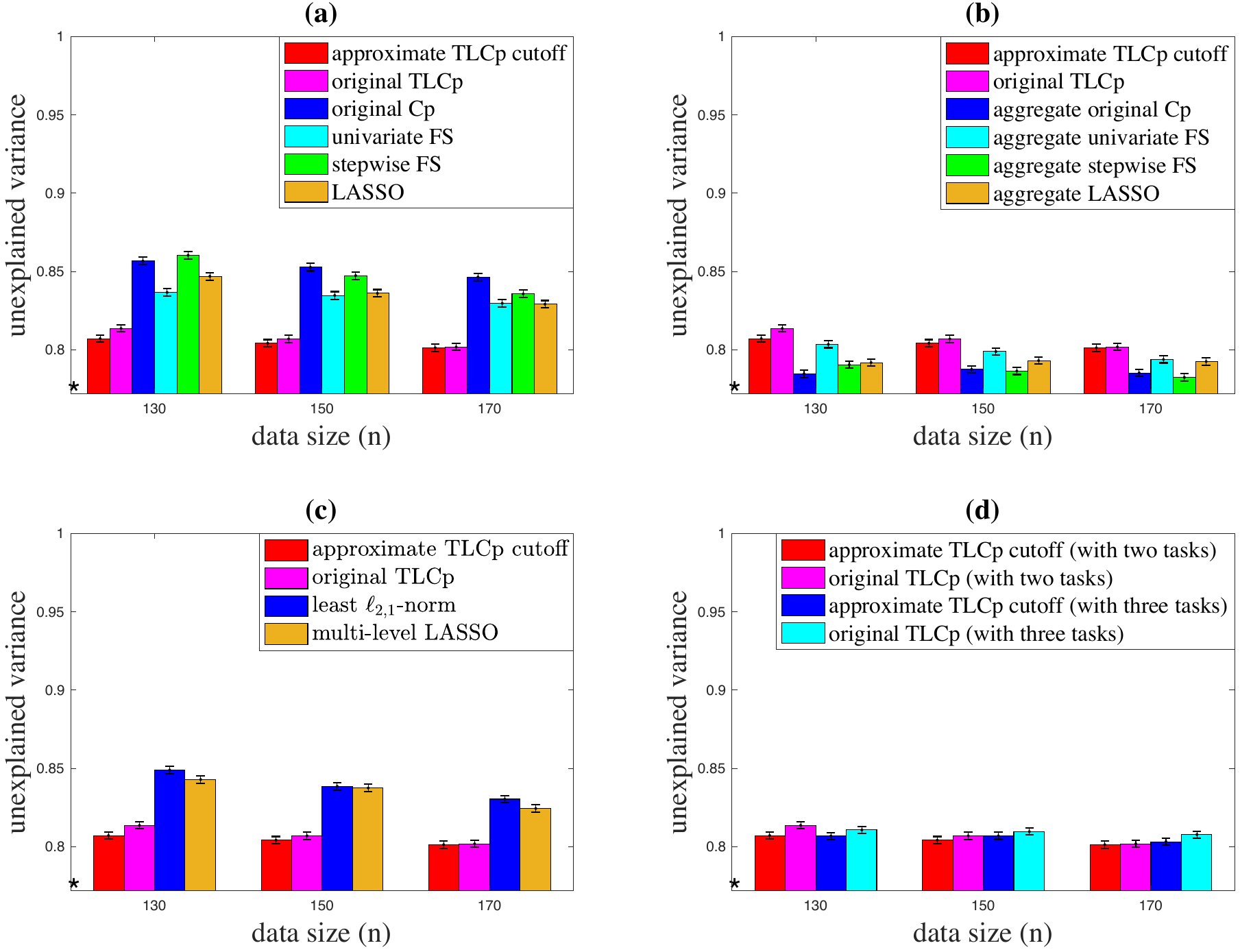}}
	\end{minipage}
	\caption{ \small Unexplained variance performance comparison of the proposed TLCp methods and other benchmarks when the relative task dissimilarity value is $0.58$ for school data. Smaller values indicate higher predictive accuracy. An aggregate method means running the corresponding non-aggregate method on the aggregate dataset formed by combining data for the target and source tasks.} \label{school1}
		
\end{figure}

\begin{table}[htbp]
	\centering
	\begin{tabular}{llll}
		\hline
		\multicolumn{1}{|l|}{} & \multicolumn{1}{l|}{original TLCp} & \multicolumn{1}{c|}{\begin{tabular}[c]{@{}c@{}}approximate TLCp\\ cutoff\end{tabular}} & \multicolumn{1}{l|}{\begin{tabular}[c]{@{}c@{}}CPU time\\ (s)\end{tabular}} \\ \hline
		\multicolumn{1}{|c|}{original Cp} & \multicolumn{1}{c|}{$\textbf{0.00}$} & \multicolumn{1}{c|}{$\textbf{0.00}$}                                                                  & \multicolumn{1}{c|}{$0.97$} \\ \hline
		\multicolumn{1}{|c|}{stepwise FS} & \multicolumn{1}{c|}{$\textbf{0.00}$ } & \multicolumn{1}{c|}{$\textbf{0.00}$}                                                                  & \multicolumn{1}{c|}{$0.77$} \\ \hline
		\multicolumn{1}{|c|}{univariate FS} & \multicolumn{1}{c|}{$\textbf{0.00}$} & \multicolumn{1}{c|}{$\textbf{0.00}$}                                                                  & \multicolumn{1}{c|}{$0.61$} \\ \hline
		\multicolumn{1}{|c|}{LASSO} & \multicolumn{1}{c|}{$\textbf{0.00}$} & \multicolumn{1}{c|}{$\textbf{0.00}$}                                                                  & \multicolumn{1}{c|}{$1.12$} \\ \hline
		\multicolumn{1}{|c|}{aggregate original Cp} & \multicolumn{1}{c|}{${1.00}$} & \multicolumn{1}{c|}{${1.00}$}                                                                  & \multicolumn{1}{c|}{$0.15$} \\ \hline
		\multicolumn{1}{|c|}{aggregate stepwise FS} & \multicolumn{1}{c|}{$\text{1.00}$} & \multicolumn{1}{c|}{$\text{1.00}$}                                                                  & \multicolumn{1}{c|}{$0.33$} \\ \hline
		\multicolumn{1}{|c|}{aggregate univariate FS} & \multicolumn{1}{c|}{$\text{0.99}$} & \multicolumn{1}{c|}{$\text{0.99}$}                                                                  & \multicolumn{1}{c|}{$0.02$} \\ \hline
		\multicolumn{1}{|c|}{aggregate LASSO} & \multicolumn{1}{c|}{$\text{1.00}$} & \multicolumn{1}{c|}{$\text{1.00}$}                                                                  & \multicolumn{1}{c|}{$0.11$} \\ \hline
		\multicolumn{1}{|c|}{least $\ell_{2,1}$-norm} & \multicolumn{1}{c|}{$\textbf{0.00}$} & \multicolumn{1}{c|}{$\textbf{0.00}$}                                                                  & \multicolumn{1}{c|}{$0.68$} \\ \hline
		\multicolumn{1}{|c|}{multi-level LASSO} & \multicolumn{1}{c|}{$\textbf{0.00}$} & \multicolumn{1}{c|}{$\textbf{0.00}$}                                                                  & \multicolumn{1}{c|}{$1.32$} \\ \hline
		\multicolumn{1}{|c|}{original TLCp} & \multicolumn{1}{c|}{$--$} & \multicolumn{1}{c|}{$0.43$}                                                                  & \multicolumn{1}{c|}{$0.58$} \\ \hline
		\multicolumn{1}{|c|}{approximate TLCp cutoff} & \multicolumn{1}{c|}{$\text{0.57}$} & \multicolumn{1}{c|}{$--$}                                                                  & \multicolumn{1}{c|}{$0.01$} \\ \hline       
		\multicolumn{1}{|c|}{\begin{tabular}[c]{@{}c@{}}original TLCp\\ with three tasks\end{tabular}} & \multicolumn{1}{c|}{$\textbf{0.03}$} & \multicolumn{1}{c|}{$\textbf{0.02}$}                                                                  & \multicolumn{1}{c|}{$0.10$} \\ \hline
		\multicolumn{1}{|c|}{\begin{tabular}[c]{@{}c@{}}approximate TLCp cutoff\\ with three tasks\end{tabular}} & \multicolumn{1}{c|}{$\text{0.39}$} & \multicolumn{1}{c|}{$0.33$}                                                                  & \multicolumn{1}{c|}{$0.04$} \\ \hline                 
	\end{tabular}
	\caption{The table shows the $p$-value of the pairwise $t$-test (in the first two columns) and the CPU time requirements per run of different methods (in the last column) on school data with the relative task dissimilarity $0.58$ when $n=170$. Boldface means the proposed TLCp methods statistically outperform the compared methods ($p$-value $<0.05$). }\label{table8}
\end{table}

\begin{figure}[htbp]
	\begin{minipage}[t] {\textwidth}
		\centerline{\includegraphics[width=1\textwidth]{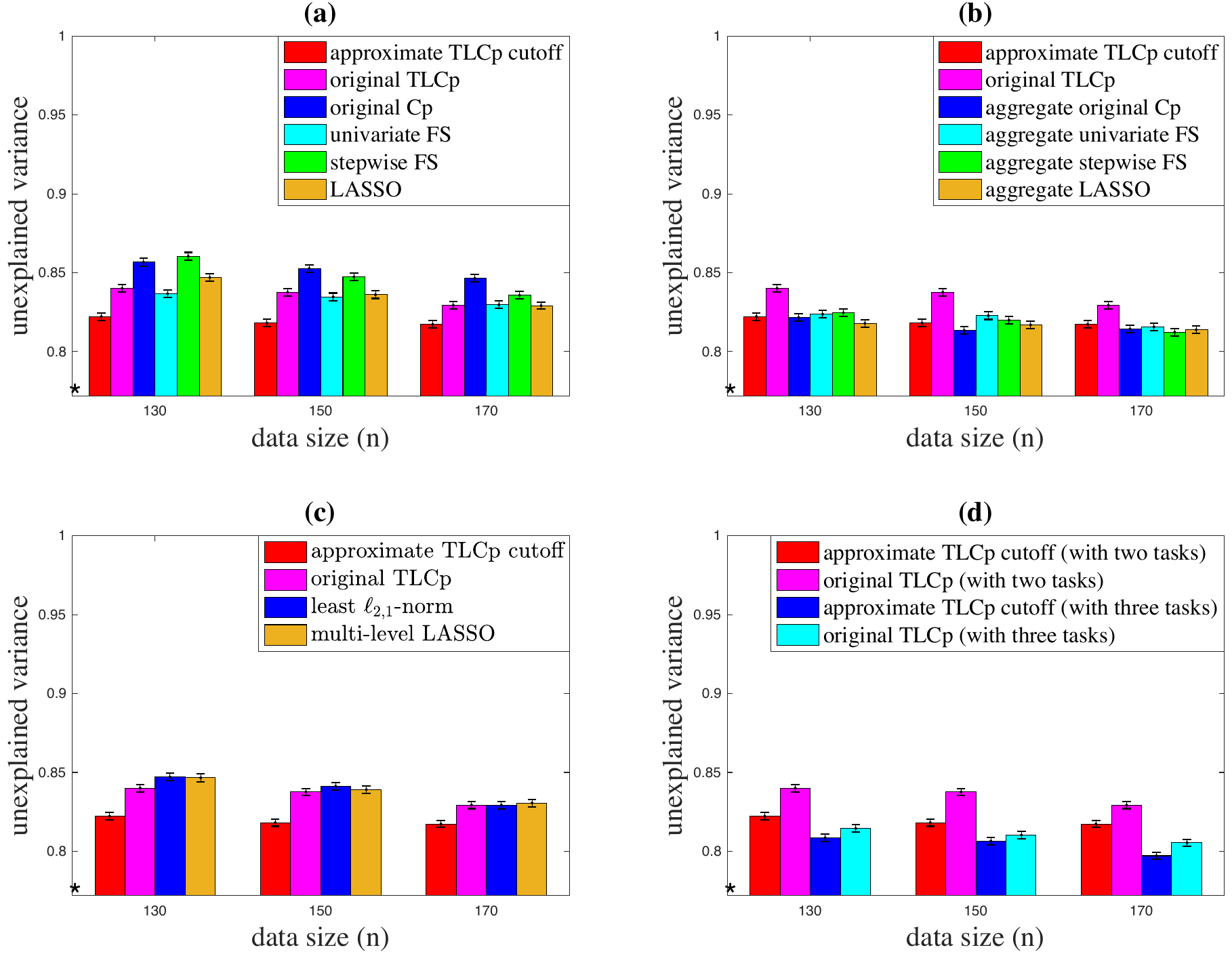}}
	\end{minipage}
	\caption{ \small The unexplained variance performance comparison of the proposed TLCp methods and other benchmarks when the relative task dissimilarity value is $1.51$ for school data. Smaller values indicate higher predictive accuracy. An aggregate method means running the corresponding non-aggregate method on the aggregate dataset formed by combining data for the target and source tasks.} \label{school2}
\end{figure}

\begin{figure}[htbp]
	\begin{minipage}[t] {\textwidth}
		\centerline{\includegraphics[width=1\textwidth]{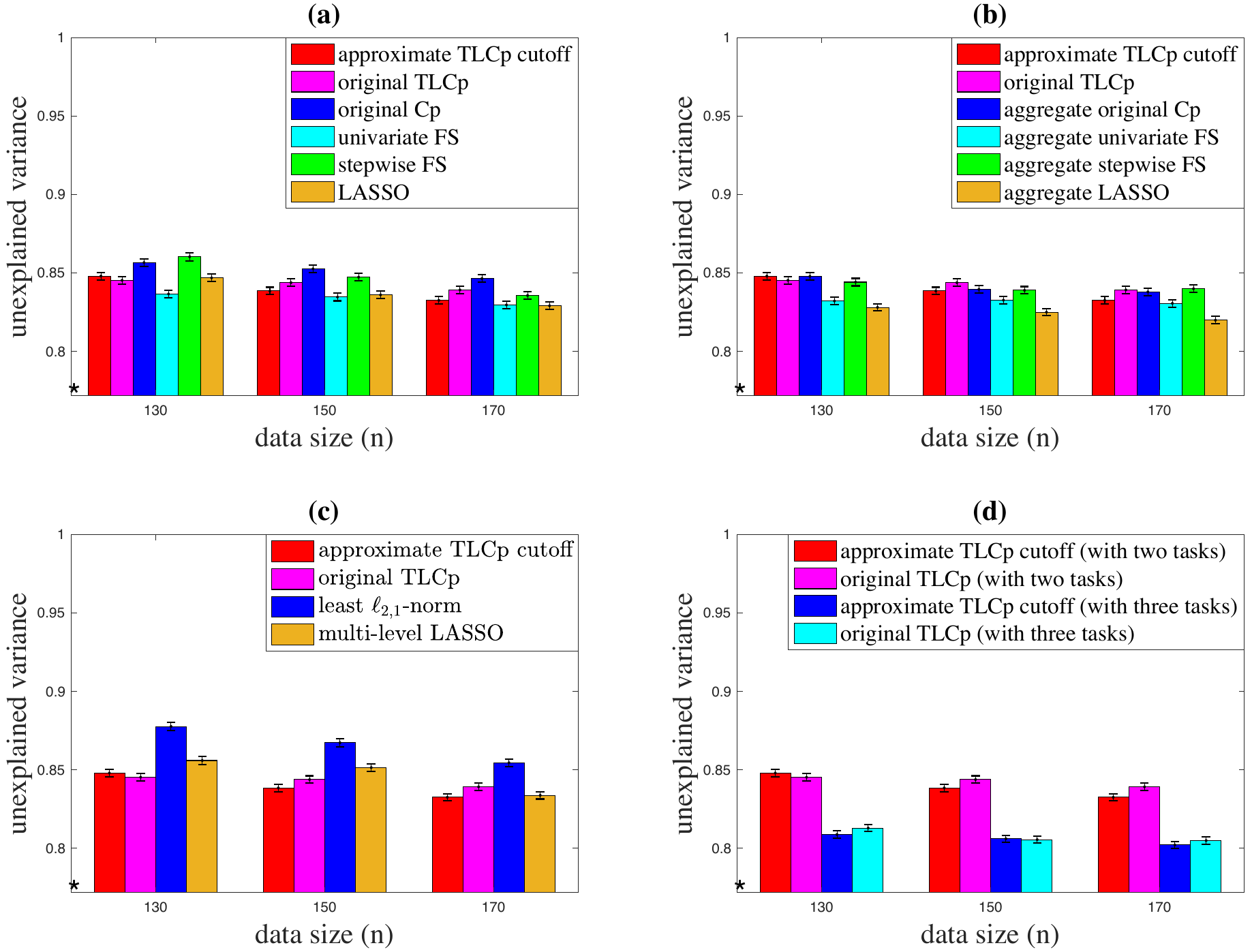}}
	\end{minipage}
	\caption{ \small Unexplained variance performance comparison of the proposed TLCp methods and other benchmarks when the relative task dissimilarity value is $2.85$ for the school dataset. Smaller values indicate higher predictive accuracy. An aggregate method means running the corresponding non-aggregate method on the aggregate dataset formed by combining data for the target and source tasks.} \label{school3}
\end{figure}

\begin{figure}[htbp]
	\begin{minipage}[t] {\textwidth}
		\centerline{\includegraphics[width=0.7\textwidth]{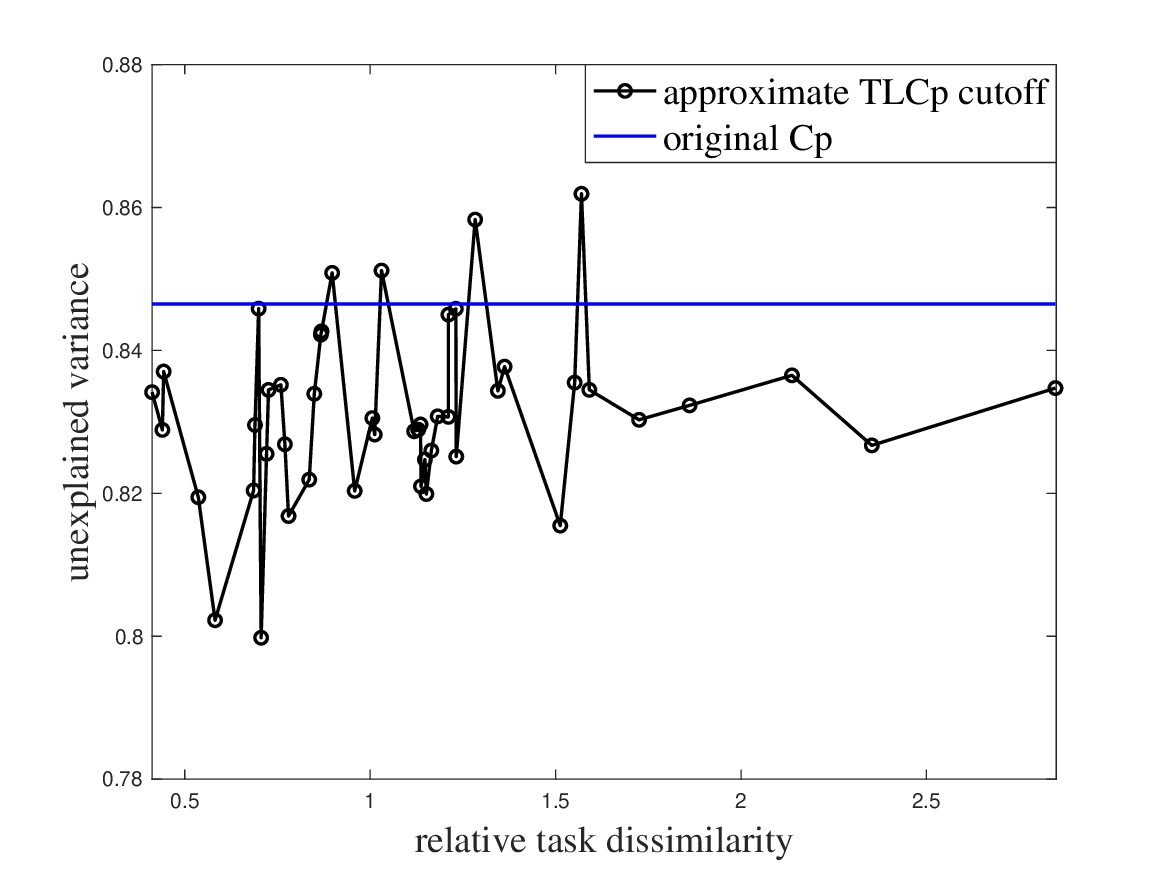}}
	\end{minipage}
	\caption{ \small The figure shows how the performance of the approximate TLCp cutoff method changes with increasing relative task dissimilarities on school data. The horizontal line indicates the performance of the original Cp criterion.} \label{school4}
\end{figure}

\subsection{Experiments on Parkinson's Dataset}

{ We finally test the proposed TLCp methods using the ``Parkinson's telemonitoring dataset'' from the UCI Machine Learning Repository \citep{tsanas2009accurate}. This dataset consists of a range of biomedical voice measurements from $42$ people with early-stage Parkinson's disease recruited for a six-month trial of a telemonitoring device for remote symptom progression monitoring. Our main goal is to predict the monitor UPDRS score for each person from the given features including the time interval from baseline recruitment date and $16$ biomedical voice measures. Thus, there are $42$ tasks. Without loss of generality, we choose data from the first person as the target dataset. Then, we have $41$ candidate source datasets. The relative dissimilarities between the selected target task and the remaining $41$ candidate source tasks are within the interval of $[0.13,21.22]$, indicating the divergent levels of similarity between tasks. To show the performance of the TLCp procedures when the relative task dissimilarity varies greatly, we choose the records from the $24$-th, $3$-rd and $36$-th persons as the source datasets for the TLCp, with the corresponding relative task dissimilarities $0.20$, $2.02$ and $21.22$, respectively. }

{Considering there are fewer than $150$ records for each selected task, the experiments are conducted only on two different sample sizes. For each sample size ($n=100,110$), we randomly split the target dataset {$5000$} times with $n$ samples as the training set and the remaining $30$ as the test set. Following the same experimental design used in the last two examples, our TLCp procedures will be compared to the benchmarks, and the corresponding tuning parameters will be determined as previously. The percentage unexplained variance is used to measure the prediction performance of different models.}

{We first observe from these results that, when the relative task dissimilarity is relatively small, i.e., $0.20$ in Figure \ref{parkinsons1}, the proposed TLCp methods remarkably outperform the original Cp criterion for both sample sizes. Specifically, the original TLCp improves the original Cp and the approximate TLCp cutoff method by $28.74\%$ and $23.00\%$, respectively, {in terms of the average excess unexplained variance across two sample sizes}. We also find that the aggregate original Cp method performs significantly better than the individual counterpart in this case. This observation implies the appropriateness of the proposed relative task dissimilarity metric. }

{The experimental results of Figure \ref{parkinsons1} are verified by the $p$-value of the pairwise Student's $t$-test shown in Table \ref{table11}. The CPU time requirements per run of each method are also listed in Table \ref{table11}. We find that the proposed approximate TLCp cutoff method has the least computational requirements among all the compared algorithms.}

The proposed TLCp methods perform better than the original Cp criterion when the relative task dissimilarity is relatively large, i.e., $2.02$ in Figure \ref{parkinsons2}. In this case, the original TLCp improves the original Cp and the approximate TLCp cutoff method by $6.85\%$ and $5.61\%$, respectively, in terms of the average excess unexplained variance across two sample sizes. This behavior demonstrates the capacity of the proposed TLCp method to leverage the related tasks. However, when the relative task dissimilarity grows greatly, i.e., $21.22$ in Figure \ref{parkinsons3}, the original TLCp method performs slightly worse than the original Cp. This occurs because the derived parameter tuning rules for the original TLCp procedure are sub-optimal in the non-orthogonal case. However, the approximate TLCp cutoff method performs as well as the original Cp method when the relative task dissimilarity is $21.22$. This occurs because the approximate TLCp cutoff method stops extracting knowledge from the source task when the relative task dissimilarity is large enough. From Figures \ref{parkinsons2} and \ref{parkinsons3}, we also find that the proposed TLCp methods significantly outperform the aggregate methods when the relative task dissimilarity grows significantly. This observation indicates that our TLCp methods are more robust and reliable than the aggregate methods over varying ranges of the relative task dissimilarity values. 

\begin{figure}[htbp]
	\begin{minipage}[t] {\textwidth}
		\centerline{\includegraphics[width=1\textwidth]{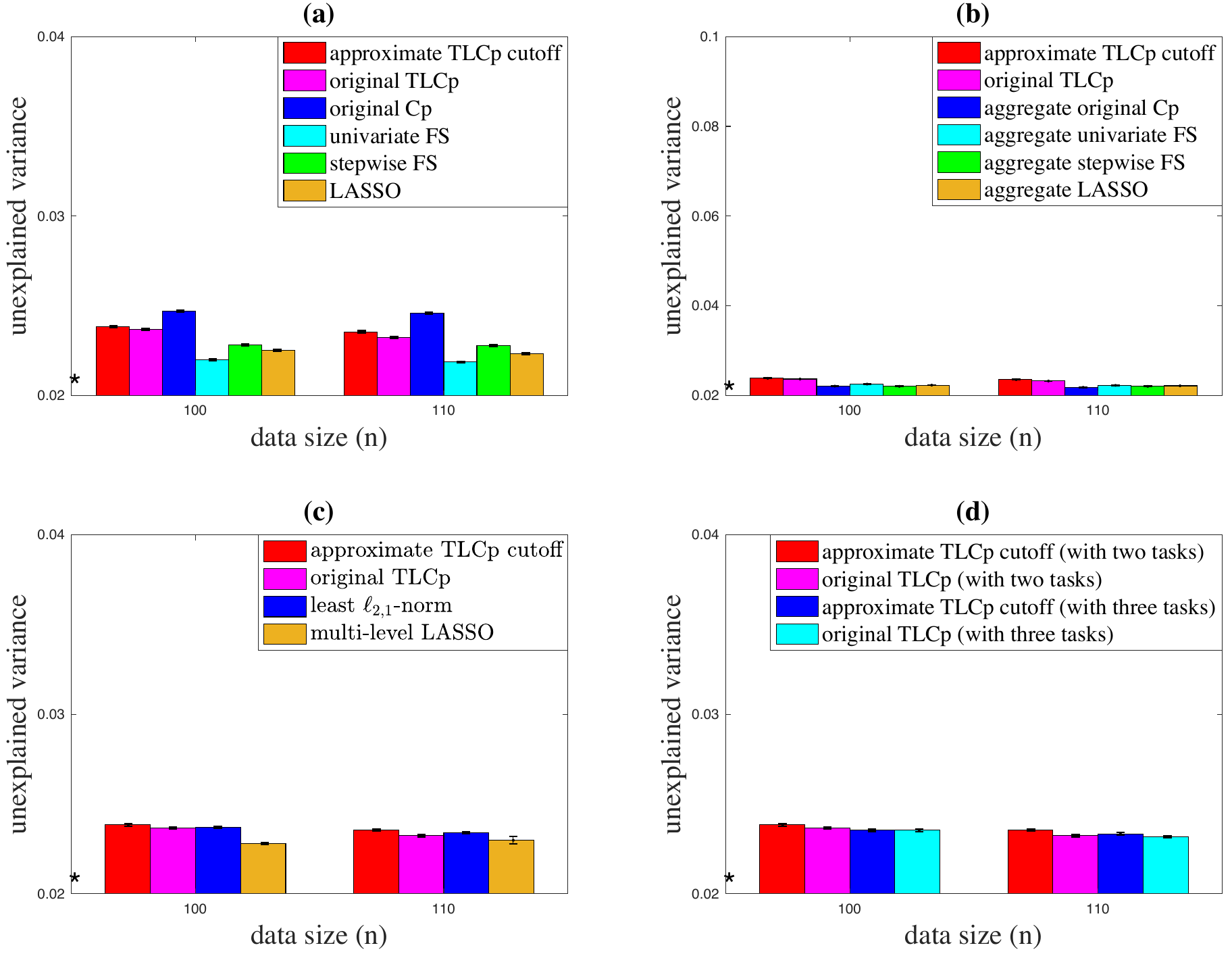}}
	\end{minipage}
	\caption{ \small The unexplained variance performance comparison o the proposed TLCp methods and other benchmarks when the relative task dissimilarity is $0.20$ for Parkinson's data. Smaller values indicate higher predictive accuracy. An aggregate method means running the corresponding non-aggregate method on the aggregate dataset formed by combining data for the target and source tasks.} \label{parkinsons1}
\end{figure}

\begin{table}[htbp]
	\centering
	\begin{tabular}{llll}
		\hline
		\multicolumn{1}{|l|}{} & \multicolumn{1}{l|}{original TLCp} & \multicolumn{1}{c|}{\begin{tabular}[c]{@{}c@{}}approximate TLCp\\ cutoff\end{tabular}} & \multicolumn{1}{l|}{\begin{tabular}[c]{@{}c@{}}CPU time\\ (s)\end{tabular}} \\ \hline
		\multicolumn{1}{|c|}{original Cp} & \multicolumn{1}{c|}{$\textbf{0.00}$} & \multicolumn{1}{c|}{$\textbf{0.00}$}                                                                  & \multicolumn{1}{c|}{$41.77$} \\ \hline
		\multicolumn{1}{|c|}{stepwise FS} & \multicolumn{1}{c|}{$\text{1.00}$ } & \multicolumn{1}{c|}{$\text{1.00}$}                                                                  & \multicolumn{1}{c|}{$0.12$} \\ \hline
		\multicolumn{1}{|c|}{univariate FS} & \multicolumn{1}{c|}{$\text{1.00}$} & \multicolumn{1}{c|}{$\text{1.00}$}                                                                  & \multicolumn{1}{c|}{$0.11$} \\ \hline
		\multicolumn{1}{|c|}{LASSO} & \multicolumn{1}{c|}{$\text{1.00}$} & \multicolumn{1}{c|}{$\text{1.00}$}                                                                  & \multicolumn{1}{c|}{$0.33$} \\ \hline
		\multicolumn{1}{|c|}{aggregate original Cp} & \multicolumn{1}{c|}{${1.00}$} & \multicolumn{1}{c|}{${1.00}$}                                                                  & \multicolumn{1}{c|}{$25.08$} \\ \hline
		\multicolumn{1}{|c|}{aggregate stepwise FS} & \multicolumn{1}{c|}{$\text{1.00}$} & \multicolumn{1}{c|}{$\text{1.00}$}                                                                  & \multicolumn{1}{c|}{$0.12$} \\ \hline
		\multicolumn{1}{|c|}{aggregate univariate FS} & \multicolumn{1}{c|}{$\text{1.00}$} & \multicolumn{1}{c|}{$\text{1.00}$}                                                                  & \multicolumn{1}{c|}{$0.13$} \\ \hline
		\multicolumn{1}{|c|}{aggregate LASSO} & \multicolumn{1}{c|}{$\text{1.00}$} & \multicolumn{1}{c|}{$\text{1.00}$}                                                                  & \multicolumn{1}{c|}{$0.27$} \\ \hline
		\multicolumn{1}{|c|}{least $\ell_{2,1}$-norm} & \multicolumn{1}{c|}{$\textbf{0.02}$} & \multicolumn{1}{c|}{$\text{0.94}$}                                                                  & \multicolumn{1}{c|}{$0.23$} \\ \hline
		\multicolumn{1}{|c|}{multi-level LASSO} & \multicolumn{1}{c|}{$\text{0.88}$} & \multicolumn{1}{c|}{$\text{1.00}$}                                                                  & \multicolumn{1}{c|}{$2.00$} \\ \hline
		\multicolumn{1}{|c|}{original TLCp} & \multicolumn{1}{c|}{$--$} & \multicolumn{1}{c|}{$\text{0.99}$}                                                                  & \multicolumn{1}{c|}{$102.40$} \\ \hline
		\multicolumn{1}{|c|}{approximate TLCp cutoff} & \multicolumn{1}{c|}{$\textbf{0.01}$} & \multicolumn{1}{c|}{$--$}                                                                  & \multicolumn{1}{c|}{$0.11$} \\ \hline       
		\multicolumn{1}{|c|}{\begin{tabular}[c]{@{}c@{}}original TLCp\\ with three tasks\end{tabular}} & \multicolumn{1}{c|}{$0.74$} & \multicolumn{1}{c|}{$\text{1.00}$}                                                                  & \multicolumn{1}{c|}{$64.99$} \\ \hline
		\multicolumn{1}{|c|}{\begin{tabular}[c]{@{}c@{}}approximate TLCp cutoff\\ with three tasks\end{tabular}} & \multicolumn{1}{c|}{$\text{0.11}$} & \multicolumn{1}{c|}{$0.99$}                                                                  & \multicolumn{1}{c|}{$0.15$} \\ \hline                 
	\end{tabular}
	\caption{The table shows the $p$-value of the pairwise $t$-test (in the first two columns) and the CPU time evaluation of different methods (in the last column) on Parkinson's data with the relative task dissimilarity $0.20$. Boldface means the proposed TLCp methods statistically outperform the compared methods ($p$-value $<0.05$). }\label{table11}
\end{table}

\begin{figure}[htbp]
	\begin{minipage}[t] {\textwidth}
		\centerline{\includegraphics[width=1\textwidth]{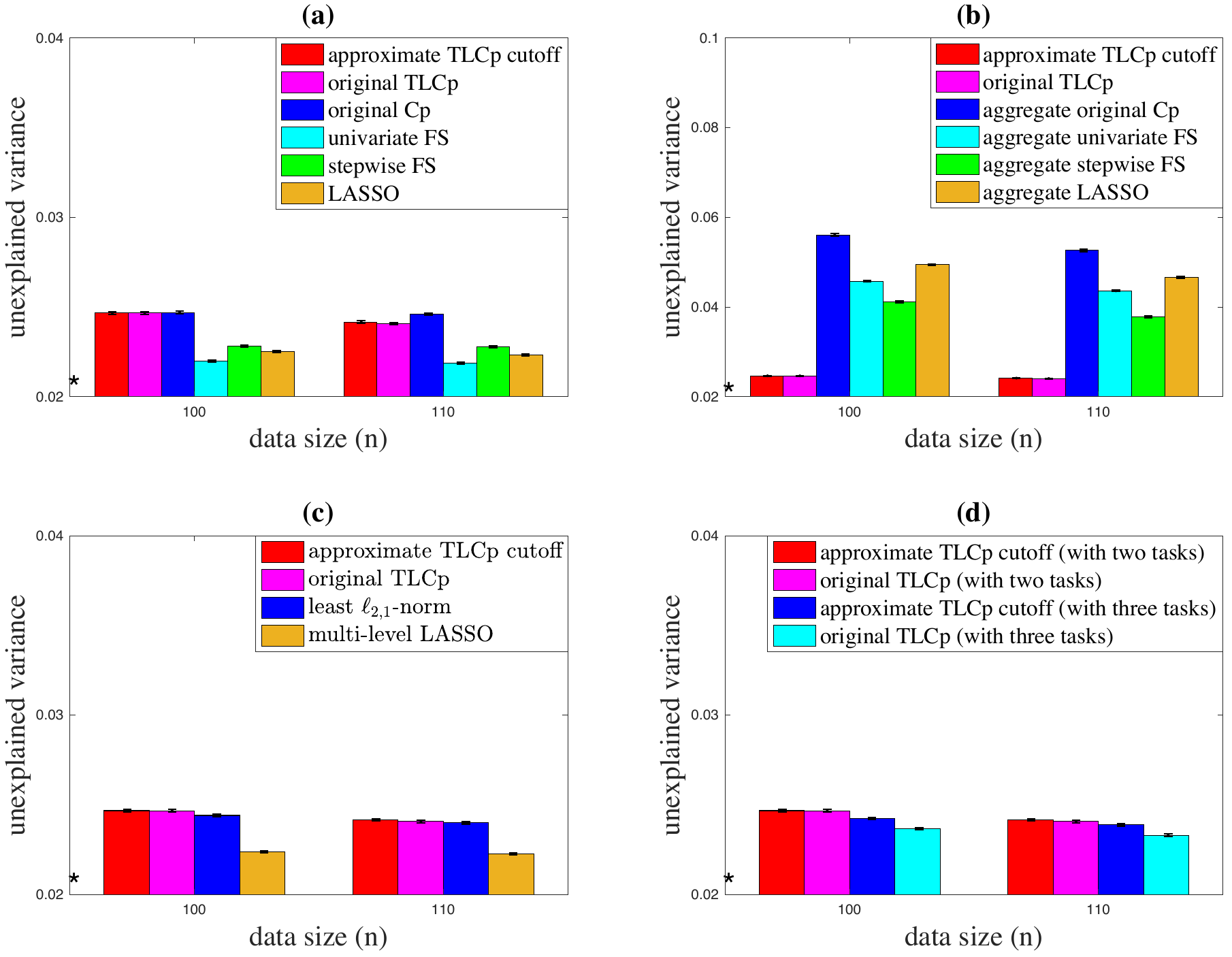}}
	\end{minipage}
	\caption{ \small The unexplained variance performance of the proposed TLCp methods and other benchmarks when the relative task dissimilarity is $2.02$ for Parkinson's data. Smaller values indicate higher predictive accuracy. An aggregate method means running the corresponding non-aggregate method on the aggregate dataset formed by combining data for the target and source tasks.} \label{parkinsons2}
\end{figure}
	
	\begin{figure}[htbp]
		\begin{minipage}[t] {\textwidth}
			\centerline{\includegraphics[width=1\textwidth]{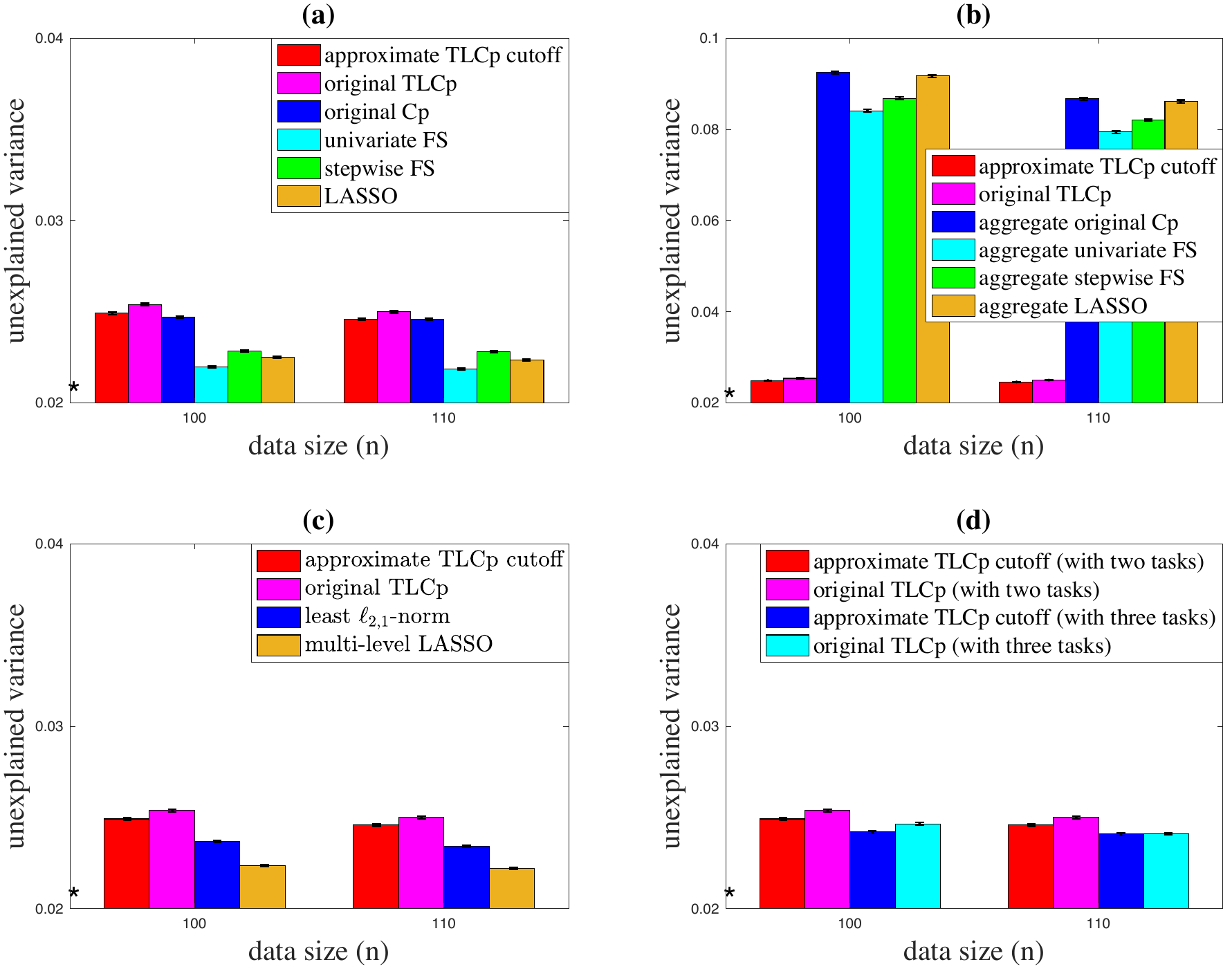}}
		\end{minipage}
		\caption{ \small The unexplained variance performance comparison of the proposed TLCp methods and other benchmarks when the relative task dissimilarity is $21.22$ for Parkinson's data. Smaller values indicate higher predictive accuracy. An aggregate method means running the corresponding non-aggregate method on the aggregate dataset formed by combining data for the target and source tasks.} \label{parkinsons3}
	\end{figure}

\begin{figure}[htbp]
	\begin{minipage}[t] {\textwidth}
		\centerline{\includegraphics[width=0.7\textwidth]{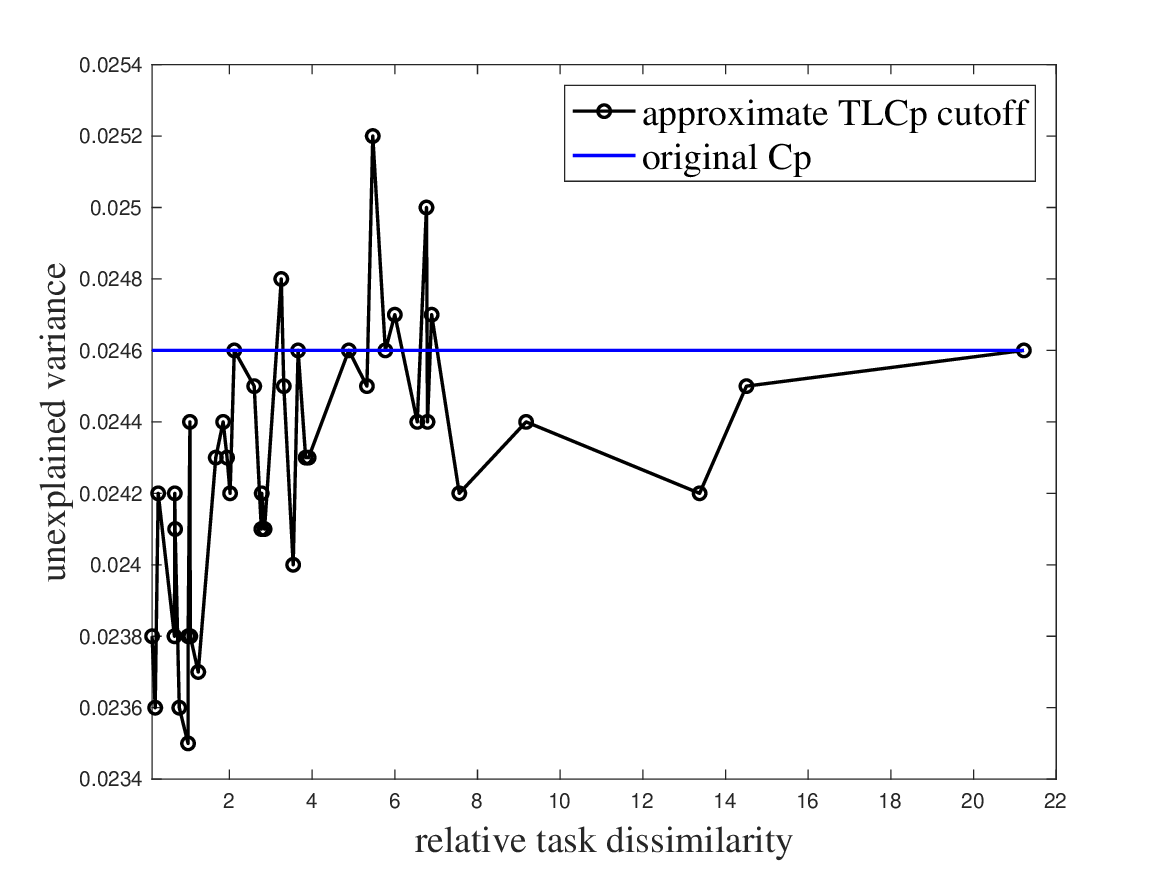}}
	\end{minipage}
	\caption{ \small This figure shows how the performance of the approximate TLCp cutoff method changes as the growth of the relative task dissimilarities on Parkinson's data. the horizontal line indicates the performance of the original Cp criterion.} \label{parkinson4}
\end{figure}

{Our TLCp methods perform similarly to the least $\ell_{2,1}$-norm method and the multi-level LASSO when the relative task dissimilarity is relatively small (see panel (c) of Figures \ref{parkinsons1} and \ref{parkinsons2}). As shown in panel (c) of Figure \ref{parkinsons3}, our methods perform slightly worse than the least $\ell_{2,1}$-norm method and the multi-level LASSO when the relative task dissimilarity grows significantly. This occurs because the applied parameter tuning rules for the TLCp methods are ``sub-optimal'' when the task dissimilarity is large. However, it is worth noting that our methods do not need to use cross-validation to tune the hyperparameters. In particular, the approximate TLCp cutoff method is approximately $20$ times faster than the multi-level LASSO (see Table \ref{table11}). Furthermore, the approximate TLCp cutoff method comes with a closed-form solution. These facts demonstrate the advantages of applying our method. } Finally, our experimental results for the TLCp methods with three tasks (where $33$-th task with the relative task dissimilarity $0.13$ is treated as the second source task) illustrate the potential benefits of incorporating increasingly related source tasks with the TLCp procedures (see panel (d) of Figures \ref{parkinsons1}, \ref{parkinsons2} and \ref{parkinsons3}).

{We also performed the paired Student's $t$-test to check the experimental results of Figures \ref{parkinsons2} and \ref{parkinsons3} in Tables \ref{table12} and \ref{table13}. For details, see Appendix E.}

{Figure \ref{parkinson4} compares the approximate TLCp cutoff method and the original Cp criterion as the relative task dissimilarities increase. For each relative task dissimilarity, we take $5000$ partitions of the target dataset with size $n=110$ to calculate the average unexplained variance of the model. We choose all $41$ available source tasks to investigate the performance of the approximate TLCp cutoff method. We first observe a clear tendency that, as the relative task dissimilarity grows, the approximate TLCp cutoff method's unexplained variance increases when the relative task dissimilarity is less than $7.00$. Then, the unexplained variance of the proposed method decreases when the relative task dissimilarity is larger than $7.00$. Finally, it converges to the performance of the original Cp when the relative task dissimilarity grows significantly, i.e., $21.22$. Second, the proposed approximate TLCp cutoff method clearly improves the original Cp criterion when the relative dissimilarity is less than $3.00$. This observation perfectly matches the simulation result shown in Subsection $6.2$. We can intuitively understand the above observations. First, the proposed TLCp method extracts useful knowledge from the source task when the relative task dissimilarity is small (i.e., less than $3.00$). As dissimilarity grows, the TLCp method distills less knowledge from the source task. Our TLCp method stops transferring knowledge from the source task if the relative task dissimilarity grows significantly.   }

The aforementioned experimental result demonstrates the proposed TLCp methods' superiority over the original Cp method when the relative task dissimilarity is relatively small. However, {this does not imply that the proposed TLCp methods are always better than the other feature selection methods in any problem}.  For example, as shown in panel (a) of Figure \ref{parkinsons1}, our TLCp methods may perform slightly worse than the other benchmarks that are not Cp-based when the relative task dissimilarity is small in Parkinson's data. {Our proposed dissimilarity metric helps distinguish cases when transfer learning techniques can be useful. }

{\subsection{Discussion}}
The following statements summarize the conclusions drawn from the experiments of this section: 
\begin{itemize}
	
\item 	
As the relative task dissimilarities grow, the approximate TLCp cutoff method's unexplained variance increases at first, then it falls, and finally converges to the original Cp criterion's performance. 

\item 
The proposed TLCp methods (including the original TLCp and the approximate TLCp cutoff methods) generally outperform the original Cp criterion when the relative task dissimilarity is small (i.e., less than $3.00$). 
	
\item 
The proposed TLCp methods perform similarly to the aggregate methods (running the benchmarks on the aggregate dataset formed by combining data for the target and source tasks) when the relative task dissimilarity is small. However, our TLCp methods show remarkable improvements over the aggregate methods when the relative task dissimilarity is large. 

\item 
Our TLCp methods perform as well as or better than the least $\ell_{2,1}$-norm method and the multi-level LASSO when the relative task dissimilarity is relatively small. Based on Theorem \ref{theorem18}, our methods do not need to use cross-validation to tune the hyperparameters. In particular, the approximate TLCp cutoff method comes with a closed-form solution and is significantly more efficient than the other two multi-task learning methods.

\item 
Our experimental results for the TLCp methods with three tasks illustrate the potential advantages of integrating increasingly related tasks into the proposed TLCp schemes.
\end{itemize}

\section{Conclusions}

Our paper explores the effectiveness of the transfer learning technique in the context of Mallows' Cp criterion from both a theoretical and empirical perspective. Our results show that if the parameters (complexity penalties) of the orthogonal TLCp are well-chosen and the (relative) dissimilarity between the learning tasks of the source and target domains is small, then the proposed orthogonal TLCp estimator is superior to the orthogonal Cp estimator both in identifying important features and obtaining a lower MSE value. Moreover, when our learning framework is applied to exploit the orthogonal Cp criterion, it can be extended to BIC.
We also provide a feasible estimator to asymptotically approximate the non-orthogonal Cp solution by studying the orthogonalized Cp estimator, {similarly to the case of the non-orthogonal TLCp.} Finally, the proposed dissimilarity metric is remarkably useful in terms of identifying conditions under which transfer learning can succeed.


\acks{This project was supported by the National Natural Science Foundation of China under Grant No. 12071428 and 62111530247, and the
Zhejiang Provincial Natural Science Foundation of China under Grant No. LZ20A010002.}










\vskip 0.2in
\bibliography{reference}
\appendix

{\section{The statistical angle to understand the Cp and TLCp criteria}\label{appendixA}}

{\subsection{Connections between the Cp criterion and statistical tests}}

{We first illustrate that using the orthogonal Cp criterion to selection features is equivalent to performing the statistical tests. }

{(1) we can rephrase Proposition \ref{theorem1} as, 
\begin{eqnarray}
\hat{a}_i=\left\{
\begin{matrix}
\beta_i+\frac{W_i^{\top}\boldsymbol { \varepsilon }}{n},
& \text{if~} r_{i}>\frac{\sqrt{\lambda}}{s_{\boldsymbol{y}}} ~\text{or}~ r_{i}<-\frac{\sqrt{\lambda}}{s_{\boldsymbol{y}}}
\\
0,
&  \text{otherwise}
\end{matrix}
\right.
\end{eqnarray}
where $r_i$ denotes the Pearson's correlation coefficient between the $i$-th feature $W_i$ and the response $\boldsymbol{y}$, and $s_{\boldsymbol{y}}:=\sqrt{\boldsymbol{y}^{\top}\boldsymbol{y}}$ , for $i=1,\cdots, k$.
Notice that the sample Pearson's correlation coefficient can be calculated as $r_i=\frac{W_i\boldsymbol{y}}{\sqrt{W_i^{\top}W_i}\sqrt{\boldsymbol{y}^{\top}\boldsymbol{y}}}$ by the orthogonality assumption and if $\boldsymbol{y}$ is standardized beforehand.}

{Therefore, under the orthogonality assumption, using the Cp criterion amounts to performing univariate feature selection with Pearson's correlation coefficient. Therefore, we can further show that using the orthogonal Cp criterion is equivalent to utilizing a specific significance level on $p$-value of the $z$-test for each feature.}

{Construct the $z$-statistic,  $z_i=\frac{r_is_{\boldsymbol{y}}}{\sigma_1}-\frac{\sqrt{n}\beta_i}{\sigma_1}$ for the $i$-th regression coefficient estimate. Under the null hypothesis that $\beta_i=0$, or equivalently the corresponding population Pearson's correlation coefficient equals zero, the $z$-statistic follows the standard normal distribution. Then, the significance level of this test (or the probability of falsely selecting a  superfluous feature) can be calculated as $\alpha_1(\lambda)=2\phi(-\frac{\sqrt{\lambda}}{\sigma_1})$, which is obtained by substituting one of the critical values $r_0=-\frac{\sqrt{\lambda}}{s_{\boldsymbol{y}}}$ of the sample Pearson's correlation coefficient $r_i$ into the $z$-statistic, where $\phi(u)$= $\int_{-\infty}^{u}\frac{1}{\sqrt{2\pi}}\exp{\left\{-\frac{x^2}{2}\right\}}dx$. Therefore, using the orthogonal Cp criterion is equivalent to performing the statistical $z$-test for each feature with the significance level $\alpha_1$. So Remark \ref{remark2} holds. Note that the significance level $\alpha_1$ is exactly the result of Theorem \ref{theorem 2} when $\beta_i=0$. }

{In particular, if fixing $\lambda=2\sigma_1^2$, then our analysis above indicates that using Mallows' Cp criterion amounts to performing the statistical $z$-test for each feature with the significance level $\alpha_1(2\sigma_1^2)\approx 0.16$.}

{(2) When the $i$-th true regression coefficient $\beta_i\neq0$, then the result in Theorem \ref{theorem 2} actually corresponds to the power of the hypothesis test (with the null hypothesis $H_0: \beta_i=0$, and the alternative hypothesis $H_1: \beta_i\neq0$) w.r.t. the z-statistic introduced above, for $i=1,\cdots,k$. }

{Note that the power refers to the probability of the hypothesis test to identify a relevant feature correctly. Based on the aforementioned analysis, $H_0$ will be rejected if $\frac{r_is_{\boldsymbol{y}}}{\sigma_1}>\frac{\sqrt{\lambda}}{\sigma_1}$ or $\frac{r_is_{\boldsymbol{y}}}{\sigma_1}<-\frac{\sqrt{\lambda}}{\sigma_1}$. Then, the power can be computed as 
\begin{eqnarray}
B(\beta_i)&=&P_r\left\{\frac{r_is_{\boldsymbol{y}}}{\sigma_1}>\frac{\sqrt{\lambda}}{\sigma_1} ~\text{or}~\frac{r_is_{\boldsymbol{y}}}{\sigma_1}<-\frac{\sqrt{\lambda}}{\sigma_1}~\bigg|~\beta_i\not=0 \right\} \notag\\
&=&P_r\left\{\frac{r_is_{\boldsymbol{y}}-\sqrt{n}\beta_i}{\sigma_1}>\frac{\sqrt{\lambda}-\sqrt{n}\beta_i}{\sigma_1} ~\text{or}~\frac{r_is_{\boldsymbol{y}}-\sqrt{n}\beta_i}{\sigma_1}<\frac{-\sqrt{\lambda}-\sqrt{n}\beta_i}{\sigma_1}~\bigg|~\beta_i\not=0 \right\}\notag\\
&=&1-P_r\left\{\frac{-\sqrt{\lambda}-\sqrt{n}\beta_i}{\sigma_1}\leq\frac{r_is_{\boldsymbol{y}}-\sqrt{n}\beta_i}{\sigma_1}\leq \frac{\sqrt{\lambda}-\sqrt{n}\beta_i}{\sigma_1}~\bigg|~\beta_i\not=0\right\}\notag\\
&=&1-\left[\phi\left(\frac{\sqrt{\lambda}-\sqrt{n}\beta_i}{\sigma_1}\right)-\phi\left(\frac{-\sqrt{\lambda}-\sqrt{n}\beta_i}{\sigma_1}\right)\right].
\end{eqnarray}
Thus, Remark \ref{remark4} is shown.}

{(3) The analysis above inspires us to directly restudy the orthogonal Cp criterion using the statistical tests. For the $i$-th feature ($i=1,\cdots,k$), the orthogonal Cp will select this feature if
\begin{equation}\label{equation5}
\sum_{j=1}^{n}y_j^2-\sum_{j=1}^{n}(y_j-W_i^{j}\hat{\beta_i})^2>\lambda,
\end{equation}
where $\hat{\beta}_i$ is the least squares estimate of the true regression coefficient $\beta_i$. Specifically, $\hat{\beta}_i=\beta_i+\frac{\sum_{j=1}^{n}\varepsilon_jW_i^{j}}{n}$ under the orthogonality assumption.
By expanding the left-hand side of Eq. (\ref{equation5}) and combining the similar terms together, we can further rewrite it as follows,
\begin{equation}\label{equation6}
\left(\beta_i\sqrt{n}+\frac{\sum_{j=1}^{n}\varepsilon_jW_i^{j}}{\sqrt{n}}\right)^2>\lambda.
\end{equation}
Then, we notice that the left-hand side of Eq. (\ref{equation6}) is a statistic that follows the scaled noncentral chi-squared distribution, whereas the right-hand side corresponds to the critical value. Remark \ref{remark29} below demonstrates the inherent equivalence between the orthogonal Cp criterion and the statistical tests, which is consistence with the corresponding results in Section $3$.}

{\begin{remark}\label{remark29}
For the $i$-th feature, using the orthogonal Cp criterion to determine whether to choose it or not is equivalent to performing a chi-squared test that corresponds to the statistic $\left(\beta_i\sqrt{n}+\frac{\sum_{j=1}^{n}\varepsilon_jW_i^{j}}{\sqrt{n}}\right)^2$ ($\sim \sigma_1^2\chi^2\left(1,\frac{\beta_i^2n}{\sigma_1^2}\right)$) for this feature, with the significance level $\alpha_2(\lambda)=1-F(\frac{\lambda}{\sigma_1^2};1)$ and the power $1-\gamma(\lambda)$, where $F(\frac{\lambda}{\sigma_1^2};1)$ is the cumulative distribution function of the chi-squared distribution with $1$ degree of freedom at the value $\frac{\lambda}{\sigma_1^2}$, and
	$\gamma(\lambda)=\phi\left(\frac{\sqrt{\lambda}}{\sigma_1}-\frac{\beta_i\sqrt{n}}{\sigma_1}\right)-\phi\left(-\frac{\sqrt{\lambda}}{\sigma_1}-\frac{\beta_i\sqrt{n}}{\sigma_1}\right)$. In particular, by fixing $\lambda=2\sigma_1^2$, we can conclude that the orthogonal Mallows' Cp criterion amounts to performing the hypothesis test w.r.t. the above statistic for the $i$-th feature with the significance level $\alpha_2(2\sigma_1^2)=1-F(2;1)$ ($\approx 0.16$) and the power $1-\gamma(2\sigma_1^2)$, where $\gamma(2\sigma_1^2)=\int_{-\sqrt{2}-\frac{\beta_i\sqrt{n}}{\sigma_1}}^{\sqrt{2}-\frac{\beta_i\sqrt{n}}{\sigma_1}}\frac{1}{\sqrt{2\pi}}\exp\left\{-\frac{z^2}{2}\right\}dz$. 
\end{remark}}

\begin{proof}	
{To prove this remark, we first notice that, when the null hypothesis $\beta_i=0$ is true, the statistic $Z=\left(\beta_i\sqrt{n}+\frac{\sum_{j=1}^{n}\varepsilon_jW_i^{j}}{\sqrt{n}}\right)^2$ follows a scaled chi-distribution with $1$ degree of freedom. That is, $\left(\frac{\sum_{j=1}^{n}\varepsilon_jW_i^{j}}{\sqrt{n}}\right)^2\sim \sigma_1^2\chi^2\left(1\right)$. For the $i$-th feature ($i=1,\cdots,k$), the orthogonal Cp will reject the null hypothesis that $\beta_i=0$, if
\begin{equation}\notag
\left(\frac{\sum_{j=1}^{n}\varepsilon_jW_i^{j}}{\sqrt{n}}\right)^2>\lambda,
\end{equation}	
or, equivalently,
\begin{equation}\notag
\left(\frac{\sum_{j=1}^{n}\varepsilon_jW_i^{j}}{\sigma_1\sqrt{n}}\right)^2>\frac{\lambda}{\sigma_1^2}.
\end{equation}	
Therefore, the significance level is $\alpha_2(\lambda)=1-F(\frac{\lambda}{\sigma_1^2};1)$, where $F(\frac{\lambda}{\sigma_1^2};1)$ is the cumulative distribution function of the chi-squared distribution with $1$ degree of freedom at the value $\frac{\lambda}{\sigma_1^2}$. }

{Second, when the alternative hypothesis $\beta_i\neq0$ is true, the statistic $Z$ follows a scaled noncentral chi-squared distribution, that is, $Z\sim \sigma_1^2\chi^2\left(1,\frac{\beta_i^2n}{\sigma_1^2}\right)$. In order to facilitate the computations, we first consider the probability density function of the scaled statistic $\tilde{Z}=\frac{Z}{\sigma_1^2}\sim\chi^2\left(1,\frac{\beta_i^2n}{\sigma_1^2}\right)$,
\begin{equation}\label{equation7}
f_{\tilde{Z}}(\tilde{z},1,\mu)=\frac{1}{2}\exp\left\{-\frac{\tilde{z}+\mu}{2}\right\}\left(\frac{\tilde{z}}{\mu}\right)^{-\frac{1}{4}}I_{-\frac{1}{2}}(\sqrt{\tilde{z}\mu}),
\end{equation}
where $\mu$ is the noncentral parameter denoted as $\mu=\frac{\beta_i^2n}{\sigma_1^2}$ and  $I_{v}(y)=\left(\frac{y}{2}\right)^{v}\sum_{j=0}^{\infty}\frac{(y^2/4)^j}{j!\Gamma(v+j+1)}$ is a modified Bessel function of the first kind. Further, we can rewrite Eq. (\ref{equation7}) (by noticing that $I_{-\frac{1}{2}}(y)=\sqrt{\pi}\left(\frac{y}{2}\right)^{-\frac{1}{2}}cosh(y^2)$) as
\begin{equation}\notag
f_{\tilde{Z}}(\tilde{z},1,\mu)=\frac{1}{2\sqrt{\tilde{z}}}[\phi(\sqrt{\tilde{z}}-\sqrt{\mu})+\phi(\sqrt{\tilde{z}}+\sqrt{\mu})],
\end{equation}
where $\phi(\cdot)$ is the standard normal density as defined previously.
Note that the null hypothesis $\beta_i=0$ is accepted for the generalized orthogonal Cp if $Z<\lambda$, or equivalently $\tilde{Z}<\frac{\lambda}{\sigma_1^2}$.
Then, the probability of the generalized orthogonal Cp to falsely delete a relevant feature can be computed as follows,
\begin{equation}\label{equation8}
\gamma(\lambda)=\int_{0}^{\frac{\lambda}{\sigma_1^2}}\frac{1}{2\sqrt{\tilde{z}}}[\phi(\sqrt{\tilde{z}}-\sqrt{\mu})+\phi(\sqrt{\tilde{z}}+\sqrt{\mu})]d\tilde{z}.
\end{equation}
We can rewrite Eq. (\ref{equation8}) as $\gamma=\gamma_1(\lambda)+\gamma_2(\lambda)$, where $\gamma_1(\lambda)$ and $\gamma_2$ are denoted as $\gamma_1(\lambda)=\int_{0}^{\frac{\lambda}{\sigma_1^2}}\frac{1}{2\sqrt{z}}\phi(\sqrt{z}-\sqrt{\mu})dz$, $\gamma_2(\lambda)=\int_{0}^{\frac{\lambda}{\sigma_1^2}}\frac{1}{2\sqrt{z}}\phi(\sqrt{z}+\sqrt{\mu})dz$.
Further, letting $z_1=\sqrt{z}-\sqrt{\mu}$, $z_2=-\sqrt{z}-\sqrt{\mu}$, we have
\begin{equation}\notag
\gamma_1(\lambda)=\int_{-\frac{\beta_i\sqrt{n}}{\sigma_1}}^{\frac{\sqrt{\lambda}}{\sigma_1}-\frac{\beta_i\sqrt{n}}{\sigma_1}}\frac{1}{\sqrt{2\pi}}\exp\left\{-\frac{z_1^2}{2}\right\}dz_1,
\end{equation}
and
\begin{equation}\notag
\gamma_2(\lambda)=\int_{-\frac{\sqrt{\lambda}}{\sigma_1}-\frac{\beta_i\sqrt{n}}{\sigma_1}}^{-\frac{\beta_i\sqrt{n}}{\sigma_1}}\frac{1}{\sqrt{2\pi}}\exp\left\{-\frac{z_2^2}{2}\right\}dz_2.
\end{equation}
Thus, there holds
\begin{eqnarray}\notag
\gamma(\lambda)&=&\int_{-\frac{\sqrt{\lambda}}{\sigma_1}-\frac{\beta_i\sqrt{n}}{\sigma_1}}^{\frac{\sqrt{\lambda}}{\sigma_1}-\frac{\beta_i\sqrt{n}}{\sigma_1}}\frac{1}{\sqrt{2\pi}}\exp\left\{-\frac{z^2}{2}\right\}dz\\
&=&\phi\left(\frac{\sqrt{\lambda}}{\sigma_1}-\frac{\beta_i\sqrt{n}}{\sigma_1}\right)-\phi\left(-\frac{\sqrt{\lambda}}{\sigma_1}-\frac{\beta_i\sqrt{n}}{\sigma_1}\right).
\end{eqnarray}
Finally, we can calculate the power as $1-\gamma(\lambda)$.	}	
\end{proof}

{We can also rephrase the orthogonal BIC criterion from the standpoint of statistical tests as Remark \ref{remark29} by replacing $\lambda$ with $\text{log}(n)$.}

{\subsection{The TLCp criterion and statistical tests}}

{We can understand the advantages of TLCp procedure over the Cp criterion from the standpoint of statistical tests. }

{As was illustrated in Remark \ref{remark19} in section $4.5$, when $\boldsymbol{\delta}=0$, using the orthogonal TLCp procedure (with its parameters optimal tuning based on the rules given in Corollary \ref{corollary 13}) amounts to implementing a chi-squared test for each feature with the significance level $0.16$ and the power $1-\tilde{\gamma}$, where $\tilde{\gamma}=\phi\left(\sqrt{2}-\frac{\sqrt{m\sigma_1^2+n\sigma_2^2}\beta_i}{\sigma_1\sigma_2}\right)-\phi\left(-\sqrt{2}-\frac{\sqrt{m\sigma_1^2+n\sigma_2^2}\beta_i}{\sigma_1\sigma_2}\right)$. Therefore, under the orthogonality assumption and if the task dissimilarity is sufficient small, the TLCp procedure is more able than the Cp criterion to detect the true associations in the hypothesis tests.}

\begin{proof}{\bf of Remark \ref{remark19}}
{First, for the $i$-th feature ($i=1,\cdots,k$), The orthogonal TLCp criterion will reject the null hypothesis ($\beta_i=0$) if
\begin{equation}\notag
\sum _ { i = 1 } ^ { n }\lambda_1y_i^2+\sum _ { i = 1 } ^ { n }\lambda_2\tilde{y}_i^2-\sum _ { j = 1 } ^ { n }\lambda_1(y_j-\boldsymbol{w}_i^{j}X_i^{j})^2 - \sum _ { j = 1 } ^ { m } \lambda_2(\tilde{y_{j}}-\boldsymbol{\tilde{w}}_i^{j}\tilde{X_{i}}^{j})^2 - \frac { 1 } { 2 } \sum _ { t = 1 } ^ { 2 } \lambda_3^{i}v_t^2-\lambda_4>0.
\end{equation}
Following similar techniques as in the proof of Proposition \ref{TLCp solution}, this relationship can be simplified to 
\begin{equation}\label{equation10}
A_iH_i^2+B_iZ_i^2+C_iJ_i^2>\lambda_4,
\end{equation}
where $A_i=\frac{4\lambda_1\lambda_2^2m^2n}{4\lambda_1\lambda_2mn+m\lambda_2\lambda_3+n\lambda_1\lambda_3^i}$, $B_i=\frac{4\lambda_2\lambda_1^2mn^2}{4\lambda_1\lambda_2mn+m\lambda_2\lambda_3^i+n\lambda_1\lambda_3^i}$ and $C_i=\frac{\lambda_3^{i}}{4\lambda_1\lambda_2mn+m\lambda_2\lambda_3+n\lambda_1\lambda_3^i}$. Besides, $H_i=\beta_i+\delta_i+\frac{1}{m}\tilde{W}_{i}^{\top} \boldsymbol{ \eta }$, $Z_i=\beta_i+\frac{1}{n}W_{i}^{\top} { \boldsymbol{\varepsilon} }$ and $J_i=m\lambda_2H_i+n\lambda_1Z_i$ are three random variables.}

{Then, when the null hypothesis $\beta_i=0$ is true, assuming $\delta_i=0$, and setting the tuning parameters of the orthogonal TLCp as $\lambda_1^*=\sigma_2^2, \lambda_2^*=\sigma_1^2, {\lambda_3^i}^*=+\infty$ and $\lambda_4^*=2\sigma_1^2\sigma_2^2$, there holds
\begin{equation}
A_iH_i^2+B_iZ_i^2+C_iJ_i^2\sim \sigma_1^2\sigma_2^2\chi^2(1),
\end{equation}
which means $\frac{A_iH_i^2+B_iZ_i^2+C_iJ_i^2}{\sigma_1^2\sigma_2^2} \sim \chi^2(1)$. Meanwhile, we have $\frac{A_iH_i^2+B_iZ_i^2+C_iJ_i^2}{\sigma_1^2\sigma_2^2}>2$ by Eq. ({\ref{equation10}}).
Therefore, the significance level is $\alpha_3=1-F(2;1)$ ($\approx 0.16$), where $F(2;1)$ is the cumulative distribution function of the chi-squared distribution with $1$ degree of freedom at the value $2$.}

{Second, when the alternative hypothesis $\beta_i\neq0$ is true, $\delta_i=0$, and we tune the parameters of the orthogonal TLCp as $\lambda_1^*=\sigma_2^2, \lambda_2^*=\sigma_1^2, {\lambda_3^i}^*=+\infty$ and $\lambda_4^*=2\sigma_1^2\sigma_2^2$, then there holds
\begin{equation}\label{equation12}
A_iH_i^2+B_iZ_i^2+C_iJ_i^2\sim \sigma_1^2\sigma_2^2\chi^2\left(1,\frac{(m\sigma_1^2+n\sigma_2^2)\beta_i^2}{\sigma_1^2\sigma_2^2}\right).
\end{equation}
Further, we notice that the null hypothesis $\beta_i=0$ is accepted for the orthogonal TLCp if $\frac{A_iH_i^2+B_iZ_i^2+C_iJ_i^2}{\sigma_1^2\sigma_2^2}<2$, Then, the probability of the orthogonal TLCp to falsely delete this relevant feature can be calculated as follows,
\begin{equation}
\tilde{\gamma}=\phi\left(\sqrt{2}-\frac{\sqrt{m\sigma_1^2+n\sigma_2^2}\beta_i}{\sigma_1\sigma_2}\right)-\phi\left(-\sqrt{2}-\frac{\sqrt{m\sigma_1^2+n\sigma_2^2}\beta_i}{\sigma_1\sigma_2}\right),
\end{equation}
by applying the same approach as used in the proof of Remark \ref{remark29}.
Therefore, the power is $1-\tilde{\gamma}$.	}
\end{proof}

{It seems very complicated to directly analyze the orthogonal TLCp procedure by statistical tests when $\boldsymbol{\delta}\neq0$ (i.e., as was done in Remark \ref{remark19}). This is due to the difficulty of estimating the distribution of the statistic $A_iH_i^2+B_iZ_i^2+C_iJ_i^2$ if $\boldsymbol{\delta}\neq0$ (for $i=1,\cdots,k$). Note that $H_i=\beta_i+\delta_i+\frac{1}{m}\tilde{W}_{i}^{\top} \boldsymbol{ \eta }\sim \mathcal { N }\left(\beta_i+\delta_i,\frac{\sigma_2^2}{m}\right)$, $Z_i=\beta_i+\frac{1}{n}W_{i}^{\top} { \boldsymbol{\varepsilon} }\sim \mathcal { N }\left(\beta_i,\frac{\sigma_1^2}{n}\right)$ and $J_i=m\lambda_2H_i+n\lambda_1Z_i\sim\mathcal { N }\left(m\lambda_2(\beta_i+\delta_i)+n\lambda_1\beta_i,m\lambda_2^2\sigma_2^2+n\lambda_1^2\sigma_1^2\right)$, and $J_i$ depends on $H_i$ and $Z_i$. However, we can still understand Eq. (\ref{11}) in Theorem \ref{TLCp probability} as the significance level for the orthogonal TLCp in the case of $\boldsymbol{\delta}\neq0$, i.e., let $\beta_i=0$ in Eq. (\ref{11}), for $i=1,\cdots,k$. }

{\section{Analysis of the general TLCp approach}\label{appendixB}}

{\subsection{The orthogonal TLCp estimator in general cases}}

{First, the general TLCp problem with $\ell$ tasks can be stated as follows,
\begin{equation}\label{general TLCp problem}
\text{min}_{\boldsymbol{v}_1,\cdots,\boldsymbol{v}_{\ell},\boldsymbol{\alpha}_0}\sum _ { i = 1 } ^ { n }\lambda_1(y_1^{i}-\boldsymbol{\alpha}_1^{\top}X_1^{i})^2 + \sum_{h=2}^{\ell}\lambda_h\sum _ { j= 1 } ^ { m_h }({y_h^{j}}-\boldsymbol{\alpha}_h^{\top}{X_{h}^{j}})^2 + \frac { 1 } { 2 } \sum _ { t = 1 } ^ {\ell}  \boldsymbol { v }_t^{\top}\boldsymbol{\gamma}\boldsymbol{v}_t +\lambda_{\ell+1}\bar{p},
\end{equation}
where we assume that the database in the target domain consists of $n$ samples $(X_1^i;y_1^{i})_{i=1}^{n}$ satisfying the true but unknown relationship: $\boldsymbol { y }_1 = \boldsymbol { X }_1 \boldsymbol { \beta } + \boldsymbol { \varepsilon }$,  $\varepsilon_i \sim \mathcal { N } \left( 0 , \sigma_1 ^ { 2 } \right)$ for $i=1,\cdots, n$. Further, there are $\ell-1$ source domain datasets each of which has $m_h$ samples $(X_h^i;y_h^{i})_{i=1}^{m_h}$ and satisfy the following true but unknown correlation functions: $\boldsymbol { y }_h= \boldsymbol { X }_h (\boldsymbol { \beta }+\boldsymbol{\delta_h}) + \boldsymbol { \eta }_h$,  $\eta_h^{i} \sim \mathcal { N } \left( 0 , \sigma_h ^ { 2 } \right)$ for $i=1,\cdots, m$, $h=2,\cdots,\ell$. Furthermore, we suppose the regression coefficient vectors for the $\ell$ regression models with the forms $\boldsymbol{\alpha}_1=\boldsymbol{\alpha}_0+\boldsymbol{v}_1$, $\cdots$,
$\boldsymbol{\alpha}_{\ell}=\boldsymbol{\alpha}_0+\boldsymbol{v}_{\ell}$. For each task, $\boldsymbol{\gamma}:=\text{diag}(\boldsymbol{\gamma}^1,\cdots,\boldsymbol{\gamma}^k)$ is a parameter matrix each element of which reflects the significance of the individual part of a regression coefficient for each feature. Similar to the structure of the Cp criterion, we introduce the non-negative integer $\bar{p}$ in (\ref{general TLCp problem}) to control the number of regressors to be selected among all the tasks. More illustrations of these parameters can be found in Subsection $4.1$.}

{Second, we hope to indicate that the optimal solution of the orthogonal general TLCp problem for the target task in (\ref{general TLCp problem}) owning the form of
\begin{eqnarray}\notag
\hat{\boldsymbol{\alpha}}_1^{i}=\left\{
\begin{matrix}
{\beta_{i}}+R_1^i(\frac{\boldsymbol{\varepsilon}^{\top}W_{1}^{i}}{ n} )+R_2^{i}\left(\delta_1^i+\frac{\boldsymbol{ \eta}_2^{\top}{W}_{2}^{i}}{m_2}\right)+\cdots +R_{\ell}^i\left(\delta_{\ell}^i+\frac{\boldsymbol{\eta}_{\ell}^{\top}W_\ell^i}{m_{\ell}}\right) & F(Z_1^i,\cdots,Z_\ell^i)<-\lambda_{\ell+1}\\
0
&  \text{otherwise}
\end{matrix}
\right.
\end{eqnarray}
where each weight $R_j^i$ ($j=1,\cdots,\ell$) is determined by the model parameters $\lambda_1,\cdots,\lambda_{\ell+1},\boldsymbol{\gamma}$ satisfying $R_1^i+R_2^i+\cdots+R_{\ell}^i=1$, for $i=1,\cdots,k$. Here, $F(Z_1^i,\cdots,Z_\ell^i)$ is a quadratic form with respect to the random variables $Z_1^i=\beta_i+\frac{\boldsymbol{\varepsilon}^{\top}W_{1}^{i}}{ n}, Z_2^i=\beta_i+\delta_1^i+\frac{\boldsymbol{ \eta}_2^{\top}{W}_{2}^{i}}{m_2},\cdots, Z_\ell^i=\beta_i+ \delta_{\ell}^i+\frac{\boldsymbol{\eta}_{\ell}^{\top}W_\ell^i}{m_{\ell}}$, for $i=1,\cdots,k$. For the orthogonal general TLCp problem (\ref{general TLCp problem}),
the condition to determine whether the $i$-th regressor will be picked, $\{F(Z_1^i,\cdots,Z_\ell^i)+\lambda_{\ell+1}<0\}$, is equivalent to whether selecting this regressor will make the objective value of (\ref{general TLCp problem}) smaller than the value when it is not selected.}

{As a special case, we will show below the explicit expression of the solution of (\ref{general TLCp problem}) when $\ell=3$. Moreover, we will test this approach empirically using simulated as well as real data.  }

{\begin{proposition}
	The estimated regression coefficients for the target task in the general TLCp model when two source tasks are considered has the expression below, if the conditions $\boldsymbol{X}_1^{\top}\boldsymbol{X}_1=nI$, $\boldsymbol{{X}}_2^{\top}\boldsymbol{{X}}_2=mI$ and $\boldsymbol{X}_3^{\top}\boldsymbol{X}_3=qI$ hold.
	\begin{eqnarray}\notag
	\hat{\boldsymbol{\alpha}}_1^{i}=\left\{
	\begin{matrix}
	{ {\beta_{i}}+S_1^{i}(\frac{1}{n}{ \boldsymbol{\varepsilon}^{\top}W_{1}^{i}} })+S_2^{i}\left(\delta_1^i+\frac{1}{m}\boldsymbol{ \eta}^{\top}{W}_{2}^{i}\right)+S_3^i\left(\delta_2^i+\frac{1}{q}\boldsymbol{\zeta}^{\top}W_3^i\right) &  F(Z_1^i,H_1^i,H_2^i)<-\lambda_5\\
	0
	&  \text{otherwise}
	\end{matrix}
	\right.
	\end{eqnarray}
	for $i=1, \cdots, k$, where $S_1^i=1-\frac{\lambda_4^i}{2n\lambda_1}(K_1-K_2L)$, $S_2^i=\frac{\lambda_4^i}{2n\lambda_1}(K_1-K_2L-K_2L_2)$, $S_3^i=\frac{\lambda_4^i}{2n\lambda_1}K_2L_2$, thus, $S_1^i+S_2^i+S_3^i=1$. Additionally, $F(Z_1^i,H_1^i,H_2^i):=R_1^i(Z_1^i)^2+R_2^i(H_1^i)^2+R_3^i(H_2^i)^2+2P_1^iZ_1^iH_1^i+2P_2^iZ_1^iH_2^i+2P_3^iH_1^iH_2^i$, where $Z_1^i=\beta_i+\frac{1}{n}{ \boldsymbol{\varepsilon}^{\top}W_{1}^{i}}$, $H_1^i=\beta_i+\delta_1^i+\frac{1}{m}\boldsymbol{ \eta}^{\top}{W}_{2}^{i}$, $H_2^i=\beta_i+\delta_2^i+\frac{1}{q}\boldsymbol{\zeta}^{\top}W_3^i$ are three random variables. Also, $R_1^i=C_1M_2^2+C_2L^2+C_3Q_1^2$, $R_2^i=C_1M_2^2+C_2L_1^2+C_3Q_2^2$, $R_3^i=C_1M_3^2+C_2L_2^2+C_3Q_3^2$; $P_1^i=C_1M_1M_2-C_2LL_1+C_3Q_1Q_2$, $P_2^i=-C_1M_1M_3+C_2LL_2-C_3Q_1Q_3$, and $P_3^i=-C_1M_2M_3-C_2L_1L_2-C_3Q_2Q_3$, where $M_1=K_1-K_2L$, $M_2=K_2L-K_2L_2-K_1$, $M_3=K_2L_2$; $Q_1=K_1-K_2L-L$, $Q_2=K_2L_1-K_1+L_1$, $Q_3=K_2L_2+L_2$.
	Among them, $K_1=\frac{2mn\lambda_1\lambda_2}{2mn\lambda_1\lambda_2+m\lambda_2\lambda_4^i}$, $K_2=\frac{2mn\lambda_1\lambda_2+n\lambda_1\lambda_4^i}{2mn\lambda_1\lambda_2+m\lambda_2\lambda_4^i}$, $L_1=\frac{1}{J^i}(8mnq\lambda_1\lambda_2\lambda_3+2mq\lambda_2\lambda_3\lambda_4^i+2mn\lambda_1\lambda_2\lambda_4^i)$, $L_2=\frac{1}{J^i}(4mnq\lambda_1\lambda_2\lambda_3+2mq\lambda_2\lambda_3\lambda_4^i)$, $L=\frac{1}{J^i}(4mnq\lambda_1\lambda_2\lambda_3+2mn\lambda_1\lambda_2\lambda_4^i)$, where $J^i=12mnq\lambda_1\lambda_2\lambda_3+4mq\lambda_2\lambda_3\lambda_4^i+4mn\lambda_1\lambda_2\lambda_4^i+4qn\lambda_1\lambda_3\lambda_4^i+(q\lambda_3+n\lambda_1+m\lambda_2)(\lambda_4^i)^2$. $C_1=\frac{(\lambda_4^i)^2+n\lambda_1\lambda_4^i}{2n\lambda_1}$, $C_2=\frac{(\lambda_4^i)^2+m\lambda_2\lambda_4^i}{2m\lambda_2}$, $C_3=\frac{(\lambda_4^i)^2+q\lambda_3\lambda_4^i}{2q\lambda_3}$.
\end{proposition}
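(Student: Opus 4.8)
The plan is to follow the same strategy as in the proof of Proposition~\ref{TLCp solution}, adapted to three tasks. The term $\lambda_5\bar p$ depends only on the shared support, so I first fix a candidate support $\tau\subseteq\{1,\dots,k\}$ of regressors that are nonzero across all three tasks and minimize the remaining smooth objective. Because $\boldsymbol{X}_1^{\top}\boldsymbol{X}_1=nI$, $\boldsymbol{X}_2^{\top}\boldsymbol{X}_2=mI$ and $\boldsymbol{X}_3^{\top}\boldsymbol{X}_3=qI$, and $\boldsymbol{\gamma}=\mathrm{diag}(\boldsymbol{\gamma}^1,\dots,\boldsymbol{\gamma}^k)$ is diagonal, the cross terms between distinct coordinates vanish, so the smooth part decouples into $k$ independent scalar quadratic programs, one per coordinate $i$, in the four scalars $\alpha_0^i,v_1^i,v_2^i,v_3^i$ (with $\alpha_h^i=\alpha_0^i+v_h^i$). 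Writing the gradient of this scalar quadratic and setting it to zero gives a $4\times4$ linear system whose right-hand side, after using orthogonality, involves only $Z_1^i=\beta_i+\tfrac1n\boldsymbol{\varepsilon}^{\top}W_1^i$, $H_1^i=\beta_i+\delta_1^i+\tfrac1m\boldsymbol{\eta}^{\top}W_2^i$ and $H_2^i=\beta_i+\delta_2^i+\tfrac1q\boldsymbol{\zeta}^{\top}W_3^i$, weighted by $\lambda_1,\lambda_2,\lambda_3$.

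Second, I would solve this system by block elimination: eliminate $\alpha_0^i$ first from its stationarity equation (it is a weighted average of the $\alpha_h^i$), substitute back, and then solve the resulting $3\times3$ system for $v_1^i,v_2^i,v_3^i$; the quantities $K_1,K_2,L,L_1,L_2$ and $J^i$ in the statement are precisely the constants produced by this elimination, while $C_1,C_2,C_3$ collect the contributions of the penalty $\lambda_4^i$ relative to the data weights. Reassembling $\hat{\alpha}_1^i=\alpha_0^i+v_1^i$ and collecting terms yields a linear combination $S_1^iZ_1^i+S_2^iH_1^i+S_3^iH_2^i$; the identity $S_1^i+S_2^i+S_3^i=1$ follows directly from the formulas, and also on conceptual grounds, since in the noiseless, zero-dissimilarity case one has $Z_1^i=H_1^i=H_2^i=\beta_i$, and the estimator must then return $\beta_i$. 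Substituting the noise decompositions of $Z_1^i,H_1^i,H_2^i$ gives the displayed nonzero branch.

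Third, I must decide when coordinate $i$ belongs in the support. As in Proposition~\ref{TLCp solution}, the comparison is local: including regressor $i$ in all tasks lowers the overall objective if and only if the reduction of the smooth part obtained by letting $(\alpha_0^i,v_1^i,v_2^i,v_3^i)$ be free, rather than forced to make $\alpha_h^i=0$, exceeds $\lambda_5$. Plugging the optimal values back into the scalar quadratic produces a homogeneous quadratic form in $(Z_1^i,H_1^i,H_2^i)$; matching its coefficients against the elimination constants gives $R_1^i=C_1M_2^2+C_2L^2+C_3Q_1^2$, the analogous $R_2^i,R_3^i$, and the off-diagonal terms $P_1^i,P_2^i,P_3^i$, with $M_1,M_2,M_3,Q_1,Q_2,Q_3$ as in the statement. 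The selection rule is then exactly $F(Z_1^i,H_1^i,H_2^i)<-\lambda_5$, completing the proof.

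I expect the main obstacle to be purely computational: carrying out the $4\times4$ (effectively $3\times3$ after eliminating $\alpha_0^i$) solve and then correctly matching the many intermediate constants $K_1,K_2,L,L_1,L_2,C_1,C_2,C_3,M_1,M_2,M_3,Q_1,Q_2,Q_3,J^i$ so that the reassembled estimator and quadratic form coincide exactly with the claimed expressions. Organizing the elimination in the right order---common parameter first, then the task-specific parameters, exploiting the diagonal structure of $\boldsymbol{\gamma}$ and the orthogonality of each design matrix---is what keeps this bookkeeping tractable; a useful consistency check along the way is that letting $\lambda_3\to\infty$ (so $v_3^i\to0$) must reduce everything to the two-task formulas of Proposition~\ref{TLCp solution}.
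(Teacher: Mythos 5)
Your proposal is correct and follows essentially the same route as the paper: exploit orthogonality and the diagonal penalty matrix to decouple the smooth part into $k$ scalar quadratic programs, solve the stationarity system for $(\alpha_0^i,v_1^i,v_2^i,v_3^i)$ (the paper writes it as a $3\times3$ system after using $v_3^i=-(v_1^i+v_2^i)$, which is exactly your elimination of $\alpha_0^i$'s stationarity condition), reassemble $\hat{\alpha}_1^i$ as a convex combination of $Z_1^i,H_1^i,H_2^i$, and obtain the selection rule by comparing the optimal objective value $F(Z_1^i,H_1^i,H_2^i)+\lambda_5$ against the zero-solution value $0$. The remaining work is, as you anticipate, only the bookkeeping of the constants $K_1,K_2,L,L_1,L_2,J^i,C_1,C_2,C_3$, which the paper carries out in the same order.
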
}

\begin{proof}
	{We denote by $\boldsymbol{\tilde{1}}_i$ the indicator function of whether the $i$-th feature is selected by the general orthogonal TLCp model with two source tasks (\ref{general TLCp problem}) or not. Specifically,
	\begin{eqnarray}\notag
	\boldsymbol{\tilde{1}}_i=
	\left\{
	\begin{matrix}
	0
	& \text{if~} \|\boldsymbol{\alpha}_1^i\|_0=\|\boldsymbol{\alpha}_2^i\|_0=\|\boldsymbol{\alpha}_3^i\|_0=0\\
	1
	&  \text{ortherwise}
	\end{matrix}
	\right.
	\end{eqnarray}
	Then, the general orthogonal TLCp model in (\ref{general TLCp problem}) is equivalent to minimizing the following objective function,
	\begin{equation*}
	\sum_{i=1}^{k}\left\{f_i(\lambda_1,W_1^i,\boldsymbol{\alpha}_1^i)+g_i(\lambda_2,{W}_2^i,\boldsymbol{\alpha}_2^i)+\tilde{g}_i(\lambda_3,{W}_3^i,\boldsymbol{\alpha}_3^i)+h_i(\lambda_4^i,\boldsymbol{v}_1^i,\boldsymbol{v}_2^i,\boldsymbol{v}_3^i,\lambda_5,\tilde{\boldsymbol{1}}_i)\right\}
	\end{equation*}
	where
	\begin{equation*}
	f_i(\lambda_1,W_1^i,\boldsymbol{\alpha}_1^i)=n\lambda_1\left(-2\beta_i\boldsymbol{\alpha}_1^i- \frac{2}{n}\boldsymbol{\varepsilon}^{\top}\boldsymbol{\alpha}_1^iW_{1}^{i}+(\boldsymbol{\alpha}_1^i)^2\right),
	\end{equation*}
	\begin{equation*}
	g_i(\lambda_2,{W}_2^i,\boldsymbol{\alpha}_2^i)=m\lambda_2\left(-2\beta_i\boldsymbol{\alpha}_2^i- \frac{2}{m}\boldsymbol{\eta}^{\top}\boldsymbol{\alpha}_2^iW_{2}^{i}+(\boldsymbol{\alpha}_2^i)^2\right),
	\end{equation*}
	\begin{equation*}
	\tilde{g}_i(\lambda_3,{W}_3^i,\boldsymbol{\alpha_3}^i)=q\lambda_3\left(-2\beta_i\boldsymbol{\alpha_3}^i- \frac{2}{q}\boldsymbol{\zeta}^{\top}\boldsymbol{\alpha_3}^iW_{3}^{i}+(\boldsymbol{\alpha_3}^i)^2\right),
	\end{equation*}
	and
	\begin{equation*}
	h_i(\lambda_4^i,\boldsymbol{v}_1^i,\boldsymbol{v}_2^i,\boldsymbol{v}_3^i,\lambda_5,\tilde{\boldsymbol{1}}_i)=\frac{1}{2}\lambda_4^i\left[(\boldsymbol{v}_1^i)^2+(\boldsymbol{v}_2^i)^2+(\boldsymbol{v}_3^i)^2\right]+\lambda_5\boldsymbol{\tilde{1}}_i.
	\end{equation*}
	Due to the independence of each summand in the objective function above, the general orthogonal TLCp problem (\ref{general TLCp problem}) further amounts to $k$ one-dimensional optimization problems below,
	\begin{equation}\label{objective function}
	\text{min}_{\boldsymbol{v}_1^i,\boldsymbol{v}_2^i,\boldsymbol{v}_3^i,\boldsymbol{\alpha}_0^i}~\left\{f_i({\lambda}_1,W_1^i,\boldsymbol{\alpha}_1^i)+g_i({\lambda}_2,{W}_2^i,\boldsymbol{\alpha}_2^i)+\tilde{g}_i(\lambda_3,{W}_3^i,\boldsymbol{\alpha}_3^i)+h_i(\lambda_4^i,\boldsymbol{v}_1^i,\boldsymbol{v}_2^i,\boldsymbol{v}_3^i,\lambda_5,\tilde{\boldsymbol{1}}_i)\right\}
	\end{equation}
	for $i=1,\cdots,k$, in the sense that they have the same solution.\\
	For the $i$-th problem above, if $\boldsymbol{\tilde{1}}_i=1$, and making the gradient of the corresponding objective function equal to zero, that is, the estimated $i$-th regression coefficients $\alpha_1^i$, $\alpha_2^i$ and $\alpha_3^i$ for the target and source tasks satisfying the following equations,
	\begin{eqnarray}\label{linear euqations for TLCp}
	\left\{
	\begin{array}{l}
	2n\lambda_1\boldsymbol{\alpha}_0^i+(2n\lambda_1+\lambda_4^i)\boldsymbol{ v }_1^i=2\lambda_1(n\beta_i+\boldsymbol{\varepsilon}^{\top}W_{1}^{i}),\\
	2m\lambda_2\boldsymbol{\alpha}_0^i+(2m\lambda_2+\lambda_4^i)\boldsymbol{ v }_2^i=2\lambda_2(m(\beta_i+\delta_1^i)+\boldsymbol{\eta}^{\top}W_{2}^{i}),\\
	2q\lambda_3\boldsymbol{\alpha}_0^i-(2q\lambda_3+\lambda_4^i)\boldsymbol{v}_1^i-(2q\lambda_3+\lambda_4^i)\boldsymbol{v}_2^i=2\lambda_3(q(\beta_i+\delta_2^i)+\boldsymbol{\zeta}^{\top}W_3^i).
	\end{array}
	\right.
	\end{eqnarray}
	By solving these linear equations, we have
	\begin{equation}\label{first solution}
	\hat{\boldsymbol{\alpha}}_1^{i}=
	{ {\beta_{i}}+S_1^{i}(\frac{1}{n}{ \boldsymbol{\varepsilon}^{\top}W_{1}^{i}} })+S_2^{i}\left(\delta_1^i+\frac{1}{m}\boldsymbol{ \eta}^{\top}{W}_{2}^{i}\right)+S_3^i\left(\delta_2^i+\frac{1}{q}\boldsymbol{\zeta}^{\top}W_3^i\right),
	\end{equation}
	for $i=1, \cdots, k$, where $S_1^i=1-\frac{\lambda_4^i}{2n\lambda_1}(K_1-K_2L)$, $S_2^i=\frac{\lambda_4^i}{2n\lambda_1}(K_1-K_2L-K_2L_2)$, $S_3^i=\frac{\lambda_4^i}{2n\lambda_1}K_2L_2$, thus, $S_1^i+S_2^i+S_3^i=1$, which means $\hat{\boldsymbol{\alpha}}_1^{i}$ is a convex combination of the three random variables $\frac{1}{n}{ \boldsymbol{\varepsilon}^{\top}W_{1}^{i}}$, $\delta_1^i+\frac{1}{m}\boldsymbol{ \eta}^{\top}{W}_{2}^{i}$ and $\delta_2^i+\frac{1}{q}\boldsymbol{\zeta}^{\top}W_3^i$. Among them,  $K_1=\frac{2mn\lambda_1\lambda_2}{2mn\lambda_1\lambda_2+m\lambda_2\lambda_4^i}$, $K_2=\frac{2mn\lambda_1\lambda_2+n\lambda_1\lambda_4^i}{2mn\lambda_1\lambda_2+m\lambda_2\lambda_4^i}$, $L_1=\frac{1}{J^i}(8mnq\lambda_1\lambda_2\lambda_3+2mq\lambda_2\lambda_3\lambda_4^i+2mn\lambda_1\lambda_2\lambda_4^i)$, $L_2=\frac{1}{J^i}(4mnq\lambda_1\lambda_2\lambda_3+2mq\lambda_2\lambda_3\lambda_4^i)$, $L=\frac{1}{J^i}(4mnq\lambda_1\lambda_2\lambda_3+2mn\lambda_1\lambda_2\lambda_4^i)$, where $J^i=12mnq\lambda_1\lambda_2\lambda_3+4mq\lambda_2\lambda_3\lambda_4^i+4mn\lambda_1\lambda_2\lambda_4^i+4qn\lambda_1\lambda_3\lambda_4^i+(q\lambda_3+n\lambda_1+m\lambda_2)(\lambda_4^i)^2$. \\
	Similarly, we have
	\begin{equation}\label{second solution}
	\hat{\boldsymbol{\alpha}}_2^{i}=\beta_i+\frac{\lambda_4^iL}{2m\lambda_2}\left(\frac{1}{n}{ \boldsymbol{\varepsilon}^{\top}W_{1}^{i}}\right)+\left(1-\frac{\lambda_4^iL_1}{2m\lambda_2}\right)\left(\delta_1^i+\frac{1}{m}\boldsymbol{ \eta}^{\top}{W}_{2}^{i}\right)+\frac{\lambda_4^iL_2}{2m\lambda_2}\left(\delta_2^i+\frac{1}{q}\boldsymbol{\zeta}^{\top}W_3^i\right),
	\end{equation}
	\begin{equation}\label{third solution}
	\hat{\boldsymbol{\alpha}}_3^{i}=\beta_i+\frac{\lambda_4^iQ_1}{2q\lambda_3}\left(\frac{1}{n}{ \boldsymbol{\varepsilon}^{\top}W_{1}^{i}}\right)+\frac{\lambda_4^iQ_2}{2q\lambda_3}\left(\delta_1^i+\frac{1}{m}\boldsymbol{ \eta}^{\top}{W}_{2}^{i}\right)+\left(1-\frac{\lambda_4^iQ_3}{2q\lambda_3}\right)\left(\delta_2^i+\frac{1}{q}\boldsymbol{\zeta}^{\top}W_3^i\right),
	\end{equation}
	where $L_1=L+L_2$, $Q_1=K_1-K_2L-L$, $Q_2=K_2L_1-K_1+L_I$ and $Q_3=K_2L_2+L_2$. Thus, $\hat{\boldsymbol{\alpha}}_2^{i}$ and $\hat{\boldsymbol{\alpha}}_3^{i}$ are also two convex combinations of the three random variables $\frac{1}{n}{ \boldsymbol{\varepsilon}^{\top}W_{1}^{i}}$, $\delta_1^i+\frac{1}{m}\boldsymbol{ \eta}^{\top}{W}_{2}^{i}$ and $\delta_2^i+\frac{1}{q}\boldsymbol{\zeta}^{\top}W_3^i$.\\
	The estimators for the $i$-th individual parameters are
	\begin{equation}\label{forth solution}
	\hat{\boldsymbol{v}}_1^{i}=M_1\left(\frac{1}{n}{ \boldsymbol{\varepsilon}^{\top}W_{1}^{i}}\right)+M_2\left(\delta_1^i+\frac{1}{m}\boldsymbol{ \eta}^{\top}{W}_{2}^{i}\right)-M_3\left(\delta_2^i+\frac{1}{q}\boldsymbol{\zeta}^{\top}W_3^i\right),
	\end{equation}
	where $M_1=K_1-K_2L$, $M_2=K_2L-K_2L_2-K_1$, $M_3=K_2L_2$.
	\begin{equation}\label{fifth solution}
	\hat{\boldsymbol{v}}_2^{i}=-L\left(\frac{1}{n}{ \boldsymbol{\varepsilon}^{\top}W_{1}^{i}}\right)+L_1\left(\delta_1^i+\frac{1}{m}\boldsymbol{ \eta}^{\top}{W}_{2}^{i}\right)-L_2\left(\delta_2^i+\frac{1}{q}\boldsymbol{\zeta}^{\top}W_3^i\right),
	\end{equation}
	and
	\begin{equation}\label{sixth solution}
	\hat{\boldsymbol{v}}_3^{i}=-\hat{\boldsymbol{v}}_2^{i}-\hat{\boldsymbol{v}}_1^{i}.
	\end{equation}
	By substituting the relations (\ref{first solution}), (\ref{second solution}), (\ref{third solution}), (\ref{forth solution}), (\ref{fifth solution}), (\ref{sixth solution}) into the objective function in (\ref{objective function}), we have
	\begin{align}
	&f_i(\lambda_1,W_1^i,\boldsymbol{\alpha}_1^i)+g_i(\lambda_2,{W}_2^i,\boldsymbol{\alpha}_2^i)+\tilde{g}_i(\boldsymbol{\alpha}_3,{W}_3^i,\alpha_3^i)+h_i(\lambda_4^i,\boldsymbol{v}_1^i,\boldsymbol{v}_2^i,\boldsymbol{v}_3^i,\lambda_5,\tilde{\boldsymbol{1}}_i)\notag\\
	&=R_1^i(Z_1^i)^2+R_2^i(H_1^i)^2+R_3^i(H_2^i)^2+2P_1^iZ_1^iH_1^i+2P_2^iZ_1^iH_2^i+2P_3^iH_1^iH_2^i+\lambda_5,\notag
	\end{align}
	where $Z_1^i=\beta_i+\frac{1}{n}{ \boldsymbol{\varepsilon}^{\top}W_{1}^{i}}$, $H_1^i=\beta_i+\delta_1^i+\frac{1}{m}\boldsymbol{ \eta}^{\top}{W}_{2}^{i}$, $H_2^i=\beta_i+\delta_2^i+\frac{1}{q}\boldsymbol{\zeta}^{\top}W_3^i$ are three random variables. Also, $R_1^i=C_1M_2^2+C_2L^2+C_3Q_1^2$, $R_2^i=C_1M_2^2+C_2L_1^2+C_3Q_2^2$, $R_3^i=C_1M_3^2+C_2L_2^2+C_3Q_3^2$; $P_1^i=C_1M_1M_2-C_2LL_1+C_3Q_1Q_2$, $P_2^i=-C_1M_1M_3+C_2LL_2-C_3Q_1Q_3$, and $P_3^i=-C_1M_2M_3-C_2L_1L_2-C_3Q_2Q_3$.\\
	Assuming $\boldsymbol{\tilde{1}}_i=0$ in the $i$-th optimization problem (\ref{objective function}), which means the estimators for the parameters $\boldsymbol{\alpha}_0^i,\boldsymbol{v}_1^i,\boldsymbol{v}_2^i,\boldsymbol{v}_3^i$ satisfying $\tilde{\boldsymbol{\alpha}}_0^i=\tilde{\boldsymbol{v}}_1^i=\tilde{\boldsymbol{v}}_2^i=\tilde{\boldsymbol{v}}_3^i=0$, there holds
	\begin{equation*}
	f_i(\lambda_1,W_1^i,\tilde{\boldsymbol{\alpha}}_1^i)+g_i(\lambda_2,{W}_2^i,\tilde{\boldsymbol{\alpha}}_2^i)+\tilde{g}_i(\lambda_3,{W}_3^i,\tilde{\boldsymbol{\alpha}}_3^i)+h_i(\lambda_4^i,\tilde{\boldsymbol{v}}_1^i,\tilde{\boldsymbol{v}}_2^i,\tilde{\boldsymbol{v}}_3^i,\lambda_5,0)=0.
	\end{equation*}
	Therefore, for the $i$-th optimization problem (\ref{objective function}), the corresponding regressor will be selected if the random variable  $F(Z_1^i,H_1^i,H_2^i)+\lambda_5:=R_1^i(Z_1^i)^2+R_2^i(H_1^i)^2+R_3^i(H_2^i)^2+2P_1^iZ_1^iH_1^i+2P_2^iZ_1^iH_2^i+2P_3^iH_1^iH_2^i+\lambda_5$ is smaller than $0$.
	That is,
	\begin{eqnarray}\notag
	\hat{\boldsymbol{\alpha}}_1^{i}=\left\{
	\begin{matrix}
	{ {\beta_{i}}+S_1^{i}(\frac{1}{n}{ \boldsymbol{\varepsilon}^{\top}W_{1}^{i}} })+S_2^{i}\left(\delta_1^i+\frac{1}{m}\boldsymbol{ \eta}^{\top}{W}_{2}^{i}\right)+S_3^i\left(\delta_2^i+\frac{1}{q}\boldsymbol{\zeta}^{\top}W_3^i\right) &  F(Z_1^i,H_1^i,H_2^i)<-\lambda_5\\
	0
	&  \text{otherwise}
	\end{matrix}
	\right.
	\end{eqnarray}
	Finally, we can acquire the desired optimal solution for the general orthogonal TLCp model with two source tasks by integrating these $k$ solutions together.}
\end{proof}

{\begin{remark}
	For the orthogonal TLCp in the general case, the estimated regression coefficient for the $i$-th relevant feature equals the convex combination among the random variables $\beta_i+\frac{\boldsymbol{\varepsilon}^{\top}W_{1}^{i}}{ n}, \beta_i+\delta_1^i+\frac{\boldsymbol{ \eta}_2^{\top}{W}_{2}^{i}}{m_2},\cdots, \beta_i+ \delta_{\ell}^i+\frac{\boldsymbol{\eta}_{\ell}^{\top}W_\ell^i}{m_{\ell}}$, for $i=1,\cdots,k$. This can be verified by solving linear equations similar to (\ref{linear euqations for TLCp}). However, we will omit this part in our manuscript due to the tedious nature of the derivations. For the practitioners who are interested in the specific solution of the orthogonal general TLCp problem (\ref{general TLCp problem}), we recommend using symbolic computation toolboxes.
\end{remark}}

{\subsection{Simulation studies of the general TLCp approach}} 

{For illustrating how the proposed general TLCp method can be used in practice, we will first test our method with simulated datasets. We assume the target training data are i.i.d. sampled from $\boldsymbol { y } = \boldsymbol { X } \boldsymbol { \beta } + \boldsymbol { \varepsilon }$, where $\boldsymbol { \beta }=[1 ~0.01~ 0.005~ 0.3~ 0.32~ 0.08]^{\top}$, the fourth and fifth elements of which are (or are near) the critical points $\pm\sqrt{2\sigma_1^2/n}$ when $n=20$ and $\sigma_1=1$. Additionally, we generate two source datasets as $\boldsymbol { \tilde{y}_1 } = \boldsymbol { X } (\boldsymbol { \beta }+\boldsymbol{\delta}_1) + \boldsymbol { \eta}_2$ and $\boldsymbol { \tilde{y}_2 } = \boldsymbol { X } (\boldsymbol { \beta }+\boldsymbol{\delta}_2) + \boldsymbol { \eta }_3$ where $\sigma_2=\sigma_3=1$. Here, $\boldsymbol { X }$ (which satisfies $\boldsymbol { X }^{\top}\boldsymbol { X }=nI$), $\boldsymbol{\delta}_1$ and $\boldsymbol{\delta}_2$ are obtained according to the method stated in Subsection $6.1$ and $n=m_2=m_3=20$. For each task similarity measure $1/\|\boldsymbol{\delta}_i\|_2$ ($i=1,2$), we randomly simulated $5000$ datasets and applied the general TLCp approach. Here, we choose the tuning parameters of the general TLCp model with $3$ tasks in (\ref{general TLCp problem}) as $\lambda_1=\sigma_2^2\sigma_3^2$, $\lambda_2=\sigma_1^2\sigma_3^2$, $\lambda_3=\sigma_1^2\sigma_2^2$, $\boldsymbol{\gamma}^{i}=12\sigma_1^2\sigma_2^2\sigma_3^2/(\delta_1^i+\delta_2^i)^2(i=1,\cdots,k)$ and $\lambda_4=\text{min}_{i\in\{1,\cdots,k\}}\left\{\frac{\lambda\left(2-\frac{\tilde{Q}_i}{\sqrt{\tilde{M}_i\tilde{N}_i\tilde{W}_i}}\right)}{4\sigma_1^2(\tilde{G}_i)^2}\right\}$, where $\lambda=2\sigma_1^2$, $\tilde{Q}_i=\frac{-2\sigma_1^2\sigma_2^2\sigma_3^2}{(\delta_1^i+\delta_2^i)^2+\frac{\sigma_1^2}{n}+\frac{\sigma_2^2}{m_2}+\frac{\sigma_3^2}{m_3}}$, $\tilde{M}_i=\frac{\sigma_1^2m_2m_3[(\delta_1^i+\delta_2^i)^2+\frac{\sigma_1^2}{n}]}{(\delta_1^i+\delta_2^i)^2+\frac{\sigma_1^2}{n}+\frac{\sigma_2^2}{m_2}+\frac{\sigma_3^2}{m_3}}$, $\tilde{N}_i=\frac{\sigma_2^2nm_3[(\delta_1^i+\delta_2^i)^2+\frac{\sigma_2^2}{m_2}]}{(\delta_1^i+\delta_2^i)^2+\frac{\sigma_1^2}{n}+\frac{\sigma_2^2}{m_2}+\frac{\sigma_3^2}{m_3}}$, $\tilde{W}_i=\frac{\sigma_3^2nm_2[(\delta_1^i+\delta_2^i)^2+\frac{\sigma_3^2}{m_3}]}{(\delta_1^i+\delta_2^i)^2+\frac{\sigma_1^2}{n}+\frac{\sigma_2^2}{m_2}+\frac{\sigma_3^2}{m_3}}$, and $\tilde{G}_i=\sqrt{\frac{nm_2m_3}{n\tilde{M}_i\sigma_2^2\sigma_3^2+m_2\tilde{N}_i\sigma_1^2\sigma_3^2+m_3\tilde{W}_i\sigma_1^2\sigma_2^2}}$ for $i=1,\cdots,k$, which are the natural extensions of the parameter selection in the two-task case. }

{As demonstrated in Figure \ref{mse1}, with the increase of the task similarity measure (which equals $1/\|\delta_1^i+\delta_2^i\|_2$ if two source tasks are considered), the proposed general TLCp model with $3$ tasks performs significantly better than the TLCp model with $2$ tasks and also dramatically better than the Cp criterion in the sense of MSE. This result demonstrates the effectiveness of the proposed general TLCp method (\ref{general TLCp problem}) together with the corresponding parameter tuning rule. This behavior also motivates us to further explore the advantages of the general TLCp framework with more tasks. Generally, for the general TLCp problem with $\ell$ tasks, we can guess the corresponding tuning parameters as follows: $\lambda_j=\sigma_1^2*\cdots*\sigma_{j-1}^2*\sigma_{j+1}^2*\cdots*\sigma_{\ell}^2$ ($j=1,\cdots,\ell$), $\boldsymbol{\gamma}^i=2*\ell!/(\delta_1^i+\cdots+\delta_{\ell-1}^i)^2(i=1,\cdots,k)$ and for the regularization parameter $\lambda_{\ell+1}$, we can calculate it as $\lambda_{\ell+1}\approx2\sigma_1^2*\cdots*\sigma_{\ell}^2$, if the dissimilarities among tasks are very small. The optimal setting of the regularization parameter in the two-task case is approximately $2\sigma_1^2\sigma_2^2$ when the task dissimilarity is small enough. Thus, the formula for the general $\ell$ tasks case is a natural extension of that for two cases. The theoretical verification of the optimality of the parameter tuning rule introduced above will be left for future work.}

\begin{figure}[htbp]
 \begin{minipage}{\textwidth}
	\centering{\includegraphics[height=8cm,width=12.0cm]{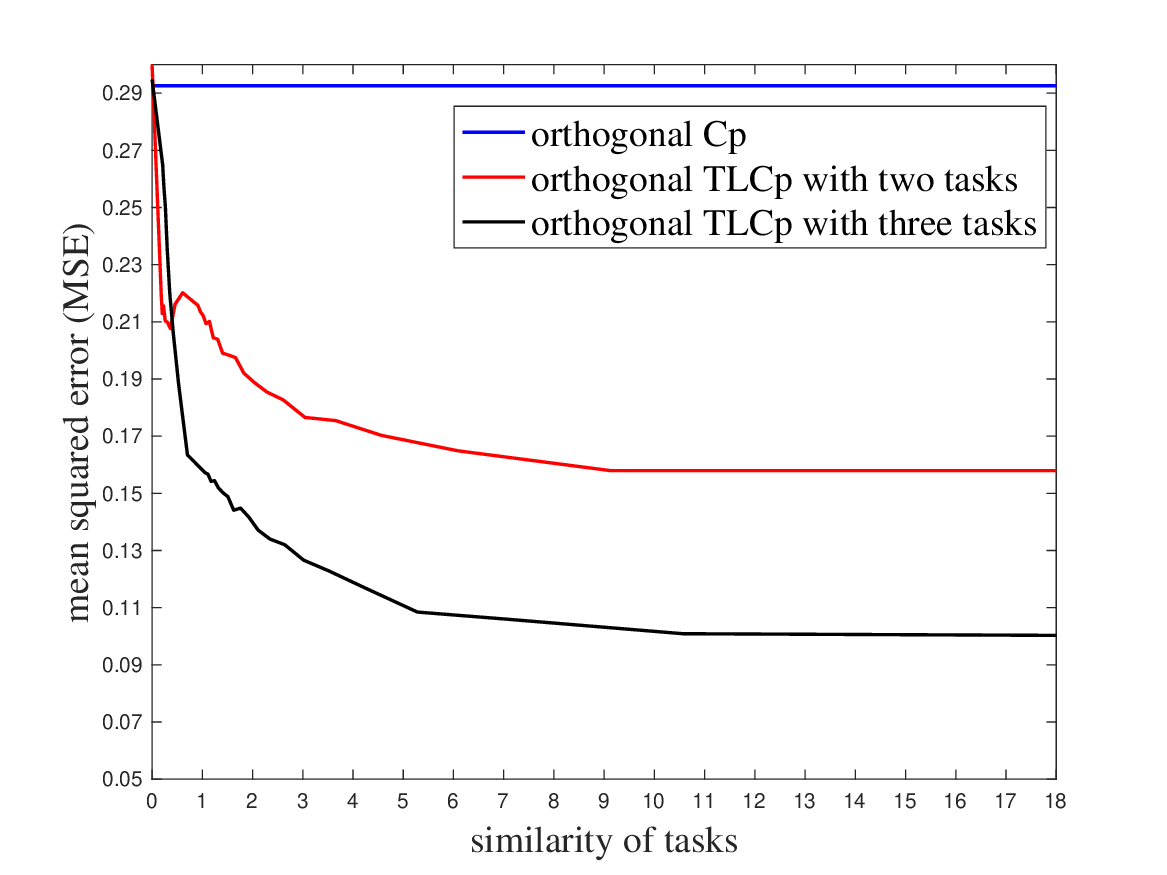}} \\
		\caption{ \small MSE performance comparison between the orthogonal Cp criterion and the orthogonal TLCp method with two and three tasks. Note that, for each dimension ($i$-th), we define the similarity of tasks as $1/\|\delta^i\|_2$ if only one source task is considered, and $1/\|\delta_1^i+\delta_2^i\|_2$ if two source tasks are considered.} \label{mse1}
\end{minipage}	
\end{figure}

{We also evaluated the proposed general TLCp framework with $3$ tasks on two real datasets (school data and Parkinson's data), see Section $7$.} 

\section{Proofs of main results}\label{appendixC}

In this appendix, we provide the proofs of all theoretical results in our article.\\

\begin{proof}{\bf of Proposition \ref{theorem1}}
\\
To obtain the optimal solution for the orthogonal Cp model (\ref{the modified Cp}),
notice that, because $\boldsymbol{X}^{\top}\boldsymbol{X}=nI$ and $\boldsymbol { y } = \boldsymbol { X } \boldsymbol { \beta } + \boldsymbol { \varepsilon }$, the expansion of the objective function in (\ref{the modified Cp}) can be written as
\begin{equation*}
\boldsymbol { y }^{\top}\boldsymbol { y }-2n\boldsymbol{\beta}^{\top}\boldsymbol{ a }-2\boldsymbol{\varepsilon}^{\top} \boldsymbol { X } \boldsymbol { a }+n\boldsymbol { a }^ { \top }\boldsymbol{a}+\lambda\|\boldsymbol{a}\|_0.
\end{equation*}
In this case, the orthogonal Cp criterion (\ref{the modified Cp}) is equivalent to
\begin{equation*}
\min_{\boldsymbol{a}}~\boldsymbol { y }^{\top}\boldsymbol { y }+\sum_{i=1}^{k}\left\{-2n\beta_{i}a_{i}-2\boldsymbol{\varepsilon}^{\top}W_{i}a_{i}+na_{i}^2+\lambda\|a_i\|_0\right\},
\end{equation*}
where $W_i$ indicates the $i$-th column of the design matrix $\boldsymbol{X}$, for $i=1,\cdots, k$.

Due to the independence of each summation term in the objective function above, the orthogonal Cp problem  (\ref{the modified Cp}) is further equivalent to the following $k$ one-dimensional optimization problems
\begin{equation}\label{g}
\min_{a_i}~ g(a_i)=-2n\beta_{i}a_{i}-2\boldsymbol{\varepsilon}^{\top}W_{i}a_{i}+na_{i}^2+\lambda\|a_i\|_0
\end{equation}
for $i=1,\cdots,k$,  in the sense that they have the same solution.

Now, we turn our attention to the solution of the one-dimensional optimization problems. For the $i$-th problem, if $\|a_i\|_0=1$, then we can easily get the estimator $a_i^{o}=\beta_i+\frac{\boldsymbol{\varepsilon}^{\top}W_i}{n}$ by requiring the derivative of $g(a_i)$ be zero, and $g(a_i^{o})=-n\beta_i^2-2\beta_i\boldsymbol{\varepsilon}^{\top}W_i-\frac{\boldsymbol{\varepsilon}^{\top}W_{i}W_{i}^{\top}\boldsymbol{\varepsilon}}{n}+\lambda$. If $\|a_i\|_0=0$, then $a_i=0$ and $g(0)=0$. By comparing the objective values in these two cases and picking the smaller one, we can get the optimal solution of the $i$-th problem as follows
\begin{eqnarray}\notag
\hat{a}_i=\left\{
\begin{matrix}
\beta_i+\frac{W_i^{\top}\boldsymbol { \varepsilon }}{n},
& \text{if~} n\left(\beta_i+\frac{W_i^{\top}\boldsymbol { \varepsilon }}{n}\right)^2>\lambda\\
0,
&  \text{otherwise}
\end{matrix}
\right.
\end{eqnarray}
where $i=1,\cdots,k$.
Finally, we can acquire the desired optimal solution for the orthogonal Cp by collating these $k$ solutions together.
\end{proof}

\begin{proof}{\bf of Theorem \ref{theorem 2}}
\\
Given the closed-form of solution in proposition 1 above, we can calculate the probability $Pr^{Cp}\{i\}$ of the orthogonal Cp to identify the $i$-th feature for $i=1,\cdots,k$ as follows
\begin{eqnarray}\notag
Pr^{Cp}\{i\}&=&Pr\{\hat{a}_i\neq0\}\\&=&Pr\left\{n\left(\beta_i+\frac{W_i^{\top}\boldsymbol { \varepsilon }}{n}\right)^2>\lambda\right\}\notag
\end{eqnarray}
For $\boldsymbol{\varepsilon}\sim \mathcal { N } \left( 0 , \sigma_1 ^ { 2 }I_n \right)$, we can easily have $\beta_i+\frac{W_i^{\top}\boldsymbol { \varepsilon }}{n}\sim \mathcal{N}\left( \beta_i, \frac{\sigma_1^2}{n} \right)$ by considering the orthogonality assumption.
It follows that
\begin{eqnarray}\notag
Pr\left\{n\left(\beta_i+\frac{W_i^{\top}\boldsymbol { \varepsilon }}{n}\right)^2>\lambda\right\}&=&\frac{\sqrt{n}}{\sqrt{2\pi}\sigma_1}\int_{nx^2>\lambda}\exp\left\{-\frac{1}{2}\frac{\left(x-\beta_i\right)^2}{\frac{\sigma_1^2}{n}}\right\}dx\notag
\end{eqnarray}
Now, we begin to derive the second equality in Theorem \ref{theorem 2}.

Let $W_i^{\top}\boldsymbol { \varepsilon }:=\sqrt{\sigma_1^2n}\theta$, where $\theta\sim\mathcal{N}(0,1)$. Therefore, we have
\begin{eqnarray}\notag
Pr^{Cp}\{i\}&=&Pr\left\{n\beta_i^2+2\sqrt{\sigma_1^2n}\beta_i\theta+\sigma_1^2\theta^2-\lambda>0\right\}\\&=&1-
Pr\left\{\frac{-\sqrt{n}\beta_i-\sqrt{\lambda}}{\sigma_1}<\theta<\frac{-\sqrt{n}\beta_i+\sqrt{\lambda}}{\sigma_1}\right\}\notag\\&=&1-\int_{\frac{-\sqrt{n}\beta_i-\sqrt{\lambda}}{\sigma_1}}^{\frac{-\sqrt{n}\beta_i+\sqrt{\lambda}}{\sigma_1}}\frac{1}{\sqrt{2\pi}}\exp\left\{-\frac{x^2}{2}\right\}~dx,\notag
\end{eqnarray}	
{where the second equality is obtained by solving the quadratic equation for $\theta$.}
\end{proof}

\begin{proof}{\bf of Proposition \ref{proposition3}}
\\
We can prove this proposition by considering the following two cases.

If the $j$-th true regression coefficient satisfies $\beta_j=\sqrt{\frac{2}{n}}\sigma_1$, and the assumption
$\lambda=2\sigma_1^2$ holds, then substituting these two conditions into the second equality in Theorem \ref{theorem 2}, we have
\begin{eqnarray}
Pr^{Cp}\{j\}&=&1-\int_{\frac{-\sqrt{n}\beta_j-\sqrt{\lambda}}{\sigma_1}}^{\frac{-\sqrt{n}\beta_j+\sqrt{\lambda}}{\sigma_1}}\frac{1}{\sqrt{2\pi}}\exp\left\{-\frac{x^2}{2}\right\}~dx \\&=&1-\int_{-2\sqrt{2}}^{0}\frac{1}{\sqrt{2\pi}}\exp{\left\{-\frac{x^2}{2}\right\}}dx\notag\\ &=&1-[\phi(0)-\phi(-2\sqrt{2})]\notag 
\end{eqnarray}
By numerical calculation of this integral, we find $Pr^{Cp}\{j\}=0.5023(\pm0.0001)$.
Due to symmetry of the expressions, the same result holds for the case when $\beta_j=-\sqrt{\frac{2}{n}}\sigma_1$.	
\end{proof}	

\begin{proof}{\bf of Theorem \ref{theorem4}}
\\
Let $\boldsymbol{1}_i$ be the indicator function of whether the $i$-th feature is selected by the orthogonal Cp (\ref{the modified Cp}) or not. To be specific,
\begin{equation}\notag
\boldsymbol{1}_i=
\left\{
\begin{matrix}
0
& \text{if~} a_i=0\\
1
&  \text{otherwise}
\end{matrix}
\right.
\end{equation}
The MSE value for the estimator $\hat{\boldsymbol{a}}$ from the orthogonal Cp is
\begin{eqnarray}\notag
\text{MSE}(\hat{\boldsymbol{a}})&=&\mathbb{E}(\boldsymbol{\beta}-\hat{\boldsymbol{a}})^{\top}(\boldsymbol{\beta}-\hat{\boldsymbol{a}})\\&=&\mathbb{E}\left\{\sum_{i=1}^{k}\left[\beta_i-\boldsymbol{1}_i\left(\beta_i+\frac{W_i^{\top}\boldsymbol { \varepsilon }}{n}\right)\right]^2\right\}\notag \\&=&\sum_{i=1}^{k}\mathbb{E}\left[\beta_i-\boldsymbol{1}_i\left(\beta_i+\frac{W_i^{\top}\boldsymbol { \varepsilon }}{n}\right)\right]^2\notag \\&=&\sum_{i=1}^{k}\left\{Pr\{\boldsymbol{1}_i=1\}\mathbb{E}\left[\left.\frac{(W_i^{\top}\boldsymbol { \varepsilon })^2}{n^2}\right|\boldsymbol{1}_i=1\right]+Pr\{\boldsymbol{1}_i=0\}\beta_i^2\right\}\notag
\end{eqnarray}	
where the second equality is obtained by substituting   $\hat{\boldsymbol{a}}$ in Proposition \ref{theorem1}, and the last equality is due to the
law of total expectation.

For simplicity, we further define the random event $\boldsymbol{H}$ as
\begin{equation}
\boldsymbol{H}=\left\{\theta_i>\frac{-\sqrt{n}\beta_i+\sqrt{\lambda}}{\sigma_1} ~\text{or}~ \theta_i<\frac{-\sqrt{n}\beta_i-\sqrt{\lambda}}{\sigma_1}\right\}
\end{equation}
For $\boldsymbol{\varepsilon}\sim \mathcal { N } \left( 0 , \sigma_1 ^ { 2 }I_n \right)$, we have
\begin{eqnarray}
\mathbb{E}\left[\left.\frac{(W_i^{\top}\boldsymbol { \varepsilon })^2}{n^2} \right|\boldsymbol{1}_i=1\right]&=&\mathbb{E}\left[\left.\frac{\sigma_1^2\theta_i^2}{n}\right|\boldsymbol{H}\right]\notag\\&=&\notag \frac{\sigma_1^2}{n}\int_{-\infty}^{+\infty}x^2dPr\left\{x|\boldsymbol{H}\right\}\\ \notag&=& \frac{\sigma_1^2}{n}\frac{1}{Pr\{\boldsymbol{H}\}}\left[1-\int_{\frac{-\sqrt{n}\beta_i-\sqrt{\lambda}}{\sigma_1}}^{\frac{-\sqrt{n}\beta_i+\sqrt{\lambda}}{\sigma_1}}\frac{x^2}{\sqrt{2\pi}}\exp\left\{-\frac{x^2}{2}\right\}~dx\right]
\end{eqnarray}	
where $\theta_i=\frac{\boldsymbol { \varepsilon }^{\top}W_i}{\sqrt{\sigma_1^2n}}\sim \mathcal { N } \left( 0 , 1 \right)$, and $Pr\{\cdot | \boldsymbol{H}\}$ is the probability measure defined, for each set $\boldsymbol{A}$, as $Pr\{\boldsymbol{A} | \boldsymbol{H}\}=\frac{Pr\{\boldsymbol{A}\cap \boldsymbol{H}\}}{Pr\{\boldsymbol{H}\}}$.

Together with the results in Theorem \ref{theorem 2}, it follows that
\begin{equation*}
 Pr\{\boldsymbol{1}_i=1\}=Pr\{\boldsymbol{H}\}.
 \end{equation*}
 Combining the previous results together, we have
\begin{eqnarray}\notag
\text{MSE}(\hat{\boldsymbol{a}})&=&\sum_{i=1}^{k}\frac{\sigma_1^2}{n}\left[1-\int_{\frac{-\sqrt{n}\beta_i-\sqrt{\lambda}}{\sigma_1}}^{\frac{-\sqrt{n}\beta_i+\sqrt{\lambda}}{\sigma_1}}\frac{x^2}{\sqrt{2\pi}}\exp\left\{-\frac{x^2}{2}\right\}~dx\right]+\beta_{i}^2Pr\{\boldsymbol{1}_i=0\}\\\notag &=&\sum_{i=1}^{k}\left[\frac{\sigma_1^2}{n}+\int_{\frac{-\sqrt{n}\beta_i-\sqrt{\lambda}}{\sigma_1}}^{\frac{-\sqrt{n}\beta_i+\sqrt{\lambda}}{\sigma_1}}\left(\beta_i^2-\frac{\sigma_1^2x^2}{n}\right)\frac{1}{\sqrt{2\pi}}\exp\left\{-\frac{x^2}{2}\right\}dx\right]\notag
\end{eqnarray}
Letting $x=\frac{y}{\sigma_1}$, we can rewrite the expression of $\text{MSE}(\hat{\boldsymbol{a}})$ as follows
\begin{equation*}
\text{MSE}(\hat{\boldsymbol{a}})=\sum_{i=1}^{k}\left[\frac{\sigma_1^2}{n}+\frac{1}{\sqrt{2\pi}\sigma_1}\int_{(y+\sqrt{n}\beta_i)^2<\lambda}\left(\beta_i^2-\frac{1}{n}y^2\right)\exp{\left\{-\frac{y^2}{2\sigma_1^2}\right\}}dy\right]
\end{equation*}
This completes the proof.
\end{proof}	

\begin{proof}{\bf of Proposition \ref{proposition5}}
\\
Firstly, to obtain the global minimizers of $f(x)$ in the interval $(-\infty, +\infty)$, we can analyze the zeros of its derivative.

Notice that
\begin{equation*}
\frac{df(x)}{dx}=x\left(\frac{x^2}{n\sigma_1^2}-\frac{\beta_i^2}{\sigma_1^2}-\frac{2}{n}\right)\exp\left\{-\frac{x^2}{2\sigma_1^2}\right\}
\end{equation*}
The zeros of this function are at $x_1=0$, $x_2=-\sqrt{n\beta_i^2+2\sigma_1^2}$ and $x_3=\sqrt{n\beta_i^2+2\sigma_1^2}$.
It is easy to check that $x_2, x_3$ are two global minima of $f(x)$, and $x_1$ is the global maximum.

 If we let $\lambda=2\sigma_1^2$, then the integral interval of $\int_{(y+\sqrt{n}\beta_i)^2<\lambda}\left(\beta_i^2-\frac{1}{n}y^2\right)\exp{\left\{-\frac{y^2}{2\sigma_1^2}\right\}}dy$ in (\ref{7}) becomes $\left(-\sqrt{n}\beta_i-\sqrt{2\sigma_1^2}, ~-\sqrt{n}\beta_i+\sqrt{2\sigma_1^2}\right)$, in which at least one of $x_2, x_3$ is included.
\end{proof}

\begin{proof}{\bf of Proposition \ref{TLCp solution}}
\\
Let $\boldsymbol{\bar{1}}_i$ be the indicator function of whether the $i$-th feature is selected by the TLCp model (\ref{TLCp}) or not. Specifically,
\begin{eqnarray}\notag
\boldsymbol{\bar{1}}_i=
\left\{
\begin{matrix}
0
& \text{if~} \|\boldsymbol{w}_1^i\|_0=\|\boldsymbol{w}_2^i\|_0=0\\
1
&  \text{ortherwise}
\end{matrix}
\right.
\end{eqnarray}
Then, the proposed TLCp model in (\ref{TLCp}) is equivalent to
\begin{eqnarray}
\min_{\boldsymbol{v}_1,\boldsymbol{v}_2,\boldsymbol{w}_0}\sum _ { i = 1 } ^ { n }\lambda_1(y_i-\boldsymbol{w}_1^{\top}X_i)^2 + \sum _ { i = 1 } ^ { m } \lambda_2(\tilde{y_{i}}-\boldsymbol{w}_2^{\top}\tilde{X_{i}})^2 + \frac { 1 } { 2 } \sum _ { t = 1 } ^ { 2 }  \boldsymbol { v }_t^{\top}\boldsymbol{\lambda} _ { 3 }\boldsymbol{v}_t+\lambda_4\sum_{i=1}^{k}\boldsymbol{\bar{1}}_i
\end{eqnarray}
Notice that $\boldsymbol{X}^{\top}\boldsymbol{X}=nI$, $\boldsymbol{\tilde{X}}^{\top}\boldsymbol{\tilde{X}}=mI $ and $\boldsymbol{w}_1=\boldsymbol{w}_0+\boldsymbol{v}_1$, $\boldsymbol{w}_2=\boldsymbol{w}_0+\boldsymbol{v}_2$. Then, we can rewrite the objective function as follows
\begin{equation*}
 \lambda_1\boldsymbol{y}^{\top}\boldsymbol{y}+\lambda_2\boldsymbol{\tilde{y}}^{\top}\boldsymbol{\tilde{y}}+\sum_{i=1}^{k}\left\{f_i(\lambda_1,\boldsymbol{y},W_i,w_0^i,v_1^i)+g_i(\lambda_2,\boldsymbol{\tilde{y}},\tilde{W}_i,w_0^i,v_2^i)+h_i(\lambda_3^i,v_1^i,v_2^i,\lambda_4,\bar{\boldsymbol{1}}_i)\right\}
\end{equation*}
where
\begin{equation*}
f_i(\lambda_1,\boldsymbol{y},W_i,w_0^i,v_1^i)=-2\lambda_1\left(\boldsymbol{y}^{\top}W_iw_0^i+\boldsymbol{y}^{\top}W_iv_1^i-nw_0^iv_1^i\right)+\lambda_1n\left[(w_0^i)^2+(v_1^i)^2\right]
\end{equation*}
\begin{equation*}
g_i(\lambda_2,\boldsymbol{\tilde{y}},\tilde{W}_i,w_0^i,v_2^i)=-2\lambda_2\left(\boldsymbol{\tilde{y}}^{\top}\tilde{W}_iw_0^i+\boldsymbol{\tilde{y}}^{\top}\tilde{W}_iv_2^i-mw_0^iv_2^i\right)+\lambda_2m\left[(w_0^i)^2+(v_2^i)^2\right]
\end{equation*}
and
\begin{equation*}
h_i(\lambda_3^i,v_1^i,v_2^i,\lambda_4,\bar{\boldsymbol{1}}_i)=\frac{1}{2}\lambda_3^i\left[(v_1^i)^2+(v_2^i)^2\right]+\lambda_4\boldsymbol{\bar{1}}_i.
\end{equation*}
Similar to the argument used in the proof of Theorem \ref{theorem1}, and because of the independence of each summation term in the objective function above, the orthogonal TLCp is further equivalent to $k$ one-dimensional optimization problem below
\begin{equation}\label{TLCp 2}
\min_{\boldsymbol{v}_1^i,\boldsymbol{v}_2^i,\boldsymbol{w}_0^i}~\left\{f_i(\lambda_1,\boldsymbol{y},W_i,w_0^i,v_1^i)+g_i(\lambda_2,\boldsymbol{\tilde{y}},\tilde{W}_i,w_0^i,v_2^i)+h_i(\lambda_3^i,v_1^i,v_2^i,\lambda_4,\bar{\boldsymbol{1}}_i)\right\}
\end{equation}
for $i=1,\cdots,k$, in the sense that they have the same solution.

For the $i$-th problem above, if $\boldsymbol{\bar{1}}_i=1$, and making the gradient of the corresponding objective function equal to zero, we can easily obtain the estimators
with respect to the $i$-th coefficients $w_1^i$ and $w_2^i$ for the target and source domains as follows
\begin{equation}\label{15}
\bar{w}_1^{i}=\bar{w}_0^i+\bar{v}_1^i= \beta_i+D_1^i\delta_i+(1-D_1^i)\frac{1}{n}W_i^{\top}\boldsymbol { \varepsilon } +D_1^i\frac{1}{m}\tilde{W}_i^{\top}\boldsymbol { \eta },
\end{equation}
\begin{equation}\label{16}
\bar{w}_2^{i}=\bar{w}_0^i+\bar{v}_2^i= \beta_i+(1-D_2^i)\delta_i+(1-D_2^i)\frac{1}{m}\tilde{W}_i^{\top}\boldsymbol { \eta } +D_2^i\frac{1}{n}W_i^{\top}\boldsymbol { \varepsilon },
\end{equation}
and the estimators for the $i$-th individual parameters are
\begin{equation}\label{17}
\bar{v}_1^i=-D_3^i\left(\delta_i+\frac{1}{m}\tilde{W}_i^{\top}\boldsymbol { \eta }-\frac{1}{n}W_i^{\top}\boldsymbol { \varepsilon }\right),
\end{equation}
\begin{equation}\label{18}
\bar{v}_2^i=D_3^i\left(\delta_i+\frac{1}{m}\tilde{W}_i^{\top}\boldsymbol { \eta }-\frac{1}{n}W_i^{\top}\boldsymbol { \varepsilon }\right),
\end{equation}
where $D_1^i=\frac{\lambda_2\lambda_3^i}{4\lambda_1\lambda_2n+\lambda_2\lambda_3^i+\frac{n}{m}\lambda_1\lambda_3^i}$, $D_2^i=\frac{\lambda_1\lambda_3^i}{4\lambda_1\lambda_2m+\lambda_1\lambda_3^i+\frac{m}{n}\lambda_2\lambda_3^i}$ and $D_3^i=\frac{2\lambda_1\lambda_2}{4\lambda_1\lambda_2+\frac{1}{n}\lambda_2\lambda_3^i+\frac{1}{m}\lambda_1\lambda_3^i}$.

Then, substituting the relations (\ref{15}), (\ref{16}), (\ref{17}) and (\ref{18}) into the objective function in (\ref{TLCp 2}), we have
\begin{align}\label{22}
&f_i(\lambda_1,\boldsymbol{y},W_i,\bar{w}_0^i,\bar{v}_1^i)+g_i(\lambda_2,\boldsymbol{\tilde{y}},\tilde{W}_i,\bar{w}_0^i,\bar{v}_2^i)+h_i(\lambda_3^i,\bar{v}_1^i,\bar{v}_2^i,\lambda_4,1)
\\ \notag&=(\tilde{D}^i-\lambda_2m)H_i^2+(\tilde{D}^i-\lambda_1n)Z_i^2-2\tilde{D}^iZ_iH_i+\lambda_4,
\end{align}
where $\tilde{D}^i=\lambda_1n(D_1^i)^2+\lambda_2m(D_2^i)^2+\lambda_3^i(D_3^i)^2$, and $H_i=\beta_i+\delta_i+\frac{1}{m}\tilde{W}_{i}^{\top} \boldsymbol{ \eta }$, $Z_i=\beta_i+\frac{1}{n}W_{i}^{\top} { \boldsymbol{\varepsilon} } $ are two random variables that stem from the responses $\boldsymbol{y}$ and $\boldsymbol{\tilde{y}}$ for the target and source tasks respectively.

Further, we notice that $2D_3^i+D_2^i+D_1^i=1$ and rearrange the last expression to obtain
\begin{eqnarray}\notag
f_i(\lambda_1,\boldsymbol{y},W_i,\bar{w}_0^i,\bar{v}_1^i)+g_i(\lambda_2,\boldsymbol{\tilde{y}},\tilde{W}_i,\bar{w}_0^i,\bar{v}_2^i)+h_i(\lambda_3^i,\bar{v}_1^i,\bar{v}_2^i,\lambda_4,1)=\lambda_4-A_iH_i^2-B_iZ_i^2-C_iJ_i^2
\end{eqnarray}
where $A_i=\frac{4\lambda_1\lambda_2^2m^2n}{4\lambda_1\lambda_2mn+m\lambda_2\lambda_3^i+n\lambda_1\lambda_3^i}$, $B_i=\frac{4\lambda_2\lambda_1^2mn^2}{4\lambda_1\lambda_2mn+m\lambda_2\lambda_3^i+n\lambda_1\lambda_3^i}$, and $C_i=\frac{\lambda_3^{i}}{4\lambda_1\lambda_2mn+m\lambda_2\lambda_3^i+n\lambda_1\lambda_3^i}$ are functions with respect to the parameters $\lambda_1, \lambda_2, \lambda_3^i$, while $J_i=m\lambda_2H_i+n\lambda_1Z_i$.

If $\boldsymbol{\bar{1}}_i=0$ in the $i$-th optimization problem, which means the estimators for the parameters $w_0^i,v_1^i,v_2^i$ satisfying $\bar{w}_0^i=\bar{v}_1^i=\bar{v}_2^i=0$. So the corresponding objective value is
\begin{equation*}
f_i(\lambda_1,\boldsymbol{y},W_i,\bar{w}_0^i,\bar{v}_1^i)+g_i(\lambda_2,\boldsymbol{\tilde{y}},\tilde{W}_i,\bar{w}_0^i,\bar{v}_2^i)+h_i(\lambda_3^i,\bar{v}_1^i,\bar{v}_2^i,\lambda_4,0)=0
\end{equation*}
Finally, we can derive the optimal solution for the $i$-th optimization problem (\ref{TLCp 2}) by finding two estimators $\hat{w}_1^i, \hat{w}_2^i$ that can pick the smaller one between the random value $\lambda_4-A_iH_i^2-B_iZ_i^2-C_iJ_i^2$ and $0$, i.e.,
\begin{equation*}
\hat{w}_1^{i}=\left\{
\begin{matrix}
 {\beta_{i}}+D_1^{i}{\delta}_{i}+(1-D_1^{i})\frac{1}{n}W_{i}^{\top} { \boldsymbol{\varepsilon} } +D_1^{i}\frac{1}{m}\tilde{W}_{i}^{\top} \boldsymbol{ \eta }, & \text{if~} A_iH_i^2+B_iZ_i^2+C_iJ_i^2>\lambda_4\\
0,
&  \text{otherwise}
\end{matrix}
\right.
\end{equation*}
Finally, we can acquire the desired optimal solution for the orthogonal TLCp model by collating these $k$ solutions together.
\end{proof}

\begin{proof}{\bf of Theorem \ref{TLCp probability}}
\\
To prove the first equality in Theorem \ref{TLCp probability}, firstly, the equality (\ref{22}) in the proof of Proposition \ref{TLCp solution} implies that the probability of the orthogonal TLCp to identify the $i$-th feature is
\begin{equation*}
Pr\{\bar{\boldsymbol{1}}_i=1\}=Pr \left\{(\tilde{D}^i-\lambda_2m)H_i^2+(\tilde{D}^i-\lambda_1n)Z_i^2-2\tilde{D}^iZ_iH_i+\lambda_4<0\right\},
\end{equation*}
where $\tilde{D}^i=\lambda_1n(D_1^i)^2+\lambda_2m(D_2^i)^2+\lambda_3^i(D_3^i)^2$, and $H_i=\beta_i+\delta_i+\frac{1}{m}\tilde{W}_{i}^{\top} \boldsymbol{ \eta }$, $Z_i=\beta_i+\frac{1}{n}W_{i}^{\top} { \boldsymbol{\varepsilon} } $ are two random variables.

For simplicity, we let $M_i=-\tilde{D}^i+\lambda_2m$, $N_i=-\tilde{D}^i+\lambda_1n$ and $Q_i=-2\tilde{D}^i$ for $i=1,\cdots,k$. According to the definitions of $D_1^i, D_2^i, D_3^i$ in Proposition \ref{TLCp solution}, we can see that $M_i>0, ~N_i>0,~ Q_i<0$ for $i=1,\cdots,k$.
Thus, we can rewrite the probability of the orthogonal TLCp to identify the $i$-th feature as
\begin{equation*}
Pr\{\bar{\boldsymbol{1}}_i=1\}=Pr\left\{ M_iH_i^2+N_iZ_i^2-Q_iH_iZ_i>\lambda_4\right\}.
\end{equation*}
Letting $\bar{X}_i=\sqrt{M_i}H_i, \bar{Y}_i=\sqrt{N_i}Z_i$, the probability can be rewritten as
\begin{equation*}
Pr\{\bar{\boldsymbol{1}}_i=1\}=Pr\left\{\bar{X}_i^2+\bar{Y}_i^2-\frac{Q_i}{\sqrt{M_iN_i}}\bar{X}_i\bar{Y}_i>\lambda_4\right\}.
\end{equation*}
Finally, letting $\bar{X}_i=U_i-V_i$ and $\bar{Y}_i=U_i+V_i$, we obtain
\begin{equation}\label{23}
Pr\{\bar{\boldsymbol{1}}_i=1\}=P_r\left\{\left(2-\frac{Q_i}{\sqrt{M_iN_i}}\right)U_i^2+\left(2+\frac{Q_i}{\sqrt{M_iN_i}}\right)V_i^2>\lambda_4\right\},
\end{equation}
for $i=1,\cdots,k$.

Notice that $U_i=\frac{\sqrt{M_i}H_i+\sqrt{N_i}Z_i}{2}$ and $V_i=\frac{-\sqrt{M_i}H_i+\sqrt{N_i}Z_i}{2}$ for $i=1,\cdots, k$. Substituting these two equations into (\ref{23}), we obtain the first equality in Theorem \ref{TLCp probability}.

Now, we prove the second equality in this theorem.
Because $H_i\sim \mathcal { N } \left( \beta_i+\delta_i , ~\frac{\sigma_2^2}{m} \right)$, and $Z_i\sim \mathcal { N } \left( \beta_i , ~\frac{\sigma_1^2}{n} \right)$, we have:
\begin{equation*}
U_i\sim \mathcal { N } \left( \frac{1}{2}\left[\sqrt{M_i}(\beta_i+\delta_i)+\sqrt{N_i}\beta_i\right], ~\frac{1}{4}\left(\frac{M_i\sigma_2^2}{m}+\frac{N_i\sigma_1^2}{n}\right) \right),
\end{equation*}
\begin{equation*}
V_i\sim \mathcal { N } \left( \frac{1}{2}\left[-\sqrt{M_i}(\beta_i+\delta_i)+\sqrt{N_i}\beta_i\right], ~\frac{1}{4}\left(\frac{M_i\sigma_2^2}{m}+\frac{N_i\sigma_1^2}{n}\right) \right).
\end{equation*}
Moreover, we can calculate the covariance matrix between the random variables $U_i$ and $V_i$ as
\begin{eqnarray}\notag
\boldsymbol{\Sigma}_{U_i,V_i}=\left( \begin{matrix} { \frac{1}{4}\left(\frac{M_i\sigma_2^2}{m}+\frac{N_i\sigma_1^2}{n}\right) } & { \frac{1}{4}\left(-\frac{M_i\sigma_2^2}{m}+\frac{N_i\sigma_1^2}{n}\right) } \\ { \frac{1}{4}\left(-\frac{M_i\sigma_2^2}{m}+\frac{N_i\sigma_1^2}{n}\right) } & { \frac{1}{4}\left(\frac{M_i\sigma_2^2}{m}+\frac{N_i\sigma_1^2}{n}\right) } \end{matrix} \right)
\end{eqnarray}
By definition, the joint distribution for $U_i, V_i$ has the density
\begin{equation*}
p\left( U_i=x,V_i=y\right) = \frac { \exp \left( - \frac { 1 } { 2 } ( \mathbf { x } - \boldsymbol { \mu } ) ^ { \mathrm { T } } \boldsymbol{\Sigma}_{U_i,V_i}^ { - 1 } ( \mathbf { x } - \boldsymbol { \mu } ) \right) } { \sqrt { ( 2 \pi ) ^ { 2 } | \boldsymbol{\Sigma}_{U_i,V_i} | } },
\end{equation*}
where $\mathbf { x }=(x ~y)^{\top}$, $\boldsymbol{\mu}=\left( \frac{1}{2}\left[\sqrt{M_i}(\beta_i+\delta_i)+\sqrt{N_i}\beta_i\right]~ \frac{1}{2}\left[-\sqrt{M_i}(\beta_i+\delta_i)+\sqrt{N_i}\beta_i\right]\right)^{\top}$.

Plugging the covariance matrix $\boldsymbol{\Sigma}_{U_i,V_i}$ in the density function $p\left( U_i=x,V_i=y\right)$, together with (\ref{23}), results in the probability of the orthogonal TLCp to select the $i$-th feature as follows:
\begin{align}\notag
P_r\{\bar{\boldsymbol{1}}_i=1\}&=\iint_{(2-\frac{Q_i}{\sqrt{M_iN_i}})x^2+( 2+\frac{Q_i}{\sqrt{M_iN_i}}
	)y^2>\lambda_4}p\left(U_i=x,V_i=y\right)dxdy \notag\\
&=\frac{\sqrt{mn}}{\pi\sigma_1\sigma_2\sqrt{M_iN_i}}\iint_{\left(2-\frac{Q_i}{\sqrt{M_iN_i}}\right)x^2+\left( 2+\frac{Q_i}{\sqrt{M_iN_i}}
	\right)y^2>\lambda_4}\exp\bigg\{\bigg[\frac{-n\left(x+y-\sqrt{N_i}\beta_i\right)^2}{2N_i\sigma_1^2}\notag\\
&\qquad\qquad\qquad\qquad\qquad\qquad\qquad\qquad-\frac{m\left(x-y-\sqrt{M_i}(\beta_i+\delta_i)\right)^2}{2M_i\sigma_2^2}\bigg]\bigg\}dxdy,\notag
\end{align}
where $i=1,\cdots, k$.
This proves the second equality in the theorem.
\end{proof}	

\begin{proof}{\bf of Corollary \ref{TLCp scale-up}}
\\
The probability of the orthogonal TLCp procedure to miss the $i$-th feature is
\begin{align}
&P_r\{\bar{\boldsymbol{1}}_i=0\}\notag\\&=\frac{\sqrt{mn}}{\pi\sigma_1\sigma_2\sqrt{M_iN_i}}\iint_{\left(2-\frac{Q_i}{\sqrt{M_iN_i}}\right)x^2+\left( 2+\frac{Q_i}{\sqrt{M_iN_i}}
	\right)y^2<\lambda_4}\exp\bigg\{-\frac{1}{2}\bigg[\frac{n}{N_i\sigma_1^2}\left(x+y-\sqrt{N_i}\beta_i\right)^2\label{aa}\\
&\qquad\qquad\qquad\qquad\qquad\qquad\qquad\qquad\qquad\qquad+\frac{m}{M_i\sigma_2^2}\left(x-y-\sqrt{M_i}(\beta_i+\delta_i)\right)^2\bigg]\bigg\}dxdy,\notag
\end{align}
for $i=1,\cdots,k$.
Due to the non-negativity of the integral function in (\ref{aa}), we can obtain an upper bound for this integral by scaling up the integral region.
To be specific, firstly, we stretch the ellipse integral region $\left\{(x,y) \left| \left(2-\frac{Q_i}{\sqrt{M_iN_i}}\right)x^2+\left(2+\frac{Q_i}{\sqrt{M_iN_i}}
\right)y^2<\lambda_4\right.\right\}$ into a strip region$\left\{(x,y) \left| \left(2-\frac{Q_i}{\sqrt{M_iN_i}}\right)x^2<\lambda_4\right.\right\}$, thus obtaining
\begin{align}
&P_r\{\bar{\boldsymbol{1}}_i=0\}\notag\\&\leq \frac{\sqrt{mn}}{\pi\sigma_1\sigma_2\sqrt{M_iN_i}}\iint_{\left(2-\frac{Q_i}{\sqrt{M_iN_i}}\right)x^2<\lambda_4}\exp\bigg\{-\frac{1}{2}\bigg[\frac{n}{N_i\sigma_1^2}\left(x+y-\sqrt{N_i}\beta_i\right)^2\notag\\
&\qquad\qquad\qquad\qquad\qquad\qquad\qquad\qquad\qquad+\frac{m}{M_i\sigma_2^2}\left(x-y-\sqrt{M_i}(\beta_i+\delta_i)\right)^2\bigg]\bigg\}dxdy\notag\\&=\frac{\sqrt{mn}}{\pi\sigma_1\sigma_2\sqrt{M_iN_i}}\int_{\left(2-\frac{Q_i}{\sqrt{M_iN_i}}\right)x^2<\lambda_4}dx\int_{-\infty}^{+\infty}\exp\bigg\{-\frac{1}{2}\bigg[\frac{n}{N_i\sigma_1^2}\left(x+y-\sqrt{N_i}\beta_i\right)^2\notag\\
&\qquad\qquad\qquad\qquad\qquad\qquad\qquad\qquad\qquad\qquad+\frac{m}{M_i\sigma_2^2}\left(x-y-\sqrt{M_i}(\beta_i+\delta_i)\right)^2\bigg]\bigg\}dy,\label{24}
\end{align}
for $i=1,\cdots, k$.

Now, we turn attention to the second integral in the equation above. By separating the factors of the function in the second integral, we obtain
\begin{align}
&\int_{-\infty}^{+\infty}\exp\bigg\{-\frac{1}{2}\bigg[\frac{n}{N_i\sigma_1^2}\left(x+y-\sqrt{N_i}\beta_i\right)^2+\frac{m}{M_i\sigma_2^2}\left(x-y-\sqrt{M_i}(\beta_i+\delta_i)\right)^2\bigg]\bigg\}dy\notag\\&=\int_{-\infty}^{+\infty}\exp\left\{-\frac{1}{2}\left[K_1y^2+2K_2(x)y+K_3(x)\right]\right\}dy,\label{25}
\end{align}
where $K_1=\frac{n}{N_i\sigma_1^2}+\frac{m}{M_i\sigma_2^2}$, $K_2(x)=\frac{n}{N_i\sigma_1^2}\left(x-\sqrt{N_i}\beta_i\right)-\frac{m}{M_i\sigma_2^2}\left(x-\sqrt{M_i}(\beta_i+\delta_i)\right)$, and $K_3(x)=\frac{n}{N_i\sigma_1^2}\left(x-\sqrt{N_i}\beta_i\right)^2+\frac{m}{M_i\sigma_2^2}\left(x-\sqrt{M_i}(\beta_i+\delta_i)\right)^2$.
Further, we can rewrite (\ref{25}) as
\begin{align}
&\int_{-\infty}^{+\infty}\exp\left\{-\frac{1}{2}\left[K_1\left(y+\frac{K_2(x)}{K_1}\right)^2+\left(K_3(x)-\frac{K_2^2(x)}{K_1}\right)\right]\right\}dy\notag\\&=\int_{-\infty}^{+\infty}\sqrt{\frac{2\pi}{K_1}}\exp\left\{-\frac{1}{2}\left(K_3(x)-\frac{K_2^2(x)}{K_1}\right)\right\}\sqrt{\frac{K_1}{2\pi}}\exp\left\{-\frac{K_1}{2}\left(y+\frac{K_2(x)}{K_1}\right)^2\right\}dy.\label{26}
\end{align}
By the definition of probability distribution, for any fixed $x$, we have
\begin{equation}\notag
\int_{-\infty}^{+\infty}\sqrt{\frac{K_1}{2\pi}}\exp\left\{-\frac{K_1}{2}\left(y+\frac{K_2(x)}{K_1}\right)^2\right\}dy=1.
\end{equation}
Substituting (\ref{25}), (\ref{26}) into (\ref{24}), gives
\begin{align}
&P_r\{\bar{\boldsymbol{1}}_i=0\}\notag\\&\leq \sqrt{\frac{2\pi}{K_1}}\cdot\frac{\sqrt{mn}}{\pi\sigma_1\sigma_2\sqrt{M_iN_i}}\int_{\left(2-\frac{Q_i}{\sqrt{M_iN_i}}\right)x^2<\lambda_4}\exp\left\{-\frac{1}{2}\left(K_3(x)-\frac{K_2^2(x)}{K_1}\right)\right\}dx\notag\\&=\frac{\sqrt{2}}{\sqrt{\pi}}G_i\int_{\left(2-\frac{Q_i}{\sqrt{M_iN_i}}\right)x^2<\lambda_4}\exp\left\{-\frac{G_i^2}{2}\left[2x-(\sqrt{M_i}+\sqrt{N_i})\beta_i-\sqrt{M_i}\delta_i\right]^2\right\}dx\label{28},
\end{align}
where $G_i=\sqrt{\frac{mn}{nM_i\sigma_2^2+mN_i\sigma_1^2}}$ for $i=1,\cdots, k$.

Finally, letting $y=\sigma_1G_ix$ and using variable substitution in (\ref{28}), we obtain
\begin{align}
&P_r\{\bar{\boldsymbol{1}}_i=0\}\notag\\&\leq\frac{\sqrt{2}}{\sqrt{\pi}\sigma_1}\int_{4y^2<\frac{4\lambda_4\sigma_1^2G_i^2}{2-\frac{Q_i}{\sqrt{M_iN_i}}}}  ~~\exp\left\{-\frac{1}{2\sigma_1^2}\left[2y-G_i(\sqrt{M_i}+\sqrt{N_i})\sigma_1\beta_i-G_i\sqrt{M_i}\sigma_1\delta_i\right]^2\right\}dy,\notag
\end{align}
for $i=1,\cdots, k$, thus proving this corollary.
\end{proof}

\begin{proof}{\bf of Theorem \ref{tune parameters}}
\\
For any fixed feature, for example,  the $\ell$-th one in the model, by Corollary \ref{TLCp scale-up} we have
\begin{align}
&P_r\{\bar{\boldsymbol{1}}_{\ell}=0\}\notag\\&\leq
\frac{\sqrt{2}}{\sqrt{\pi}\sigma_1}\int_{4y^2<\frac{4\lambda_4\sigma_1^2G_{\ell}^2}{2-\frac{Q_\ell}{\sqrt{M_{\ell}N_\ell}}}}  ~~\exp\left\{-\frac{1}{2\sigma_1^2}\left[2y-G_{\ell}(\sqrt{M_\ell}+\sqrt{N_\ell})\sigma_1\beta_\ell-G_\ell\sqrt{M_{\ell}}\sigma_1\delta_\ell\right]^2\right\}dy\notag\\&=\frac{\sqrt{2}}{2\sqrt{\pi}\sigma_1}\int_{\left(z_1+G_{\ell}(\sqrt{M_\ell}+\sqrt{N_\ell})\sigma_1\beta_\ell+G_\ell\sqrt{M_\ell}\sigma_1\delta_{\ell}\right)^2<\frac{4\lambda_4\sigma_1^2G_{\ell}^2}{2-\frac{Q_\ell}{\sqrt{M_{\ell}N_\ell}}} }  \exp\left\{-\frac{z_1^2}{2\sigma_1^2}\right\}dz_1,\label{31}
\end{align}
where the second equality is obtained by substituting  $2y-G_{\ell}(\sqrt{M_\ell}+\sqrt{N_\ell})\sigma_1\beta_{\ell}-G_{\ell}\sqrt{M_\ell}\sigma_1\delta_{\ell}=z_1$ into the the first inequality.

On the other hand, by Remark \ref{remark 1}, we have
\begin{equation}\label{32}
P_r\{\boldsymbol{1}_\ell=0\}=\frac{\sqrt{2}}{\sqrt{\pi}\sigma_1}\int_{4y^2<\lambda}\exp\left\{-{\frac{1}{2\sigma_1^2}}\left(2y-\sqrt{n}\beta_\ell\right)^2\right\}dy.
\end{equation}
Then, if we let $z_2=2y-\sqrt{n}\beta_\ell$ and substitute it into (\ref{32}) above, we obtain
\begin{equation}\label{33}
P_r\{\boldsymbol{1}_\ell=0\}=\frac{\sqrt{2}}{2\sqrt{\pi}\sigma_1}\int_{(z_2+\sqrt{n}\beta_\ell)^2<\lambda}\exp\left\{-\frac{z_2^2}{2\sigma_1^2}\right\}dz_2
\end{equation}
Comparing (\ref{31}) and (\ref{33}), we can see that, when parameters $\lambda_1,\lambda_2, \lambda_3^{\ell}, \lambda_4$ satisfy the conditions  $|\sqrt{n}\beta_\ell|<|G_\ell\sigma_1|\cdot\left|(\sqrt{M_\ell}+\sqrt{N_\ell})\beta_\ell+\sqrt{M_\ell}\delta_\ell\right|$ and  $\left(4\lambda_4\sigma_1^2G_{\ell}^2\right)/\left(2-\frac{Q_\ell}{\sqrt{M_{\ell}N_\ell}}\right)\leq\lambda$ or when we choose $\lambda_4=\min_{i\in\{1,\cdots,k\}}\left\{\lambda\left(2-\frac{Q_i}{\sqrt{M_iN_i}}\right)/
4\sigma_1^2(G_i)^2\right\}$, where $\lambda$ is a parameter in Cp criterion(\ref{the modified Cp}), then together with the definition of probability for normal distribution, we obtain  $P_r\{\bar{\boldsymbol{1}}_\ell=0\}<P_r\{\boldsymbol{1}_\ell=0\}$.
In other words, $ P_r\{\bar{\boldsymbol{1}}_\ell=1\}>P_r\{\boldsymbol{1}_\ell=1\}$.
\end{proof}

\begin{proof}{\bf of Theorem \ref{MSE TLCp}}
\\
As before, we define
\begin{eqnarray}\notag
\boldsymbol{\bar{1}}_i=
\left\{
\begin{matrix}
0
& \text{if~} \|\boldsymbol{w}_1^i\|_0=\|\boldsymbol{w}_2^i\|_0=0\\
1
&  \text{ortherwise}
\end{matrix}
\right.
\end{eqnarray}
By the definition of MSE for the estimator $\hat{\boldsymbol{w}_1}$ from the orthogonal TLCp, we have
\begin{eqnarray}\notag
\text{MSE}(\hat{\boldsymbol{w}}_1)&=&\mathbb{E}(\boldsymbol{\beta}-\hat{\boldsymbol{w}}_1)^{\top}(\boldsymbol{\beta}-\hat{\boldsymbol{w}}_1)\notag\\&=&\mathbb{E}\left\{\sum_{i=1}^{k}\left[\beta_i-\bar{\boldsymbol{1}}_i\left({\beta_{i}}+D_1^{i}{\delta}_{i}+(1-D_1^{i})\frac{1}{n}W_{i}^{\top} { \boldsymbol{\varepsilon} } +D_1^{i}\frac{1}{m}\tilde{W}_{i}^{\top} \boldsymbol{ \eta }\right)\right]^2\right\}\notag \\ &=&\sum_{i=1}^{k}\mathbb{E}\left[\beta_i-\bar{\boldsymbol{1}}_i\left({\beta_{i}}+D_1^{i}{\delta}_{i}+(1-D_1^{i})\frac{1}{n}W_{i}^{\top} { \boldsymbol{\varepsilon} } +D_1^{i}\frac{1}{m}\tilde{W}_{i}^{\top} \boldsymbol{ \eta }\right)\right]^2\notag\\&=&\sum_{i=1}^{k}\left\{P_r\{\bar{\boldsymbol{1}}_i=1\}\mathbb{E}\left[\left.R_i^2\right|\bar{\boldsymbol{1}}_i=1\right]+P_r\{\bar{\boldsymbol{1}}_i=0\}\beta_i^2\right\},\label{35}
\end{eqnarray}
where $R_i=-D_1^i\delta_i-(1-D_1^i)\frac{W_i^{\top} \boldsymbol{\varepsilon} }{n}-D_1^i\frac{\tilde{W}_i^{\top}\boldsymbol{\eta}}{m}$, for $i=1,\cdots, k$. The second equality is obtained by substituting  $\hat{\boldsymbol{w}}_1$ in Proposition \ref{TLCp solution}, while the last equality is due to the law of total expectation.

In order to facilitate the calculation with respect to $\mathbb{E}\left[\left.R_i^2\right|\bar{\boldsymbol{1}}_i=1\right]$ in (\ref{35}), we rewrite $R_i$ as $R_i=\bar{M}_iU_i+\bar{N}_iV_i+\beta_i$, where $\bar{M}_i=\frac{\sqrt{M_i}-\sqrt{N_i}}{\sqrt{M_iN_i}}D_1^i-\frac{1}{\sqrt{N_i}}$, $\bar{N}_i=\frac{\sqrt{M_i}+\sqrt{N_i}}{\sqrt{M_iN_i}}D_1^i-\frac{1}{\sqrt{N_i}}$ are determined by $M_i, N_i, D_1^i$ as introduced in the proof of Theorem \ref{TLCp probability}, and $U_i=\frac{\sqrt{M_i}H_i+\sqrt{N_i}Z_i}{2}$, $V_i=\frac{-\sqrt{M_i}H_i+\sqrt{N_i}Z_i}{2}$ are two random variables related to $H_i, Z_i$ in Theorem \ref{TLCp probability}, for $i=1,\cdots, k$.

By the definition of conditional expectation, we have
\begin{align}
&\mathbb{E}\left[\left.R_i^2\right|\bar{\boldsymbol{1}}_i=1\right]\notag\\&
=\mathbb{E}\left[\left.(\bar{M}_iU_i+\bar{N}_iV_i+\beta_i)^2\right|\bar{\boldsymbol{1}}_i=1\right]\notag\\
&=\frac{1}{P_r\{\bar{\boldsymbol{1}}_i=1\}}\iint_{\mathbb{R}^2}(\bar{M}_ix+\bar{N}_iy+\beta_i)^2p(U_i=x,V_i=y, \bar{\boldsymbol{1}}_i=1)~dxdy\notag\\
&=\frac{1}{P_r\{\bar{\boldsymbol{1}}_i=1\}}\iint_{\left(2-\frac{Q_i}{\sqrt{M_iN_i}}\right)x^2+\left( 2+\frac{Q_i}{\sqrt{M_iN_i}}
	\right)y^2>\lambda_4}(\bar{M}_ix+\bar{N}_iy+\beta_i)^2p(U_i=x,V_i=y)~dxdy\notag\\
&=\mathbb{E}[(\bar{M}_i{U_i}+\bar{N}_i{V_i}+\beta_i)^2]-\notag\\
&\qquad\frac{1}{P_r\{\bar{\boldsymbol{1}}_i=1\}}\iint_{\left(2-\frac{Q_i}{\sqrt{M_iN_i}}\right)x^2+\left( 2+\frac{Q_i}{\sqrt{M_iN_i}}
	\right)y^2<\lambda_4}(\bar{M}_ix+\bar{N}_iy+\beta_i)^2p(U_i=x,V_i=y)~dxdy\label{36}
\end{align}
where $p(U_i=x,V_i=y)$ is the joint density distribution for $U_i, V_i$, which has been derived in Theorem \ref{TLCp probability}, for $i=1,\cdots, k$.

Also, by the proof of Theorem \ref{TLCp probability},
\begin{equation}\label{37}
P_r\{\bar{\boldsymbol{1}}_i=0\}=\iint_{\left(2-\frac{Q_i}{\sqrt{M_iN_i}}\right)x^2+\left( 2+\frac{Q_i}{\sqrt{M_iN_i}}
	\right)y^2<\lambda_4}p(U_i=x,V_i=y)~dxdy.
\end{equation}
Substituting (\ref{36}) and (\ref{37}) into (\ref{35}), we obtain the desired result below
\begin{align}
&MSE(\hat{\boldsymbol{w}}_1)\notag\\
&=\sum_{i=1}^{k}\bigg\{\mathbb{E}[(\bar{M}_i{U_i}+\bar{N}_i{V_i}+\beta_i)^2]+\notag\\
&\qquad \qquad \iint_{\left(2-\frac{Q_i}{\sqrt{M_iN_i}}\right)x^2+\left( 2+\frac{Q_i}{\sqrt{M_iN_i}}\right)y^2<\lambda_4} \left[\beta_i^2-(\bar{M}_ix+\bar{N}_iy+\beta_i)\right]p\left(U_i=x,V_i=y\right)dxdy \bigg\}\notag.
\end{align}
\end{proof}	

\begin{proof}{\bf of Proposition \ref{explicit rule}}
\\
Firstly, we notice that $\bar{M}_i=\frac{\sqrt{M_i}-\sqrt{N_i}}{\sqrt{M_iN_i}}D_1^i-\frac{1}{\sqrt{N_i}}$, $\bar{N}_i=\frac{\sqrt{M_i}+\sqrt{N_i}}{\sqrt{M_iN_i}}D_1^i-\frac{1}{\sqrt{N_i}}$, where $D_1^i=\frac{\lambda_2\lambda_3^i}{4\lambda_1\lambda_2n+\lambda_2\lambda_3^i+\frac{n}{m}\lambda_1\lambda_3^i}$, for $i=1, \cdots, k$.

Also, by the proof of Theorem \ref{TLCp probability}, we can see that
\begin{equation*}
U_i\sim \mathcal { N } \left( \frac{1}{2}\left[\sqrt{M_i}(\beta_i+\delta_i)+\sqrt{N_i}\beta_i\right], ~\frac{1}{4}\left(\frac{M_i\sigma_2^2}{m}+\frac{N_i\sigma_1^2}{n}\right) \right),
\end{equation*}
\begin{equation*}
V_i\sim \mathcal { N } \left( \frac{1}{2}\left[-\sqrt{M_i}(\beta_i+\delta_i)+\sqrt{N_i}\beta_i\right], ~\frac{1}{4}\left(\frac{M_i\sigma_2^2}{m}+\frac{N_i\sigma_1^2}{n}\right) \right),
\end{equation*}
for $i=1, \cdots, k$.

Then, by expanding the term $\mathbb{E}(\bar{M}U_i+\bar{N}V_i+\beta_i)^2$ and merging the similar items together, we have
\begin{align}
&\mathbb{E}(\bar{M}U_i+\bar{N}V_i+\beta_i)^2\notag\\
&=\frac{N_i\sigma_1^2}{4n}(\bar{M}_i+\bar{N}_i)^2+\frac{M_i\sigma_2^2}{4m}(\bar{M}_i-\bar{N}_i)^2+\frac{H^2(\bar{M}_i,\bar{N}_i,\delta_i, \beta_i)}{4}+\beta_i^2+\beta_iH(\bar{M}_i,\bar{N}_i,\delta_i, \beta_i),\label{38}
\end{align}
where $H(\bar{M}_i,\bar{N}_i,\delta_i, \beta_i)=[\sqrt{M_i}(\beta_i+\delta_i)+\sqrt{N_i}\beta_i]\bar{M}_i+[-\sqrt{M_i}(\beta_i+\delta_i)+\sqrt{N_i}\beta_i]\bar{N}_i$, for $i=1, \cdots, k$.
	
Further, we notice that $\bar{M}_i+\bar{N}_i=\frac{2}{\sqrt{N_i}}D_1^i-\frac{2}{\sqrt{N}_i}$, $\bar{M}_i-\bar{N}_i=-\frac{2}{\sqrt{M_i}}D_1^i$, and substitute these two equations into (\ref{38}), thus obtaining
\begin{equation}
\mathbb{E}(\bar{M}U_i+\bar{N}V_i+\beta_i)^2
={D_1^i}^2\left(\delta_i^2+\frac{\sigma_1^2}{n}+\frac{\sigma_2^2}{m}\right)-2D_1^i\frac{\sigma_1^2}{n}+\frac{\sigma_1^2}{n},\label{39}
\end{equation}
for $i=1,\cdots,k$.	

For each $i\in\{1,\cdots, k\}$, the right-hand side of equation in (\ref{39}) is a standard quadratic form with respect to $D_1^i$, which means that the point ${D_1^i}^*=\frac{\frac{\sigma_1^2}{n}}{\delta_i^2+\frac{\sigma_1^2}{n}+\frac{\sigma_2^2}{m}}$  minimizes (\ref{39}).

In other words, if the tuning parameters $\lambda_1, \lambda_2, \lambda_3^i$ satisfy the condition
\begin{equation*}
{D_1^i}^*(\lambda_1,\lambda_2,\lambda_3^i)=\frac{\lambda_2\lambda_3^i}{4\lambda_1\lambda_2n+\lambda_2\lambda_3^i+\frac{n}{m}\lambda_1\lambda_3^i}=\frac{\frac{\sigma_1^2}{n}}{\delta_i^2+\frac{\sigma_1^2}{n}+\frac{\sigma_2^2}{m}},
\end{equation*}
which implies that $\lambda_1^*=\sigma_2^2, \lambda_2^*=\sigma_1^2, {\lambda_3^i}^*=\frac{4\sigma_1^2\sigma_2^2}{\delta_i^2}$,
we can then get the minimum value of $F(D_1^i):=\mathbb{E}(\bar{M}U_i+\bar{N}V_i+\beta_i)^2$ as follows
\begin{equation*}
F({D_1^i}^*)=\frac{\sigma_1^2}{n}-\frac{(\frac{\sigma_1^2}{n})^2}{\delta_i^2+\frac{\sigma_1^2}{n}+\frac{\sigma_2^2}{m}}>0
\end{equation*}
for $i=1,\cdots,k$.
\end{proof}

\begin{proof}{\bf of Corollary \ref{corollary 13}}
\\
Consider any relevant feature, say the $t$-th one, whose corresponding true regression coefficient is $\beta_t\neq0$. Let the tuning parameters $\lambda_1, \lambda_2, \lambda_3^t$ in the orthogonal TLCp be chosen to satisfy $\lambda_1^*=\sigma_2^2, \lambda_2^*=\sigma_1^2, {\lambda_3^t}^*=\frac{4\sigma_1^2\sigma_2^2}{\delta_t^2}$. Then,
we have
\begin{equation*}
(\tilde{D}^t)^*=\frac{\sigma_1^2\sigma_2^2}{\delta_t^2+\frac{\sigma_1^2}{n}+\frac{\sigma_2^2}{m}},
\end{equation*}
\begin{equation*}
M_t^*=\sigma_1^2m\frac{\delta_t^2+\frac{\sigma_1^2}{n}}{\delta_t^2+\frac{\sigma_1^2}{n}+\frac{\sigma_2^2}{m}},
\end{equation*}
\begin{equation*}
N_t^*=\sigma_2^2n\frac{\delta_t^2+\frac{\sigma_2^2}{m}}{\delta_t^2+\frac{\sigma_1^2}{n}+\frac{\sigma_2^2}{m}},
\end{equation*}
where $(\tilde{D}^t)^{*},M_t^*,N_t^*$ are obtained by substituting
 $\lambda_1^*,\lambda_2^*, {\lambda_3^t}^*$ into the expressions of $\tilde{D}^t, M_t, N_t$.
Therefore, we have
\begin{equation*}
\sqrt{M_t^*}+\sqrt{N_t^*}=\frac{\sigma_1\sqrt{m\delta_t^2+\frac{m}{n}\sigma_1^2}+\sigma_2\sqrt{n\delta_t^2+\frac{n}{m}\sigma_2^2}}{\sqrt{\delta_t^2+\frac{\sigma_1^2}{n}+\frac{\sigma_2^2}{m}}}
\end{equation*}
If we let $\delta_t=0$, then
\begin{equation}\label{40}
\sqrt{M_t^*}+\sqrt{N_t^*}=\sqrt{m\sigma_1^2+n\sigma_2^2},
\end{equation}
and
\begin{equation}\label{41}
 G_t^*=\sqrt{\frac{mn}{nM_t^*\sigma_2^2+mN_t^*\sigma_1^2}}=\frac{1}{\sigma_1\sigma_2}.
\end{equation}

In other words,
\begin{equation*}
F_1(\delta_t):=|\sqrt{n}\beta_{t}|-|G_{t}^*\sigma_1|\cdot\left|(\sqrt{M_{t}^*}+\sqrt{N_{t}^*})\beta_{t}+\sqrt{M_{t}^*}\delta_{t}\right|,
 \end{equation*}
where $M_{t}^*,N_{t}^*,G_t^*$ are all functions with respect to $\delta_t$, so by (\ref{40}) and (\ref{41}), we can see that $F_1(0)<0$. Further, due to the continuity of $F_1(\cdot)$, we can find a constant $\kappa(\sigma_1,\sigma_2, m,n)>0$, such that $F_1(\delta_t)<0$ holds for any $|\delta_t|\leq\kappa$.

So, by now, we have proved that, if we tune the parameters as $\lambda_1^*=\sigma_2^2, \lambda_2^*=\sigma_1^2, {\lambda_3^t}^*=\frac{4\sigma_1^2\sigma_2^2}{\delta_t^2}$, then for any relevant feature (whose corresponding coefficient is nonzero), there exists an constant $\kappa>0$, such that the formula (\ref{13}) in Theorem \ref{tune parameters} holds for any $|\delta_t|\leq\kappa$.

 Finally, since $\lambda_4^*=\min_{i\in\{1,\cdots,k\}}\left\{\lambda\left(2-\frac{Q_i^*}{\sqrt{M_i^*N_i^*}}\right)/
 4\sigma_1^2(G_i^*)^2\right\}$, where $Q_t^*=-2(\tilde{D}^t)^{*}$, satisfies (\ref{14}), by Theorem \ref{tune parameters}, we can conclude the desired results.
\end{proof}

\begin{proof}{\bf of Proposition \ref{proposition14}}
\\
As illustrated in Corollary \ref{TLCp scale-up}, the probability for the orthogonal TLCp to pick the $\gamma$-th feature if $\delta_\gamma=0$ is
\begin{align}
&Pr^{TLCp}\{\gamma\}=\frac{\sqrt{2}}{\sqrt{\pi}\sigma_1}\int_{4y^2>\frac{4\lambda_4\sigma_1^2G_\gamma^2}{2-\frac{Q_\gamma}{\sqrt{M_{\gamma}N_{\gamma}}}}}  ~~\exp\left\{-\frac{1}{2\sigma_1^2}\left[2y-G_\gamma(\sqrt{M_\gamma}+\sqrt{N_\gamma})\sigma_1\beta_\gamma\right]^2\right\}dy,\label{42}
\end{align}
where $M_\gamma, N_\gamma, Q_\gamma, G_\gamma$ are defined as previously.

We tune the model parameters of the orthogonal TLCp as $\lambda_1^*=\sigma_2^2, \lambda_2^*=\sigma_1^2, {\lambda_3^{\gamma}}^*=\infty$, and $\lambda_4^*=2\sigma_1^2\sigma_2^2$, where we assume $\lambda=2\sigma_1^2$. Also, we let $\sigma_1=\sigma_2=\sigma$. Then, we can rewrite the equality above as,
\begin{align}
&Pr^{TLCp}\{\gamma\}=\frac{\sqrt{2}}{\sqrt{\pi}\sigma_1}\int_{4y^2>2\sigma^2}  ~~\exp\left\{-\frac{1}{2\sigma^2}\left[2y-\sqrt{n+m}\beta_\gamma\right]^2\right\}dy,\notag
\end{align}
which is obtained by substituting (\ref{40}) and (\ref{41}) in the proof of Corollary \ref{corollary 13} into (\ref{42}).

By variable replacement, we obtain
\begin{align}
&Pr^{TLCp}\{\gamma\}=
\frac{\sqrt{n+m}}{\sqrt{2\pi}\sigma}\int_{(n+m)x^2>2\sigma^2}\exp\left\{-\frac{1}{2}\frac{\left(x-\beta_{\gamma}\right)^2}{\frac{\sigma^2}{n+m}}\right\}dx\notag \\
&=1-\int_{\frac{-\sqrt{n+m}\beta_\gamma-\sqrt{2\sigma^2}}{\sigma}}^{\frac{-\sqrt{n+m}\beta_\gamma+\sqrt{2\sigma^2}}{\sigma}}\frac{1}{\sqrt{2\pi}}\exp\left\{-\frac{z^2}{2}\right\}dz,\label{43}
\end{align}	
where the first equality is obtained by letting $y=\frac{\sqrt{n+m}x}{2}$, and the second one is obtained by substituting $z=\frac{\left(x-\beta_{\gamma}\right)}{\frac{\sigma}{\sqrt{n+m}}}$
into the first equality, together with the definition of probability.

Finally, to obtain the probability of the orthogonal TLCp to select $\gamma$-th feature whose regression coefficient satisfies $\beta_{\gamma}^2=\frac{2\sigma_1^2}{n}$, we can substitute the value of $\beta_{\gamma}^2$ into (\ref{43}) and perform similar derivations as above.
\end{proof}

\begin{proof}{\bf of Lemma \ref{lemma17}}
\\
If we set the parameters of orthogonal TLCp as $\lambda_1^*=\sigma_2^2, \lambda_2^*=\sigma_1^2, {\lambda_3^i}^*=\frac{4\sigma_1^2\sigma_2^2}{\delta_i^2}$, then the second term in the summation  of (\ref{MSE expression}) can be expressed as
	\begin{align}
	&\tilde{F}_i(\delta_i):=\notag\\&\iint_{\left(2-\frac{Q_i^*}{\sqrt{M_i^*N_i^*}}\right)x^2+\left(2+\frac{Q_i^*}{\sqrt{M_i^*N_i^*}}\right)y^2<\lambda_4}\left[\beta_i^2-(\bar{M}_i^*x+\bar{N}_i^*y+\beta_i)^2\right]p\left(U_i^*=x,V_i^*=y\right)dxdy\notag,
	\end{align}
due to the reason that $Q_i^*, M_i*, N_i^*, \bar{M}_i^*, U_i^*, V_i^*$ vary with $\delta_i$, for $i=1,\cdots,k$. Above, $Q_i^*, M_i^*, N_i^*, \bar{M}_i^*, U_i^*$, and $V_i^*$ represent the values of $Q_i, M_i, N_i, \bar{M}_i, U_i$, and $V_i$ after substituting $\lambda_1^*=\sigma_2^2, \lambda_2^*=\sigma_1^2, {\lambda_3^i}^*=\frac{4\sigma_1^2\sigma_2^2}{\delta_i^2}$ into the defining equations for $i=1,\cdots,k$.
	
To get an upper bound for $\tilde{F}_i(0)$, we begin to simplify the expression of $\tilde{F}(\delta_i)$, when $\delta_i=0$, for $i=1,\cdots,k$.
	
For $\delta_i=0$, we have $2+\frac{Q_i^*}{\sqrt{M_i^*N_i^*}}=0$, for $i=1,\cdots,k$. Then 
	\begin{equation*}
	\tilde{F}_i(0)=\iint_{\left(2-\frac{Q_i^*}{\sqrt{M_i^*N_i^*}}\right)x^2<\lambda_4}\left[\beta_i^2-(\bar{M}_i^*x+\bar{N}_i^*y+\beta_i)^2\right]p\left(U_i^*=x,V_i^*=y\right)dxdy \notag\\
	\end{equation*}
By the proof of Corollary \ref{TLCp scale-up}(see (\ref{28})), when $\delta_i=0$, we have 
	\begin{equation}\label{44}
	p(U_i^*=x)=\frac{\sqrt{2}G_i^*}{\sqrt{\pi}}\exp\left\{-\frac{({G_i}^*)^2}{2}\left[2x-(\sqrt{M_i^*}+\sqrt{N_i^*})\beta_i\right]^2\right\},
	\end{equation}
	where $G_i^*=\sqrt{\frac{mn}{nM_i^*\sigma_2^2+mN_i^*\sigma_1^2}}$, for $i=1,\cdots, k$.
	Further, we notice that $\bar{N}_i^*=0$, if $\delta_i=0$ holds for $i=1,\cdots, k$.
Therefore, by substituting (\ref{44}) into the expression for $\tilde{F}_i(0)$, we obtain
	\begin{align}\label{45}
	&\tilde{F}_i(0)=\notag\\&\frac{\sqrt{2}G_i^*}{\sqrt{\pi}}\int_{\left(2-\frac{Q_i^*}{\sqrt{M_i^*N_i^*}}\right)x^2<\lambda_4}\left[\beta_i^2-(\bar{M}_i^*x+\beta_i)^2\right]\exp\left\{-\frac{({G_i}^*)^2}{2}\left[2x-(\sqrt{M_i^*}+\sqrt{N_i^*})\beta_i\right]^2\right\}dx,
	\end{align}
for $i=1,\cdots, k$.
	
Notice that $\delta_i=0$ for $i=1,\cdots,k$, implies that the learning tasks in the target and source domains are equivalent, except possibly for the noise in the datasets. Thus, we can expect similar structure in the second term in the summation of the expression for the MSE of orthogonal TLCp estimator and that of the orthogonal Cp estimator. In view of this, we are going to use variable replacement to make the expression for $\tilde{F}_i(0)$ similar to that for $\tilde{H}_i:=\frac{1}{\sqrt{2\pi}\sigma_1}\int_{(y+\sqrt{n}\beta_i)^2<\lambda}\left(\beta_i^2-\frac{1}{n}y^2\right)\exp{\left\{-\frac{y^2}{2\sigma_1^2}\right\}}dy$, for $i=1,\cdots, k$.

Specifically, we substitute $y=\sigma_1G_i^*x$ into (\ref{45}) and obtain
	\begin{align}
	&\tilde{F}_i(0)=\notag\\&\frac{\sqrt{2}}{\sqrt{\pi}\sigma_1}\int_{y^2<\frac{\lambda_4\sigma_1^2(G_i^*)^2}{\left(2-\frac{Q_i^*}{\sqrt{M_i^*N_i^*}}\right)}}\left[\beta_i^2-(\frac{\bar{M}_i^*y}{\sigma_1G_i^*}+\beta_i)^2\right]\exp{\left\{-\frac{1}{2\sigma_1^2}\left[2y-G_i^*(\sqrt{M_i^*}+\sqrt{N_i^*})\sigma_1\beta_i\right]^2\right\}}dy,\notag
	\end{align}
for $i=1,\cdots, k$.
Letting $z=2y-G_i^*(\sqrt{M_i^*}+\sqrt{N_i^*})\sigma_1\beta_i$, we can further rewrite $\tilde{F}_i(0)$ as
	\begin{equation*}
	\tilde{F}_i(0)=\frac{\sqrt{2}}{2\sqrt{\pi}\sigma_1}\int_{(z+G_i^*(\sqrt{M_i^*}+\sqrt{N_i^*})\sigma_1\beta_i)^2<\frac{4\lambda_4\sigma_1^2(G_i^*)^2}{2-\frac{Q_i^*}{\sqrt{M_i^*N_i^*}}}}\tilde{g}(z,\delta_i,\sigma_1,\sigma_2,m,n)\exp{\left\{-\frac{z^2}{2\sigma_1^2}\right\}}dz,
	\end{equation*}
where $\tilde{g}(z,\delta_i,\sigma_1,\sigma_2,m,n)=\beta_i^2-\left[\frac{(z+G_i^*(\sqrt{M_i^*}+\sqrt{N_i^*})\sigma_1\beta_i)\bar{M}_i^*}{2\sigma_1G_i^*}+\beta_i\right]^2$, for $i=1,\cdots, k$.
	
When $\delta_i=0$, it is easy to check that $\frac{\bar{M}_i^*(\sqrt{M_i^*+N_i^*})}{2}=-1$, and also  $\frac{\bar{M}_i^*}{4\sigma_1^2(G_i^*)^2}=\frac{\sigma_2^2}{m\sigma_1^2+n\sigma_2^2}$, for $i=1,\cdots, k$. Thus, we have  $\tilde{g}(z,0,\sigma_1,\sigma_2,m,n)=\beta_i^2-\frac{\sigma_2^2}{m\sigma_1^2+n\sigma_2^2}z^2$, for $i=1,\cdots, k$. Additionally, we can verify that $G_i^*(\sqrt{M_i^*}+\sqrt{N_i^*})\sigma_1=\frac{\sqrt{m\sigma_1^2+n\sigma_2^2}}{\sigma_2}$, $i=1,\cdots, k$. Therefore, we can finally simplify $\tilde{F}_i(0)$ as follows
	\begin{equation}\label{46}
	\tilde{F}_i(0)=\frac{1}{\sqrt{2\pi}\sigma_1}\int_{\left(z+\frac{\sqrt{m\sigma_1^2+n\sigma_2^2}}{\sigma_2}\beta_i\right)^2<\lambda}\left[\beta_i^2-\frac{\sigma_2^2}{m\sigma_1^2+n\sigma_2^2}z^2\right]\exp{\left\{-\frac{z^2}{2\sigma_1^2}\right\}}dz,
	\end{equation}
in which we set $\lambda_4^*=\min_{i\in\{1,\cdots,k\}}\left\{\lambda\left(2-\frac{Q_i^*}{\sqrt{M_i^*N_i^*}}\right)/4\sigma_1^2(G_i^*)^2\right\}$. We can see that (\ref{46}) has a structure similar to $\tilde{H}_i$, for $i=1,\cdots, k$.
	
Let us now define $\tilde{f}(z):=\left(\beta_i^2-\frac{z^2}{\tilde{M}}\right)\exp{\left\{-\frac{z^2}{2\sigma_1^2}\right\}}$, where $\tilde{M}=\frac{m\sigma_1^2+n\sigma_2^2}{\sigma_2^2}$. Then, the derivative of $\tilde{f}(z)$ is
	\begin{equation*}
	\frac{\tilde{f}(z)}{dz}=\left[-\frac{2}{\tilde{M}}-\frac{\beta_i^2}{\sigma_1^2}z+\frac{1}{\tilde{M}}z^2\right]\exp{\left\{-\frac{z^2}{2\sigma_1^2}\right\}}.
	\end{equation*}
If $\beta_i^2\geq \frac{2\sigma_1^2}{n}$, we can easily get the maximizer  of $\tilde{f}(z)$ as $-\sqrt{\tilde{M}}\beta_i+\text{sgn}(\beta_i)\sqrt{2\sigma_1^2}$ within the integral interval $(-\sqrt{\tilde{M}}\beta_i-\sqrt{2\sigma_1^2}, -\sqrt{\tilde{M}}\beta_i+\sqrt{2\sigma_1^2})$, where $\text{sgn}(\cdot)$ indicates the sign function which is defined as follows,
\begin{eqnarray}
\operatorname { sgn } ( x ) : = \left\{ \begin{array} { l l } { - 1 } & { \text { if } x < 0 } \\ { 0 } & { \text { if } x = 0 } \\ { 1 } & { \text { if } x > 0 } \end{array} \right.
\end{eqnarray}
Thus, the maximum value of $\tilde{f}(z)$ is $\left[\frac{2|\beta_i|\sqrt{2\sigma_1^2}}{\sqrt{\tilde{M}}}-\frac{2\sigma_1^2}{\tilde{M}}\right]\exp\left\{-\frac{\left(\sqrt{\tilde{M}}|\beta_i|-\sqrt{2\sigma_1^2}\right)^2}{2\sigma_1^2}\right\}$.

Finally, if we assume $\lambda=2\sigma_1^2$, we can obtain the following upper bound on $\tilde{F}_i(0)$:
\begin{eqnarray}
\tilde{F}_i(0)&\leq& \frac{1}{\sqrt{2\pi}\sigma_1}\int_{\left(z+\sqrt{\tilde{M}}\beta_i\right)^2<\lambda}\left[\frac{2|\beta_i|\sqrt{2\sigma_1^2}}{\sqrt{\tilde{M}}}-\frac{2\sigma_1^2}{\tilde{M}}\right]\exp\left\{-\frac{\left(\sqrt{\tilde{M}}|\beta_i|-\sqrt{2\sigma_1^2}\right)^2}{2\sigma_1^2}\right\}dz\notag\\
&=&\frac{2}{\sqrt{\pi}}\left[\frac{2|\beta_i|\sqrt{2\sigma_1^2}}{\sqrt{\tilde{M}}}-\frac{2\sigma_1^2}{\tilde{M}}\right]\exp\left\{-\frac{\left(\sqrt{\tilde{M}}|\beta_i|-\sqrt{2\sigma_1^2}\right)^2}{2\sigma_1^2}\right\}\notag.
\end{eqnarray}
\end{proof}


\begin{proof}{\bf of Theorem \ref{theorem18}}

In the first part of this theorem, we want to verify that, if 
$\beta_i^2\geq \frac{2\sigma_1^2}{n}\left[1+\sqrt{-\ln\left(\frac{\sqrt{\pi}}{8}K\right)}\right]^2$,
where $K=\frac{\frac{\sigma_1^2}{n}}{\frac{\sigma_1^2}{n}+\frac{\sigma_2^2}{m}}$, 
then each summation term of (\ref{7}) is strictly greater than that of (\ref{MSE expression}), provided that $\|\delta\|_2\leq\kappa_1$, and $\kappa_1$ is a constant. That is, we want to prove that, for $\delta$ within the region $\|\delta\|_2\leq\kappa_1$, the following holds
\begin{equation}\label{48}
\mathbb{E}(\bar{M}^*U_i^*+\bar{N}^*V_i^*+\beta_i)^2+\tilde{F}_i(\delta_i) <\frac{\sigma_1^2}{n}+\tilde{H}_i,
\end{equation}
where $\tilde{H}_i=\frac{1}{\sqrt{2\pi}\sigma_1}\int_{(y+\sqrt{n}\beta_i)^2<\lambda}\left(\beta_i^2-\frac{1}{n}y^2\right)\exp{\left\{-\frac{y^2}{2\sigma_1^2}\right\}}dy$ and $\tilde{F}_i(0)$ is defined as in Lemma \ref{lemma17}, for $i=1,\cdots,k$. By the proof of Proposition \ref{explicit rule}, if we set the parameters $\lambda_1, \lambda_2, \lambda_3^i$ to $\lambda_1^*=\sigma_2^2, \lambda_2^*=\sigma_1^2, {\lambda_3^i}^*=\frac{4\sigma_1^2\sigma_2^2}{\delta_i^2}$, for $i=1,\cdots, k$, then $\mathbb{E}(\bar{M}^*U_i^*+\bar{N}^*V_i^*+\beta_i)^2=\frac{\sigma_1^2}{n}-\frac{(\frac{\sigma_1^2}{n})^2}{\delta_i^2+\frac{\sigma_1^2}{n}+\frac{\sigma_2^2}{m}}$, for $i=1,\cdots, k$.
Therefore, (\ref{48}) amounts to
\begin{equation}\label{49}
\tilde{F}_i(\delta_i)-\frac{(\frac{\sigma_1^2}{n})^2}{\delta_i^2+\frac{\sigma_1^2}{n}+\frac{\sigma_2^2}{m}}<\tilde{H}_i,
\end{equation}
for $i=1,\cdots, k$.

Now, we'd like to find a region for $\delta$, such that inequality (\ref{49}) holds for $i=1,\cdots, k$. First, we are going to prove this inequality in the special case of $\delta=0$. Thus, we begin by analyzing the lower bound of $\tilde{H}_i$, for $i=1,\cdots, k$. By Proposition \ref{proposition5}, the minimum value of the function $f(x)=\left(\beta_i^2-\frac{1}{n}x^2\right)\exp\left\{-\frac{x^2}{2\sigma_1^2}\right\}$ is $f\left(\pm\sqrt{n\beta_i^2+2\sigma_1^2}\right)=-\frac{2\sigma_1^2}{n}\exp\left\{-\frac{n\beta_i^2}{2\sigma_1^2}-1\right\}$. Then, we have
\begin{eqnarray}
\tilde{H}_i &\geq& \frac{1}{\sqrt{2\pi}\sigma_1}\int_{(y+\sqrt{n}\beta_i)^2<\lambda}-\frac{2\sigma_1^2}{n}\exp\left\{-\frac{n\beta_i^2}{2\sigma_1^2}-1\right\}dy\notag\\
&=&-\frac{2}{\sqrt{\pi}}\cdot \frac{2\sigma_1^2}{n}\exp\left\{-\frac{n\beta_i^2}{2\sigma_1^2}-1\right\}\notag,
\end{eqnarray}
for $i=1,\cdots, k$.

Due to the assumption that $\beta_i^2\geq \frac{2\sigma_1^2}{n}\left[1+\sqrt{-\ln\left(\frac{\sqrt{\pi}}{8}K\right)}\right]^2$ in this case, where $K=\frac{\frac{\sigma_1^2}{n}}{\frac{\sigma_1^2}{n}+\frac{\sigma_2^2}{m}}$, by Lemma \ref{lemma17}, we can obtain the following upper bound for $\tilde{F}_i(0)$:
\begin{equation*}
\tilde{F}_i(0)\leq \frac{2}{\sqrt{\pi}}\left[\frac{2|\beta_i|\sqrt{2\sigma_1^2}}{\sqrt{\tilde{M}}}-\frac{2\sigma_1^2}{\tilde{M}}\right]\exp\left\{-\frac{\left(\sqrt{\tilde{M}}|\beta_i|-\sqrt{2\sigma_1^2}\right)^2}{2\sigma_1^2}\right\}\notag,
\end{equation*}
where $\tilde{M}=\frac{m\sigma_1^2+n\sigma_2^2}{\sigma_2^2}$, for $i=1,\cdots, k$.

Combining these inequalities, if we can prove the relationships below,
\begin{align}
&\frac{2}{\sqrt{\pi}}\left[\frac{2|\beta_i|\sqrt{2\sigma_1^2}}{\sqrt{\tilde{M}}}-\frac{2\sigma_1^2}{\tilde{M}}\right]\exp\left\{-\frac{\left(\sqrt{\tilde{M}}|\beta_i|-\sqrt{2\sigma_1^2}\right)^2}{2\sigma_1^2}\right\}\notag\\&<\frac{(\frac{\sigma_1^2}{n})^2}{\frac{\sigma_1^2}{n}+\frac{\sigma_2^2}{m}}-\frac{2}{\sqrt{\pi}}\cdot \frac{2\sigma_1^2}{n}\exp\left\{-\frac{n\beta_i^2}{2\sigma_1^2}-1\right\},\label{50}
\end{align}
for $i=1,\cdots, k$, then the desired inequality (\ref{49}) holds naturally.
To this end, in the following, we investigate the difference between the right-hand and left-hand sides of inequality (\ref{50}).

For simplicity, we denote the right-hand side of (\ref{50}) by $\tilde{G}_r^{i}$, the left-hand side by $\tilde{G}_l^{i}$. If $\beta_i^2\geq \frac{2\sigma_1^2}{n}\left[1+\sqrt{-\ln\left(\frac{\sqrt{\pi}}{8}K\right)}\right]^2$, where $K=\frac{\frac{\sigma_1^2}{n}}{\frac{\sigma_1^2}{n}+\frac{\sigma_2^2}{m}}$, which also implies that $\beta_i^2\geq \frac{2\sigma_1^2}{n}$ for $i=1,\cdots, k$, then the following holds:
\begin{eqnarray}
\tilde{G}_r^{i}-\tilde{G}_l^{i}&\geq& \frac{(\frac{\sigma_1^2}{n})^2}{\frac{\sigma_1^2}{n}+\frac{\sigma_2^2}{m}}-\frac{2}{\sqrt{\pi}}\cdot \frac{2\sigma_1^2}{n}\exp\left\{-\frac{n\beta_i^2}{2\sigma_1^2}-1+\frac{\sqrt{2n}|\beta_i|}{\sigma_1}\right\}-\tilde{G}_l\notag\\&\geq&\frac{(\frac{\sigma_1^2}{n})^2}{\frac{\sigma_1^2}{n}+\frac{\sigma_2^2}{m}}-\frac{2}{\sqrt{\pi}}\left[\frac{2\sigma_1^2}{n}+\frac{2|\beta_i|\sqrt{2\sigma_1^2}}{\sqrt{\tilde{M}}}-\frac{2\sigma_1^2}{\tilde{M}}\right]\exp\left\{-\frac{\left(\sqrt{n}|\beta_i|-\sqrt{2\sigma_1^2}\right)^2}{2\sigma_1^2}\right\}\notag\\
&\geq&\frac{(\frac{\sigma_1^2}{n})^2}{\frac{\sigma_1^2}{n}+\frac{\sigma_2^2}{m}}-\frac{2}{\sqrt{\pi}}\left[\frac{2\sigma_1^2}{n}+\frac{2|\beta_i|\sqrt{2\sigma_1^2}}{\sqrt{\tilde{M}}}-\frac{2\sigma_1^2}{\tilde{M}}\right]\frac{\sqrt{\pi}}{8}K\notag\\&=&\frac{\frac{\sigma_1^2}{n}}{\frac{\sigma_1^2}{n}+\frac{\sigma_2^2}{m}}\left[\frac{\sigma_1^2}{2n}-\frac{|\beta_i|\sqrt{2\sigma_1^2}}{2\sqrt{\tilde{M}}}+\frac{\sigma_1^2}{2\tilde{M}}\right]\notag\\
&\geq&\frac{\frac{\sigma_1^2}{n}}{\frac{\sigma_1^2}{n}+\frac{\sigma_2^2}{m}}\left(\frac{\sigma_1^2}{2n}-\frac{\sigma_1^2}{2\tilde{M}}\right)>0\notag,
\end{eqnarray}
for $i=1,\cdots, k$.
The first inequality is obtained due to $\beta_i^2\geq \frac{2\sigma_1^2}{n}$, the third inequality holds by substituting $\beta_i^2\geq \frac{2\sigma_1^2}{n}\left[1+\sqrt{-\ln\left(\frac{\sqrt{\pi}}{8}K\right)}\right]^2$ into the second equality, and the last inequality is obvious due to $n<\tilde{M}$.

So far, we have proved that the inequality (\ref{49}) holds under the special case of $\delta=0$. Due to the continuity of the function in the left-hand side of the inequality, there exists a radius $\kappa_1>0$, such that (\ref{49}) holds for any $\|\delta\|_2\leq\kappa_1$, for $i=1,\cdots,k$.

Thus, until now, we have verified that the MSE of the orthogonal TLCp estimator will be strictly less than that of orthogonal Cp estimator, provided that $\beta_i^2\geq \frac{2\sigma_1^2}{n}\left[1+\sqrt{-\ln\left(\frac{\sqrt{\pi}}{8}K\right)}\right]^2$, and $\|\delta\|_2\leq\kappa_1$, for $i=1,\cdots, k$.

To prove the remaining this theorem, firstly, if we assume $\beta=0$, then the MSE of orthogonal Cp estimator $\hat{\boldsymbol{a}}$ in (\ref{7}) can be reduced to:
\begin{equation}\label{51}
\text{MSE}(\hat{\boldsymbol{a}})=\sum_{i=1}^{k}\frac{\sigma_1^2}{n}+\frac{1}{\sqrt{2\pi}\sigma_1}\int_{x<\lambda}-\frac{x^2}{n}\exp\left\{-\frac{x^2}{2\sigma_1^2}\right\}dx,
\end{equation}
and the MSE of orthogonal TLCp estimator $\hat{\boldsymbol{w}}_1$ in (\ref{MSE expression}) can be reduced to the following for $\delta=0$:
\begin{equation}\label{52}
\text{MSE}(\hat{\boldsymbol{w}}_1)=\sum_{i=1}^{k}\left[\frac{\sigma_1^2}{n}-\frac{(\frac{\sigma_1^2}{n})^2}{\frac{\sigma_1^2}{n}+\frac{\sigma_2^2}{m}}\right]+\frac{1}{\sqrt{2\pi}\sigma_1}\int_{z<\lambda}-\frac{\sigma_2^2}{m\sigma_1^2+n\sigma_2^2}z^2\exp{\left\{-\frac{z^2}{2\sigma_1^2}\right\}}dz.
\end{equation}

We are going to analyze the difference between the (\ref{51}) and (\ref{52}) as follows:
\begin{align}
&\text{MSE}(\hat{\boldsymbol{w}}_1)-\text{MSE}(\hat{\boldsymbol{a}})\notag\\&=\sum_{i=1}^{k}\left(-\frac{(\frac{\sigma_1^2}{n})^2}{\frac{\sigma_1^2}{n}+\frac{\sigma_2^2}{m}}+\frac{1}{\sqrt{2\pi}\sigma_1}\int_{z<\lambda}\left[-\frac{\sigma_2^2}{m\sigma_1^2+n\sigma_2^2}+\frac{1}{n}\right]x^2\exp\left\{-\frac{x^2}{2\sigma_1^2}\right\}dx\right)\notag\\
&<\sum_{i=1}^{k}\left(-\frac{(\frac{\sigma_1^2}{n})^2}{\frac{\sigma_1^2}{n}+\frac{\sigma_2^2}{m}}+\frac{1}{\sqrt{2\pi}\sigma_1}\int_{-\infty}^{+\infty}\left[-\frac{\sigma_2^2}{m\sigma_1^2+n\sigma_2^2}+\frac{1}{n}\right]x^2\exp\left\{-\frac{x^2}{2\sigma_1^2}\right\}dx\right)\notag\\
&=\sum_{i=1}^{k}\left(-\frac{(\frac{\sigma_1^2}{n})^2}{\frac{\sigma_1^2}{n}+\frac{\sigma_2^2}{m}}+\left[-\frac{\sigma_2^2}{m\sigma_1^2+n\sigma_2^2}+\frac{1}{n}\right]\mathbb{E}(X^2)\right)\notag\\
&=\sum_{i=1}^{k}\left(-\frac{(\frac{\sigma_1^2}{n})^2}{\frac{\sigma_1^2}{n}+\frac{\sigma_2^2}{m}}+\left[-\frac{\sigma_2^2}{m\sigma_1^2+n\sigma_2^2}+\frac{1}{n}\right]\sigma_1^2\right)=0\label{53},
\end{align}
where the third equality is obtained by the definition of the expected value of the random variable $X\sim \mathcal { N } \left( 0 , \sigma_1^ { 2 } \right)$.

If we let  $\hat{R}(\beta,\delta):=\text{MSE}(\hat{\boldsymbol{w}}_1)-\text{MSE}(\hat{\boldsymbol{a}})$, then (\ref{53}) indicates that $\hat{R}(0, 0)<0$. By the continuity of $\hat{R}(\beta, \delta)$ with respect to $\beta$ and $\delta$, there exist two constants $\kappa_2(\sigma_1,\sigma_2,n,m)>0$ and $\rho(\sigma_1,\sigma_2,n,m)>0$ such that  $\hat{R}(\beta, \delta)<0$ holds for any $\|\beta\|_2<\rho$, $\|\delta\|_2<\kappa_2$.

Finally, combining the results from the above two parts, and letting $\tilde{\kappa}=\min\{\kappa_1,\kappa_2\}$, we can conclude that when the parameters of orthogonal TLCp are tuned as $\lambda_1^*=\sigma_2^2, \lambda_2^*=\sigma_1^2, {\lambda_3^i}^*=\frac{4\sigma_1^2\sigma_2^2}{\delta_i^2}$ and $\lambda_4^*=\min_{i\in\{1,\cdots,k\}}\left\{\lambda\left(2-\frac{Q_i^*}{\sqrt{M_i^*N_i^*}}\right)/ 4\sigma_1^2(G_i^*)^2\right\}$ for $i=1,\cdots, k$, then the MSE of the orthogonal TLCp estimator will be strictly less than that of the orthogonal Cp estimator, provided that $\|\delta\|_2<\tilde{\kappa}$ and
\begin{equation*}
\beta_i^2>\frac{2\sigma_1^2}{n}\left[1+\sqrt{-\ln\left(\frac{\sqrt{\pi}K}{8}\right)}\right]^2 \quad \text{or} \quad \beta_i^2<\rho^2, \quad \text{for } i=1,\cdots, k.
\end{equation*}
\end{proof}	




\begin{proof}{\bf of Proposition \ref{proposition20}}
\\
To get the optimal solution of the orthogonalized Cp problem (\ref{orthogonalized Cp}), notice that $\boldsymbol{Q}^{\top}\boldsymbol{\bar{X}}^{\top}\boldsymbol{\bar{X}Q}=n\boldsymbol{I}$ and $\boldsymbol { \bar{y} } = \boldsymbol { \bar{X}Q } (\boldsymbol{Q}^{-1}\boldsymbol { \beta }) + \boldsymbol { \varepsilon }$. Then, we can rewrite the orthogonalized Cp problem (\ref{orthogonalized Cp}) as follows
	\begin{equation}\label{57}
	\boldsymbol{\hat{\alpha}}_1=\text{argmin}_{\boldsymbol{\alpha}_1}~ \sum_{i=1}^{k} \left\{-2n\boldsymbol{\beta}^{\top}\tilde{Q}_i\alpha_1^{i}-2\boldsymbol { \varepsilon }^{\top}Z_i\alpha_1^{i}+n(\alpha_1^{i})^2+\lambda\|\alpha_1^{i}\|_0\right\},
	\end{equation}
where $\tilde{Q_i}^{\top}$ represents the $i$-th row of the invertible matrix $\boldsymbol{Q}^{-1}$, for $i=1,\cdots,k$, and $Z_j$ is the $j$-th column of the design matrix $\boldsymbol{\bar{X}Q}$, for $j=1,\cdots,k$.
	
Due to the independence of each summand in the objective function in (\ref{57}), the orthogonalized Cp problem (\ref{orthogonalized Cp}) can be solved by solving the following $k$ one-dimensional optimization problems:
		\begin{equation}\label{58}
\hat{\alpha}_1^i=\text{argmin}_{\alpha_1^i}~\bar{g}(\alpha_1^i) \triangleq \left\{-2n\boldsymbol{\beta}^{\top}\tilde{Q}_i\alpha_1^{i}-2\boldsymbol { \varepsilon }^{\top}Z_i\alpha_1^{i}+n(\alpha_1^{i})^2+\lambda\|\alpha_1^{i}\|_0\right\},
	\end{equation}
where $i=1,\cdots,k$.
	
For the $i$-th subproblem, if $\|\alpha_1^{i}\|_0=1$, then the gradient of $\bar{g}(\alpha_1^i)$ vanishes at $	\zeta_i=\tilde{Q_i}^{\top}\boldsymbol { \beta }+\frac{Z_i^{\top}\boldsymbol { \varepsilon }}{n}$, and $\bar{g}(\zeta_i)=-n\left(\tilde{Q_i}^{\top}\boldsymbol { \beta }+\frac{Z_i^{\top}\boldsymbol { \varepsilon }}{n}\right)^2+\lambda$. If $\|\alpha_1^{i}\|_0=0$, we have $\alpha_1^{i}=0$, and $\bar{g}(0)=0$. Comparing the objective values in these two cases and picking the smaller one, we obtain the following expression for the optimal solution of the $i$-th problem:
	 \begin{eqnarray}
	\hat{\alpha}_1^{i}=\left\{
	\begin{matrix}
	\tilde{Q_i}^{\top}\boldsymbol { \beta }+\frac{Z_i^{\top}\boldsymbol { \varepsilon }}{n},
	& \text{if~} n\left[\tilde{Q_i}^{\top}\boldsymbol { \beta }+\frac{Z_i^{\top}\boldsymbol { \varepsilon }}{n}\right]^2>\lambda\\
	0,
	&  \text{otherwise}
	\end{matrix}
	\right.
	\end{eqnarray}
for $i=1,\cdots,k$. 
\end{proof}

\begin{proof}{\bf of Theorem \ref{theorem21}}
\\
Notice that the back-transformed orthogonalized Cp estimator $\hat{\boldsymbol{\alpha}}_2=\boldsymbol{Q}\hat{\boldsymbol{\alpha}}_1$, where $\hat{\boldsymbol{\alpha}}_1$ is the solution of orthogonalized Cp problem, can be rewritten as follows,
\begin{eqnarray}
\hat{\boldsymbol{\alpha}}_2&=&\sum_{i=1}^{k}Q_i\hat{\alpha}_1^{i}\notag\\
&=&\sum_{i=1}^{k}Q_i\left(\tilde{Q_i}^{\top}\boldsymbol{\beta}+\frac{Z_i^{\top}\boldsymbol { \varepsilon }}{n}\right)\cdot\boldsymbol{1}_{A_i},\label{60}
\end{eqnarray}
where we denote $\boldsymbol{1}_{A_i}$ as the indication function with respect to the random variable set $A_i=\left\{\frac{Z_i^{\top}\boldsymbol { \varepsilon }}{n}:n\left[\tilde{Q_i}^{\top}\boldsymbol{\beta}+\frac{Z_i^{\top}\boldsymbol { \varepsilon }}{n}\right]^2>\lambda\right\}$, for $i=1,\cdots,k$.
To analyze $\hat{\boldsymbol{\alpha}}_2$, we decompose (\ref{60}) into two terms as below,
\begin{equation}\label{61}
\hat{\boldsymbol{\alpha}}_2=\sum_{i=1}^{k}\left\{Q_i\tilde{Q_i}^{\top}\cdot \boldsymbol{\beta}\cdot\boldsymbol{1}_{A_i}\right\}+\sum_{i=1}^{k}\left\{Q_i\frac{Z_i^{\top}\boldsymbol { \varepsilon }}{n}\cdot\boldsymbol{1}_{A_i}\right\}.
\end{equation}	
Then, we are going to estimate these two terms. For the first summand of (\ref{61}), there holds
\begin{align*}
&\left\|\sum_{i=1}^{k}\left\{Q_i\tilde{Q_i}^{\top}\cdot \boldsymbol{\beta}\cdot\boldsymbol{1}_{A_i}\right\}-\boldsymbol{\beta}\right\|_2\\&=\left\|\sum_{i=1}^{k}\left\{Q_i\tilde{Q_i}^{\top}\cdot \boldsymbol{\beta}\cdot\boldsymbol{1}_{A_i}\right\}-\sum_{i=1}^{k}\left\{Q_i\tilde{Q_i}^{\top}\boldsymbol{\beta}\right\}\right\|_2\\&=\left\|  \sum_{i=1}^{k}(\boldsymbol{1}_{A_i}-1)\cdot Q_i\tilde{Q_i}^{\top}\boldsymbol{\beta}\right\|_2\\&\leq M\sum_{i=1}^{k}\left|\boldsymbol{1}_{A_i}-1\right|,
\end{align*}
where $M=\text{max}_{i=1,\cdots,k}\left\{\|Q_i\tilde{Q_i}^{\top}\boldsymbol{\beta}\|_2\right\}$ and the first equality can be obtained by noticing that $\sum_{i=1}^{k}Q_i\tilde{Q_i}^{\top}=\boldsymbol{I}$.
Further, for any $1>\epsilon>0$, we have
\begin{align*}
\left\{M\sum_{i=1}^{k}\left|\boldsymbol{1}_{A_i}-1\right|>\epsilon\right\}\subseteq \left\{\exists ~i, s.t. \left|\boldsymbol{1}_{A_i}-1\right|>\frac{\epsilon}{kM}\right\}.
\end{align*}
Moreover, by the definition of indicator $\boldsymbol{1}_{A_i} $ above, there holds
\begin{align}\label{62}
P_r\left\{\exists ~i, s.t. \left|\boldsymbol{1}_{A_i}-1\right|>\frac{\epsilon}{kM}\right\}\leq \sum_{i=1}^{k}P_r\left\{\left[\tilde{Q_i}^{\top}\boldsymbol{\beta}+\frac{Z_i^{\top}\boldsymbol { \varepsilon }}{n}\right]^2\leq \frac{\lambda}{n}\right\}.
\end{align}
To evaluate the right-hand side of (\ref{62}), if there exist $\tilde{Q_i}^{\top}\boldsymbol{\beta}\neq0$, then for any large enough $n$, we have
\begin{align*}
\sum_{i=1}^{k}P_r\left\{\left[\tilde{Q_i}^{\top}\boldsymbol{\beta}+\frac{Z_i^{\top}\boldsymbol { \varepsilon }}{n}\right]^2\leq \frac{\lambda}{n}\right\}\leq \sum_{i=1}^{k}P_r\left\{\left|\frac{Z_i^{\top}\boldsymbol { \varepsilon }}{n}\right|\geq|\tilde{Q_i}^{\top}\boldsymbol{\beta}|-\sqrt{\frac{\lambda}{n}}\right\}.
\end{align*}
Note that
\begin{equation*}
\mathbb{E}\left(\frac{Z_i^{\top}\boldsymbol { \varepsilon }}{n} \right)^2=\frac{1}{n^2}\mathbb{E}\left(Z_i^{\top}\boldsymbol{\varepsilon}\boldsymbol{\varepsilon}^{\top}Z_i\right)=\frac{1}{n^2}\mathbb{E}\text{trace}(Z_iZ_i^{\top}\boldsymbol{\varepsilon}\boldsymbol{\varepsilon}^{\top})=\frac{\sigma^2}{n^2}\text{trace}(Z_i^{\top}Z_i)=\frac{\sigma^2}{n},
\end{equation*}
for $i=1,\cdots,k$. Thus, we have $\frac{Z_i^{\top}\boldsymbol { \varepsilon }}{n} \sim \mathcal { N } \left( 0 , \frac{\sigma^2}{n} \right)$, for $i=1,\cdots,k$.\\
By Chebyshev's Inequality, there holds
\begin{align*}
\sum_{i=1}^{k}P_r\left\{\left|\frac{Z_i^{\top}\boldsymbol { \varepsilon }}{n}\right|\geq|\tilde{Q_i}^{\top}\boldsymbol{\beta}|-\sqrt{\frac{\lambda}{n}}\right\}\leq \frac{\frac{k\sigma^2}{n}}{\left(W-\sqrt{\frac{\lambda}{n}}\right)^2},
\end{align*}
where $W=\text{min}_{i=1,\cdots,k}\left\{|\tilde{Q_i}^{\top}\boldsymbol{\beta}|\neq0\right\}$.\\
Therefore, for any $1>\epsilon>0$, if there exist $\tilde{Q_i}^{\top}\boldsymbol{\beta}\neq0$, then for any large enough $n$, there holds
\begin{align}\label{63}
&P_r\left\{\left\|\sum_{i=1}^{k}\left\{Q_i\tilde{Q_i}^{\top}\cdot \boldsymbol{\beta}\cdot\boldsymbol{1}_{A_i}\right\}-\boldsymbol{\beta}\right\|_2>\epsilon\right\}\notag\\&\leq P_r\left\{M\sum_{i=1}^{k}\left|\boldsymbol{1}_{A_i}-1\right|>\epsilon\right\}\\&\leq\frac{\frac{k\sigma^2}{n}}{\left(W-\sqrt{\frac{\lambda}{n}}\right)^2}\notag.
\end{align}
As for the case $\tilde{Q_i}^{\top}\boldsymbol{\beta}=0$ ($i=1,\cdots,k$), then the desired result holds naturally. \\
For the second summand of the (\ref{61}), due to
\begin{equation}\notag
\left|\sum_{i=1}^{k}\left\{Q_i\frac{Z_i^{\top}\boldsymbol { \varepsilon }}{n}\cdot\boldsymbol{1}_{A_i}\right\}\right|\leq \sum_{i=1}^{k}\left|Q_i\frac{Z_i^{\top}\boldsymbol { \varepsilon }}{n}\right|\leq\sum_{i=1}^{k}\tilde{M} \left|\frac{Z_i^{\top}\boldsymbol { \varepsilon }}{n}\right|,
\end{equation}
where $\tilde{M}=\text{max}_{i=1,\cdots,k}\left\{\left\|Q_i\right\|_2\right\}$. 
By Chebyshev's Inequality, for any $1>\epsilon>0$, we have
\begin{align*}
P_r\left\{\sum_{i=1}^{k}\left|\frac{Z_i^{\top}\boldsymbol { \varepsilon }}{n}\right|\geq \frac{k\epsilon}{\tilde{M}} \right\}\leq\sum_{i=1}^{k}P_r\left\{\left|\frac{Z_i^{\top}\boldsymbol { \varepsilon }}{n}\right|>\frac{\epsilon}{\tilde{M}} \right\}\leq \frac{\tilde{M}^2k}{\epsilon^2}\frac{\sigma^2}{n},
\end{align*}
cause $\frac{Z_i^{\top}\boldsymbol { \varepsilon }}{n} \sim \mathcal { N } \left( 0 , \frac{\sigma^2}{n} \right)$, for $i=1,\cdots,k$.
Thus, there holds
\begin{align}\label{64}
&P_r\left\{\left|\sum_{i=1}^{k}\left\{Q_i\frac{Z_i^{\top}\boldsymbol { \varepsilon }}{n}\cdot\boldsymbol{1}_{A_i}\right\}\right|>k\epsilon \right\}\notag\\&\leq P_r\left\{\sum_{i=1}^{k}\left|\frac{Z_i^{\top}\boldsymbol { \varepsilon }}{n}\right|\geq \frac{k\epsilon}{\tilde{M}} \right\}\\&\leq \frac{\tilde{M}^2k}{\epsilon^2}\frac{\sigma^2}{n},\notag
\end{align}
{where the second line is obtained by the triangle inequality and  $\tilde{M}=\text{max}_{i=1,\cdots,k}\left\{\left\|Q_i\right\|_2\right\}$.}
Finally, combine the results of (\ref{63}) and (\ref{64}), we have
\begin{align}\label{65}
&P_r\left\{\|\hat{\boldsymbol{\alpha}}_2-\boldsymbol{\beta}\|_2>(k+1)\epsilon\right\}\notag\\&\leq
P_r\left\{\left\|\sum_{i=1}^{k}\left\{Q_i\tilde{Q_i}^{\top}\cdot \boldsymbol{\beta}\cdot\boldsymbol{1}_{A_i}\right\}-\boldsymbol{\beta}\right\|_2>\epsilon\right\}+P_r\left\{\left|\sum_{i=1}^{k}\left\{Q_i\frac{Z_i^{\top}\boldsymbol { \varepsilon }}{n}\cdot\boldsymbol{1}_{A_i}\right\}\right|>k\epsilon \right\}\notag\\&\leq
\frac{\frac{k\sigma^2}{n}}{\left(W-\sqrt{\frac{\lambda}{n}}\right)^2}+\frac{\tilde{M}^2k}{\epsilon^2}\frac{\sigma^2}{n}
\end{align}
For every $1>\eta>0$, let $\eta=\frac{\frac{k\sigma^2}{n}}{\left(W-\sqrt{\frac{\lambda}{n}}\right)^2}+\frac{\tilde{M}^2k}{\epsilon^2}\frac{\sigma^2}{n}$, which means $\epsilon=\sqrt{\frac{\tilde{M}^2k}{\eta-\frac{\frac{k\sigma^2}{n}}{\left(W-\sqrt{\frac{\lambda}{n}}\right)^2}}}\sqrt{\frac{\sigma^2}{n}}$. Then, based on (\ref{65}), with probability at least $1-\eta$, we have
\begin{align*}
\|\hat{\boldsymbol{\alpha}}_2-\boldsymbol{\beta}\|_2\leq (k+1)\sqrt{\frac{\tilde{M}^2k}{\eta-\frac{\frac{k\sigma^2}{n}}{\left(W-\sqrt{\frac{\lambda}{n}}\right)^2}}}\sqrt{\frac{\sigma^2}{n}}.
\end{align*}
That is, the desired result holds.
 \end{proof}

\begin{proof}{\bf of Theorem \ref{theorem22}}
	\\
First, using the non-orthogonal Cp is equivalent to solving the following problem,
\begin{equation}\label{Cp2}
\hat{\mathcal{J}}=\text{argmin}_{\mathcal{J} \in \mathcal{A}}~ \frac{1}{n}[ \boldsymbol { \bar{y} } - \boldsymbol { \bar{X} }(\mathcal{J})  \hat{\boldsymbol { \beta }} (\mathcal{J} ) ]^ { \top } [ \boldsymbol { \bar{y} } - \boldsymbol { \bar{X} }(\mathcal{J})  \hat{\boldsymbol { \beta }} (\mathcal{J} ) ]+\frac{\lambda p(\mathcal{J})}{n},
\end{equation}
where $\boldsymbol{\beta}(\mathcal{J})$ or ($\bar{\boldsymbol{X}}(\mathcal{J})$) contains the components of $\boldsymbol{\beta}$ (or columns of $\bar{\boldsymbol{X}}$) that are indexed by the integers in $\mathcal{J}$, and $p(\mathcal{J})$ is the number of elements of the considering index set $\mathcal{J}$. Also, $\hat{\boldsymbol{\beta}}(\mathcal{J})$ is the least squares estimator of the true regression coefficient vector $\boldsymbol{\beta}$ under the index set $\mathcal{J}$. In this case, the original non-orthogonal Cp estimator $\boldsymbol{\alpha}$ is a vector with its nonzero elements determined by $\hat{\boldsymbol{\beta}}(\hat{\mathcal{J}})$ and the remaining elements are all zeros.

Second, notice that $\boldsymbol { \bar{y} } = \boldsymbol { \bar{X} }\boldsymbol { \beta }+\boldsymbol { \varepsilon }$ and we denote $L_n(\mathcal{J})=\frac{\|\mu_n-\hat{\mu}_n(\mathcal{J})\|_2^2}{n}$, where $\hat{\mu}_n(\mathcal{J})$ is the least squares estimator of the true model $\mu_n=\bar{\boldsymbol{X}}\boldsymbol{\beta}$ under the index subset $\mathcal{J}$. The objective function (if denoted as $Cp(\mathcal{J})$) of the non-orthogonal Cp problem (\ref{Cp2}) can be re-expressed as follows,
\begin{equation}
Cp(\mathcal{J})=\frac{\boldsymbol{\varepsilon}^{\top}\boldsymbol{\varepsilon}}{n}+L_n(\mathcal{J})+\frac{2[\frac{\lambda}{2}p(\mathcal{J})-\boldsymbol{\varepsilon}^{\top}H(\mathcal{J})\boldsymbol{\varepsilon}]}{n}+\frac{2\boldsymbol{\varepsilon}^{\top}[I-H(\mathcal{J})]\mu_n}{n},
\end{equation}
where $H(\mathcal{J})=\bar{\boldsymbol{X}}(\mathcal{J})[\bar{\boldsymbol{X}}(\mathcal{J})^{\top}\bar{\boldsymbol{X}}(\mathcal{J})]^{-1}\bar{\boldsymbol{X}}(\mathcal{J})^{\top}$.

Then, by considering that  $L_n(\mathcal{J})=Q_n(\mathcal{J})+\frac{\boldsymbol{\varepsilon}^{\top}H(\mathcal{J})\boldsymbol{\varepsilon}}{n}$, where $Q_n(\mathcal{J})=\frac{\|\mu_n-H(\mathcal{J})\mu_n\|_2^2}{n}$. Thus, when $\mathcal{J}\in \mathcal{A}^c$, by Markov inequality, there holds $Cp(\mathcal{J})-\frac{\boldsymbol{\varepsilon}^{\top}\boldsymbol{\varepsilon}}{n}=\frac{{\lambda}p(\mathcal{J})}{n}-\frac{\boldsymbol{\varepsilon}^{\top}H(\mathcal{J})\boldsymbol{\varepsilon}}{n}\xrightarrow{P}0 (n\rightarrow\infty)$. Note that $Q_n(\mathcal{J})=0$ if $\mathcal{J}\in \mathcal{A}^c$, and $\mathbb{E}[(\boldsymbol{\varepsilon}^{\top}H(\mathcal{J})\boldsymbol{\varepsilon})/n]=\sigma^2p(\mathcal{J})/n$.

However, if $\mathcal{J}\in \mathcal{A}/\mathcal{A}^c$, we have $Cp(\mathcal{J})-\frac{\boldsymbol{\varepsilon}^{\top}\boldsymbol{\varepsilon}}{n}= Q_n(\mathcal{J})+\frac{{\lambda}p(\mathcal{J})}{n}-\frac{\boldsymbol{\varepsilon}^{\top}H(\mathcal{J})\boldsymbol{\varepsilon}}{n}+\frac{2\boldsymbol{\varepsilon}^{\top}[I-H(\mathcal{J})]\mu_n}{n}\stackrel{P}\nrightarrow0(n\rightarrow\infty)$. Note that $\lim_{n\rightarrow\infty}Q_n(\mathcal{J})>0$ when the assumption holds \citep{nishii1984asymptotic}, and $\frac{2\boldsymbol{\varepsilon}^{\top}[I-H(\mathcal{J})]\mu_n}{n}\xrightarrow{P}0 (n\rightarrow\infty)$ by the Chebyshev's inequality. Finally, we can conclude the desired result.
\end{proof}

\begin{proof}{\bf of Corollary \ref{corollary22}}
\\
First, the least squares estimate of the regression coefficient vector $\boldsymbol{\beta}$ under the index set $\hat{\mathcal{J}}$ selected by the non-orthogonal Cp criterion satisfies
\begin{equation}\notag
\hat{\boldsymbol{\beta}}(\hat{\mathcal{J}})=[\bar{\boldsymbol{X}}(\hat{\mathcal{J}})^{\top}\bar{\boldsymbol{X}}(\hat{\mathcal{J}})]^{-1}\bar{\boldsymbol{X}}(\hat{\mathcal{J}})^{\top}\bar{\boldsymbol{y}}.
\end{equation}	
Further, we can rewrite it as follows,
\begin{equation}\notag
\hat{\boldsymbol{\beta}}(\hat{\mathcal{J}})=\boldsymbol{\beta}(\hat{\mathcal{J}})+[\bar{\boldsymbol{X}}(\hat{\mathcal{J}})^{\top}\bar{\boldsymbol{X}}(\hat{\mathcal{J}})]^{-1}\bar{\boldsymbol{X}}(\hat{\mathcal{J}})^{\top}\boldsymbol{\varepsilon}.
\end{equation}
Second, we have
\begin{align}
&\mathbb{E}\left([\bar{\boldsymbol{X}}(\hat{\mathcal{J}})^{\top}\bar{\boldsymbol{X}}(\hat{\mathcal{J}})]^{-1}\bar{\boldsymbol{X}}(\hat{\mathcal{J}})^{\top}\boldsymbol{\varepsilon}\right)^{\top}\left([\bar{\boldsymbol{X}}(\hat{\mathcal{J}})^{\top}\bar{\boldsymbol{X}}(\hat{\mathcal{J}})]^{-1}\bar{\boldsymbol{X}}(\hat{\mathcal{J}})^{\top}\boldsymbol{\varepsilon}\right)\notag\\
&=\mathbb{E}\left(\boldsymbol{\varepsilon}^{\top}\bar{\boldsymbol{X}}(\hat{\mathcal{J}})[\bar{\boldsymbol{X}}(\hat{\mathcal{J}})^{\top}\bar{\boldsymbol{X}}(\hat{\mathcal{J}})]^{-2}\bar{\boldsymbol{X}}(\hat{\mathcal{J}})^{\top}\boldsymbol{\varepsilon}\right)\notag\\
&=\mathbb{E}\text{trace}\left(\bar{\boldsymbol{X}}(\hat{\mathcal{J}})[\bar{\boldsymbol{X}}(\hat{\mathcal{J}})^{\top}\bar{\boldsymbol{X}}(\hat{\mathcal{J}})]^{-2}\bar{\boldsymbol{X}}(\hat{\mathcal{J}})^{\top}\boldsymbol{\varepsilon}\boldsymbol{\varepsilon}^{\top}\right)\notag\\
&=\sigma^2\text{trace}\left([\bar{\boldsymbol{X}}(\hat{\mathcal{J}})^{\top}\bar{\boldsymbol{X}}(\hat{\mathcal{J}})]^{-2}\bar{\boldsymbol{X}}(\hat{\mathcal{J}})^{\top}\bar{\boldsymbol{X}}(\hat{\mathcal{J}})\right)\notag\\
&=\sigma^2\text{trace}\left(\bar{\boldsymbol{X}}(\hat{\mathcal{J}})^{\top}\bar{\boldsymbol{X}}(\hat{\mathcal{J}})\right)^{-1}\notag\\&=\mathcal{O}\left(\frac{1}{n}\right),\notag
\end{align}
where the {third line follows from the equality $\text{trace}(AB)=\text{trace}(BA)$} and the last equality is due to the assumption that $\lim_{n\rightarrow \infty}\frac{\bar{\boldsymbol{X}}^{\top}\bar{\boldsymbol{X}}}{n}$ exists.

Therefore, by Chebyshev's inequality, there holds $\hat{\boldsymbol{\beta}}(\hat{\mathcal{J}})-\boldsymbol{\beta}(\hat{\mathcal{J}})\xrightarrow{p}0 (n\rightarrow \infty)$. Moreover, according to the relationship between the $\hat{\boldsymbol{\beta}}(\hat{\mathcal{J}})$ and the non-orthogonal Cp estimator $\hat{\boldsymbol{\alpha}}$, also by Theorem \ref{theorem22}, we have $\hat{\boldsymbol{\alpha}}\xrightarrow{P}\boldsymbol{\beta}(n\rightarrow\infty)$.
Thus, the desired result holds.
\end{proof}

\begin{proof}{\bf of Proposition \ref{orthogonalized TLCp solution}}
	\\
	{First, we denote by $\boldsymbol{1}^*_i$ the indicator function of whether the $i$-th feature is selected by the orthogonalized TLCp model (\ref{orthogonalized TLCp}) or not. Specifically,
	\begin{eqnarray}\notag
	\boldsymbol{{1}}^*_i=
	\left\{
	\begin{matrix}
	0
	& \text{if~} \|\boldsymbol{w}_1^i\|_0=\|\boldsymbol{w}_2^i\|_0=0\\
	1
	&  \text{otherwise}
	\end{matrix}
	\right.
	\end{eqnarray}
	To get the optimal solution of the orthogonalized TLCp problem (\ref{orthogonalized TLCp}), first, for the samples in the source domain, we have $\boldsymbol{Q}_1^{\top}\boldsymbol{X}_1^{\top}\boldsymbol{X}_1\boldsymbol{Q}_1=n\boldsymbol{I}$ and $\boldsymbol {y}_1 = \boldsymbol {X}_1\boldsymbol{Q}_1 (\boldsymbol{Q}_1^{-1}\boldsymbol { \beta }) + \boldsymbol { \varepsilon }$. Then, we can rewrite the orthogonalized TLCp problem (\ref{orthogonalized TLCp}) as minimizing the following objective function,
	\begin{equation}\notag
	\sum_{i=1}^{k}\left\{\tilde{f}_i(\lambda_1,Z_1^i,\boldsymbol{w}_1^i)+\tilde{g}_i(\lambda_2,Z_2^i,\boldsymbol{w}_2^i)+\tilde{h}_i(\lambda_3^i,\boldsymbol{v}_1^i,\boldsymbol{v}_2^i,\lambda_4,\boldsymbol{1}^*_i)\right\}, 
	\end{equation}
	where
	\begin{eqnarray}
	\begin{array}{l}\notag
	\tilde{f}_i(\lambda_1,Z_1^i,\boldsymbol{w}_1^i)=-2n\boldsymbol{\beta}^{\top}\tilde{Q}_1^i\boldsymbol{w}_1^{i}-2\boldsymbol { \varepsilon }^{\top}Z_1^i\boldsymbol{w}_1^{i}+n(\boldsymbol{w}_1^{i})^2,\\
	\tilde{g}_i(\lambda_2,Z_2^i,\boldsymbol{w}_2^i)=-2m(\boldsymbol{\beta}+\boldsymbol{\delta})^{\top}\tilde{Q}_2^i\boldsymbol{w}_2^{i}-2\boldsymbol { \eta }^{\top}Z_2^i\boldsymbol{w}_2^{i}+m(\boldsymbol{w}_2^{i})^2,\\
	\tilde{h}_i(\lambda_3^i,\boldsymbol{v}_1^i,\boldsymbol{v}_2^i,\lambda_4,\boldsymbol{1}^*_i)=\frac{\lambda_3^i((\boldsymbol{v}_1^i)^2+(\boldsymbol{v}_2^i)^2)}{2}+\lambda_4\boldsymbol{1}^*_i.
	\end{array}
	\end{eqnarray}
	Among them, $(\tilde{Q}_1^i)^{\top}$ represents the $i$-th row of the invertible matrix $\boldsymbol{Q}_1^{-1}$, and $Z_1^i$ is the $i$-th column of the design matrix $\boldsymbol{X}_1\boldsymbol{Q}_1$, for $i=1,\cdots,k$.	Also, $(\tilde{Q}_2^i)^{\top}$ represents the $i$-th row of the invertible matrix $\boldsymbol{Q}_2^{-1}$, and $Z_2^i$ is the $i$-th column of the design matrix $\boldsymbol{X}_2\boldsymbol{Q}_2$, for $i=1,\cdots,k$.}
	
{Due to the independence of each summand in the objective function above, the orthogonalized TLCp problem (\ref{orthogonalized TLCp}) can be solved by solving $k$ one-dimensional optimization problems below,
	\begin{equation}\label{orthogonalized TLCp objective}
	\text{min}_{\boldsymbol{v}_1^i,\boldsymbol{v}_2^i,\boldsymbol{w}_0^i}~\left\{ \tilde{f}_i(\lambda_1,Z_1^i,\boldsymbol{w}_1^i)+\tilde{g}_i(\lambda_2,Z_2^i,\boldsymbol{w}_2^i)+\tilde{h}_i(\lambda_3^i,\boldsymbol{v}_1^i,\boldsymbol{v}_2^i,\lambda_4,\boldsymbol{1}^*_i)\right\}
	\end{equation}
	for $i=1,\cdots,k$.}
	
	{For the $i$-th problem above, if $\boldsymbol{1}^*_i=1$, setting the gradient of the corresponding objective function equal to zero, we can obtain the estimators
	with respect to the $i$-th coefficients $\boldsymbol{w}_1^i$, $\boldsymbol{w}_2^i$ for the target and source domains by solving the following equations,
	\begin{eqnarray}\notag
	\left\{
	\begin{array}{l}
	2n\lambda_1\boldsymbol{w}_0^i+(2n\lambda_1+\lambda_3^i)\boldsymbol{ v }_1^i=2\lambda_1(n\boldsymbol{\beta}^{\top}\tilde{Q}_1^i+\boldsymbol{\varepsilon}^{\top}Z_{1}^{i})\\
	2m\lambda_2\boldsymbol{w}_0^i+(2m\lambda_2+\lambda_3^i)\boldsymbol{ v }_2^i=2\lambda_2(m(\boldsymbol{\beta}+\boldsymbol{\delta}_1)^{\top}\tilde{Q}_2^i+\boldsymbol{\eta}^{\top}Z_{2}^{i})\\
	\boldsymbol{v}_1^i=-\boldsymbol{v}_2^i
	\end{array}
	\right.
	\end{eqnarray}
	Then, the estimator for the target task is
	\begin{equation}\label{solution1}
	\hat{\boldsymbol{w}}_1^i=(\tilde{{Q}}_1^{i})^{\top}\boldsymbol { \beta }+\frac{(Z_1^i)^{\top}\boldsymbol { \varepsilon }}{n}+D_1^i\left[(\tilde{{Q}}_2^{i})^{\top}(\boldsymbol{\delta}+\boldsymbol{\beta})-(\tilde{{Q}}_1^{i})^{\top}\boldsymbol{\beta}+\frac{(Z_2^i)^{\top}\boldsymbol{\eta}}{m}-\frac{(Z_1^i)^{\top}\boldsymbol { \varepsilon }}{n}\right],
	\end{equation}
	where $D_1^i=\frac{\lambda_2\lambda_3^i}{4\lambda_1\lambda_2n+\lambda_2\lambda_3^i+\frac{n}{m}\lambda_1\lambda_3^i}$, for $i=1,\cdots,k$.}
	
	{The estimator for the source domain task is
	\begin{equation}\label{solution2}
	\hat{\boldsymbol{w}}_2^i=(\tilde{{Q}}_2^{i})^{\top}(\boldsymbol { \beta }+\boldsymbol{\delta})+\frac{(Z_2^i)^{\top}\boldsymbol { \eta }}{m}-D_2^i\left[(\tilde{{Q}}_2^{i})^{\top}(\boldsymbol{\delta}+\boldsymbol{\beta})-(\tilde{{Q}}_1^{i})^{\top}\boldsymbol{\beta}+\frac{(Z_2^i)^{\top}\boldsymbol{\eta}}{m}-\frac{(Z_1^i)^{\top}\boldsymbol { \varepsilon }}{n}\right],
	\end{equation}
	where $D_2^i=\frac{\lambda_1\lambda_3^i}{4\lambda_1\lambda_2m+\lambda_1\lambda_3^i+\frac{m}{n}\lambda_2\lambda_3^i}$, for $i=1,\cdots,k$.}
	
	{Also, we have
	\begin{equation}\label{solution3}
	\boldsymbol{v}_1^i=-D_3^i\left[(\tilde{{Q}}_2^{i})^{\top}(\boldsymbol{\delta}+\boldsymbol{\beta})-(\tilde{{Q}}_1^{i})^{\top}\boldsymbol{\beta}+\frac{(Z_2^i)^{\top}\boldsymbol{\eta}}{m}-\frac{(Z_1^i)^{\top}\boldsymbol { \varepsilon }}{n}\right],
	\end{equation}
	where $D_3^i=\frac{2\lambda_1\lambda_2}{4\lambda_1\lambda_2+\frac{1}{n}\lambda_2\lambda_3^i+\frac{1}{m}\lambda_1\lambda_3^i}$, for $i=1,\cdots,k$.\\
	Then, substituting the relations (\ref{solution1}), (\ref{solution2}), (\ref{solution3}), and $\boldsymbol{v}_1^i=-\boldsymbol{v}_2^i$ into the objective function in (\ref{orthogonalized TLCp objective}), we have
	\begin{align}\notag
	&\tilde{f}_i(\lambda_1,Z_1^i,\boldsymbol{w}_1^i)+\tilde{g}_i(\lambda_2,Z_2^i,\boldsymbol{w}_2^i)+\tilde{h}_i(\lambda_3^i,\boldsymbol{v}_1^i,\boldsymbol{v}_2^i,\lambda_4,\boldsymbol{1}^*_i)\\ \notag&=(\tilde{D}^i-\lambda_2m)\tilde{H}_i^2+(\tilde{D}^i-\lambda_1n)\tilde{R}_i^2-2\tilde{D}^i\tilde{R}_i\tilde{H}_i+\lambda_4,
	\end{align}
	where $\tilde{H}_i=(\boldsymbol{\delta}+\boldsymbol{\beta})\tilde{{Q}}_2^{i}+\frac{\boldsymbol{\eta}^{\top}Z_2^i}{m}$,  $\tilde{R}_i=\boldsymbol{\beta}^{\top}\tilde{{Q}}_1^{i}+\frac{\boldsymbol { \varepsilon }^{\top}Z_i^i}{n}$ 
	for $i=1,\cdots,k$. }
	
	{Further, we notice that $2D_3^i+D_2^i+D_1^i=1$ and reorganize the calculation results, thus obtaining
	\begin{eqnarray}\notag
	\tilde{f}_i(\lambda_1,Z_1^i,\boldsymbol{w}_1^i)+\tilde{g}_i(\lambda_2,Z_2^i,\boldsymbol{w}_2^i)+\tilde{h}_i(\lambda_3^i,\boldsymbol{v}_1^i,\boldsymbol{v}_2^i,\lambda_4,1)=\lambda_4-A_i\tilde{H}_i^2-B_i\tilde{R}_i^2-C_i\tilde{J}_i^2,
	\end{eqnarray}
	where $\tilde{J}_i=m\lambda_2\tilde{H}_i+n\lambda_1\tilde{R}_i$. And,  $A_i=\frac{4\lambda_1\lambda_2^2m^2n}{4\lambda_1\lambda_2mn+m\lambda_2\lambda_3+n\lambda_1\lambda_3^i}$, $B_i=\frac{4\lambda_2\lambda_1^2mn^2}{4\lambda_1\lambda_2mn+m\lambda_2\lambda_3^i+n\lambda_1\lambda_3^i}$, and $C_i=\frac{\lambda_3^i}{4\lambda_1\lambda_2mn+m\lambda_2\lambda_3^i+n\lambda_1\lambda_3^i}$ are functions with respect to the parameters $\lambda_1, \lambda_2, \lambda_3^i$.}
	
	{If $\boldsymbol{1}^*_i=0$ in the $i$-th optimization problem, the estimators for the parameters $\boldsymbol{w}_0^i,\boldsymbol{v}_1^i,\boldsymbol{v}_2^i$ satisfy $\bar{\boldsymbol{w}}_0^i=\bar{\boldsymbol{v}}_1^i=\bar{\boldsymbol{v}}_2^i=0$. So the corresponding objective value is
	\begin{equation*}
	\tilde{f}_i(\lambda_1,Z_1^i,\boldsymbol{w}_1^i)+\tilde{g}_i(\lambda_2,Z_2^i,\boldsymbol{w}_2^i)+\tilde{h}_i(\lambda_3^i,\boldsymbol{v}_1^i,\boldsymbol{v}_2^i,\lambda_4,0)=0.
	\end{equation*}
	Finally, we can derive the optimal solution for the $i$-th optimization problem (\ref{orthogonalized TLCp}) by finding two estimators $\hat{\boldsymbol{w}}_1^i, \hat{\boldsymbol{w}}_2^i$ that can pick the smaller one between the random value $\lambda_4-A_i\tilde{H}_i^2-B_i\tilde{R}_i^2-C_i\tilde{J}_i^2$ and $0$.}
	
	{Therefore, we can obtain the solution for the target regression task as follows,
	\begin{align}
	&\hat{\boldsymbol{w}}_1^{i}= \notag\\
	&\left\{
	\begin{array}{cc}
	(\tilde{{Q}}_1^{i})^{\top}\boldsymbol { \beta }+\frac{(Z_1^i)^{\top}\boldsymbol { \varepsilon }}{n}+D_1^i\left[(\tilde{{Q}}_2^{i})^{\top}(\boldsymbol{\delta}+\boldsymbol{\beta})-(\tilde{{Q}}_1^{i})^{\top}\boldsymbol{\beta}+\frac{(Z_2^i)^{\top}\boldsymbol{\eta}}{m}-\frac{(Z_1^i)^{\top}\boldsymbol { \varepsilon }}{n}\right]
	& 
	 \tilde{F}(\tilde{H}_i,\tilde{R}_i,\tilde{J}_i)>\lambda_4\notag\\
	0
	&  \text{otherwise}
	\end{array}
	\right.
	\end{align}
	$\tilde{F}(\tilde{H}_i,\tilde{R}_i,\tilde{J}_i)=A_i\tilde{H}_i^2+B_i\tilde{R}_i^2+C_i\tilde{J}_i^2$ for $i=1,\cdots,k$.}
	\end{proof}

\begin{proof}{\bf of Theorem \ref{theorem 28}}
	\\
	{First, we notice that the approximate TLCp estimator is $\tilde{\boldsymbol{w}}_1=\boldsymbol{Q}_1\hat{\boldsymbol{w}}_1$. We can further rewrite it as follows,
	\begin{eqnarray}
	\tilde{\boldsymbol{w}}_1&=&\sum_{i=1}^{k}Q_1^i\hat{\boldsymbol{w}}_1^{i}\label{94}\\
	&=&\sum_{i=1}^{k}Q_1^i\left(\tilde{Q_1^i}^{\top}\boldsymbol{\beta}+\frac{(Z_1^i)^{\top}\boldsymbol { \varepsilon }}{n}+D_1^i\left[(\tilde{{Q}}_2^{i})^{\top}(\boldsymbol{\delta}+\boldsymbol{\beta})-(\tilde{{Q}}_1^{i})^{\top}\boldsymbol{\beta}+\frac{(Z_2^i)^{\top}\boldsymbol{\eta}}{m}-\frac{(Z_1^i)^{\top}\boldsymbol { \varepsilon }}{n}\right]\right)\cdot\boldsymbol{1}_{\tilde{A}_i},\notag
	\end{eqnarray}
	where $\boldsymbol{1}_{\tilde{A}_i}$ is the indication function with respect to the random variable set \\ $\tilde{A}_i=\left\{A_i\tilde{H}_i^2+B_i\tilde{R}_i^2+C_i\tilde{J}_i^2>\lambda_4 \right\}$, for $i=1,\cdots,k$.}
	
	{Then, following the same technique as in the proof the Theorem \ref{theorem21}, we decompose (\ref{94}) into three terms as below,
	\begin{equation}\label{95}
	\tilde{\boldsymbol{w}}_1=\sum_{i=1}^{k}\left\{Q_1^i\tilde{Q_1^i}^{\top}\cdot \boldsymbol{\beta}\cdot\boldsymbol{1}_{\tilde{A}_i}\right\}+\sum_{i=1}^{k}\left\{Q_1^i\frac{Z_i^{\top}\boldsymbol { \varepsilon }}{n}\cdot\boldsymbol{1}_{\tilde{A}_i}\right\}+\sum_{i=1}^{k}\left\{D_1^iQ_1^iR_i\boldsymbol{1}_{\tilde{A}_i}\right\},
	\end{equation}	
	where $R_i=(\tilde{{Q}}_2^{i})^{\top}(\boldsymbol{\delta}+\boldsymbol{\beta})-(\tilde{{Q}}_1^{i})^{\top}\boldsymbol{\beta}+\frac{(Z_2^i)^{\top}\boldsymbol{\eta}}{m}-\frac{(Z_1^i)^{\top}\boldsymbol { \varepsilon }}{n}$, for $i=1,\cdots,k$.}
	
	{For the first summand of (\ref{95}), we have	
	\begin{align*}
	\left\|\sum_{i=1}^{k}\left\{Q_1^i\tilde{Q_1^i}^{\top}\cdot \boldsymbol{\beta}\cdot\boldsymbol{1}_{\tilde{A}_i}\right\}-\boldsymbol{\beta}\right\|_2\leq M\sum_{i=1}^{k}\left|\boldsymbol{1}_{\tilde{A}_i}-1\right|,
	\end{align*}
where $M=\text{max}_{i=1,\cdots,k}\left\{\|Q_1^i\tilde{Q_1^i}^{\top}\boldsymbol{\beta}\|_2\right\}$. Further, for any $1>\epsilon>0$, we have
\begin{align*}
\left\{M\sum_{i=1}^{k}\left|\boldsymbol{1}_{\tilde{A}_i}-1\right|>\epsilon\right\}\subseteq \left\{\exists ~i, s.t. \left|\boldsymbol{1}_{\tilde{A}_i}-1\right|>\frac{\epsilon}{kM}\right\}.
\end{align*}
Moreover, by the definition of the indicator function $\boldsymbol{1}_{\tilde{A}_i} $ above, there holds
\begin{align*}
&P_r\left\{\exists ~i, s.t. \left|\boldsymbol{1}_{\tilde{A}_i}-1\right|>\frac{\epsilon}{kM}\right\}\\\leq& \sum_{i=1}^{k}P_r\left\{A_i\tilde{H}_i^2+B_i\tilde{R}_i^2+C_i\tilde{J}_i^2\leq\lambda_4\right\}\\\leq&
\sum_{i=1}^{k}P_r\left\{B_i\tilde{R}_i^2\leq\lambda_4\right\}.
\end{align*}}

{By noticing that $\tilde{R}_i=\boldsymbol{\beta}^{\top}\tilde{{Q}}_1^{i}+\frac{\boldsymbol { \varepsilon }^{\top}Z_i^i}{n}$ and $B_i=\frac{4\lambda_2\lambda_1^2mn^2}{4\lambda_1\lambda_2mn+m\lambda_2\lambda_3^i+n\lambda_1\lambda_3^i}$ (for $i=1,\cdots,k$), we can obtain an upper bound on the first summand of (\ref{95}) by following the same procedure as was done in the proof of Theorem \ref{theorem21}.}

{Similarly, we can estimate the second summand of (\ref{95}) by referring to the proof of Theorem \ref{theorem21}.}

{For the third summand of (\ref{95}), we notice that $D_1^i=\frac{\lambda_2\lambda_3^i}{4\lambda_1\lambda_2n+\lambda_2\lambda_3^i+\frac{n}{m}\lambda_1\lambda_3^i}\rightarrow0$, when $n\rightarrow\infty$, therefore, we can easily obtain the desired result by combining the results on the three summands.	}
\end{proof}

\begin{proof}{\bf of Theorem \ref{theorem30}}\\	
Notice that $\boldsymbol { \bar{y} } = \boldsymbol { \bar{X} }\boldsymbol { \beta }+\boldsymbol { \varepsilon }$, we can decompose Mallows' Cp statistic as follows,
\begin{equation}
\frac{(\boldsymbol{\beta}-{\boldsymbol{\alpha}})^{\top}\boldsymbol { \bar{X} }^{\top}\boldsymbol { \bar{X} }(\boldsymbol{\beta}-{\boldsymbol{\alpha}})}{n}+\frac{2(\boldsymbol{\beta}-{\boldsymbol{\alpha}})^{\top}\boldsymbol { \bar{X} }^{\top}\boldsymbol { \varepsilon }}{n}+\frac{\boldsymbol { \varepsilon }^{\top}\boldsymbol { \varepsilon }}{n}+\frac{2\sigma_1^2}{n}p,
\end{equation}
where $\boldsymbol{\alpha}$ is any fixed estimator. Recall that the approximate TLCp estimator satisfies $\hat{\boldsymbol{\alpha}}_2\xrightarrow{P}\boldsymbol{\beta}(n\rightarrow\infty)$, and $\frac{\boldsymbol { \varepsilon }^{\top}\boldsymbol { \varepsilon }}{n}\xrightarrow{P}\sigma_1^2(n\rightarrow\infty)$ by the law of large numbers.
To show that the approximate TLCp estimator $\hat{\boldsymbol{\alpha}}_2$ asymptotically achieves the lowest value of Mallows' Cp statistic, we only need to show that $\frac{2(\boldsymbol{\beta}-{\boldsymbol{\alpha}})^{\top}\boldsymbol { \bar{X} }^{\top}\boldsymbol { \varepsilon }}{n}\xrightarrow{P}0(n\rightarrow\infty)$ holds for any fixed estimator $\boldsymbol{\alpha}$. 

In fact, 
\begin{equation*}
\mathbb{E}\left(\frac{\boldsymbol { \bar{X} }^{\top}\boldsymbol { \varepsilon }}{n} \right)^2=\frac{1}{n^2}\mathbb{E}\left(\boldsymbol { \bar{X} }^{\top}\boldsymbol{\varepsilon}\boldsymbol{\varepsilon}^{\top}\boldsymbol { \bar{X} }\right)=\frac{1}{n^2}\mathbb{E}\text{trace}(\boldsymbol { \bar{X} }\boldsymbol { \bar{X} }^{\top}\boldsymbol{\varepsilon}\boldsymbol{\varepsilon}^{\top})=\frac{\sigma_1^2}{n^2}\text{trace}(\boldsymbol { \bar{X} }^{\top}\boldsymbol { \bar{X} })=\frac{\sigma_1^2}{n}.
\end{equation*}
Thus, we have $\frac{\boldsymbol { \bar{X} }^{\top}\boldsymbol { \varepsilon }}{n} \sim \mathcal { N } \left( 0 , \frac{\sigma_1^2}{n}\text{trace}(\boldsymbol { \bar{X} }^{\top}\boldsymbol { \bar{X} }) \right)$.
By Chebyshev's Inequality, for any $\eta>0$, there holds
\begin{equation*}
P_r\left\{\left|\frac{\boldsymbol { \bar{X} }^{\top}\boldsymbol { \varepsilon }}{n}\right|<\eta\right\}\geq 1-\frac{\sigma_1^2}{n}\frac{\text{trace}(\boldsymbol { \bar{X} }^{\top}\boldsymbol { \bar{X} })}{\eta^2}.
\end{equation*}
Then, we have $\frac{\boldsymbol { \bar{X} }^{\top}\boldsymbol { \varepsilon }}{n}\xrightarrow{P}0(n\rightarrow\infty)$ by the condition that $\lim_{n\rightarrow \infty}\frac{\bar{\boldsymbol{X}}^{\top}\bar{\boldsymbol{X}}}{n}$ exists. Therefore, it can be concluded that $\frac{2(\boldsymbol{\beta}-{\boldsymbol{\alpha}})^{\top}\boldsymbol { \bar{X} }^{\top}\boldsymbol { \varepsilon }}{n}\xrightarrow{P}0(n\rightarrow\infty)$, for any fixed estimator $\boldsymbol{\alpha}$. 

The second statement of this theorem can be proved similarly.
\end{proof}

\begin{proof}{\bf of Theorem \ref{theorem31}}\\
Notice that the TLCp statistic, $\frac{1}{n+m}\sum _ { t = 1 } ^ { 2 } \left[\lambda_t( \boldsymbol { y }_t - \boldsymbol { X }_t \boldsymbol { w}_t ) ^ { \top } ( \boldsymbol { y }_t - \boldsymbol { X }_t \boldsymbol { w}_t  )+\frac { 1 } { 2 } \boldsymbol { v }_t^{\top}\boldsymbol{\lambda} _ { 3 }\boldsymbol{v}_t +\frac { 1 } { 2 }\lambda_4\bar{p}\right]$ can be viewed as a weighted sum of two Mallows' Cp statistics with respect to the target and source tasks, and plus a parameter sharing term ($\frac{1}{n+m}\sum _ { t = 1 } ^ { 2 }\frac { 1 } { 2 } \boldsymbol { v }_t^{\top}\boldsymbol{\lambda} _ { 3 }\boldsymbol{v}_t$) to leverage the target and source tasks. Thus, we can follow the same proof scheme as of Theorem \ref{theorem30}. Concretely, to show that the approximate TLCp estimators for the target and source tasks achieve the lowest value of the TLCp statistic, we only need to verify that $\frac{1}{n+m}\sum _ { t = 1 } ^ { 2 }\frac { 1 } { 2 } \boldsymbol { \tilde{v }}_t^{\top}\boldsymbol{\lambda} _ { 3 }\boldsymbol{\tilde{v}}_t\xrightarrow{P}0(n\rightarrow\infty)$, where $\boldsymbol{\tilde{v}}_1$ ($\boldsymbol{\tilde{v}}_2$) indicates the individual parameter of the approximate TLCp estimator $\boldsymbol{\tilde{w}}_1$ ($\boldsymbol{\tilde{w}}_2$) for the target (source) task.

Recall that the approximate TLCp estimator with respect to the target task satisfies $\boldsymbol{\tilde{w}}_1\xrightarrow{P}\boldsymbol{\beta}(n\rightarrow\infty)$ by Theorem \ref{theorem 28}. Follow the same proof framework as of Theorem \ref{theorem 28} and notice that $\lim_{n\rightarrow \infty}m/n=C$ ($C>0$), it can be concluded that $\boldsymbol{\tilde{w}}_2\xrightarrow{P}\boldsymbol{\beta}+\boldsymbol{\delta}(n\rightarrow\infty)$, where $\boldsymbol{\delta}$ is the dissimilarity between the target and source tasks. 

Now, let's focus on the analysis of $\frac{1}{n+m}\sum _ { t = 1 } ^ { 2 }\frac { 1 } { 2 } \boldsymbol { \tilde{v }}_t^{\top}\boldsymbol{\lambda} _ { 3 }\boldsymbol{\tilde{v}}_t$. Notice that  $\boldsymbol{\tilde{v}}_1=\frac{\boldsymbol{\tilde{w}}_1-\boldsymbol{\tilde{w}}_2}{2}$ and $\boldsymbol{\tilde{v}}_2=\frac{\boldsymbol{\tilde{w}}_2-\boldsymbol{\tilde{w}}_1}{2}$, then we have $\sum _ { t = 1 } ^ { 2 }\frac { 1 } { 2 } \boldsymbol { \tilde{v }}_t^{\top}\boldsymbol{\lambda} _ { 3 }\boldsymbol{\tilde{v}}_t\xrightarrow{P}\boldsymbol{\delta}^{\top}\boldsymbol{\lambda_3}\boldsymbol{\delta}(n\rightarrow\infty)$. Due to that $\boldsymbol{\delta}$ is a fixed constant vector in our settings, there holds $\frac{1}{n+m}\sum _ { t = 1 } ^ { 2 }\frac { 1 } { 2 } \boldsymbol { \tilde{v }}_t^{\top}\boldsymbol{\lambda} _ { 3 }\boldsymbol{\tilde{v}}_t\xrightarrow{P}0(n\rightarrow\infty)$.

The second statement of this theorem can be proved similarly.
\end{proof}

\section{Additional simulations with the orthogonal TLCp method}\label{appendixE}

{This part includes two additional simulations to show the efficacy of the orthogonal TLCp method (with its parameters well-tuned) when the true model is generated randomly. Specifically, we follow the same simulation setting as in Section $6.2$ and assume the non-zero attributes of $\boldsymbol { \beta }_2=[0.42, 0.89, 0.96, 0.20, 0, 0.65, 0.84, 0, 0.29, 0]^{\top}$ are i.i.d. sampled from the $[0,1]$ uniform distribution. Note that there are no critical features in this example.}

{We can see from Figure \ref{simulation_expansion2} that, even without critical features, the orthogonal TLCp method outperforms the Cp criterion at each sample size both in terms of MSE and the number of correctly identified features. In particular, the MSE value of the TLCp estimator increases with the relative dissimilarity of tasks differs from the case when there are critical features in the true model previously analyzed. Moreover, as depicted in the top right subfigure, TLCp is remarkably competitive with Cp when the sample size and the relative dissimilarity of tasks are relatively small. Specifically, TLCp improves Cp $24\% \sim 36\%$ in terms of MSE when the sample size is $20$, and the relative task dissimilarity is less than $3.95$. Further, we observe that the ``effective sample size'' (in the sense of MSE) decreases as the relative dissimilarity of tasks increases (e.g., see the contour line at the level $0.006$ in the top right panel). Without any critical features, the ``effective sample size'' approximately equals $170$ when the relative dissimilarity of tasks is $0.10$, and approximately $140$ when the relative task dissimilarity is $3.00$. This observation indicates that, without critical features, Cp needs fewer examples to perform as well as TLCp. Similar results are seen if we compare the performance of Cp and TLCp with respect to the number of correctly identified features. The ``effective sample size'' in terms of the number of correctly selected features (e.g., see the contour line at the level $0.40$ in the bottom right panel) is about $30$ when the relative dissimilarity of tasks is $0.10$ and $85$ when the relative task dissimilarity is $3.00$.}

{We also illustrate the importance of choosing appropriate hyper-parameters in the orthogonal TLCp model by investigating the performance of TLCp with its hyper-parameters randomly selected. Figure \ref{simulation_expansion3} shows that, if the hyper-parameters of the TLCp method are tuned randomly, the MSE performance of the TLCp model degrades compared to that of TLCp whose parameters are well-tuned. In this case, the number of correctly identified features of TLCp is less than that of Cp. These observations demonstrate the significance of the proposed tuning of parameters for the TLCp method.}

\begin{figure}[htbp]
	\begin{minipage}{\textwidth}
		\centering{\includegraphics[width=1\textwidth]{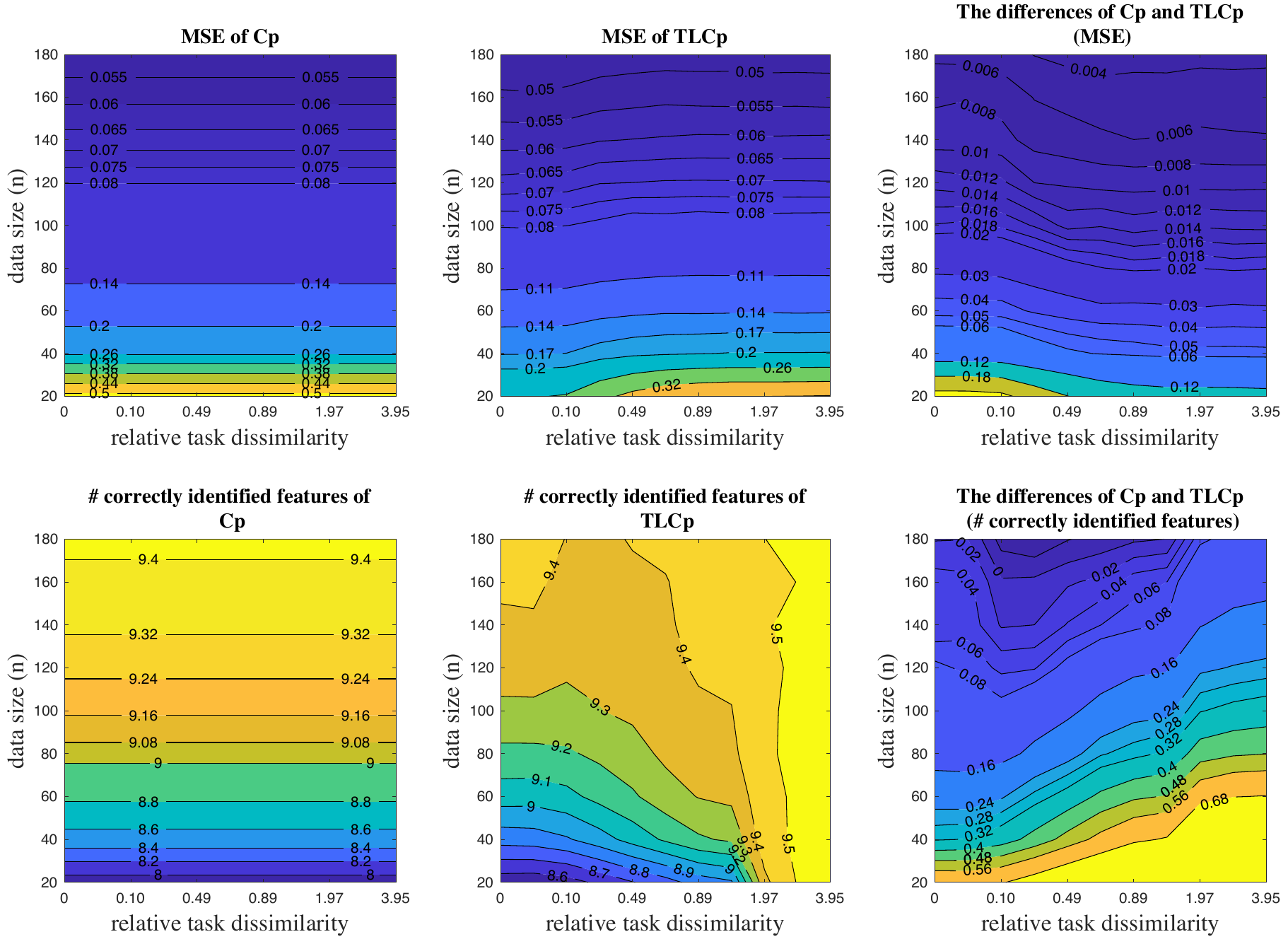}} 
	\end{minipage}
	\caption{\small Performance (in the sense of MSE and number of correctly identified features) of Cp and TLCp methods as the number of target data and the relative task dissimilarity vary simultaneously.  The true model was generated randomly (without the existence of critical features in this example).} \label{simulation_expansion2}
\end{figure}

\begin{figure}[htbp]
	\begin{minipage}{\textwidth}
		\centering{\includegraphics[width=1\textwidth]{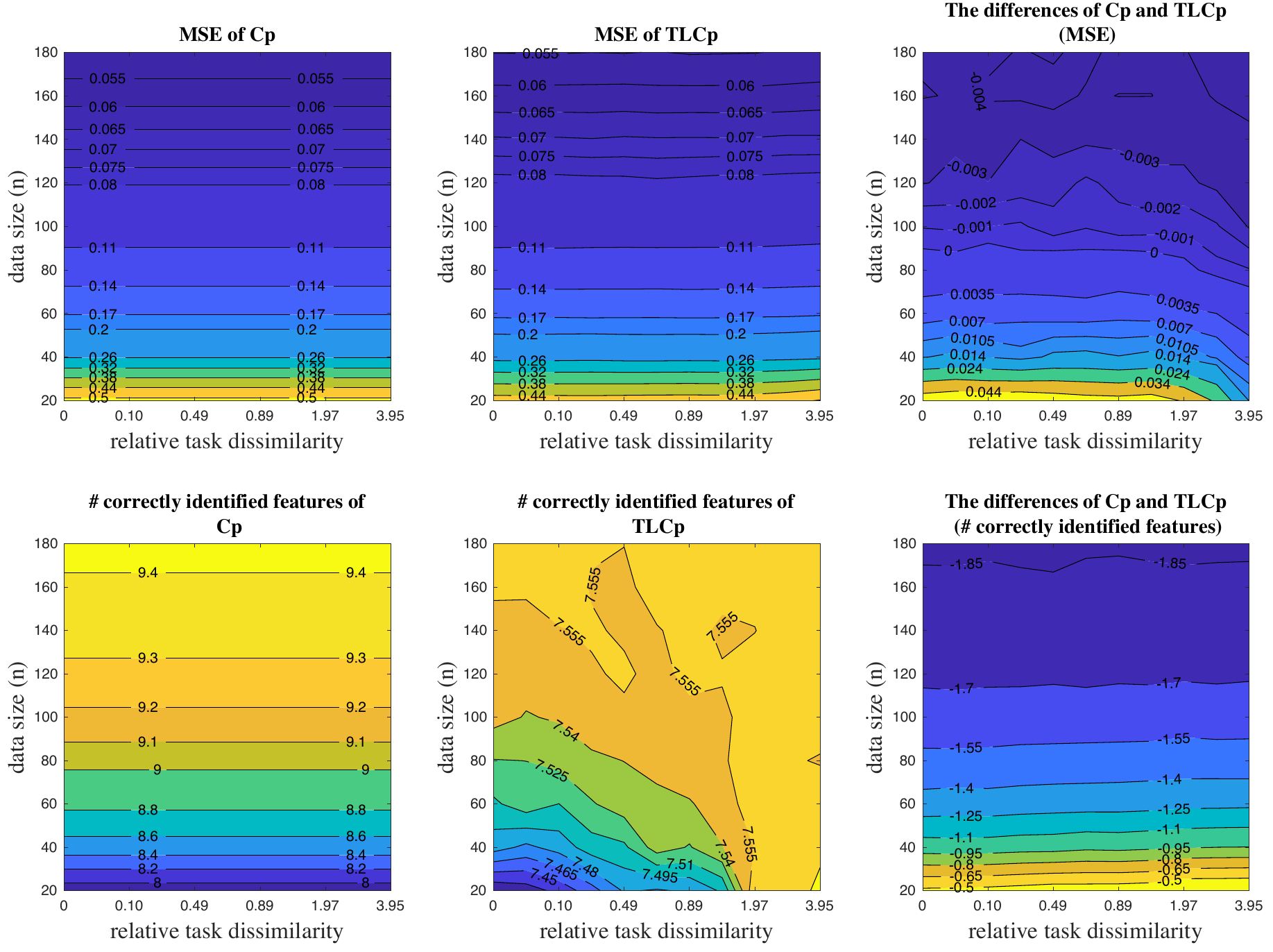}} 
	\end{minipage}
	\caption{ \small Performance (in the sense of MSE and number of correctly identified features) of Cp and TLCp methods as the number of target data and the relative task dissimilarity vary simultaneously.  The hyper-parameters of the TLCp model were tuned randomly.} \label{simulation_expansion3}
\end{figure}

\section{Additional tables }\label{appendixD}

\begin{table}[htbp]
	\centering
	\begin{tabular}{llll}
		\hline
		\multicolumn{1}{|l|}{} & \multicolumn{1}{l|}{original TLCp} & \multicolumn{1}{c|}{\begin{tabular}[c]{@{}c@{}}approximate TLCp\\ cutoff\end{tabular}}  \\ \hline
		\multicolumn{1}{|c|}{original Cp} & \multicolumn{1}{c|}{$\textbf{0.00}$} & \multicolumn{1}{c|}{$\textbf{0.00}$}              \\ \hline
		\multicolumn{1}{|c|}{stepwise FS} & \multicolumn{1}{c|}{$\textbf{0.03}$ } & \multicolumn{1}{c|}{$\textbf{0.00}$}               \\ \hline
		\multicolumn{1}{|c|}{univariate FS} & \multicolumn{1}{c|}{$\text{0.46}$} & \multicolumn{1}{c|}{$\textbf{0.00}$}               \\ \hline
		\multicolumn{1}{|c|}{LASSO} & \multicolumn{1}{c|}{$\text{0.52}$} & \multicolumn{1}{c|}{$\textbf{0.00}$}                \\ \hline
		\multicolumn{1}{|c|}{aggregate original Cp} & \multicolumn{1}{c|}{${1.00}$} & \multicolumn{1}{c|}{$\textbf{0.83}$}                             \\ \hline
		\multicolumn{1}{|c|}{aggregate stepwise FS} & \multicolumn{1}{c|}{$\text{1.00}$} & \multicolumn{1}{c|}{$\textbf{0.94}$}                     \\ \hline
		\multicolumn{1}{|c|}{aggregate univariate FS} & \multicolumn{1}{c|}{$\text{1.00}$} & \multicolumn{1}{c|}{$\text{0.69}$}                       \\ \hline
		\multicolumn{1}{|c|}{aggregate LASSO} & \multicolumn{1}{c|}{$\text{1.00}$} & \multicolumn{1}{c|}{$\text{0.86}$}                        \\ \hline
		\multicolumn{1}{|c|}{least $\ell_{2,1}$-norm} & \multicolumn{1}{c|}{$\text{0.51}$} & \multicolumn{1}{c|}{$\textbf{0.00}$}                     \\ \hline
		\multicolumn{1}{|c|}{multi-level LASSO} & \multicolumn{1}{c|}{$\text{0.50}$} & \multicolumn{1}{c|}{$\textbf{0.00}$}                    \\ \hline
		\multicolumn{1}{|c|}{original TLCp} & \multicolumn{1}{c|}{$--$} & \multicolumn{1}{c|}{$\textbf{0.00}$}                                  \\ \hline
		\multicolumn{1}{|c|}{approximate TLCp cutoff} & \multicolumn{1}{c|}{$\text{1.00}$} & \multicolumn{1}{c|}{$--$}                                \\ \hline       
		\multicolumn{1}{|c|}{\begin{tabular}[c]{@{}c@{}}original TLCp\\ with three tasks\end{tabular}} & \multicolumn{1}{c|}{$1.00$} & \multicolumn{1}{c|}{$\text{1.00}$}                                                          \\ \hline
		\multicolumn{1}{|c|}{\begin{tabular}[c]{@{}c@{}}approximate TLCp cutoff\\ with three tasks\end{tabular}} & \multicolumn{1}{c|}{$\text{1.00}$} & \multicolumn{1}{c|}{$1.00$}                                                                \\ \hline                 
	\end{tabular}
	\caption{The table shows the $p$-value of the pairwise $t$-test of different methods on school data with the relative task dissimilarity $1.51$ when $n=170$. Boldface means the proposed TLCp methods statistically outperform the compared methods ($p$-value $<0.05$). }\label{table9}
\end{table}

\begin{table}[htbp]
	\centering
	\begin{tabular}{llll}
		\hline
		\multicolumn{1}{|l|}{} & \multicolumn{1}{l|}{original TLCp} & \multicolumn{1}{c|}{\begin{tabular}[c]{@{}c@{}}approximate TLCp\\ cutoff\end{tabular}}  \\ \hline
		\multicolumn{1}{|c|}{original Cp} & \multicolumn{1}{c|}{$\textbf{0.01}$} & \multicolumn{1}{c|}{$\textbf{0.00}$}              \\ \hline
		\multicolumn{1}{|c|}{stepwise FS} & \multicolumn{1}{c|}{$\text{0.85}$ } & \multicolumn{1}{c|}{$\text{0.18}$}               \\ \hline
		\multicolumn{1}{|c|}{univariate FS} & \multicolumn{1}{c|}{$\text{1.00}$} & \multicolumn{1}{c|}{$\text{0.81}$}               \\ \hline
		\multicolumn{1}{|c|}{LASSO} & \multicolumn{1}{c|}{$\text{1.00}$} & \multicolumn{1}{c|}{$\text{0.85}$}                \\ \hline
		\multicolumn{1}{|c|}{aggregate original Cp} & \multicolumn{1}{c|}{${0.64}$} & \multicolumn{1}{c|}{$\textbf{0.05}$}                             \\ \hline
		\multicolumn{1}{|c|}{aggregate stepwise FS} & \multicolumn{1}{c|}{$\text{0.41}$} & \multicolumn{1}{c|}{$\textbf{0.01}$}                     \\ \hline
		\multicolumn{1}{|c|}{aggregate univariate FS} & \multicolumn{1}{c|}{$\text{1.00}$} & \multicolumn{1}{c|}{$\text{0.72}$}                       \\ \hline
		\multicolumn{1}{|c|}{aggregate LASSO} & \multicolumn{1}{c|}{$\text{1.00}$} & \multicolumn{1}{c|}{$\text{1.00}$}                        \\ \hline
		\multicolumn{1}{|c|}{least $\ell_{2,1}$-norm} & \multicolumn{1}{c|}{$\textbf{0.00}$} & \multicolumn{1}{c|}{$\textbf{0.00}$}                     \\ \hline
		\multicolumn{1}{|c|}{multi-level LASSO} & \multicolumn{1}{c|}{$\text{0.95}$} & \multicolumn{1}{c|}{$\text{0.35}$}                    \\ \hline
		\multicolumn{1}{|c|}{original TLCp} & \multicolumn{1}{c|}{$--$} & \multicolumn{1}{c|}{$\textbf{0.02}$}                                  \\ \hline
		\multicolumn{1}{|c|}{approximate TLCp cutoff} & \multicolumn{1}{c|}{$\text{0.98}$} & \multicolumn{1}{c|}{$--$}                                \\ \hline       
		\multicolumn{1}{|c|}{\begin{tabular}[c]{@{}c@{}}original TLCp\\ with three tasks\end{tabular}} & \multicolumn{1}{c|}{$1.00$} & \multicolumn{1}{c|}{$\text{1.00}$}                                                          \\ \hline
		\multicolumn{1}{|c|}{\begin{tabular}[c]{@{}c@{}}approximate TLCp cutoff\\ with three tasks\end{tabular}} & \multicolumn{1}{c|}{$\text{1.00}$} & \multicolumn{1}{c|}{$1.00$}                                                                \\ \hline                 
	\end{tabular}
	\caption{The table shows the $p$-value of the pairwise $t$-test of different methods on school data with the relative task dissimilarity $2.58$ when $n=170$. Boldface means the proposed TLCp methods statistically outperform the compared methods ($p$-value $<0.05$). }\label{table10}
\end{table}

\begin{table}[htbp]
	\centering
	\begin{tabular}{llll}
		\hline
		\multicolumn{1}{|l|}{} & \multicolumn{1}{l|}{original TLCp} & \multicolumn{1}{c|}{\begin{tabular}[c]{@{}c@{}}approximate TLCp\\ cutoff\end{tabular}}  \\ \hline
		\multicolumn{1}{|c|}{original Cp} & \multicolumn{1}{c|}{$\textbf{0.00}$} & \multicolumn{1}{c|}{$\textbf{0.00}$}              \\ \hline
		\multicolumn{1}{|c|}{stepwise FS} & \multicolumn{1}{c|}{$\text{1.00}$ } & \multicolumn{1}{c|}{$\text{1.00}$}               \\ \hline
		\multicolumn{1}{|c|}{univariate FS} & \multicolumn{1}{c|}{$\text{1.00}$} & \multicolumn{1}{c|}{$\text{1.00}$}               \\ \hline
		\multicolumn{1}{|c|}{LASSO} & \multicolumn{1}{c|}{$\text{1.00}$} & \multicolumn{1}{c|}{$\text{1.00}$}                \\ \hline
		\multicolumn{1}{|c|}{aggregate original Cp} & \multicolumn{1}{c|}{$\textbf{0.00}$} & \multicolumn{1}{c|}{$\textbf{0.00}$}                             \\ \hline
		\multicolumn{1}{|c|}{aggregate stepwise FS} & \multicolumn{1}{c|}{$\textbf{0.00}$} & \multicolumn{1}{c|}{$\textbf{0.00}$}                     \\ \hline
		\multicolumn{1}{|c|}{aggregate univariate FS} & \multicolumn{1}{c|}{$\textbf{0.00}$} & \multicolumn{1}{c|}{$\textbf{0.00}$}                       \\ \hline
		\multicolumn{1}{|c|}{aggregate LASSO} & \multicolumn{1}{c|}{$\textbf{0.00}$} & \multicolumn{1}{c|}{$\textbf{0.00}$}                        \\ \hline
		\multicolumn{1}{|c|}{least $\ell_{2,1}$-norm} & \multicolumn{1}{c|}{$\text{0.76}$} & \multicolumn{1}{c|}{$\text{0.96}$}                     \\ \hline
		\multicolumn{1}{|c|}{multi-level LASSO} & \multicolumn{1}{c|}{$\text{1.00}$} & \multicolumn{1}{c|}{$\text{1.00}$}                    \\ \hline
		\multicolumn{1}{|c|}{original TLCp} & \multicolumn{1}{c|}{$--$} & \multicolumn{1}{c|}{$\text{0.84}$}                                  \\ \hline
		\multicolumn{1}{|c|}{approximate TLCp cutoff} & \multicolumn{1}{c|}{$\text{0.16}$} & \multicolumn{1}{c|}{$--$}                                \\ \hline       
		\multicolumn{1}{|c|}{\begin{tabular}[c]{@{}c@{}}original TLCp\\ with three tasks\end{tabular}} & \multicolumn{1}{c|}{$1.00$} & \multicolumn{1}{c|}{$\text{1.00}$}                                                          \\ \hline
		\multicolumn{1}{|c|}{\begin{tabular}[c]{@{}c@{}}approximate TLCp cutoff\\ with three tasks\end{tabular}} & \multicolumn{1}{c|}{$\text{0.98}$} & \multicolumn{1}{c|}{$1.00$}                                                                \\ \hline                 
	\end{tabular}
	\caption{The table shows the $p$-value of the pairwise $t$-test of different methods on Parkinson's data with the relative task dissimilarity $2.02$. Boldface means the proposed TLCp methods statistically outperform the compared methods ($p$-value $<0.05$). }\label{table12}
\end{table}

\begin{table}[htbp]
	\centering
	\begin{tabular}{llll}
		\hline
		\multicolumn{1}{|l|}{} & \multicolumn{1}{l|}{original TLCp} & \multicolumn{1}{c|}{\begin{tabular}[c]{@{}c@{}}approximate TLCp\\ cutoff\end{tabular}}  \\ \hline
		\multicolumn{1}{|c|}{original Cp} & \multicolumn{1}{c|}{$\text{1.00}$} & \multicolumn{1}{c|}{$\text{0.56}$}              \\ \hline
		\multicolumn{1}{|c|}{stepwise FS} & \multicolumn{1}{c|}{$\text{1.00}$ } & \multicolumn{1}{c|}{$\text{1.00}$}               \\ \hline
		\multicolumn{1}{|c|}{univariate FS} & \multicolumn{1}{c|}{$\text{1.00}$} & \multicolumn{1}{c|}{$\text{1.00}$}               \\ \hline
		\multicolumn{1}{|c|}{LASSO} & \multicolumn{1}{c|}{$\text{1.00}$} & \multicolumn{1}{c|}{$\text{1.00}$}                \\ \hline
		\multicolumn{1}{|c|}{aggregate original Cp} & \multicolumn{1}{c|}{$\textbf{0.00}$} & \multicolumn{1}{c|}{$\textbf{0.00}$}                             \\ \hline
		\multicolumn{1}{|c|}{aggregate stepwise FS} & \multicolumn{1}{c|}{$\textbf{0.00}$} & \multicolumn{1}{c|}{$\textbf{0.00}$}                     \\ \hline
		\multicolumn{1}{|c|}{aggregate univariate FS} & \multicolumn{1}{c|}{$\textbf{0.00}$} & \multicolumn{1}{c|}{$\textbf{0.00}$}                       \\ \hline
		\multicolumn{1}{|c|}{aggregate LASSO} & \multicolumn{1}{c|}{$\textbf{0.00}$} & \multicolumn{1}{c|}{$\textbf{0.00}$}                        \\ \hline
		\multicolumn{1}{|c|}{least $\ell_{2,1}$-norm} & \multicolumn{1}{c|}{$\text{1.00}$} & \multicolumn{1}{c|}{$\text{1.00}$}                     \\ \hline
		\multicolumn{1}{|c|}{multi-level LASSO} & \multicolumn{1}{c|}{$\text{1.00}$} & \multicolumn{1}{c|}{$\text{1.00}$}                    \\ \hline
		\multicolumn{1}{|c|}{original TLCp} & \multicolumn{1}{c|}{$--$} & \multicolumn{1}{c|}{$\textbf{0.00}$}                                  \\ \hline
		\multicolumn{1}{|c|}{approximate TLCp cutoff} & \multicolumn{1}{c|}{$\text{1.00}$} & \multicolumn{1}{c|}{$--$}                                \\ \hline       
		\multicolumn{1}{|c|}{\begin{tabular}[c]{@{}c@{}}original TLCp\\ with three tasks\end{tabular}} & \multicolumn{1}{c|}{$1.00$} & \multicolumn{1}{c|}{$\text{1.00}$}                                                          \\ \hline
		\multicolumn{1}{|c|}{\begin{tabular}[c]{@{}c@{}}approximate TLCp cutoff\\ with three tasks\end{tabular}} & \multicolumn{1}{c|}{$\text{1.00}$} & \multicolumn{1}{c|}{$1.00$}                                                                \\ \hline                 
	\end{tabular}
	\caption{The table shows the $p$-value of the pairwise $t$-test of different methods on Parkinson's data with the relative task dissimilarity $21.22$. Boldface means the proposed TLCp methods statistically outperform the compared methods ($p$-value $<0.05$). }\label{table13}
\end{table}

\begin{table}[htbp]
	\caption{{Symbols and Mathematical Notation}} \label{NOTE} \centering
	\begin{tabular}{l l}
		\toprule
		Notation & Meaning \\
		$\boldsymbol{X} (\tilde{\boldsymbol{X}})$ & Design matrix of $\mathbb{R}_{n\times k} (\mathbb{R}_{m\times k}) $ for the orthogonal target (source) task \\
		$\boldsymbol{\beta}$ & True regression coefficients of $\mathbb{R}_{k\times 1}$\\
		$\boldsymbol{I}$ & Identity matrix of $\mathbb{R}_{k\times k}$\\
		$\boldsymbol { \varepsilon } (\boldsymbol{\eta})$ & Multivariate Gaussian noise of $\mathbb{R}_{n\times 1} (\mathbb{R}_{m\times 1} )$\\
		$\lambda$ & Regularization parameter of the Cp criterion \\
		$\boldsymbol{y}(\tilde{\boldsymbol{y}})$ & Response vector of $\mathbb{R}_{n\times 1} (\mathbb{R}_{m\times 1})$ for the target (source) task \\
		$\boldsymbol{\delta}$ & Task dissimilarity vector of $\mathbb{R}_{k\times 1}$\\
		$\sigma_1 (\sigma_2)$ & Residual variance for the target (source) task \\
		$W_j (\tilde{W}_j)$ & $j$-th column vector of the design matrix $\boldsymbol{X} (\tilde{\boldsymbol{X}})$\\
		$\hat{\boldsymbol{a}}$ & Estimated regression coefficients of the orthogonal Cp problem \\
		$\hat{\boldsymbol{w}}_1 (\hat{\boldsymbol{w}}_2)$ & Estimated regression coefficients of the orthogonal TLCp problem for the target (source) task \\
		$\lambda_1 (\lambda_2)$ & Weight parameter of the residual sum of squares for the target (source) task \\
		$\boldsymbol{\lambda}_3$ & Parameter matrix in the TLCp problem\\
		$\lambda_4$ & Regularization parameter in the TLCp problem\\
		$\boldsymbol{\hat{\alpha}}$ & Estimated regression coefficients of the non-orthogonal Cp problem\\
		$\boldsymbol{\hat{\alpha}}_1$ & Estimated regression coefficients of the non-orthogonal Cp problem after orthogonalization\\
		$\boldsymbol{\hat{\alpha}}_2$ & Estimated regression coefficients of the approximate Cp procedure\\
		$\boldsymbol{\tilde{\alpha}}_2$ & The approximate Cp cutoff estimator\\
		$\bar{\boldsymbol{X}}$ & Design matrix of the non-orthogonal Cp problem\\
		$\boldsymbol{Q}$ & Transformation matrix of $\mathbb{R}_{k\times k}$ s.t. $(\bar{\boldsymbol{X}}\boldsymbol{Q})^{\top}\bar{\boldsymbol{X}}\boldsymbol{Q}=n\boldsymbol{I} $ \\
		$\hat{\mathcal{J}}$ & Subscripts for nonzero elements of $\boldsymbol{\hat{\alpha}}$\\
		$\mathcal{J}^*$ & Subscripts for nonzero elements of $\boldsymbol{\beta}$\\
		$\mathcal{A}$ & All the nonempty subsets of $\{1,\cdots,k\}$\\
		$\mathcal{A}^{c}$ & Subsets of $\mathcal{A}$ that contain $\mathcal{J}^*$\\
		$u_{\tau/2}$ & The $(1-\tau/2)$-percentile of the standard normal distribution\\
		$\boldsymbol{X}_1 (\boldsymbol{X}_2)$ & Design matrix of the non-orthogonal TLCp problem for the target (source) task \\
		$\boldsymbol{Q}_1 (\boldsymbol{Q}_2)$ & Transformation matrix of $\mathbb{R}_{k\times k}$ s.t. $(\boldsymbol{X}_1\boldsymbol{Q}_1)^{\top}\boldsymbol{X}_1\boldsymbol{Q}_1=n\boldsymbol{I} ((\boldsymbol{X}_2\boldsymbol{Q}_2)^{\top}\boldsymbol{X}_2\boldsymbol{Q}_2=m\boldsymbol{I})$ \\
		$\bar{\boldsymbol{w}}_1$ & Estimated regression coefficients of the non-orthogonal TLCp problem after orthogonalization\\
		$\hat{\boldsymbol{w}}_1$ & Estimated regression coefficients of the approximate TLCp procedure\\
		$\tilde{\boldsymbol{w}}_1$ & The approximate TLCp cutoff estimator\\
		$\tilde{\boldsymbol{w}}_1^*$ & Estimated regression coefficients of the non-orthogonal TLCp problem\\
		$\hat{\boldsymbol{\beta}}$ & Least squares estimation of $\boldsymbol{\beta}$\\
		\bottomrule
	\end{tabular}
\end{table}

\end{document}